\renewcommand*\backref[1]{\ifx#1\relax \else (cited on {p.~#1}) \fi}
\newcommand{\cmark}{\textcolor{green!40!black}{\ding{51}}} % ✓
\newcommand{\xmark}{\textcolor{red!60!black}{\ding{55}}}   % ✗
\newcommand{\complexityhigh}[1]{\textcolor{red!60!black}{#1}}
\newcommand{\complexitylow}[1]{\textcolor{green!40!black}{#1}}
\newtcolorbox{takehome}[1][]{
  enhanced,
  colback=gray!5,        % Very light gray background
  colframe=gray!80,      % Darker gray border
  width=\linewidth,      % Fits to column
  sharp corners,         % Professional square look
  boxrule=0pt,           % No border around
  leftrule=3pt,          % Thick line on the left
  fonttitle=\bfseries,
  coltitle=black,
  title={Take-Home Message: #1},
  attach title to upper,
  after title={\smallskip\par},
  % boxsep=1pt,    % Reduction of the base internal buffer
  left=3pt,      % Tight left margin
  right=3pt,     % Tight right margin
  % top=2pt,       % Tight top margin
  % bottom=2pt     % Tight bottom margin
}
\newaliascnt{lemma}{theorem}
\newtheorem{lemma}[lemma]{Lemma}
\newaliascnt{corollary}{theorem}
\newtheorem{corollary}[corollary]{Corollary}
\newaliascnt{definition}{theorem}
\newtheorem{definition}[definition]{Definition}
\newaliascnt{proposition}{theorem}
\newtheorem{assumption}{Assumption}
\crefname{algorithm}{Scheme}{Schemes}
\crefname{assumption}{Assumption}{Assumptions}
\crefname{reduction}{Reduction}{Reductions}
\crefname{equation}{Eq.}{Eqs.}
\crefname{appendix}{App.}{App.}
\Crefname{appendix}{App.}{App.}
\newcommand{\vect}[1]{\mathbf{#1}}
\newcommand{\0}{\mat{0}}
\newcommand{\teacher}{\vw^{\star}}
\newcommand{\mat}[1]{\mathbf{#1}}
\newcommand{\vw}{\vect{w}}
\newcommand{\z}{\vect{z}}
\newcommand{\X}{\mat{X}}
\newcommand{\y}{\vect{y}}
\newcommand{\I}{\mat{I}}
\newcommand{\tr}{\mathrm{Tr}}
\newcommand{\trace}{\tr}
\def\mSigma{{\mat{\Sigma}}}
\def\mP{{\mat{P}}}
\newcommand{\tparagraph}[1]{\vspace{-5pt}\paragraph{#1}}
\newcommand{\dd}{\mathop{}\!\mathrm{d}}
\def\reals{\mathbb{R}}
\newcommand{\hfrac}[2]{{#1}/{#2}}
\newcommand{\cnt}[1]{\left[{#1}\right]}
\newcommand{\explain}[1]{\left[\substack{#1}\right]}
\newcommand{\prn}[1]{\left({#1}\right)}
\newcommand{\bigprn}[1]{\big({#1}\big)}
\newcommand{\Bigprn}[1]{\Big({#1}\Big)}
\newcommand{\biggprn}[1]{\bigg({#1}\bigg)}
\newcommand{\norm}[1]{\left\Vert{#1}\right\Vert}
\newcommand{\biggnorm}[1]{\bigg\Vert{#1}\bigg\Vert}
\newcommand{\featnoise}{v_x} % kappa
\newcommand{\labelnoise}{v_z} % epsilon
\newcommand{\ratio}{\alpha} % gamma
\newcommand{\mnist}{\texttt{MNIST}\xspace}
\DeclareMathOperator*{\argmin}{argmin}
\def\remref#1{Remark~\ref{#1}}
\def\appref#1{App.~\ref{#1}}
\newenvironment{recall}[1][\proofname]{\par
% Not working with COLT
%%%\pushQED{\qed}%
\normalfont \topsep6\p@\@plus6\p@\relax
\trivlist
\item\relax
{\bfseries
Recall #1}%
%{(#2)}
{\bfseries\@addpunct{.}}\hspace\labelsep\ignorespaces
}
\renewenvironment{proof}[1][\relax]{\par
  % \pushQED{\qed}%
  \normalfont \topsep6\p@\@plus6\p@\relax
  \trivlist
  \item[\hskip\labelsep\bfseries%\itshape
    \ifx#1\relax \proofname\else\proofname{} #1\fi\@addpunct{.}]\ignorespaces
}
  {\jmlrQED}
\newcommand{\appendixtableofcontents}{%
  % Title shown above the appendix-only TOC:
  \printcontents[app]{l}{1}{\section*{Appendix Contents}}
}
\title[Optimal L2 Regularization in High-dimensional Continual Linear Regression]{Optimal L2 Regularization\\
in High-dimensional Continual Linear Regression}
\begin{document}

\maketitle

\begin{abstract}
We study generalization in an overparameterized continual linear regression setting, where a model is trained with L2 (isotropic) regularization across a sequence of tasks. 
We derive a closed-form expression for the expected generalization loss in the high-dimensional regime that holds for arbitrary linear teachers. 
We demonstrate that isotropic regularization mitigates label noise under both single-teacher and multiple i.i.d.~teacher settings, whereas prior work accommodating multiple teachers either did not employ regularization or used memory-demanding methods.
Furthermore, we prove that the optimal fixed regularization strength scales nearly linearly with the number of tasks $T$, specifically as $T/\ln T$.
To our knowledge, this is the first such result in theoretical continual learning.
Finally, we validate our theoretical findings through experiments on linear regression and neural networks, illustrating how this scaling law affects generalization and offering a practical recipe for the design of continual learning systems.
\end{abstract}

\begin{keywords}%
Continual learning, Lifelong learning, Regularization methods, Regularization strength 
\end{keywords}

\section{Introduction}
\label{sec:intro}

In the era of large foundation models (e.g., LLMs), the ability to learn tasks \emph{sequentially} without catastrophic forgetting \citep{mccloskey1989catastrophic} or retraining from scratch has become increasingly critical \citep[see surveys in][]{wang2024comprehensive,van2024continual}. 
Continual learning not only saves substantial computational resources but is also essential in settings where previous data cannot be retained, e.g., under privacy or storage constraints.

We focus on continual linear regression, a fundamental analytical setting that has proven useful for illuminating diverse aspects of continual learning systems, such as the effects of task similarity \citep{lin2023theory,hiratani2024disentangling,tsipory2025greedy},
task recurrence \citep{evron2022catastrophic,swartworth2023nearly}, 
overparameterization \citep{goldfarb2024theJointEffect},
algorithms \citep{doan2021NTKoverlap,peng2023ideal},
regularization \citep{li2023fixed,levinstein2025optimal},
and trade-offs such as memory versus generalization \citep{li2025memory}.
Similar linear settings are also used to study continual \emph{classification} \citep[e.g.,][]{evron2023classification,jung2025convergence}.

\pagebreak

%%% Regularization methods

We analyze a popular regularization scheme that penalizes changes in the parameter space to prevent forgetting previous expertise.
Specifically, we focus on L2 regularization, which penalizes changes isotropically, i.e., $\vw_t = \argmin_{\vw}\big\{\mathcal{L}_{t}(\vw) +\lambda d\|\vw-\vw_{t-1}\|^2\big\}$.
Despite not maintaining an additional regularization weight matrix, L2 regularization has been found to work well in practice \citep{Hsu2018Baselines,lubana2021regularization,smith2022closer}.

The regularization strength $\lambda$ plays a central role in methods like ours, governing the balance between stability and plasticity \citep{mermillod2013stability,kirkpatrick2017ewc}. 
Theoretically guided scaling laws for the regularization strength can greatly reduce computational cost, alleviating the need for extensive hyperparameter tuning. 
Only a few theoretical works have focused on the regularization strength.
For example, \citet{levinstein2025optimal} shows that near-optimal convergence rates can be achieved using a fixed strength. However, their analysis pertains only to the training error in the worst case, leaving open the question of whether the chosen strength is truly optimal.

While some prior theoretical work analyzes the \emph{training} loss and forgetting \citep[e.g.,][]{evron2022catastrophic,evron2025better,swartworth2023nearly,goldfarb2024theJointEffect,levinstein2025optimal}, we adopt a complementary and widely studied  \emph{statistical} viewpoint, focusing on the \emph{generalization} loss of our continual scheme.
Following prior work in a similar spirit \citep[e.g.,][]{lin2023theory,li2023fixed,zhao2024statistical}, we study a random design in which labels are generated by an underlying (noisy) linear `teacher' (i.e., task ground truth).
Importantly, our analysis accommodates \emph{multiple teachers}, in contrast to several prior studies \citep[e.g.,][]{ding2024understanding,zhao2024statistical,levinstein2025optimal}, which assume a single global teacher for all tasks. 
Allowing multiple teachers provides a more realistic model for practical scenarios where the underlying ground truth (e.g., user preferences or fraud behaviors) naturally evolve over time. 
Under an i.i.d.~feature assumption common in statistical studies \citep[e.g.,][]{lin2023theory,goldfarb2023analysis,zhao2025highdimensional}, a single teacher yields generalization affected solely by label noise, while multiple teachers introduce task-level variation, again, making the setting more realistic and challenging.

A closely related paper by \citet{lin2023theory},  which also considered multiple teachers, studied an \emph{unregularized} scheme where past tasks influence optimization only \emph{implicitly}, through the bias of gradient algorithms (see also \citealt{evron2022catastrophic}).
While their analysis enables strong theoretical results, our findings highlight that incorporating a regularizer is crucial, particularly for reducing the noise floor induced by label noise.

Our results provide general insights for \emph{any} setup with multiple teachers, whereas our more concrete findings pertain to teachers drawn i.i.d.~from some distribution---allowing simple yet illustrative analysis that captures effects of teacher variance. 
We place particular emphasis on the regularization strength, proving that its optimal value scales nearly linearly with the number of tasks. 
This practical rule of thumb alleviates the need for costly hyperparameter tuning with many tasks, as we can now tune the regularization strength using only a few tasks, and then scale it accordingly to longer task horizons.
We empirically verify these findings on an \mnist-based dataset using both a linear regression model and a ReLU-activated neural network.

\paragraph{Summary of our contributions.}

\begin{enumerate}[leftmargin=0.5cm, itemindent=0.05cm, labelsep=0.2cm, itemsep=2pt,topsep=-1pt]

\item We analyze generalization in continual linear regression in a high-dimensional regime, accommodating multiple teachers, label noise, and an arbitrary number of tasks---a setup broader than those in prior work (see \cref{tab:comparison}).

\item For both a single teacher and i.i.d.~teachers, we identify the optimal regularization strength $\lambda^{\star}$. 

\item We demonstrate that L2 regularization, which does not require maintaining an additional weighting matrix, can close a gap in prior theoretical literature by staying informative in noisy regimes.

\item We validate our theoretical results in a standard regularized continual linear model, using both synthetic random regression data and an \mnist-based classification problem.

\item Finally, we demonstrate similar findings with simple neural networks, validating our theory-guided rule of thumb for choosing the regularization strength.

\end{enumerate}

\section{Setting: Regularized Continual Linear Regression}
\label{sec:setting}

As explained in the introduction, we study the illustrative continual linear regression setting using an L2 regularization scheme, described formally below.

\paragraph{Learning tasks in a sequence.}
In our setting, the learner is exposed to a sequence of \(T\) tasks \(\left((\X_i, \y_i)\right)_{i=1}^T\), where \(\X_i \in \mathbb{R}^{n \times d}\), \(\y_i \in \mathbb{R}^{n}\).\footnote{
To simplify exposition, we assume all tasks have $n$ samples, though our analysis can extend to varying sample sizes.
} 
At each iteration \(t\), the learner updates the predictor \(\vw_t\) by fitting only the current task, \emph{without access to samples from previous tasks}.

To be able to learn continually and mitigate \emph{catastrophic forgetting}, it is common to add a regularization term towards the previous iterate $\vw_{t-1}$.
While the regularizer is often based on the Fisher information of previous tasks \citep{kirkpatrick2017ewc,benzing2021unifying_regularization}, recent works have observed empirically that L2 (isotropic) regularization achieves similar performance \citep{Hsu2018Baselines, lubana2021regularization,smith2022closer}. 
Thus, we follow recent theoretical work \citep[e.g.,][]{li2023fixed,levinstein2025optimal} and focus on the following L2 regularization scheme.

{\centering
\begin{algorithm}[H]
  \caption{Continual Linear Regression with L2 Regularization
   \strut
  \label{schm:regularized}}
\begin{algorithmic}
\vspace{-0.1em}
   \STATE {
   \textbf{Input:} 
   $\{(\X_i,\y_i)\}_{i=1}^T$, 
   initial iterate $\vw_{0}$,
   regularization strength $\lambda$
   } 
   \STATE { 
   \textbf{For each} iteration $t=1,\dots,T$: 
   }
   \STATE {
   $\quad \displaystyle
        \mathbf w_t \;\gets\;
        {\argmin}_{\mathbf w}\Bigl\{\|\mathbf X_{t}\mathbf w-\mathbf y_{t}\|^2
        +\lambda d\|\mathbf w-\mathbf w_{t-1}\|^2\Bigr\}$
    }
   \STATE {
   \textbf{Output:} $\vw_{T}$
   } 
\end{algorithmic}
\end{algorithm}
\vspace{-0.4em}
%\par
}

Note that for $\lambda\to0$, the algorithm reduces to optimizing the current task alone (as analyzed in \citet{evron2022catastrophic}), whereas for large values of $\lambda$, the current task has negligible influence.
\linebreak
We normalize the regularization strength by the data dimensionality, following a convention commonly used in Bayesian settings, where the prior is often scaled in this way~\citep{lee2018deep}.

For ease of readability, we keep implicit any dependence of $\vw_t$ on other quantities (e.g., $\lambda,d$).
Furthermore, we collect all notation used throughout the paper in \cref{tab:notation} (\appref{app:notations}).

\subsection{Statistical Setup: High-Dimensional Random Design with Linear Teachers}
\label{sec:statistical_setting}

To study generalization, we adopt a statistical setup with random data, multiple (linear) teachers, and noisy labels. 
Variants of this setup are commonly used in the theoretical literature on continual learning \citep[e.g.,][]{goldfarb2023analysis,lin2023theory,zhao2024statistical, Friedman25} and in broader machine learning research \citep[e.g.,][]{hsu2012random,belkin2020two,bartlett2021benign}.
As shown in \cref{tab:comparison}, our setup is comparatively permissive, allowing both label noise and multiple teachers (e.g., i.i.d.~teachers).

\pagebreak

\paragraph{Data model.}
We consider a random design,
with independent random feature matrices \linebreak
$\X_1,\dots, \X_T\in \mathbb{R}^{n\times d}$,
where the entries of each matrix $\X_{t}$ are i.i.d.~random variables with mean~$0$, variance $\featnoise>0$, and a finite   $\prn{4+\epsilon}$\textsuperscript{th} moment for some $\epsilon>0$.\footnote{
For example, the entries of $\X_{1}$ can be $\mathcal{N}(0,1)$ and the entries of $\X_{2}$ can be $\text{Unif}{[-\sqrt{3}, \sqrt{3}]}$.
} 
Moreover, we focus on the high-dimensional regime, where 
$$n,d\to \infty,\quad \frac{n}{d}\to \ratio\le 1\,.
$$
For each task $t\in [T]\triangleq\left\{1,\dots,T\right\}$, labels are generated according to a corresponding ground-truth `teacher' vector $\vw_t^{\star}\in\mathbb{R}^{d}$ via a noisy linear model. That is,
$$
\y_{t}=\X_{t}\vw_{t}^{\star}+\z_{t}
\,,
$$
where $\z_t$ is a label noise vector, satisfying
\(\mathbb{E}[\z_t]=\0\) and \(\mathrm{Cov}(\z_t)=\labelnoise \I\) for some noise variance $\labelnoise \geq 0$, sampled independently of other tasks, i.e., of $\X_{t'},\z_{t'},\forall t'\neq t$.

Finally, for technical reasons, we assume finite asymptotic teacher correlations. That is,
$$
(\teacher_i - \teacher_j)^{\top}(\teacher_k - \teacher_l)
\xrightarrow[d\to\infty]{}
r_{i,j,k,l}
\,,
\qquad
\forall 
i, j, k, l\in\left\{0,1,\dots,T\right\}
\,.$$

\paragraph{Implications of the data model.}
High dimensionality allows us to derive explicit expressions using the Marchenko--Pastur theorem \citep{marchenko1967pastur}. 
Importantly, accommodating multiple teachers enables us to analyze i.i.d.~teachers, thereby characterizing continual scenarios that are more practical than those considered in most prior work (see \cref{sec:noisy_teacher}).

\paragraph{Metrics of interest.}
Some prior theoretical work in continual learning \citep[e.g.,][]{goldfarb2024theJointEffect,evron2022catastrophic,evron2025better} has analyzed the (training) squared error $\norm{\X_{i} \vw-\y_{i}}^2$.
However, under our statistical model, it is readily seen that averaging over the random test data and label noise yields
\begin{equation}
\label{eq:expected_loss}
\begin{aligned}
\mathbb{E}
\,\big\|\X_{i}\vw - \y_{i}\big\|^2
&= 
\mathbb{E}\,\big\|\X_{i}(\vw-\vw_{i}^\star) - \z_{i}\big\|^2 
= 
\featnoise n 
\,\norm{\vw - \vw_{i}^{\star}}^{2}
+
\labelnoise n
\,,
\end{aligned}
\end{equation}
making $\norm{\vw-\vw_{i}^\star}$ the only quantity influenced by the actual solution $\vw$.
Thus, we follow other prior work \citep{lin2023theory} and measure \emph{generalization} by the distance to the teacher(s) generating the labels.
Our formal definitions are provided below.

\begin{definition}[Individual Task Loss]
\label{def:train_loss}
For any $\vw\in\reals^{d}$, 
the loss of task $i\in\cnt{T}$ is defined as the distance of $\vw$ from the task's associated teacher.
That is,
$$\mathcal{L}_i(\vw)=\left\Vert \vw-\teacher_i\right\Vert ^{2}\,.$$
\end{definition}

\begin{definition}[Average Generalization Loss]\label{def:generlization:error}
The expected generalization loss at iteration $t$ is defined as the average of the individual losses of \emph{all} tasks encountered so far, evaluated at the iterate $\vw_{t}$.
That is,
$$
\mathbb{E}
\left[G_t\right]
\triangleq
\frac{1}{t}
\sum_{i=1}^{t} 
\mathbb{E}
\left[
\mathcal{L}_i(\vw_t)
\right]
=
\frac{1}{t}\sum_{i=1}^{t}
\mathbb{E}
\left\Vert\vw_t-\teacher_i\right\Vert^{2}
\,.$$
Note that $\vw_t$ is a random variable, as it depends on $\X_i,\z_i$ and the possibly random $\vw_i^{\star}, \forall i\in \cnt{t}$.
\end{definition}

\section{Generalization Analysis and Optimal Regularization Scaling Laws}
\label{sec:results}

In this section, we derive a closed-form expression for the expected generalization loss (of \cref{def:generlization:error}). 
This general expression accommodates diverse teacher setups and serves as the foundation for investigating an important special case in later sections.

\begin{theorem}[Generalization under General Teachers]
\label{eq:main_result}
Recall the high-dimensional statistical setting of \cref{sec:statistical_setting}, where the number of samples and the dimension $n,d\to \infty$ and $\frac{n}{d}\to \ratio\le 1$.
Moreover, recall that $\featnoise,\labelnoise$ are the variances of features and labels, respectively.
Then, the expected generalization loss (\cref{def:generlization:error}) 
after learning $T$ tasks with \cref{schm:regularized} is,
\[
\begin{aligned}
\mathbb{E}[G_T] 
  &
  = \underbrace{\labelnoise c\,\frac{1-a^{T}}{1-a}}_{\text{Label noise term}}
     \;+\underbrace{\;\frac{1}{T}\sum_{i=1}^{T}
     \left\Vert \vw_{T}^{\star}-\vw_{i}^{\star}\right\Vert^{2}}_{\text{Teacher variability term}}
  %    \\[6pt]
  % &
  % \quad 
  + \underbrace{2\big(\vw_{T}^{\star}-\bar{\vw}^{\star}\big)^{\!\top}
     \sum_{i=1}^{T} b^{\,T-i+1}
     \big(\vw_{i-1}^{\star}-\vw_{i}^{\star}\big)}_{\text{Interaction term}}\\[4pt]
  &
  \quad + \underbrace{\sum_{i=1}^{T}\sum_{i'=1}^{T}
     a^{\,T-\max(i,i')+1}\,b^{|i-i'|}
     \big(\vw_{i-1}^{\star}-\vw_{i}^{\star}\big)^{\!\top}
     \big(\vw_{i'-1}^{\star}-\vw_{i'}^{\star}\big)
     }_{\text{Accumulated temporal (teacher) correlations}}
     \,,
\end{aligned}
\]
where we define $\teacher_0 \triangleq \vw_0$ (by abuse of notation), the average teacher $\bar{\vw}^{\star} \triangleq \frac{1}{T}\sum_{i=1}^T \teacher_i$, and
\[
\begin{aligned}
  a &\triangleq 
  \tfrac{1}{2}
  \biggprn
  {
          1-\ratio + \frac{
          \lambda(1+\ratio)+\featnoise(1-\ratio)^{2}}
          {\sqrt{\lambda^2 +2\lambda\featnoise\prn{1+\alpha}+\featnoise^2 \prn{1-\ratio}^2}}
    }\,,
\\[4pt]
  b &\triangleq 
  \tfrac{1}{2}
  \biggprn
  {
          1-\ratio - \tfrac{\lambda}{\featnoise}
          +
          \tfrac{1}{\featnoise}
          \sqrt{\lambda^{2}
                 + 
                 2\lambda\featnoise(1+\ratio)
                 + 
                 \featnoise^{2} (1-\ratio)^{2}}
    },
    \\[4pt]
    c &\triangleq
    \tfrac{1}{2\featnoise}
  \biggprn
  {
        \frac{\featnoise(1+\ratio)+\lambda}
          {\sqrt{\lambda^2
                 + 2\lambda\featnoise(1+\ratio)
                 + \featnoise^{2}(1-\ratio)^{2}}}
          - 1
  }
  \,.\\[8pt]
\end{aligned}
\]
\end{theorem}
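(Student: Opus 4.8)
The plan is to solve each regularized step in closed form, unroll the resulting linear recursion for the error, and then evaluate the surviving second-moment terms in the high-dimensional limit using the Marchenko--Pastur law together with asymptotic freeness of independent sample-covariance matrices. Throughout I write $\tau(\cdot)\triangleq\lim_{d\to\infty}\tfrac1d\tr(\cdot)$ for the limiting normalized trace.

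\textbf{Step 1 (closed form and error recursion).} Each objective in \cref{schm:regularized} is a ridge problem, so setting $\M_t\triangleq\X_t^\top\X_t$ and using $\y_t=\X_t\teacher_t+\z_t$, the minimizer is
$$\vw_t=\B_t\vw_{t-1}+\A_t\teacher_t+\valpha_t,\quad \B_t\triangleq\lambda d(\M_t+\lambda d\I)^{-1},\ \A_t\triangleq\I-\B_t,\ \valpha_t\triangleq(\M_t+\lambda d\I)^{-1}\X_t^\top\z_t.$$
Writing $\e_t\triangleq\vw_t-\teacher_t$ and $\Delta_t\triangleq\teacher_{t-1}-\teacher_t$, this reduces to the linear recursion $\e_t=\B_t\e_{t-1}+\B_t\Delta_t+\valpha_t$ with $\e_0=\vw_0-\teacher_0=\0$, which unrolls to $\e_T=\sum_{i=1}^T(\B_T\cdots\B_i)\Delta_i+\sum_{i=1}^T(\B_T\cdots\B_{i+1})\valpha_i$.

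\textbf{Step 2 (decomposition).} Since $\vw_T-\teacher_i=\e_T+(\teacher_T-\teacher_i)$, I would expand, using $\tfrac1T\sum_i(\teacher_T-\teacher_i)=\teacher_T-\bar{\vw}^{\star}$,
$$\mathbb{E}[G_T]=\tfrac1T\sum_{i=1}^T\mathbb{E}\norm{\e_T+(\teacher_T-\teacher_i)}^2=\mathbb{E}\norm{\e_T}^2+2(\teacher_T-\bar{\vw}^{\star})^{\!\top}\mathbb{E}[\e_T]+\tfrac1T\sum_{i=1}^T\norm{\teacher_T-\teacher_i}^2.$$
The last term is already the teacher-variability term. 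Because $\z_i$ has mean zero and is independent of all features and of $\z_{i'\ne i}$ (and teachers, if random, are independent of features and noise, so we condition on them), the noise part of $\e_T$ does not contribute to $\mathbb{E}[\e_T]$ and does not cross-correlate with the teacher part. Hence $\mathbb{E}[\e_T]=\sum_i\mathbb{E}[\B_T\cdots\B_i]\Delta_i$, and $\mathbb{E}\norm{\e_T}^2$ splits into a pure-teacher quadratic form plus $\sum_i\mathbb{E}\norm{(\B_T\cdots\B_{i+1})\valpha_i}^2$, where the latter equals $\labelnoise\sum_i\mathbb{E}\,\tr[(\B_T\cdots\B_{i+1})^\top(\B_T\cdots\B_{i+1})(\M_i+\lambda d\I)^{-1}\M_i(\M_i+\lambda d\I)^{-1}]$ after integrating out $\z_i$.

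\textbf{Step 3 (high-dimensional evaluation).} Each $\B_t$ is a function of the independent Wishart matrix $\M_t$, and the $\Delta_i$ are fixed vectors independent of the features. In the regime $n,d\to\infty,\ n/d\to\alpha$, I would invoke two asymptotic facts: (i) for a product $W$ of functions of independent sample-covariance matrices and fixed vectors, $\Delta_i^\top W\Delta_{i'}\to(\Delta_i^\top\Delta_{i'})\,\tau(W)$ (quadratic-form concentration via eigenvector delocalization, together with $\mathbb{E}[W]\approx\tau(W)\I$ by coordinate symmetry); and (ii) normalized traces of products of independent such matrices factorize into products of individual normalized traces (asymptotic freeness). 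Applied to a palindromic product $\B_T\cdots\B_i\B_i\cdots\B_T$, each single-appearance index contributes a factor $b\triangleq\tau(\B)$ and each doubled index a factor $a\triangleq\tau(\B^2)$, giving $\tfrac1d\tr[(\B_T\cdots\B_{i'})^\top(\B_T\cdots\B_i)]\to b^{|i-i'|}a^{T-\max(i,i')+1}$ and $\tau(\B_T\cdots\B_i)\to b^{T-i+1}$. Substituting these into Step 2 reproduces the accumulated-correlation term and the interaction term exactly, while the noise term collapses to $\labelnoise\sum_{i=1}^T a^{T-i}c=\labelnoise c\,\tfrac{1-a^T}{1-a}$, with $c\triangleq\lim_{d\to\infty}\tr[(\M+\lambda d\I)^{-1}\M(\M+\lambda d\I)^{-1}]$.

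\textbf{Step 4 (scalars and the main obstacle).} It remains to evaluate $a,b,c$. The eigenvalues of $\B$ are $\tfrac{\lambda/\alpha}{\rho+\lambda/\alpha}$ where $\rho$ follows the Marchenko--Pastur law of $\tfrac1n\X_t^\top\X_t$ (with an atom of mass $1-\alpha$ at $0$), so $b=\tau(\B)$, $a=\tau(\B^2)$, and $c$ reduce to the MP Stieltjes transform $m_S(-\lambda/\alpha)$ and its derivative; since the MP discriminant is exactly $\sqrt{\lambda^2+2\lambda\featnoise(1+\alpha)+\featnoise^2(1-\alpha)^2}$, simplifying yields the stated closed forms (the limits $\lambda\to0$, giving $a=b=1-\alpha$ because $\B$ becomes the null-space projector, and $\lambda\to\infty$, giving $a=b=1$, provide useful sanity checks). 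I expect Step 3 to be the main obstacle: rigorously upgrading the freeness factorization and quadratic-form concentration to the expectation/almost-sure statements needed here — uniformly over the fixed number $T$ of factors and under only a finite $(4+\epsilon)$-th moment, where eigenvalue and eigenvector universality for non-Gaussian i.i.d.\ entries does the real work. Once those limiting identities are established, the remaining MP integrals for $a,b,c$ are routine.
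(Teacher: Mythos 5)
Your proposal is correct and follows essentially the same route as the paper's proof: the ridge closed form, the unrolled linear error recursion (your $\B_t$ is exactly the paper's $\mP_t$, and your $\valpha_i=\frac{1}{\lambda d}\B_i\X_i^\top\z_i$ reproduces their noise term), Marchenko--Pastur evaluation of the three scalar moments, and identical bookkeeping yielding the exponents $a^{T-\max(i,i')+1}b^{|i-i'|}$ and $b^{T-i+1}$. The one substantive difference is Step~3, which you flag as the main obstacle: you invoke asymptotic freeness plus quadratic-form concentration via eigenvector delocalization, whereas the paper needs neither. Since the $\mP_t$ are independent across tasks and, by the deterministic equivalent (Liao--Couillet, their Theorem~2.6), $\mathbb{E}[\mP_t]$ and $\mathbb{E}[\mP_t^2]$ are asymptotically scalar multiples of $\I$, iterated conditioning (tower property, peeling off the outermost factor each time) gives the factorizations
\[
\mathbb{E}\!\left[\mathbf S_{t:k}^{\top}\mathbf S_{t:k'}\right]
=\mathbb{E}\!\left[\mP^{2}\right]^{\,t-(k\vee k')+1}\,\mathbb{E}\!\left[\mP\right]^{\,|k-k'|}
\]
directly \emph{in expectation}, so the only random-matrix input is the single-resolvent equivalent under the $(4+\epsilon)$-moment assumption --- the freeness/universality machinery you were worried about rigorizing is simply not needed. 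In the other direction, your Step~2 decomposition is marginally cleaner than the paper's: by averaging over $i$ first you get the cross term $2(\teacher_T-\bar{\vw}^{\star})^{\top}\mathbb{E}[\e_T]$ in one line, while the paper expands $\|\vw_T-\vw_i^{\star}\|^2$ separately for each $i$ (with case analysis) and sums at the end. Your Step~4 scalar evaluations, including the identity $\frac{1}{(\lambda d)^2}\mP\X^{\top}\X\mP=(\M+\lambda d\I)^{-1}\M(\M+\lambda d\I)^{-1}$ used for $c$ and the $\lambda\to0$ check $a=b=1-\ratio$, match the paper's computations exactly.
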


To aid interpretation, we first decompose the generalization loss into its constituent terms. 
\linebreak
We then provide a proof sketch outlining the key steps of the full proof presented in \appref{app:main_proofs}.

\paragraph{Decomposing the generalization loss.}
The expected generalization in \cref{eq:main_result} can be decomposed into four terms, interpreted as follows:
\begin{itemize}[leftmargin=0.5cm, itemindent=0.05cm, labelsep=0.2cm, itemsep=1pt,topsep=-1pt]
  \item \textbf{Label noise term:} arises from label noise of the underlying model, scaling with label variance.
  
  \item \textbf{Teacher variability:} captures the intrinsic mismatch between the current teacher and past teachers due to task heterogeneity.
  
  \item \textbf{Interaction term:} reflects how the deviation of the final teacher from the average teacher interacts with the sequence of teacher updates, discounting older changes.
  
  \item \textbf{Temporal correlations:} accounts for correlations between successive teacher updates, describing how past changes compound and align (or misalign) over time.
  
\end{itemize}

\paragraph{Proof sketch.}
The iterates of \cref{schm:regularized} follow the recursive form:
\[
\vw_{t}=\left(\X_{t}^{\top}\X_{t}+\lambda d\I\right)^{-1}\left(\X_{t}^{\top}\mathbf{y}_{t}+\lambda d\vw_{t-1}\right)\,. 
\]
Denote $\mP_{t}=\lambda d\left(\mathbf{X}_{t}^{\top}\mathbf{X}_{t}+\lambda d\I\right)^{-1}\!,~\mathbf S_{t:k}\triangleq\mP_t\mP_{t-1}\cdots\mP_k$
and unroll the recursive form, to obtain
\[
\vw_t-\vw_i^\star
=
\mathbf S_{t:1}(\vw_0-\vw_1^\star)
+\frac{1}{\lambda d}\sum_{k=1}^t \mathbf S_{t:k}\X_k^\top\z_k
+\sum_{k=2}^t \mathbf S_{t:k}\big(\vw_{k-1}^\star-\vw_k^\star\big)
+(\vw_t^\star-\vw_i^\star)\,.
\]
To analyze $\mathbb{E}[G_T]=\frac{1}{T}\sum_{i}\mathbb{E}\!\left\|\vw_{T}-\vw_{i}^{\star}\right\|^{2}$, we invoke a Marchenko--Pastur result to approximate the resolvent $\left(\tfrac{1}{\featnoise n}\mathbf X^\top \mathbf X - z \I \right)^{-1}$ with its deterministic equivalent \citep{liao2023rmtml}:
$$
\Bigg\Vert\,\mathbb E \left[\left(\tfrac{1}{\featnoise n}\mathbf X^\top \mathbf X - z \I \right)^{-1}\right]-\frac{ -\left(1-\tfrac{1}{\ratio}-z\right) +\sqrt{\bigl(1-\tfrac{1}{\ratio}-z\bigr)^{2} - \tfrac{4}{\ratio}z} }{ -2\ratio/z } \,\I\, \Bigg\Vert
\xrightarrow[n,d\to\infty,\,\, d/n\to 1/\alpha]{\text{a.s.}}
 0\,.
$$
We use the resolvent and standard identities to obtain closed-form expressions for
$\mathbb E[\mP_t]$, $\mathbb E[\mP_t^2]$, and $\mathbb E[\left(\frac{1}{\lambda d}\right)^2\mP_t\,\X^\top_t\X_t\,\mP_t]$
as scalar multiples of $\I$, depending only on $(\lambda,\featnoise,\ratio)$.
These are then substituted into the unrolled form to yield the explicit expression.
\jmlrQED

\paragraph{Sanity check.}
\citet{lin2023theory} studied our \cref{schm:regularized} in the limit $\lambda\to 0$, i.e., an equivalent unregularized scheme where each task is solved to convergence via SGD initialized at the previous iterate.
Below, we verify that \cref{eq:main_result} recovers their Theorem~4.1 as a special case.
\begin{example}[No Regularization]
\label{example:no_reg}
Indeed, in the high-dimensional regime, \cref{eq:main_result} generalizes Theorem~4.1 in \citet{lin2023theory}.
Specifically, after adjusting notation, their result becomes,
$$
\mathbb{E}[G_T]
    =
    \tfrac{\left(1-\ratio\right)^{T}}{T}
    \sum_{i=1}^{T}\left\Vert \vw_{i}^{\star}\right\Vert ^{2}
    +
    \tfrac{1}{T}\sum_{i=1}^{T}\left(1-\ratio\right)^{T-i}\ratio
    \sum_{k=1}^{T}\left\Vert \vw_{k}^{\star}-\vw_{i}^{\star}\right\Vert ^{2}
    +
    \tfrac{d\labelnoise}{d-n-1}
    \left(1-\left(1-\ratio\right)^{T}\right)
    ,
$$
aligning with \cref{eq:main_result} in the high-dimensional, unregularized case ($d,n \to \infty$, $\lambda\to 0$, $\featnoise=1$).
\linebreak
Proof is given in \appref{pr:sp}.
\end{example}

\subsection{Special Case: Multiple i.i.d.~Teachers}
\label{sec:noisy_teacher}

Thus far, \cref{eq:main_result} has accommodated multiple ``label-generating'' teachers.
To obtain more meaningful results, we now assume that teachers are sampled {i.i.d.}\footnote{
More generally, our analysis accommodates teachers drawn independently from several distributions with a common mean and covariance. For ease of exposition, however, we present results only for the i.i.d. setting.
}~from a common distribution.

\begin{assumption}[i.i.d.~Teachers]
\label{assump:iid_teachers}
Considering the statistical setting of \cref{sec:statistical_setting}, the teachers
\linebreak
$\vw_{1}^{\star},\dots,\vw_{T}^{\star}$ are drawn i.i.d.~from a distribution with mean \(\vw^{\star}\) and covariance \(\mathbf{\Sigma}\) where\linebreak \(\|\mathbf{\Sigma}\|_*\leq C<\infty\), independently of all other random variables (features and labels).
\end{assumption}

This assumption is more permissive than most theoretical work in continual learning (\cref{tab:comparison}), which typically assumes (i) a single teacher with label noise or (ii) joint realizability across \emph{training} sets (as in highly overparameterized regimes).
By contrast, in applications like autonomous systems and e-commerce, underlying dynamics often shift.
For instance, varying agents and simulators inevitably introduce teacher variance due to environmental gaps \citep{zhao2020sim}.
Our setup more accurately captures these practical challenges by explicitly allowing for multiple teachers.

\pagebreak

We begin with  \cref{cor:closed}, which specializes \cref{eq:main_result} to the i.i.d.~teacher setting.
We then use it to study how the regularization strength $\lambda$ impacts generalization.
In this section, all expectations are taken over features, label noise, \emph{and teachers}. For ease of presentation, we assume a zero initialization $\vw_0=0$.
Full proofs are given in \appref{allproofs}.

\begin{corollary}[Generalization under i.i.d.~Teachers]
\label{cor:closed}
Under i.i.d.~teachers  (\cref{assump:iid_teachers}), and for all $T>1$, the expected generalization loss of \cref{schm:regularized} becomes,
\begin{align*}
\mathbb E[G_{T}]=
&
\underbrace{
\labelnoise{c}\frac{1-a^{T}}{1-a}
}_{\text{Label noise}}
+
\underbrace{
a^{T}\left\Vert \vw^{\star}\right\Vert ^{2}
}_{\text{Teacher scale}}
+
\underbrace{
2\trace(\boldsymbol{\Sigma})
}_{\substack{\text{Teacher}\\\text{variance}}}
\prn{
\frac{1-b}{1-a}
+
\frac{2b^{T}-1}{T}
+
\frac{a^{T}\prn{-a +2b -1}
}{2(1-a)}
}
\,.
\end{align*}
\end{corollary}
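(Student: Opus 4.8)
The plan is to specialize \cref{eq:main_result} by taking an additional expectation over the i.i.d.\ teachers (\cref{assump:iid_teachers}) and simplifying each of the four terms using the independence and identical-distribution structure. Throughout I would use the key moment identities that follow from \cref{assump:iid_teachers} with $\vw_0=0$: for any $i\neq j$, $\expectation[\teacher_i]=\vw^\star$, $\expectation\,\teacher_i^\top\teacher_j=\norm{\vw^\star}^2$, and $\expectation\,\teacher_i^\top\teacher_i=\norm{\vw^\star}^2+\trace(\mSigma)$; consequently $\expectation\norm{\teacher_i-\teacher_j}^2=2\trace(\mSigma)$ for $i\neq j$ and $0$ for $i=j$, and the successive increments satisfy $\expectation\,(\teacher_{i-1}-\teacher_i)^\top(\teacher_{i'-1}-\teacher_{i'})$ depends only on the overlap pattern of the index pairs. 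The label-noise term carries over verbatim since it does not involve the teachers.

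First I would handle the teacher-variability term $\frac{1}{T}\sum_{i=1}^T\expectation\norm{\teacher_T-\teacher_i}^2$: each summand with $i\neq T$ contributes $2\trace(\mSigma)$ and the $i=T$ summand vanishes, giving $\frac{2(T-1)}{T}\trace(\mSigma)$. Next, for the interaction and temporal-correlation terms, the main bookkeeping is evaluating $\expectation$ of the increment products $(\teacher_{i-1}-\teacher_i)^\top(\teacher_{i'-1}-\teacher_{i'})$, being careful that the increments involving $\teacher_0=\vw_0=0$ (i.e.\ $i=1$) behave differently from the others. I would split each double/single sum according to whether indices coincide, adjacent increments share a teacher, or are disjoint, reducing everything to geometric-type sums in $a$ and $b$. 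For the interaction term I would also need $\expectation\,(\teacher_T-\bar\vw^\star)^\top(\teacher_{i-1}-\teacher_i)$; since $\bar\vw^\star=\frac1T\sum_k\teacher_k$, this again reduces to counting shared-teacher overlaps and yields contributions proportional to $\trace(\mSigma)$.

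After assembling the pieces, the remaining work is to collect all the $\trace(\mSigma)$ coefficients from the variability, interaction, and temporal terms and show they combine into the single bracketed expression $\frac{1-b}{1-a}+\frac{2b^T-1}{T}+\frac{a^T(-a+2b-1)}{2(1-a)}$, while the $\norm{\vw^\star}^2$ contributions collapse to the clean $a^T\norm{\vw^\star}^2$ ``teacher scale'' term (the cross terms from the initialization-at-zero and the unrolled recursion conspiring to cancel). The finite geometric sums $\sum_{i} a^{T-i+1}$ and the double sum $\sum_{i,i'}a^{T-\max(i,i')+1}b^{|i-i'|}$ evaluate in closed form via standard manipulation of geometric series, and the factor $\frac1T$ in front of the variability term is what produces the $\frac{2b^T-1}{T}$ piece.

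The hard part will be the algebraic consolidation of the $\trace(\mSigma)$ coefficient: the double sum in the temporal-correlation term contributes through three distinct index regimes (coincident, adjacent, disjoint increments), and matching their combined geometric-series evaluation against the interaction and variability contributions so that everything telescopes into the compact bracket requires careful tracking of the $b^{|i-i'|}$ weights and the boundary increment at $i=1$. I would organize this by first computing the disjoint-index contribution (which factorizes), then adding the diagonal and adjacent corrections, and finally verifying the $\norm{\vw^\star}^2$ cancellation separately; the bulk of the risk is a stray factor of two or an off-by-one in the $\frac1T$ normalization, so I would cross-check the final expression against \cref{example:no_reg} in the limit $\lambda\to0$.
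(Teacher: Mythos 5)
Your proposal is correct and follows essentially the same route as the paper: both substitute $\teacher_i=\vw^\star+\boldsymbol{\xi}_i$ (with $\teacher_0=\vw_0=\0$) into the expression from \cref{eq:main_result}, use the moment identities $\mathbb{E}[\boldsymbol{\xi}_i^\top\boldsymbol{\xi}_j]=\delta_{ij}\trace(\boldsymbol{\Sigma})$, and consolidate everything into geometric sums in $a$ and $b$. The only difference is bookkeeping: the paper works from the per-task expression (\cref{eq:mainresult1}) and splits into the cases $i=T$, $1<i<T$, $i=1$ before averaging over $i$, whereas you specialize the already-averaged statement and split the sums by increment-overlap regimes (coincident, adjacent, disjoint, plus the $i=1$ boundary increment) --- the same case analysis performed in a different order.
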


The following lemma contrasts regularization calibrated to the task horizon $T$ against any \emph{fixed} strength. 
Specifically, the generalization loss with optimal $\lambda^{\star}(T)$ is monotonically decreasing in $T$ (under a single teacher), but non-monotonic for any fixed strength (under i.i.d.~teachers).

\begin{lemma}[More Tasks Can Be Useful or Harmful---Depending on Teacher Setup]
\label{thm:more_tasks}
Consider\linebreak the i.i.d.~teacher setting (\cref{assump:iid_teachers}) with a label noise $\labelnoise>0$.
\begin{itemize}[leftmargin=0.5cm, itemindent=0.05cm, labelsep=0.2cm, itemsep=1pt,topsep=-1pt]
    \item \textbf{Single teacher:}
    When $\vw^{\star}_1=\dots=\vw^{\star}_{T}$ (i.e., $\trace\left(\mathbf{\Sigma}\right)=0$), the expected generalization loss with an \emph{optimal} horizon-dependent strength $\lambda^\star(T)$ is monotonically \emph{decreasing} in the task horizon $T$. 
    \item \textbf{Multiple i.i.d.~teachers:} When teachers vary across tasks (i.e., $\trace\!\left(\mathbf{\Sigma}\right)>0$), then under any \emph{fixed} strength $\lambda>0$, the expected generalization loss is monotonically \emph{increasing} in the task horizon $T$ for all $T \geq T'$ for a sufficiently large $T'$.
\end{itemize}
\end{lemma}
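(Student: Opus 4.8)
The plan is to read both claims directly off the closed form in \cref{cor:closed}, treating the two regimes separately: when $\trace(\boldsymbol{\Sigma})=0$ the teacher-variance term drops out, whereas when $\trace(\boldsymbol{\Sigma})>0$ that same term is what eventually governs the dependence on $T$. For the single-teacher part, set $B\triangleq\norm{\vw^{\star}}^{2}$ and the noise floor $N(\lambda)\triangleq \labelnoise c/(1-a)$. With $\trace(\boldsymbol{\Sigma})=0$, \cref{cor:closed} collapses to the compact form $f_{T}(\lambda)\triangleq\mathbb E[G_{T}]=N+a^{T}(B-N)$, which yields the telescoping identity $f_{T+1}(\lambda)-f_{T}(\lambda)=a^{T}(1-a)(N-B)$. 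Hence, at a fixed $\lambda$, one more task helps precisely when $N(\lambda)<B$. Writing $g(T)\triangleq\min_{\lambda}f_{T}(\lambda)=f_{T}(\lambda^{\star}(T))$ and using $g(T+1)\le f_{T+1}(\lambda^{\star}(T))$, it suffices to prove $N(\lambda^{\star}(T))<B$. I obtain this from the first-order condition $\tfrac{d}{d\lambda}f_{T}(\lambda^{\star}(T))=0$: differentiating $f_{T}=N+a^{T}(B-N)$ gives $N'(1-a^{T})+T a^{T-1}a'(B-N)=0$, so $B-N=-N'(1-a^{T})/(T a^{T-1}a')$, whose sign equals that of $-N'/a'$.

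It then remains to certify $a'>0$ and $N'<0$ on $(0,\infty)$. Both reduce to differentiating the explicit formulas through the shared discriminant $D\triangleq\sqrt{\lambda^{2}+2\lambda\featnoise(1+\ratio)+\featnoise^{2}(1-\ratio)^{2}}$; for $a$ the sign-determining quantity simplifies to $4\lambda\featnoise\ratio>0$, and the analogous (longer) computation for $N$, together with the boundary values $N(0^{+})=\labelnoise/(\featnoise(1-\ratio))$ and $N(\infty)=0$, gives $N'<0$. Consequently $-N'/a'>0$, so $B-N>0$ at any interior optimizer, and therefore $g(T+1)\le f_{T+1}(\lambda^{\star}(T))<f_{T}(\lambda^{\star}(T))=g(T)$, i.e.\ $g$ is strictly decreasing in $T$.

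For the multiple-i.i.d.-teacher part, fix $\lambda>0$. I first record $0<a<1$ and $0<b<1$ (each inequality reduces, after squaring, to $(1-\ratio)^{2}<(1+\ratio)^{2}$), so that $a^{T},b^{T}\to0$. Isolating the $T$-dependence of $\mathbb E[G_{T}]$, the noise-plus-scale part equals $N+a^{T}(B-N)$ and the variance part equals $2\trace(\boldsymbol{\Sigma})\big[\tfrac{1-b}{1-a}+\tfrac{2b^{T}-1}{T}+\tfrac{(-a+2b-1)a^{T}}{2(1-a)}\big]$. Differentiating in $T$ (viewed as continuous), every summand carrying $a^{T}$ or $b^{T}$ is $O(\rho^{T})$ with $\rho\triangleq\max(a,b)<1$, while the $-1/T$ summand contributes $+2\trace(\boldsymbol{\Sigma})/T^{2}$. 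Hence $\tfrac{d}{dT}\mathbb E[G_{T}]=\tfrac{2\trace(\boldsymbol{\Sigma})}{T^{2}}(1+o(1))+O(\rho^{T})$, which is strictly positive once $T\ge T'$ because the polynomially small positive term (nonzero as $\trace(\boldsymbol{\Sigma})>0$) dominates the exponentially small remainder, establishing monotone increase for all $T\ge T'$.

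The crux is the single-teacher part, specifically certifying the sign of $-N'/a'$ at the optimizer; the cleanest route is the global monotonicity of $a$ and $N$, whose verification---especially $N'<0$, since $N$ couples $c$ and $1-a$ through $D$---is the most calculation-heavy step. A secondary subtlety is the interiority of $\lambda^{\star}(T)$: because $\labelnoise>0$ forces $N(0^{+})>0$ while larger $\lambda$ drives the noise floor to $0$, the minimizer is interior and the first-order condition applies (the degenerate case $\vw^{\star}=0$ makes $g$ constant, so the claim holds weakly). For the multi-teacher part the only care needed is the uniform bound ensuring $1/T^{2}$ eventually beats $\rho^{T}$, which is routine.
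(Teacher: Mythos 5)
Your proposal is sound, and it splits exactly as the paper does into the two regimes of \cref{cor:closed}. For the multi-teacher half your argument coincides with the paper's: it too differentiates the closed form in continuous $T$ and shows the derivative is the polynomially small positive term $2\trace(\boldsymbol{\Sigma})/T^{2}$ (coming from the $-1/T$ part of $\tfrac{2\trace(\boldsymbol{\Sigma})}{T}(2b^{T}-1)$) plus exponentially decaying remainders in $a^{T},b^{T}$, using $0<a<b<1$. For the single-teacher half your mechanism is the paper's but the packaging genuinely differs. The paper writes $J(t,\lambda)=a^{t}\norm{\vw_{0}-\vw^{\star}}^{2}+\tfrac{\labelnoise}{\featnoise}\tfrac{1-a^{t}}{s}$ with $s=\tilde{\lambda}+D$, establishes an interior stationary minimizer by checking the sign of $\partial_{\lambda}J$ at both ends (with a separate treatment of $\ratio=1$), and then evaluates $\tfrac{\dd}{\dd t}J(t,\lambda^{\star})=a^{t}\ln a\,\bigl[\norm{\vw_{0}-\vw^{\star}}^{2}-\tfrac{\labelnoise}{\featnoise s}\bigr]<0$ after substituting the stationarity condition---an envelope-style continuous-$t$ argument. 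Your discrete one-step comparison $g(T+1)\le f_{T+1}(\lambda^{\star}(T))<f_{T}(\lambda^{\star}(T))=g(T)$, with stationarity forcing $B-N=-N'(1-a^{T})/(Ta^{T-1}a')>0$, rests on the identical key observation (at the optimum the signal exceeds the noise floor, $N(\lambda^{\star})<B$) while avoiding differentiation of the value function in $t$; that is a mild simplification, and arguably cleaner.

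Two repairs are needed, both within reach of what you already have. First, you never actually prove $N'<0$, and you flag it as the calculation-heavy step; in the paper's parametrization it is immediate, since $N=\labelnoise c/(1-a)=\tfrac{\labelnoise}{\featnoise(\tilde{\lambda}+D)}$ and $\tilde{\lambda}+D$ is strictly increasing in $\lambda$ (the paper also derives this sign directly as $\dd k/\dd\lambda<0$ in the proof of \cref{thm:asymptotic_mse}). Second, your interiority argument is too quick: for $\ratio<1$, knowing $N(0^{+})>0$ and $N(\infty)=0$ does not exclude a minimizer at $\lambda=0$; you additionally need $f_{T}'(0^{+})<0$, which holds because $a'=2\tilde{\lambda}\ratio/(\featnoise D^{3})$ vanishes at $\tilde{\lambda}=0$ while $N'(0^{+})$ is finite and negative. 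Relatedly, your boundary value $N(0^{+})=\labelnoise/(\featnoise(1-\ratio))$ is only valid for $\ratio<1$; at $\ratio=1$ one has $N(0^{+})=\infty$ (the paper devotes dedicated lemmas to this boundary case), though the divergence only strengthens interiority there. With these two points patched, your argument goes through and matches the paper's conclusions, including your correct observation that the $\vw^{\star}=\0$ case is degenerate (the paper implicitly excludes it, as its large-$\lambda$ positivity step requires $\norm{\vw_{0}-\vw^{\star}}^{2}>0$).
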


\subsubsection{Optimal Scaling of the Regularization Strength:
$\displaystyle\lambda^{\star}\asymp\frac{\featnoise \alpha T}{\log(\text{signal}/\text{noise})}$}
Now, we characterize the optimal regularization strength $\lambda^{\star}$ that minimizes the generalization loss.
A theory-guided rule for $\lambda$ is essential when hyperparameter tuning is prohibitive, as in continual learning.
In our setting, we prove a near linear optimal scaling with $\alpha, \featnoise, T$ leading to mean-square convergence to the ``global'' teacher, i.e., $\vw_{T}\xrightarrow[]{T\to\infty}\teacher$.

\begin{theorem}[Optimal Regularization Scales Nearly Linearly with $T$]
\label{thm:optimal}
Under i.i.d.\ teachers (\cref{assump:iid_teachers}) with non-zero\footnote{Since $\vw_{0} = \0$, 
additionally having $\mathbb{E}[\teacher_{t}] = \0$ would directly imply an optimal regularization of $\lambda^{\star}\to\infty$.} mean teacher $\teacher$, the optimal fixed regularization strength that minimizes the expected generalization 
loss after $T$ iterations satisfies 
$\lambda^\star
=
\lambda^\star(T)\asymp\frac{T}{\ln T}$.
More precisely, for any $\epsilon>0$, there exists
$T_0 = O\!\left(\hfrac{1}{\epsilon^2}\right)$ such that for all
$T \ge T_0$,
\[(1-\epsilon)
\frac{2\featnoise \alpha T}{
\ln
\Bigprn{
    \frac{4\alpha T\left\Vert \vw^{\star}\right\Vert ^{2}\featnoise
    }
    {{\labelnoise+
    \featnoise\mathrm{Tr}(\boldsymbol{\Sigma})
    \left(1+\ratio\right)}
    }
}
}
<
\lambda^\star
<
(1+\epsilon)
\frac{2\featnoise \alpha T}{
\ln
\Bigprn{
    \frac{4\alpha T\left\Vert \vw^{\star}\right\Vert ^{2}\featnoise
    }
    {\labelnoise+\featnoise\mathrm{Tr}(\boldsymbol{\Sigma})
    \left(1+\ratio\right)}
    }
}
\,.
\]
Furthermore, in the degenerate noiseless case of $\mathrm{Tr}(\boldsymbol\Sigma)=\labelnoise=0$, it holds that $\lambda^\star\to 0$.
\end{theorem}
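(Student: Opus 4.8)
The plan is to work directly from the closed form of $\mathbb E[G_T]$ in \cref{cor:closed} and exploit that the optimum lives in the large-$\lambda$ regime (consistent with the target $\lambda^\star\asymp T/\ln T$). First I would expand $a,b,c$ in powers of $1/\lambda$ about $\lambda=\infty$. Writing $s\triangleq\lambda+\featnoise(1+\alpha)$ and using $\sqrt{\lambda^2+2\lambda\featnoise(1+\alpha)+\featnoise^2(1-\alpha)^2}=\sqrt{s^2-4\featnoise^2\alpha}$, one obtains the clean leading asymptotics $1-a=\tfrac{2\featnoise\alpha}{s}+O(s^{-2})$, $1-b=\tfrac{\featnoise\alpha}{s}+O(s^{-3})$, $\tfrac{c}{1-a}=\tfrac{1}{2s}+O(s^{-2})$, and $\tfrac{1-b}{1-a}=\tfrac12+\tfrac{\featnoise(1+\alpha)}{4s}+O(s^{-2})$. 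These immediately give $a^{T}=\exp\big(-\tfrac{2\featnoise\alpha T}{\lambda}+O(T/\lambda^2)\big)$, a label-noise term $\labelnoise c\,\tfrac{1-a^T}{1-a}=\tfrac{\labelnoise(1-a^T)}{2\lambda}(1+o(1))$, and a teacher-variance term equal to a $\lambda$-independent floor $\tr(\boldsymbol\Sigma)$ plus a correction $\tfrac{\featnoise\tr(\boldsymbol\Sigma)(1+\alpha)}{2\lambda}(1+o(1))$; the remaining pieces of the bracket (the $\tfrac{2b^T-1}{T}$ and the $a^T(-a+2b-1)$ terms) are $o(1/\lambda)$ in this regime, the cancellation in the latter following from $1-a=2(1-b)$ to leading order.

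Collecting the $\lambda$-dependent parts, the generalization loss reduces, up to the constant floor and lower-order terms, to the effective objective $g(\lambda)=\|\vw^\star\|^2\,e^{-2\featnoise\alpha T/\lambda}+\tfrac{\mathcal N}{2\lambda}$, where $\mathcal N\triangleq\labelnoise+\featnoise\tr(\boldsymbol\Sigma)(1+\alpha)$ plays the role of an \emph{effective noise} — this is exactly how the label-noise variance and the teacher variance merge inside the logarithm. Setting $g'(\lambda)=0$ yields $e^{-2\featnoise\alpha T/\tilde\lambda}=\tfrac{\mathcal N}{4\featnoise\alpha T\|\vw^\star\|^2}$, i.e. $\tilde\lambda=\tfrac{2\featnoise\alpha T}{L}$ with $L\triangleq\ln\big(\tfrac{4\featnoise\alpha T\|\vw^\star\|^2}{\mathcal N}\big)$, which is precisely the centre of the claimed interval; note $L>0$ (hence $\tilde\lambda>0$) once $T$ exceeds a fixed constant.

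To upgrade this to the rigorous two-sided bound I would sandwich the true minimizer by checking the sign of $\tfrac{d}{d\lambda}\mathbb E[G_T]$ at the endpoints $\lambda_\pm=(1\pm\epsilon)\tilde\lambda$. For the effective objective, $\lambda_+^2 g'(\lambda_+)=\tfrac{\mathcal N}{2}\big(e^{\epsilon L/(1+\epsilon)}-1\big)>0$ and $\lambda_-^2 g'(\lambda_-)=\tfrac{\mathcal N}{2}\big(e^{-\epsilon L/(1-\epsilon)}-1\big)<0$, both bounded away from $0$ with magnitude at least of order $\mathcal N\min(1,\epsilon L)$; together with one-time-crossing (unimodality) of $\lambda^2 g'$ this pins $\lambda^\star$ inside $(\lambda_-,\lambda_+)$. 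It then suffices to show the derivative of the discarded remainder is smaller than this margin. The dominant remainder comes from replacing $a^T$ by $e^{-2\featnoise\alpha T/\lambda}$: the exponent error $\delta(\lambda)=T\ln a+\tfrac{2\featnoise\alpha T}{\lambda}=O(T/\lambda^2)=O((\ln T)^2/T)$ near $\tilde\lambda$, and a first-order perturbation estimate $\tfrac{\lambda^\star-\tilde\lambda}{\tilde\lambda}\approx -\tfrac{E'(\tilde\lambda)}{\tilde\lambda\,g''(\tilde\lambda)}$ shows the induced relative displacement of the minimizer is $O(\ln T/T)$. Bounding $\ln T\le\sqrt T$ turns this into $O(1/\sqrt T)$, so the relative error is below $\epsilon$ as soon as $T\ge T_0=O(1/\epsilon^2)$ — the asserted threshold. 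I would also rule out a spurious small-$\lambda$ minimizer by comparing values directly: as $\lambda\to0$ one has $a,b\to1-\alpha$, so $\mathbb E[G_T]\to\tfrac{\labelnoise}{\featnoise(1-\alpha)}+2\tr(\boldsymbol\Sigma)$, strictly above the large-$\lambda$ value $\tr(\boldsymbol\Sigma)+o(1)$ whenever $\mathcal N>0$.

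I expect the main obstacle to be precisely this rigor step: near $\tilde\lambda$ the objective is extremely flat, with curvature $g''(\tilde\lambda)\asymp\mathcal N(\ln T)^4/T^3$, so the minimizer is delicate and sensitive to the neglected $O(1/\lambda)$ and $O(1/T)$ corrections. The bulk of the work is therefore sharp, \emph{non-asymptotic} control of the expansion remainders of $a,b,c$, uniform over the whole window $\lambda\in[(1-\epsilon)\tilde\lambda,(1+\epsilon)\tilde\lambda]$, together with a clean one-crossing statement for $\lambda^2 g'$ so that the two endpoint sign checks genuinely trap $\lambda^\star$. Finally, the degenerate case $\tr(\boldsymbol\Sigma)=\labelnoise=0$ is immediate: then $\mathcal N=0$ and the loss collapses to $\mathbb E[G_T]=a^T\|\vw^\star\|^2$, and since $a$ increases monotonically in $\lambda$ from $1-\alpha$ (at $\lambda=0$) to $1$ (as $\lambda\to\infty$) — which I would confirm via $\tfrac{da}{d\lambda}>0$ — the loss is minimized at $\lambda^\star\to0$.
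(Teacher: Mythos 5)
Your reduction is computationally sound and, at its core, coincides with the paper's own analysis: all the expansions you state check out ($1-a$, $1-b$, $c/(1-a)$ in your variable $s=\lambda+\featnoise(1+\ratio)$, and in particular the second-order term $\frac{1-b}{1-a}=\frac12+\frac{\featnoise(1+\ratio)}{4s}+O(s^{-2})$, which is exactly what merges $\featnoise\trace(\boldsymbol\Sigma)(1+\ratio)$ with $\labelnoise$ into the effective noise $\mathcal N=\labelnoise+\featnoise\trace(\boldsymbol\Sigma)(1+\ratio)$ --- the content of the paper's \cref{lem:laurEgt}). Your stationarity equation $\lambda^{2}g'(\lambda)=2\featnoise\ratio T\left\Vert\teacher\right\Vert^{2}e^{-2\featnoise\ratio T/\lambda}-\mathcal N/2=0$ is the paper's root equation $\tilde H(q)=0$ after the substitution $q=2\featnoise\ratio T/\lambda$, your endpoint sign checks at $(1\pm\epsilon)\tilde\lambda$ with threshold $T_0=O(1/\epsilon^{2})$ mirror \cref{eqq:1,eqq:2}, and your degenerate case ($\trace(\boldsymbol\Sigma)=\labelnoise=0$, so $\mathbb E[G_T]=a^{T}\left\Vert\teacher\right\Vert^{2}$ with $\frac{\dd a}{\dd\lambda}>0$) is handled exactly as in the paper.

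The genuine gap is localization: nothing in your argument shows that the \emph{global} minimizer of $\mathbb E[G_T]$ lies in the large-$\lambda$ regime where your expansions --- and hence the unimodality of the surrogate $g$ --- are meaningful. Your two substitutes do not close this. The one-crossing property of $\lambda^{2}g'$ is a property of the effective objective only, and the remainder control you plan is ``uniform over the window $[(1-\epsilon)\tilde\lambda,(1+\epsilon)\tilde\lambda]$''; but to trap the global minimizer you must exclude every $\lambda$ \emph{outside} that window, including $\lambda=\Theta(1)$ or $\Theta(\sqrt{T})$, where window-local bounds say nothing, and your value comparison at the single point $\lambda\to0^{+}$ likewise cannot rule out a dip at intermediate scales. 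This is precisely what the paper's auxiliary \cref{thm:opt_finite} supplies, and it is the bulk of the work: on any sublinear range $[0,\Lambda(T)]$ with $\Lambda(T)\in o(T^{\rho})$, the derivative of the infinite-horizon loss is strictly negative with magnitude $\Omega(\Lambda(T)^{-2})$ (\remref{rem:1} together with \cref{lem:laurEgt}), while the finite-$T$ correction to the derivative is bounded uniformly by $CT\zeta^{T-5}/(1-a)^{2}$ with $\zeta=1-\frac{\ratio}{1+\ratio+\Lambda(T)}$, which decays super-polynomially in $T$; hence $\frac{\dd}{\dd\lambda}\mathbb E[G_T]<0$ throughout the sublinear region, so no minimizer exists below $T^{\rho}$, and a matching superlinear computation excludes $\lambda\in\omega(T)$. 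Carrying this out also forces a separate treatment of the boundary case $\ratio=1$, where $\frac{\dd b}{\dd\tilde\lambda}$, $\frac{\dd f}{\dd\tilde\lambda}$, and related quantities diverge as $\tilde\lambda\to0^{+}$ (\cref{lem:c=1dc,lem:7,lem:1:a}) --- a case your sketch never engages with. Until you add such a uniform negative-derivative (or value-domination) argument over all sublinear $\lambda$, your endpoint checks certify only a local minimizer of the surrogate $g$, not the claimed global optimum $\lambda^{\star}$.
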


\paragraph{Interpretation.}
The denominator can be interpreted as a $\log(\text{signal}/\text{noise})$ factor.
The \emph{signal} consists of $\norm{\teacher}^2 \featnoise$, capturing the strength of shared structure across tasks,
which is reinforced over longer task horizons $T$,
while $\frac{n}{d}\to\ratio$ measures how informative each task is.
The internal denominator scales with \emph{noise} sources: label noise $\labelnoise$,
feature noise $\featnoise$,
and teacher variance $\trace(\mSigma)$.

In \cref{sec:related}, we contrast our optimal strength with results from prior work.

\pagebreak

\cref{thm:more_tasks} indicates that for fixed $\lambda$, performance deteriorates as $T$ exceeds a certain threshold (resembling overfitting). Consequently, given an unlimited ``budget'' of tasks, the optimal strategy is to increase regularization as much as possible, as formalized below. 

\begin{theorem}[Asymptotic Generalization under Strong Regularization]
\label{thm:asymptotic_mse}
Under i.i.d.~teachers \linebreak(\cref{assump:iid_teachers}), as the horizon $T \to \infty$, expected generalization loss decreases monotonically with strength $\lambda$.
Furthermore, the iterates converge in mean square to the ``global'' teacher $\teacher$:\footnote{
Limits are taken sequentially (first $T\to\infty$, then $\lambda\to\infty$) rather than simultaneously.
}
$$
\lim_{\lambda\to\infty} 
\Bigprn{\,\lim_{T \to \infty}  
\mathbb{E}
\left\Vert \vw_{T} - \vw^{\star}\right\Vert^{2}
\,}= 0\,.
$$
\end{theorem}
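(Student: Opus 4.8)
The plan is to work entirely from the closed form of \cref{cor:closed}, pass to the $T\to\infty$ limit, and then treat the two claims (monotonicity in $\lambda$, and mean-square convergence) in turn. First I record that for every fixed $\lambda>0$ the scalars $a,b$ are the deterministic equivalents of $\mathbb{E}[\mP_t^2]$ and $\mathbb{E}[\mP_t]$ (see the proof sketch), where $\mP_t=\lambda d(\X_t^\top\X_t+\lambda d\I)^{-1}$ has eigenvalues in $(0,1)$; since $\mP_t^2\preceq\mP_t$ this gives $0<a\le b<1$, so $a^T,b^T\to0$ (this also follows directly from the closed forms). Passing to the limit in \cref{cor:closed} annihilates the teacher-scale term and the $\tfrac{2b^T-1}{T}$ and $a^T$ contributions, leaving
\[
G_\infty(\lambda)\;\triangleq\;\lim_{T\to\infty}\mathbb{E}[G_T]
=\frac{\labelnoise c}{1-a}+2\,\trace(\mSigma)\,\frac{1-b}{1-a}.
\]
Writing $S\triangleq\sqrt{\lambda^2+2\lambda\featnoise(1+\ratio)+\featnoise^2(1-\ratio)^2}$, a short manipulation of the definitions yields the identity $c=(1-b)/S$, so that $G_\infty(\lambda)=g(\lambda)\bigl(\labelnoise/S+2\trace(\mSigma)\bigr)$ with $g\triangleq(1-b)/(1-a)$.

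For the first claim I reduce monotonic decrease of $G_\infty$ to monotonic decrease of $g$: since $S$ is manifestly increasing and $g,S>0$, both $\labelnoise g/S$ and $2\trace(\mSigma)g$ are then decreasing, hence so is their sum. To show $g'<0$ I substitute $\mu=\lambda/\featnoise$ and $s=\sqrt{\mu^2+2(1+\ratio)\mu+(1-\ratio)^2}$, obtaining $g=N/D$ with $N=s(\beta+\mu-s)$ and $D=\beta(s-\mu)-\gamma$, where $\beta=1+\ratio$, $\gamma=(1-\ratio)^2$, and $\beta^2-\gamma=4\ratio$. Using $s'=(\mu+\beta)/s$ one finds the clean expressions $N'=(\mu+\beta-s)^2/s\ge0$ and $D'=\beta(\mu+\beta-s)/s$, so $N'D-ND'=\tfrac{\mu+\beta-s}{s}\bigl[(\mu+\beta-s)D-\beta N\bigr]$; expanding the bracket collapses it to $-(\beta\mu+\gamma)(\mu+\beta-s)$, whence $N'D-ND'=-\tfrac{(\mu+\beta-s)^2(\beta\mu+\gamma)}{s}<0$, using $\mu+\beta>s$ (because $\beta^2>\gamma$) and $N,D>0$ (equivalently $a,b<1$). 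This is the computationally heaviest step, but it dissolves once the factor $(\mu+\beta-s)$ is pulled out and the residual bracket is seen to factor.

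For the second claim the crux is to bridge from $\mathbb{E}[G_T]$, which averages the distance to \emph{all} teachers, to $\mathbb{E}\|\vw_T-\teacher\|^2$, the distance to the mean teacher. I use the pointwise variance decomposition $\tfrac1T\sum_i\|\vw_T-\vw_i^\star\|^2=\|\vw_T-\bar{\vw}^{\star}\|^2+\tfrac1T\sum_i\|\vw_i^\star-\bar{\vw}^{\star}\|^2$ with $\bar{\vw}^{\star}=\tfrac1T\sum_i\vw_i^\star$; taking expectations and using $\mathbb{E}[\tfrac1T\sum_i\|\vw_i^\star-\bar{\vw}^{\star}\|^2]=\tfrac{T-1}{T}\trace(\mSigma)$ gives $\mathbb{E}\|\vw_T-\bar{\vw}^{\star}\|^2=\mathbb{E}[G_T]-\tfrac{T-1}{T}\trace(\mSigma)$, so $\lim_{T\to\infty}\mathbb{E}\|\vw_T-\bar{\vw}^{\star}\|^2=G_\infty(\lambda)-\trace(\mSigma)$. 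A Laurent expansion $s=\mu+\beta-2\ratio/\mu+o(1/\mu)$ as $\mu\to\infty$ shows $g\to\tfrac12$ and $\labelnoise/S\to0$, hence $G_\infty(\lambda)\to\trace(\mSigma)$ and therefore $\lim_{\lambda\to\infty}\lim_{T\to\infty}\mathbb{E}\|\vw_T-\bar{\vw}^{\star}\|^2=0$. Finally I remove the gap between $\bar{\vw}^{\star}$ and $\teacher$ via Minkowski's inequality $\sqrt{\mathbb{E}\|\vw_T-\teacher\|^2}\le\sqrt{\mathbb{E}\|\vw_T-\bar{\vw}^{\star}\|^2}+\sqrt{\mathbb{E}\|\bar{\vw}^{\star}-\teacher\|^2}$, where the last term equals $\sqrt{\trace(\mSigma)/T}\to0$; taking $T\to\infty$ and then $\lambda\to\infty$ (the sequential order required by the statement) sends both summands to zero, establishing the claimed convergence.

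I expect the main obstacle to be conceptual rather than computational, and it lies in this last bridge: because $\vw_T$ itself depends on the random teachers, one cannot treat it as fixed, so it is essential that the variance decomposition is a pointwise algebraic identity (valid before taking expectations) and that Minkowski's inequality legitimizes the passage from the empirical mean $\bar{\vw}^{\star}$ to the distribution mean $\teacher$. The only additional care needed is verifying the sign conditions $0<a\le b<1$ and $\mu+\beta>s$ that the monotonicity and limit computations rely on.
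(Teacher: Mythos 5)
Your proof is correct, and it reaches both claims by a genuinely different route than the paper's. For monotonicity, the paper's proof of \cref{thm:asymptotic_mse} differentiates the two summands of $\lim_{T\to\infty}\mathbb{E}[G_T]=\labelnoise c\,(1-a)^{-1}+2\trace(\mSigma)(1-b)(1-a)^{-1}$ separately, each via fairly heavy algebra; your identity $c=(1-b)/S$ (which checks out, since $1-b=\tfrac{1}{2\featnoise}\bigl(\featnoise(1+\ratio)+\lambda-S\bigr)$) factors the limit objective as $g(\lambda)\bigl(\labelnoise/S+2\trace(\mSigma)\bigr)$ with $g=(1-b)/(1-a)$, so a single quotient-rule computation plus the trivial monotonicity of $S$ suffices. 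I verified your key step: with $N=s(\beta+\mu-s)$ and denominator $\beta(s-\mu)-\gamma=2s(1-a)$, indeed $N'=(\mu+\beta-s)^2/s$ and the bracket collapses to $-(\beta\mu+\gamma)(\mu+\beta-s)$, giving $g'<0$; this agrees, up to the positive factor $(1-a)^{-2}$, with the paper's sign computation in \cref{d1-a:dl}, and the required sign conditions ($0<a\le b<1$, and $\mu+\beta>s$ from $\beta^2-\gamma=4\ratio>0$) hold in the paper's regime $\ratio\in(0,1]$. For the convergence claim, the divergence from the paper is more substantial: the paper fixes an index $i$, expands $\mathbb{E}\|\vw_T-\teacher\|^2$ around $\vw_i^{\star}$, and must separately establish the cross-term limit $\lim_{T\to\infty}\mathbb{E}[\boldsymbol{\xi}_i^{\top}\vw_T]=0$ from the unrolled recursion via $\mathbb{E}[\mP_T\cdots\mP_i]=b^{T-i+1}\I$; you bypass that computation entirely with the pointwise variance decomposition around the empirical mean teacher, $\tfrac{1}{T}\sum_{i}\|\vw_T-\vw_i^{\star}\|^2=\|\vw_T-\bar{\vw}^{\star}\|^2+\tfrac{1}{T}\sum_{i}\|\vw_i^{\star}-\bar{\vw}^{\star}\|^2$, which—as you correctly stress—is an algebraic identity per realization and hence immune to the dependence of $\vw_T$ on the teachers, yielding $\mathbb{E}\|\vw_T-\bar{\vw}^{\star}\|^2=\mathbb{E}[G_T]-\tfrac{T-1}{T}\trace(\mSigma)$ exactly, followed by Minkowski with $\mathbb{E}\|\bar{\vw}^{\star}-\teacher\|^2=\trace(\mSigma)/T$. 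Your limits $g\to\tfrac12$ and $\labelnoise/S\to0$ match the paper's \cref{lem:laur1,lem:laur2}, so $G_\infty(\lambda)\to\trace(\mSigma)$ and the claim follows. Your route buys a shorter, purely algebraic argument on top of \cref{cor:closed}; the paper's longer route additionally produces the per-index limits $\lim_{T\to\infty}\mathbb{E}\|\vw_T-\vw_i^{\star}\|^2$ that it reuses in later proofs. One small polish: since the statement presupposes that the inner limit $\lim_{T\to\infty}\mathbb{E}\|\vw_T-\teacher\|^2$ exists, apply the triangle inequality in both directions (Minkowski and its reverse) to squeeze $\mathbb{E}\|\vw_T-\teacher\|^2$ to $G_\infty(\lambda)-\trace(\mSigma)$; your one-sided bound by itself only controls the limsup, though that already suffices for the displayed conclusion since the quantity is nonnegative.
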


\section{Experiments on Synthetic Regression Data}
\label{sec:synthetic}

Next, we empirically demonstrate how regularization shapes generalization and contrast it with the unregularized case, validating our theoretical analysis.

\paragraph{Noisy teacher setup.}
For each task $i\in [T]$, we sample a teacher $\teacher_i \sim\mathcal{N}(\teacher,\,\tfrac{1}{d}\I)$, matching \cref{assump:iid_teachers}. 
We use $d=50$ dimensions, $n = 30$ examples per task, and initialization $\vw_0 = \0$.

\paragraph{Longer task sequences with stronger regularization overcome label noise.}

\cref{fig:Gen_T} shows that regularization reduces noise effects, but only when calibrated to the task horizon. 
We observe that models with weaker regularization benefit from early tasks but soon plateau\footnote{
\cref{thm:more_tasks} predicts that the expected generalization monotonically increases beyond a certain task. While this trend is obscured here by noise under the chosen hyperparameters, it becomes clearer in the figures of \cref{sec:mnist}.
}
at a noise floor, reflecting high plasticity but low stability. 
Conversely, excessive regularization impairs plasticity, effectively preventing learning. 
\cref{fig:tasks} confirms that the optimal regularization strength scales nearly linearly with the horizon---balancing the stability–plasticity tradeoff to lower the noise floor. 

\begin{figure}[h!]
\centering
% --- left subfigure ---
\subfigure[\small{Regularization can decrease the label noise floor.}]{
  \centering
  \parbox{.49\columnwidth}{\centering
    \includegraphics[width=1\linewidth]{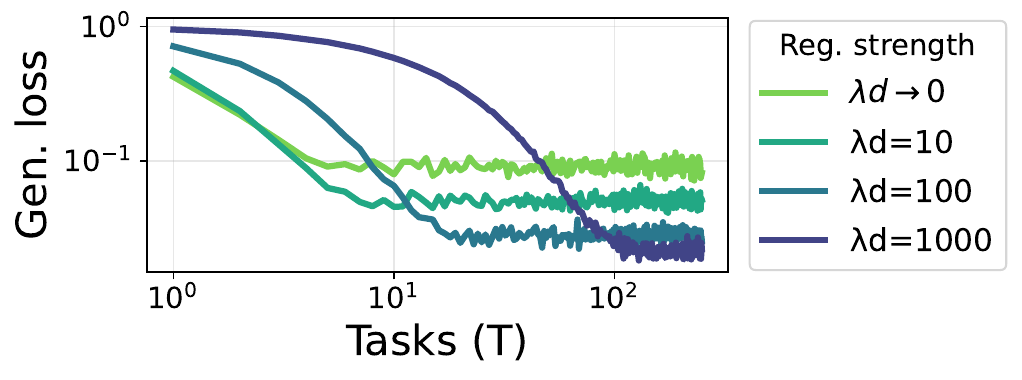}%
  }%
  \label{fig:Gen_T}
}
\hfill
%
% --- right subfigure ---
\subfigure[\small Optimal strength grows nearly linearly with the number of tasks, as predicted by \cref{thm:optimal}.\linebreak
We compute $\lambda^{\star}$ with an exhaustive search.]{
  \centering
  \parbox{.47\columnwidth}{\centering
    \includegraphics[width=.86\linewidth]{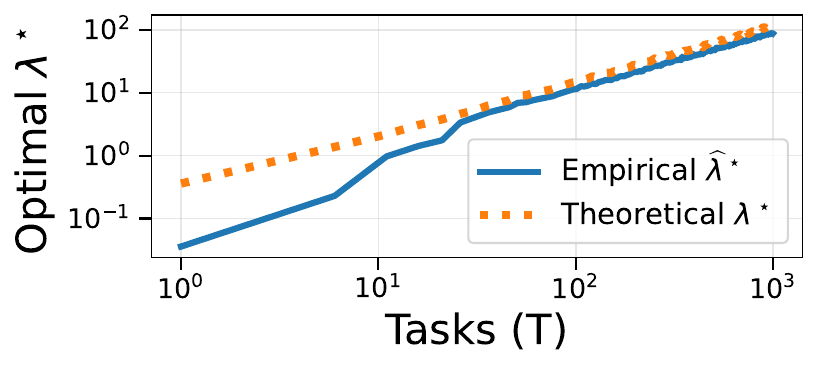}%
  }%
  \label{fig:tasks}
}
\caption{
\textbf{Regularization effects on synthetic data.}
Interactions between number of tasks ($T$), regularization strength ($\lambda$), and resulting generalization loss (\emph{normalized} by a trivial regressor $\vw=\0$).\linebreak
Here, the label noise and feature variances are $\labelnoise = \featnoise = 1$; Each curve averages over $5$ runs.
\label{fig:longer_seqeuences}
}
\vspace{-1em}
\end{figure}

\paragraph{While the unregularized scheme collapses under label noise, optimal regularization endures.}

The unregularized scheme solved via gradient descent to convergence corresponds to our \cref{schm:regularized} as $\lambda\to 0$.
While \citet{evron2022catastrophic,evron2025better} analyze this setup's training loss in jointly realizable---or noiseless---settings, \citet{lin2023theory} accommodates label noise and studies the generalization loss.
However, as our \cref{example:no_reg} shows, that scheme fails under high label noise due to a noise floor $\frac{d\labelnoise}{d-n-1}$ that \emph{persists} as $T\to \infty$ for any number of samples $n$ and dimension $d$ (where $\alpha<1$).
In~contrast, we demonstrate next that a properly calibrated regularization in  \cref{schm:regularized} overcomes even immense label noise $\labelnoise$, provided a sufficient task horizon $T$.

\vspace{1em}

\begin{wrapfigure}[10]{b!}{0.55\textwidth}
\vspace{-1.3em}
\centering
\includegraphics[width=\linewidth]{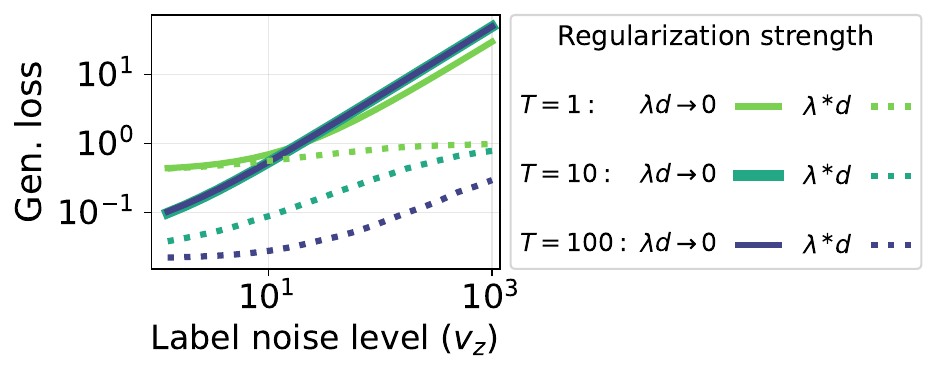} 
\vspace{-2em}
\captionof{figure}{
% \textbf
{Optimal regularization substantially improves generalization. 
Here, feature variance is ${\featnoise=1}$; each curve averages over 20 runs.
\label{fig:gensig}
}
}
\end{wrapfigure}

\cref{fig:gensig} compares the generalization loss of the aforementioned unregularized scheme ($\lambda\to 0$) with that of the optimal $\lambda^\star$.
Indeed, in the presence of high label noise:
(i) the unregularized scheme fails completely due to the high floor noise, \emph{regardless} of the task horizon;
and \linebreak
(ii) a properly calibrated $\lambda^\star$ can substantially mitigate label noise after seeing enough tasks.

\section{Experiments on \mnist-Based Dataset}
\label{sec:mnist}

While our analysis assumes i.i.d.~features and a linear model, we now validate our results in more challenging scenarios where the assumptions of \cref{sec:setting} do not hold.
We use finite-dimensional \mnist data rather than high-dimensional i.i.d.~features, solve classification instead of regression,
inject bit-flip rather than continuous label noise,
and evaluate both linear models and ReLU networks.
In all cases, the
optimal regularization strength scales as $\hfrac {T}{\ln T}$, consistent with \cref{thm:optimal}.

\subsection{Constructing a Continual Binary Classification Dataset with Noisy (Flipped) Labels}
\label{sec:MNIST_setting}
\paragraph{Samples.}
We convert the \mnist (multiclass) dataset into a binary classification problem by grouping digits 0--4 into class 0 and digits 5--9 into class 1.
To facilitate our teacher models (below), we narrow the data to a random subset of 500 training images and 200 test images.

\paragraph{Teacher models.}
We consider two types of teachers:
\begin{enumerate}[leftmargin=0.8cm, labelsep=0.3cm, itemsep=3pt,label=(\alph*)]

\item \textbf{A single noiseless teacher} that outputs the true labels (tasks still suffer from \emph{label} noise).

\item \textbf{Multiple i.i.d.~teachers drawn from a global teacher.} 
We first train a global teacher with the same architecture as the evaluated model---linear in \cref{sec:experiments_linear} and neural in \cref{sec:experiments_nns}---on the \emph{entire} subsample (train and test).
Then, each task draws a teacher by adding Gaussian noise to the global teacher's \emph{weights}.
Teachers output binary labels (thresholded at 0.5).

\end{enumerate}

\paragraph{Noisy labels.}
Each task consists of the \emph{same} sampled images, but differs in labels due to two noise sources.
Specifically, each training label is first set by the evaluated teacher (noiseless or noisy), and then corrupted by bit-flip \emph{label} noise---i.e., each label is flipped (i.i.d.) with probability $p=0.2$.

\subsection{Linear Regression Experiments: Validation Beyond i.i.d.~Features}
\label{sec:experiments_linear}

We start from a regularized continual linear regression model (\cref{schm:regularized}), trained with MSE to predict (noisy) \emph{binary} labels generated by either a single noiseless teacher (\cref{fig:GT_single_teacher_ridge}) or multiple teachers drawn from a global teacher (\cref{fig:GT_many_teachers_ridge}).
Predictions are thresholded at $0.5$ to yield binary outputs. 
Empirically, the optimal regularization scales roughly as $\hfrac {T}{\ln T}$, consistent with \cref{thm:optimal} even though the i.i.d.~feature assumption does not hold.

\pagebreak

\begin{figure}[t!]
% \vspace{-.5em}
\centering
% --- left subfigure ---
\subfigure[\textbf{Linear:} Single teacher.]{
%{0.56\columnwidth}
  \centering
  \includegraphics[width=.54\columnwidth]{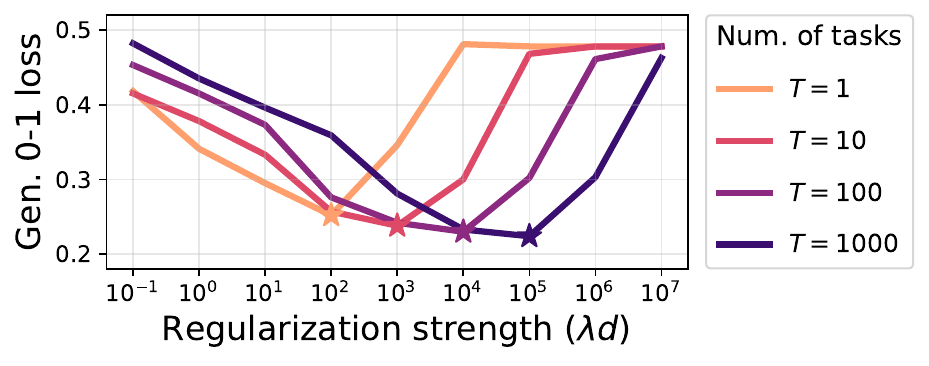}
  \label{fig:GT_single_teacher_ridge}
}
\hfill
%
% --- right subfigure ---
%{0.425\columnwidth}
\subfigure[\textbf{Linear:} Multiple i.i.d.~teachers.]{
  \centering
  \includegraphics[width=.41\columnwidth]{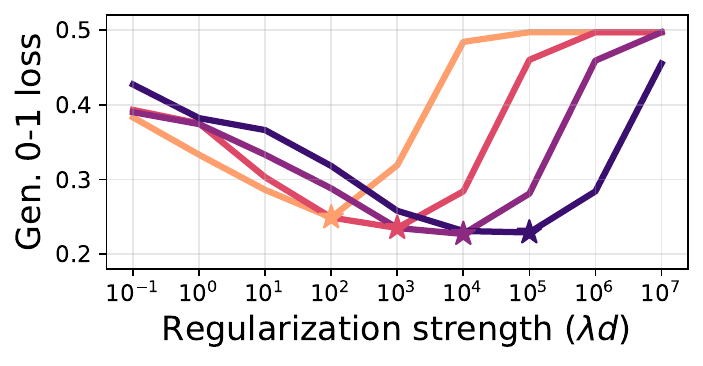}
  \label{fig:GT_many_teachers_ridge}
}
\\
\vspace{2pt}
% --- left subfigure ---
\subfigure[\textbf{NN:} Single teacher.]{
%{0.56\columnwidth}
  \centering
  \includegraphics[width=.54\columnwidth]{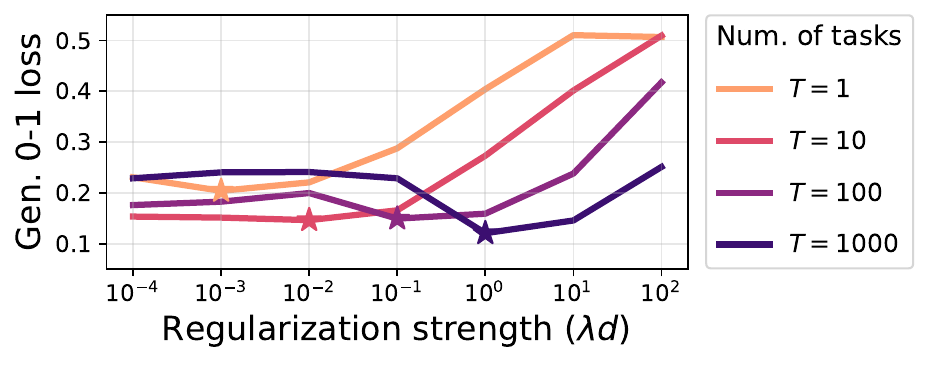}
  \label{fig:GT_single_teacher}
}
\hfill
%
% --- right subfigure ---
%{0.425\columnwidth}
\subfigure[\textbf{NN:} Multiple i.i.d.~teachers.]{
  \centering
  \includegraphics[width=.41\columnwidth]{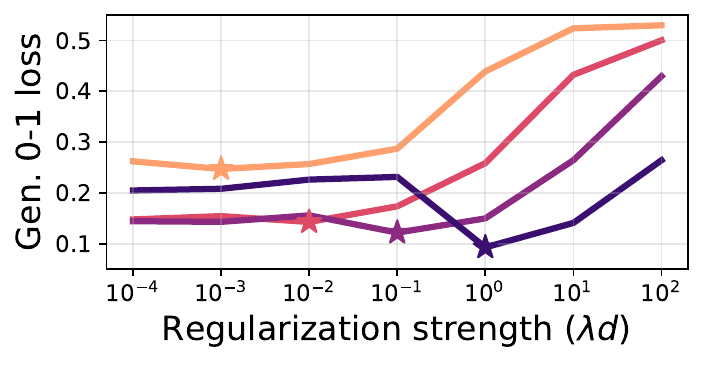}
  \label{fig:GT_many_teachers}
}
\vspace{-1pt}
\caption{
\textbf{Regularization in linear models and neural networks (NN).}
We plot the held-out classification error after learning $T$ noisy tasks. 
Curves are averaged over 5 seeds; stars mark minima.
\label{fig:all_plots_ridge}
\vspace{-0.7em}
}
\end{figure}

\subsection{Neural Network Experiments: Validation Beyond Linear Models}
\label{sec:experiments_nns}

We also consider a two-layer ReLU neural network (784→500→1) with a sigmoid output, trained continually with L2 regularization.
Again, noisy binary labels are generated by either a single noiseless teacher or multiple teachers sampled from a global (linear) teacher (\cref{fig:GT_single_teacher,fig:GT_many_teachers}).
We optimize MSE for consistency with \cref{schm:regularized}, thresholding predictions at $0.5$.
Even in this neural \linebreak setting, optimal regularization follows the $\hfrac {T}{\ln T}$ scaling, matching the linear theory in \cref{thm:optimal}.

% \vspace{-1em}

\begin{takehome}[Practical Recipe for Regularization Strength $\lambda$]
    We observe that the optimal regularization scaling 
    $\lambda^{\star}(T) \asymp T$ persists across architectures, potentially up to log factors.\footnote{
    While \cref{thm:optimal} suggests a $T/\ln T$, other unknown logarithmic terms---e.g., $\norm{\textbf{w}^*}^2$---may dominate the denominator in practice. 
    Consequently, we recommend focusing on the more significant linear scaling in $T$.
    } 
    To determine the leading constants, first tune $\lambda$ on a short task horizon and then scale it for longer horizons.
    
    For example, for $T\!=\!1000$, tune on $T\!=\!10$ and set 
    $\lambda(1000)=
    \frac{1000}{10}
    \,\widehat{\lambda^{\star}}(10)
    =100 \,\widehat{\lambda^{\star}}(10)$.
\end{takehome}

\vspace{-0.7em}

% \pagebreak

\section{Related Work and Discussion}
\label{sec:related}

\begingroup
\makeatletter
\let\Hy@backout\@gobble
\begin{table}[b!]
\renewcommand{\arraystretch}{1.3}
\setlength{\tabcolsep}{4pt} 
\centering
\captionsetup{font=small}
\vspace{-0.5em}
\caption{
\textbf{Comparison to related work on continual linear regression.}
We highlight papers most closely related to our work—primarily ones that also adopt a statistical perspective.
The second column shows the studied regularization method, and the number of regularization parameters it requires. 
\label{tab:comparison}
}
\vspace{-0.5em}
\small
\begin{tabular}{p{3.4cm}p{3.3cm} cc p{4.2cm}}
\Xhline{1.2pt}
\textbf{Paper} 
& 
    \parbox[c]{3.3cm}{
    \textbf{Regularization}\newline%
    ({\# reg.~parameters})
    }
& 
\parbox[c]{1.4cm}{\vspace{0.1cm} \centering \textbf{Multiple{\newline}teachers?}\vspace{0.1cm}}
& 
\parbox[c]{1.4cm}{\vspace{0.1cm}\centering \textbf{Label noise?}\vspace{0.1cm}}
 & \textbf{Features} 
 \\
\Xhline{1.2pt}
\citet{lin2023theory}
& None \hfill (N/A)
& \cmark & \cmark & Gaussian i.i.d.~entries 
\\
\hline
\citet{li2023fixed} &
L2 \hfill
(\complexitylow{scalar}) & \xmark &
\cmark
& Fixed matrices with{\newline}commutable covariance \newline (2 tasks only)
\\
\hline
\citet{zhao2024statistical} & Aligned with {\newline}covariance \hfill 
(\complexityhigh{matrix})
& \xmark & \cmark & 
Fixed matrices with{\newline}commutable covariance
\\
\hline
\citet{ding2024understanding} & 
Implicit from{\newline}step budget \texttt{(*)}  \hfill (N/A) & \xmark & \cmark & 
Random matrices with{\newline} commutable covariance \texttt{(**)}\!
\\
\hline
\citet{zhu2025global} & Based on all \newline directions \hfill
(\complexityhigh{matrix}) & \cmark & \cmark &  
Any
\\
\hline
\citet{levinstein2025optimal} & L2 \hfill
(\complexitylow{scalar})
 & \xmark & \xmark & 
Any fixed matrices \newline(in random task ordering)
\\
\hline
\citet{zhao2025highdimensional}
{\newline}{(Concurrent)}
& L2 \hfill
(\complexitylow{scalar})
& \xmark & \cmark &  
Random matrices with{\newline}commutable covariance
\\
\Xhline{1.2pt}
\textbf{Ours} & L2 \hfill
(\complexitylow{scalar})
& \cmark & \cmark & Zero-mean i.i.d.~entries
\\
\Xhline{1.2pt}
\end{tabular}
\vspace{0.5em}
\caption*{
\small
\raggedright 
\,\texttt{(*)}\, 
A finite number of steps induces an implicit bias towards previous solutions \citep{jung2025convergence}.
\\
\texttt{(**)} Their main proof implicitly assumes commutable covariance matrices in Eq.~(13).
}
\vspace{-1.em}
\setlength{\tabcolsep}{6pt} 
\end{table}
\makeatother
\endgroup

We extend the literature on generalization in statistical continual learning settings \citep[e.g.,][]{lin2023theory,li2023fixed,ding2024understanding,zhao2025highdimensional} by incorporating both \emph{multiple} teachers and L2 regularization. 
% \linebreak
While optimization-focused studies typically provide worst-case upper bounds \citep[e.g.,][]{evron2022catastrophic,evron2025better,levinstein2025optimal}, our statistical approach enables an \emph{exact} average-case generalization analysis. 
This allows us to establish, to the best of our knowledge, the first characterization of optimal regularization strength, spanning both classical single-teacher and general i.i.d.~multi-teacher frameworks.
A comparative overview is provided in \cref{tab:comparison}.

\pagebreak

\paragraph{Optimal regularization.}
Much of the research on continual learning has focused on designing regularization matrices that perform well empirically \citep{kirkpatrick2017ewc,zenke2017continual,aljundi2018memory}. Such work typically emphasizes the regularization \emph{directions} or relative magnitudes of coefficients, while leaving the overall strength $\lambda$ to hyperparameter tuning. In contrast, L2 regularization, which also exhibits strong empirical performance \citep{Hsu2018Baselines,lubana2021regularization,smith2022closer}, offers a convenient analytical proxy for studying regularization strength, as it depends only on a single tunable parameter, $\lambda$.

\citet{levinstein2025optimal} studied L2 regularization under random task orderings with arbitrary fixed features and a single noiseless teacher. 
They showed that a fixed\footnote{As in our work, their strength is static during training and determined a priori by the horizon $T$. 
In contrast, they also employ a dynamic strength---increasing at every iteration---to achieve an optimal worst-case rate.
}
$\lambda\asymp\ln T$ yields near-optimal worst-case convergence guarantees, though they do not rule out the possibility that other fixed strengths could perform better (see their Appendix~A).
Interestingly, in a similar single-teacher setting with no label noise, our analysis dictates that $\lambda^\star \rightarrow 0$ is the optimal strength for generalization (\cref{thm:optimal}). 
We find that $\lambda^\star > 0$ emerges only in the presence of non-zero label noise or teachers that vary between tasks.
Indeed, in our average-case setting, while regularization slows down convergence, it decreases the asymptotic error caused by label noise or teacher variation. 
Finally, under a \emph{cyclic} task ordering with $N$ cycles over a sequence of $T$ tasks, \citet{cai2025lastIterate} suggest a fixed $\lambda\asymp T\sqrt{\ln N}$. 
While their cyclic ordering and our arbitrary one are not directly comparable, their linear dependence on $T$ appears mostly consistent with our findings.
\pagebreak

\paragraph{Regularization weighting matrices and covariance structure.} 
\citet{zhao2024statistical,zhu2025global} build a weighting matrix directly from the empirical covariance, leading to a worst-case requirement of $d^2$ regularization parameters. However, this might be considered as ``too much'' memory for realizable linear regression, as with so many parameters we can avoid forgetting entirely (\citealt{evron2023classification}, Proposition~5.5). In contrast, we use a practical single regularization parameter.

It is also important to distinguish between the observed empirical covariance $\tfrac{1}{n}\X^\top \X$ and the population covariance that characterizes the sample-generating distribution. 
For example, we explore L2 regularization, i.e., a weighting matrix of $\lambda\I$ assuming a feature \textit{population} covariance of $\featnoise\I$.
Alternatively, some prior work assumes a design with \emph{fixed} features, but require that the \emph{empirical} covariance matrices commute \citep[e.g.,][]{li2023fixed,li2025memory,zhao2024statistical}.

\paragraph{Beyond isotropic covariance.}
We assumed a $\boldsymbol{\Sigma} = v_x \mathbf{I}$ feature covariance.
However, our analysis can extend to a known, spectrally bounded covariance matrix
$c\I \preceq \boldsymbol{\Sigma} \preceq C\I$, by naturally modifying the learning algorithm
to include a whitening step.
That is, apply the same learning algorithm, 
$\vw_t
= \argmin
\Bigl\{
\bigl\lVert \tilde{\mathbf{X}}_{t}\vw - \mathbf{y}_{t} \bigr\rVert^{2}
+ \lambda d \bigl\lVert \vw - \vw_{t-1} \bigr\rVert^{2}
\Bigr\}$, with whitened matrices $\tilde{\mathbf{X}}_{t} = \mathbf{X}_{t} \boldsymbol{\Sigma}^{-1/2}$.

\paragraph{Connections to prior-based methods.}
Our work analyzes L2 regularization in continual learning, where an additional $\lambda \norm{\vw-\vw_{t-1}}^2$ term acts as a dynamic \emph{prior} to stabilize updates.
This formulation is fundamentally a proximal method \citep{cai2025lastIterate} and shares heritage with online and meta-learning frameworks that utilize static or dynamic priors, such as FOBOS \citep{duchi09fobos}, FTRL and FTRL-Proximal \citep{mcmahan2011ftrl}, and ARUBA \citep{khodak2019aruba}.
Although those methods may provide stronger \emph{regret} guarantees than our proximal method, they depart from standard regularization methods in continual learning \citep[e.g.,][]{kirkpatrick2017ewc}, especially ones involving isotropic regularization \citep[e.g.,][]{smith2022closer}.
While exploring the connections between meta-learning and non-isotropic weight matrices remains a promising direction for future work, our current setting isolates the effects of regularization \emph{strength}, providing direct practical implications for common training recipes.

\paragraph{Task typicality and ordering.} 
While we do not focus on aspects of task ordering, 
our \cref{eq:main_result} nevertheless provides insights into how the teacher ordering influences generalization. 
For example, the theorem decomposes the loss and reveals a \emph{teacher variability} term that is minimized when the teacher of the \emph{final} task is the one closest to the mean of teachers $\bar{\vw}^\star$. 
In other words, learning should end with the most \emph{typical} task, matching recent findings by \citet{li2025optimal}. 

\paragraph{Label noise vs.~teacher variance.}
The teacher variance and label noise are naturally related. 
In our setting (\cref{sec:statistical_setting}), the labels of task $t\in[T]$ are generated as
$\y_{t}=\X_{t}\vw_{t}^{\star}+\z_{t}$
for a teacher $\teacher_{t}$ and random label noise $\z_{t}$. 
If the teacher itself exhibits noise as in \cref{sec:noisy_teacher}, labels can be rewritten as
$
\y_{t}=\X_{t}\bigl(\teacher+\boldsymbol{\xi}_{t}\bigr)+\z_{t}
=\X_{t}\vw^{\star}+\tilde{\z}_{t},
$
where $\tilde{\z}_{t}=\X_{t}\boldsymbol{\xi}_{t}+\z_{t}$.
However, these two sources are not equivalent: the label noise $\mathbf z_t$ is independent of the design matrix $\X_t$, 
while the effective noise $\tilde{\mathbf z}_{t}$ inherently \emph{depends} on it, leading to non-trivial theoretical and practical implications.

\paragraph{Theoretical vs.~empirical scaling} 
We demonstrate that the qualitative prediction of \cref{thm:optimal}---namely, that $\lambda^{\star}(T)\asymp\hfrac {T}{\ln T}$---is consistent across both our linear regression and neural network experiments (\cref{fig:all_plots_ridge}). 
However, the \emph{quantitative} prediction---i.e., the exact value accounting for problem-dependent factors---does not align perfectly with the empirical optimum, as illustrated in \cref{fig:C} of \cref{app:not_aligned}.
To isolate the source of this mismatch, we show that second-order feature correlations are largely responsible: by introducing a preprocessing whitening step (see earlier discussion), the observed optimal regularization aligns with the theoretical prediction  (see \cref{fig:A} in \cref{sec:whitening}). 
Finally, we validate a nearly linear dependence of the empirically optimal regularization on the aspect ratio $\alpha$ (\cref{fig:B} in \cref{sec:aspect}), in agreement with the scaling suggested by \cref{thm:optimal}.

\paragraph{Future directions.} In this work, we analyzed continual learning in a simplified setting with a linear model and isotropic regularization, assuming i.i.d.~features and i.i.d.~teachers. 
Despite these simplifications, we extracted a meaningful scaling law that appears to hold in practical neural network settings: the optimal regularization strength increases nearly linearly with the number of tasks $T$.
Future research could relax these assumptions by considering non-linear models, more expressive non-isotropic regularization, or non-i.i.d.~environments (e.g., Markovian task sequences). 
We hope our analysis provides a foundation for more robust algorithms and a deeper theoretical understanding of continual learning.

\acks{The research of DS was funded by the European Union 
{(ERC, A-B-C-Deep, 101039436)}. Views and opinions expressed are however those of the author only and do not necessarily reflect those of the European Union or the European Research Council Executive Agency (ERCEA). Neither the European Union nor the granting authority can be held responsible for them.

The research of RM was partially supported by the Israel Science Foundation (grant 1693/22) and by the Skillman chair.
}

\bibliography{99_biblio}

\newpage

\appendix

\renewcommand{\thetheorem}{\Alph{section}.\arabic{theorem}}
\renewcommand{\thelemma}{\Alph{section}.\arabic{lemma}}

% Reset the counter at the start of each appendix section
\makeatletter
\@addtoreset{theorem}{section}
\makeatother

\startcontents[app]

\begingroup
\renewcommand{\baselinestretch}{1.5}\selectfont
\appendixtableofcontents
\endgroup
\newpage

\section{Table of Notations}
\label{app:notations}

\vspace{-0.7em}
\begin{table}[H]
\centering
\caption{Notation table.}
\label{tab:notation}
\setlength{\tabcolsep}{4pt} % tighter columns
\setlength{\extrarowheight}{6pt}
\begin{tabular}{l p{0.53\linewidth}}
\toprule
\textbf{Notation / Definition} & \textbf{Short description} \\
\midrule
$d$ & Dimension (number of input features) \\
$n$ & Number of data items (samples) per task \\
$\ratio\triangleq \dfrac{n}{d}$ & Aspect ratio (sample-to-dimension ratio) \\
$T$ & Number of iterations (tasks) \\
$\labelnoise$ & Label noise variance \\
$\featnoise$ & Feature noise variance \\
$\lambda$ & Regularization strength \\
$\mathbf{X}$ & Data set (feature matrix) \\
$\mathbf{y}$ & Labels \\
$\mathbf{z}$ & Label noise vector \\
$\mathbf{w}$ & Predictor (model parameters) \\
$\teacher$ & Teacher (ground-truth parameter) \\
$\bar{\vw}^{\star} \triangleq \frac{1}{T}\sum_{i=1}^T \teacher_i$ & Average teacher \\
$G_t$ & Generalization error at iteration $t$ \\
$\boldsymbol{\Sigma}$ & Teacher covariance matrix \\
$\displaystyle \mathbf P \triangleq \lambda d\!\left(\mathbf{X}^{\top}\mathbf{X}+\lambda d\,\mathbf{I}\right)^{-1}$
& Regularized inverse covariance matrix 
\\
$\displaystyle \tilde{\lambda} \triangleq \dfrac{\lambda}{v_x}$ & Scaled regularization strength \\
$\boldsymbol{\xi}$ & Random variable modeling teacher noise \\
$\displaystyle D \triangleq \sqrt{\tilde{\lambda}^2 + 2\tilde{\lambda}(1+\ratio) + (1-\ratio)^2}$
& Discriminant term appearing in closed-form expressions \\
$\displaystyle a \triangleq \tfrac{1}{2}\!\left(
1-\ratio+
\frac{\tilde{\lambda}(1+\ratio)+(1-\ratio)}{D}
\right)$
& Memory contraction coefficient \\
$\displaystyle b \triangleq \tfrac{1}{2}\!\left(
1-\ratio-\tilde{\lambda}+D
\right)$
& Weight assigned to task-to-task drift \\
$\displaystyle c \triangleq \tfrac{1}{2}\!\left(
(1+\ratio)+\tilde{\lambda}-D
\right)$
& Noise amplification coefficient \\
$\displaystyle f \triangleq v_z\,c$ & Effective label-noise contribution \\
\bottomrule
\end{tabular}
\end{table}

\newpage

\section{Additional Continual Linear Regression Experiments}
\label{sec:exp1}
In this appendix, we experiment with the regularized continual linear regression model
(\cref{schm:regularized}). 
Compared to the previous setting of \cref{sec:MNIST_setting}, we introduce additive Gaussian label noise with variance $0.2$, and construct each task from a \emph{different} subsample of images. We report the generalization error corresponding to the mean squared error (MSE). Furthermore, the teacher is evaluated on a random subset of 500 sampled data points. 

\subsection{Theory vs.~Experiments: Qualitative, but not Quantitative, Alignment}
\label{app:not_aligned}

We train the model on the \emph{normalized}\footnote{By normalization we mean per-feature centering and scaling to unit variance, i.e., $x_j \leftarrow (x_j - \hat {\mathbb{E}}[x_j]) / \sqrt{\widehat{\mathrm{Var}}(x_j)}$; unlike whitening, this preprocessing does not remove cross-feature correlations.} \mnist{} dataset. 
We visualize the empirical generalization error\footnote{The empirical generalization error is normalized by the maximum empirical generalization error across all regularization strengths for the specific task horizon $T$. 
That is, each column in the heatmap is scaled between $0$~and~$1$.} as a heat map over the regularization strength and number of tasks.
For comparison, we also plot a curve of the optimal regularization strength predicted by \cref{thm:optimal}. 

The experiment demonstrates that in the absence of whitening (see next section), there is a mismatch between the empirically optimal regularization and the theoretical prediction.

\begin{figure}[H]
\centering
  \includegraphics[width=.9\columnwidth]{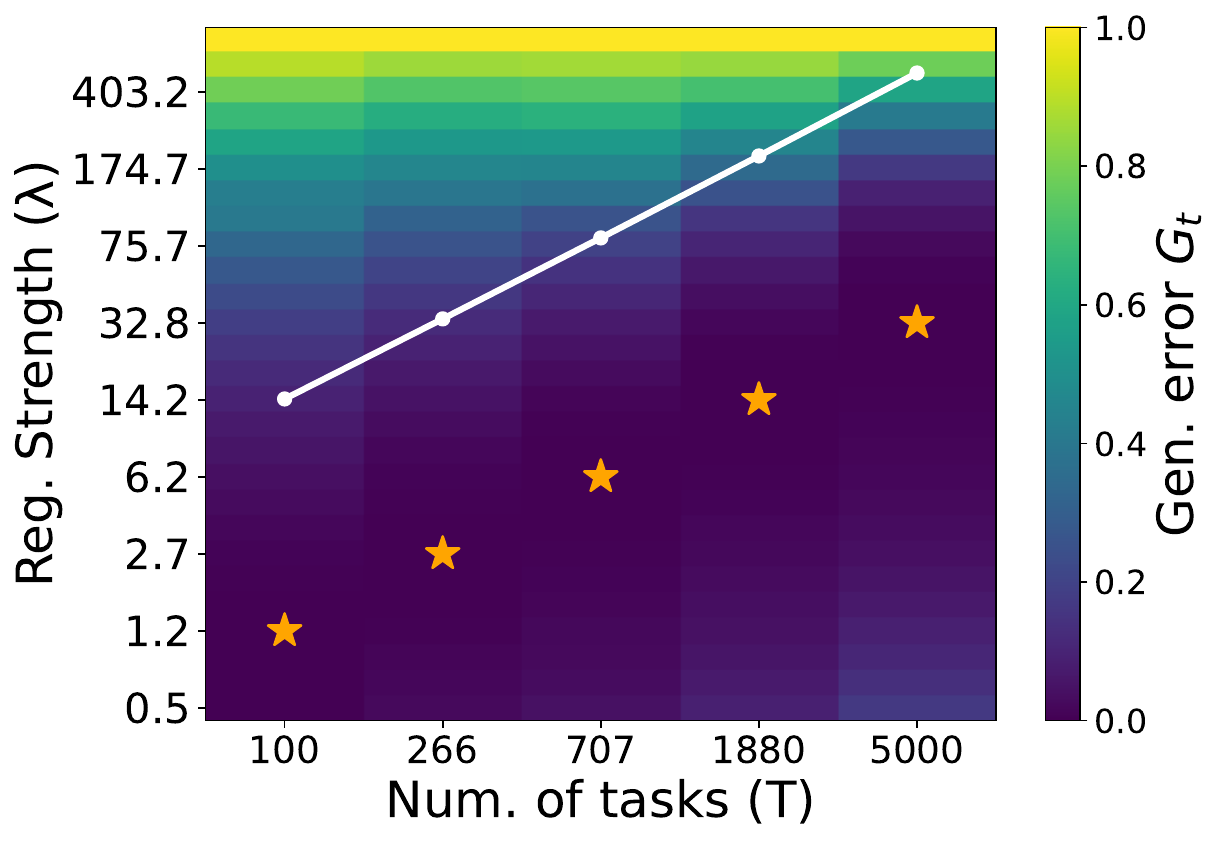}
\caption{
\textbf{Regularization strength vs. Task horizon.}
Stars mark the empirical optimal regularization strength obtained after training on a noisy sequence of $T$ tasks and averaging over 5 random seeds.
We also plot the analytical optima, predicted by our \cref{thm:optimal}; shown as a white curve.
We observe a seemingly-multiplicative mismatch between the empirical and analytical optima.
}
\label{fig:C}
\end{figure}

\subsection{A Whitening Step Reconciles Experiments with Theory}
\label{sec:whitening}
Here, prior to training, the input
images are whitened using the empirical feature covariance computed over the entire dataset of \mnist. 
The empirical optimal strength is obtained by searching over a (one dimensional) grid centered at the theoretically predicted optimum, and compared to the theoretical optimal regularization predicted by \cref{thm:optimal}.  

We observe that the optimal regularization strength matches the prediction of \cref{thm:optimal} almost perfectly.
This implies that the mismatch in the previous section stems largely from second-order feature correlations (while our theory assumes a isotropic covariance matrix).

\begin{figure}[H]
\centering
% --- left subfigure ---
\subfigure[Single teacher.]{
  \centering
  \includegraphics[width=.58\columnwidth]{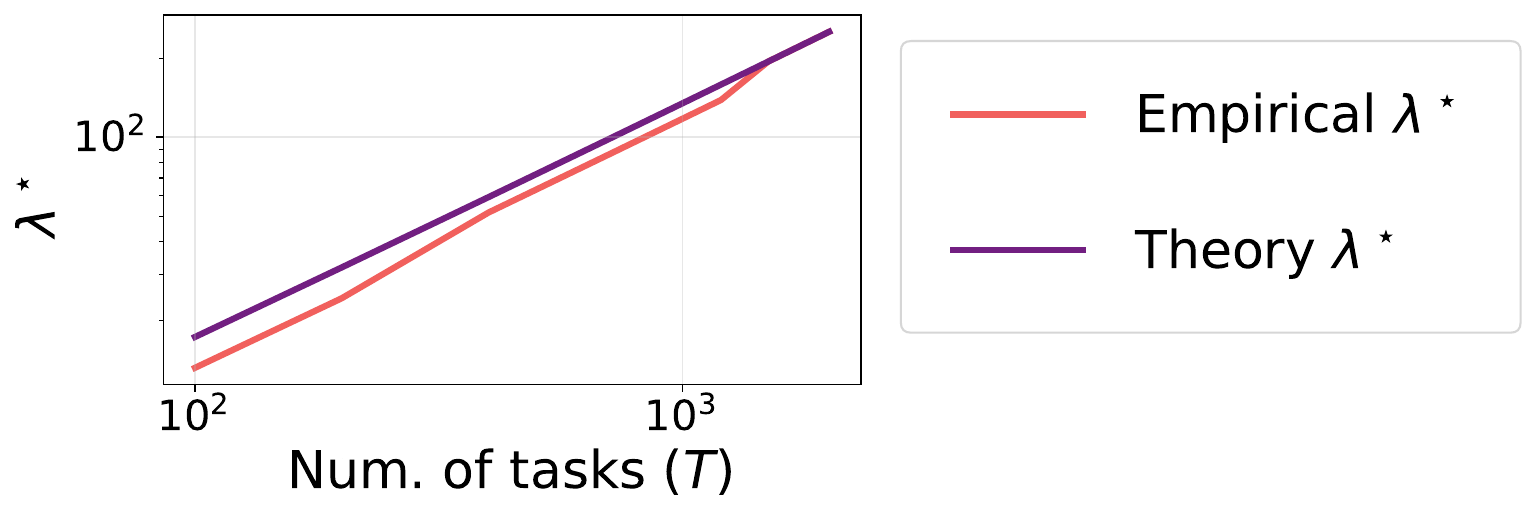}
}
\hfill
%
% --- right subfigure ---
\subfigure[Multiple i.i.d.~teachers.]{
  \centering
  \includegraphics[width=.34\columnwidth]{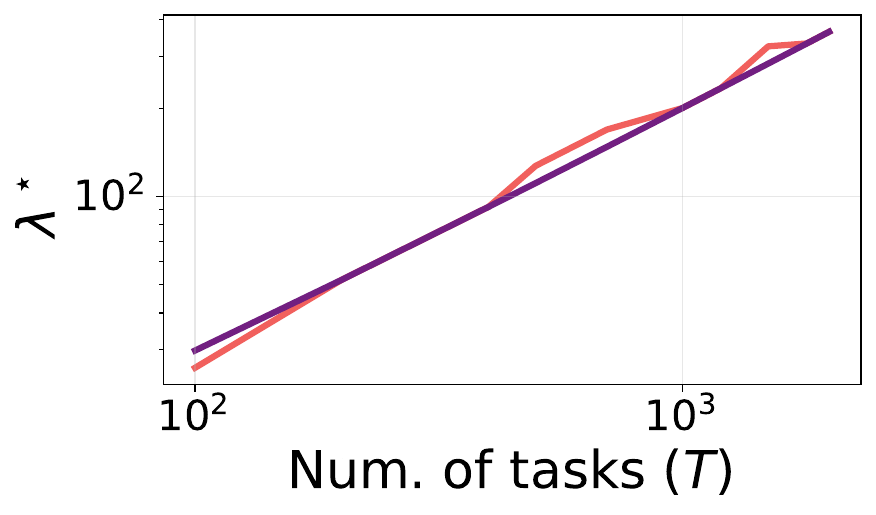}}
\caption{
\textbf{Empirical optimal regularization strength vs.~Task horizon.}
We plot the empirical optimal strength after learning a noisy sequence with $T$ tasks---averaged over 5 seeds---and compare it to the optimum predicted by our analysis.
}
\label{fig:A}
\end{figure}

\subsection{Aspect-Ratio Scaling Predicted by Theory Extends Beyond i.i.d.~Features}
\label{sec:aspect}
We evaluate a linear model trained on the original \emph{non-whitened} \mnist-based data, but vary the aspect ratio $\tfrac{n}{d}\to\alpha$.
Here, we focus on the single noiseless teacher. 
Still, the optimal regularization exhibits a near linear dependence on $\alpha$, in agreement with the scaling predicted by \cref{thm:optimal}.
\begin{figure}[H]
\vspace{0.5em}
\centering
  \includegraphics[width=0.7\columnwidth]{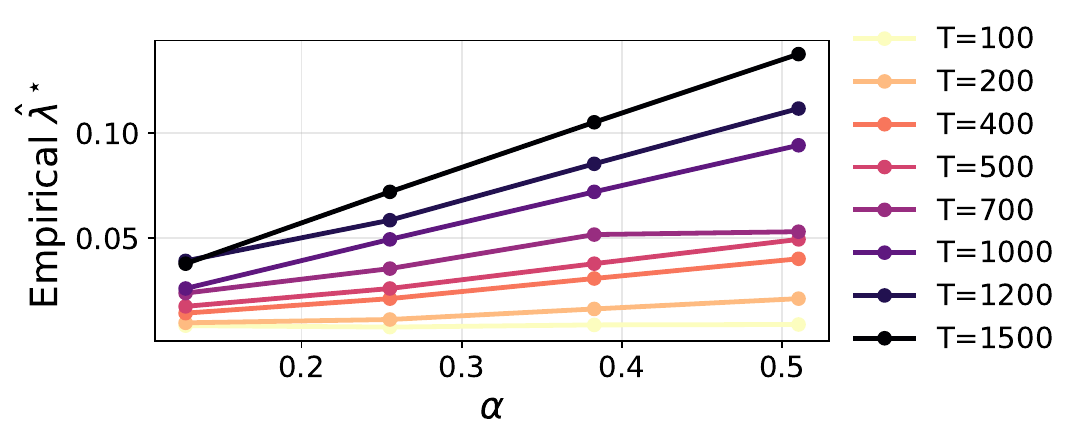}
\vspace{-0.5em}
\caption{
\textbf{Effect of aspect ratio on optimal regularization strength.}
We plot the empirical optimal regularizer after learning a noisy sequence with $T$ tasks.
Each curve reports the average empirical and theoretical values over 5 random seeds.
}
\label{fig:B}
\end{figure}

\newpage

\section{Proving the Main Result (\cref{eq:main_result})}

\label{app:main_proofs}

\begin{recall}[\cref{eq:main_result}]
Recall the high-dimensional statistical setting of \cref{sec:statistical_setting}, where the number of samples and the dimension $n,d\to \infty$ and $\frac{n}{d}\to \ratio\le 1$.
Moreover, recall that $\featnoise,\labelnoise$ are the variances of features and labels, respectively.
Then, the expected generalization loss (\cref{def:generlization:error}) 
after learning $T$ tasks with \cref{schm:regularized} is,
\[
\begin{aligned}
\mathbb{E}[G_T] 
  &
  % = \frac{1}{T}\sum_{i=1}^{T}
  %    \mathbb{E}\!\left\Vert \vw_{T}-\vw_{i}^{\star}\right\Vert^{2} \\[6pt]
  % &
  = \underbrace{\labelnoise c\,\frac{1-a^{T}}{1-a}}_{\text{Label noise term}}
     \;+\underbrace{\;\frac{1}{T}\sum_{i=1}^{T}
     \left\Vert \vw_{T}^{\star}-\vw_{i}^{\star}\right\Vert^{2}}_{\text{Teacher variability term}}
  %    \\[6pt]
  % &
  % \quad 
  + \underbrace{2\big(\vw_{T}^{\star}-\bar{\vw}^{\star}\big)^{\!\top}
     \sum_{i=1}^{T} b^{\,T-i+1}
     \big(\vw_{i-1}^{\star}-\vw_{i}^{\star}\big)}_{\text{Interaction term}}\\[4pt]
  &
  \quad + \underbrace{\sum_{i=1}^{T}\sum_{i'=1}^{T}
     a^{\,T-\max(i,i')+1}\,b^{|i-i'|}
     \big(\vw_{i-1}^{\star}-\vw_{i}^{\star}\big)^{\!\top}
     \big(\vw_{i'-1}^{\star}-\vw_{i'}^{\star}\big)
     }_{\text{Accumulated temporal (teacher) correlations}}
     \,,
\end{aligned}
\]
where we define $\teacher_0 \triangleq \vw_0$ (by abuse of notation), the average teacher $\bar{\vw}^{\star} \triangleq \frac{1}{T}\sum_{i=1}^T \teacher_i$, and
\[
\begin{aligned}
  a &\triangleq 
  \tfrac{1}{2}
  \biggprn
  {
  1-\ratio + \frac{
  \lambda(1+\ratio)+\featnoise(1-\ratio)^{2}}
  {\sqrt{\lambda^2 +2\lambda\featnoise\prn{1+\ratio}+\featnoise^2 \prn{1-\ratio}^2}}
    }\,,
\\[4pt]
  b &\triangleq 
  \tfrac{1}{2}
  \biggprn
  {
          1-\ratio - \tfrac{\lambda}{\featnoise}
          +
          \tfrac{1}{\featnoise}
          \sqrt{\lambda^{2}
                 + 
                 2\lambda\featnoise(1+\ratio)
                 + 
                 \featnoise^{2} (1-\ratio)^{2}}
    },
    \\[4pt]
    c &\triangleq
    \tfrac{1}{2\featnoise}
  \biggprn
  {
        \frac{\featnoise(1+\ratio)+\lambda}
          {\sqrt{\lambda^2
                 + 2\lambda\featnoise(1+\ratio)
                 + \featnoise^{2}(1-\ratio)^{2}}}
          - 1
  }
  \,.\\[8pt]
\end{aligned}
\]
\end{recall}
We begin by proving the following Lemma, which provides a closed-form expression for the predictor update at each iteration $t$.

\begin{lemma}
\label{lem:solution}
At each iteration $t$, the iterative update under~\cref{schm:regularized} is given by 
$$\vw_{t}=\left(\X_{t}^{\top}\X_{t}+\lambda d\I\right)^{-1}\left(\X_{t}^{\top}\mathbf{y}_{t}+\lambda d\vw_{t-1}\right)~.$$
\end{lemma}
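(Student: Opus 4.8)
The plan is to derive the closed-form minimizer directly from the first-order optimality condition of the regularized objective. Since the objective in \cref{schm:regularized} is
\[
J_t(\vw) \triangleq \norm{\X_t \vw - \y_t}^2 + \lambda d \norm{\vw - \vw_{t-1}}^2,
\]
I first observe that $J_t$ is a strictly convex quadratic in $\vw$ (its Hessian is $2(\X_t^\top \X_t + \lambda d \I)$, which is positive definite because $\lambda d \I \succ 0$), so it admits a unique global minimizer characterized by the stationarity condition $\nabla J_t(\vw_t) = \0$.

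Next, I would compute the gradient term by term. Expanding $\norm{\X_t \vw - \y_t}^2 = \vw^\top \X_t^\top \X_t \vw - 2\y_t^\top \X_t \vw + \y_t^\top \y_t$ and $\lambda d \norm{\vw - \vw_{t-1}}^2 = \lambda d\,(\vw^\top \vw - 2\vw_{t-1}^\top \vw + \vw_{t-1}^\top \vw_{t-1})$, differentiating gives
\[
\nabla J_t(\vw) = 2\X_t^\top \X_t \vw - 2\X_t^\top \y_t + 2\lambda d\,\vw - 2\lambda d\,\vw_{t-1}.
\]
Setting this to zero and dividing by $2$ yields $(\X_t^\top \X_t + \lambda d \I)\vw_t = \X_t^\top \y_t + \lambda d\,\vw_{t-1}$.

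Finally, I would invert the matrix $\X_t^\top \X_t + \lambda d \I$, which is guaranteed invertible since it is positive definite (the eigenvalues of $\X_t^\top \X_t$ are nonnegative and the added $\lambda d \I$ shifts them strictly above zero for any $\lambda > 0$). Left-multiplying by the inverse gives exactly
\[
\vw_t = \left(\X_t^\top \X_t + \lambda d \I\right)^{-1}\left(\X_t^\top \y_t + \lambda d\,\vw_{t-1}\right),
\]
which is the claimed expression. This completes the argument.

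I do not anticipate a genuine obstacle here, as this is a standard ridge-regression normal-equation computation; the only point requiring a word of care is justifying invertibility (and hence uniqueness of the minimizer), which follows immediately from strict convexity granted $\lambda > 0$. If one wishes to be fully rigorous about the edge case $\lambda \to 0$, uniqueness of $\vw_t$ may fail when $\X_t^\top \X_t$ is singular, but the expression is stated and used for the regularized scheme with $\lambda > 0$, so this is not a concern for the lemma as stated.
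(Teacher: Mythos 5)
Your proposal is correct and follows essentially the same route as the paper's proof: both derive the normal equations $(\X_t^\top\X_t+\lambda d\I)\vw_t=\X_t^\top\y_t+\lambda d\,\vw_{t-1}$ from the stationarity condition of the convex objective and invert using positive definiteness of $\X_t^\top\X_t+\lambda d\I$ for $\lambda>0$. Your added remarks on strict convexity guaranteeing uniqueness and the degenerate $\lambda\to0$ edge case are sound but not needed beyond what the paper states.
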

\begin{proof}
    
Define the convex objective
$$
J_t(\vw)\;=\left\Vert\X_t\vw-\y_t\right\Vert^2+\lambda d\left\Vert\vw-\vw_{t-1}\right\Vert^2
.
$$
Its gradient is
\begin{align*}
\nabla_\vw J_t(\vw)
&= 2\,\X_t^\top\left(\X_t\vw-\y_t\right)+2\lambda d\left(\vw-\vw_{t-1}\right)
\\&= 2\left[\left(\X_t^\top\X_t+\lambda d\I\right)\vw-\left(\X_t^\top\y_t+\lambda d\vw_{t-1}\right)\right].
\end{align*}

At an optimizer $\vw_t$ we must have $\nabla_\vw J_t(\vw_t)=\mathbf 0$, hence the normal equations
$$
(\X_t^\top\X_t+\lambda d\mathbf I)\,\vw_t=\X_t^\top\y_t+\lambda d\vw_{t-1}.
$$
If $\lambda>0$, then $\X_t^\top\X_t+\lambda d\I$ is symmetric positive definite and invertible, so
$$
\vw_t
= \left(\X_t^\top\X_t+\lambda d\mathbf I\right)^{-1}\left(\X_t^\top\y_t+\lambda d\vw_{t-1}\right).
$$
\end{proof}

\begin{definition}[Matrix Equivalent {\citep[][Notation 1]{liao2023rmtml}}]
\label{def:1}
For $\mathbf X, \mathbf Y\in \mathbb R^{n\times n}$ (deterministic or random matrices), we say that $\mathbf X\leftrightarrow \mathbf Y$ if, for all $\mathbf A\in \mathbb R^{n\times n}$ and $\mathbf a,\mathbf b \in \mathbb R^n$ of unit norms, we have 
$$
\frac 1 n\trace \left(\mathbf A\left(\mathbf X-\mathbf Y\right)\right)\to 0,
\quad
\mathbf a^\top \left(\mathbf X-\mathbf Y\right)\mathbf b\to 0, 
\quad
\left\Vert \mathbb E\left[\mathbf X-\mathbf Y\right]\right\Vert \to 0
$$
where, for random variables, the convergence is almost surely.
\end{definition}

\begin{proof}[for \cref{eq:main_result}]
We now turn to prove the theorem.
First, we aim to derive a formula independent of the predictors at each iteration.

According to~\cref{schm:regularized}, at each iteration $t$ the predictor is updated as $$\mathbf w_t \;\gets\;
        \argmin_{\vw}\Bigl\{\|\mathbf X_{t}\mathbf w-\mathbf y_{t}\|^2
        +\lambda d\|\mathbf w-\mathbf w_{t-1}\|^2\Bigr\}$$
From~\cref{lem:solution}, the solution to this problem is given by:
$$\vw_{t}=\left(\X_{t}^{\top}\X_{t}+\lambda d\I\right)^{-1}\left(\X_{t}^{\top}\mathbf{y}_{t}+\lambda d\vw_{t-1}\right)$$
For some $1\leq k, i \leq T$ we have
\begin{align}
\vw_{k}-\vw_{i}^{\star}&=\left(\X_{k}^{\top}\X_{k}+\lambda d\I\right)^{-1}\left(\X_{k}^{\top}\left(\X_{k}\vw_{k}^{\star}+\z_{k}\right)+\lambda d\vw_{k-1}\right)-\vw_{i}^{\star}\notag\\&=\left(\X_{k}^{\top}\X_{k}+\lambda d\I\right)^{-1}\left(\X_{k}^{\top}\left(\X_{k}\vw_{k}^{\star}+\z_{k}\right)+\lambda d\vw_{k-1}\right)\notag\\&\qquad\qquad-\left(\X_{k}^{\top}\X_{k}+\lambda d\I\right)^{-1}\left(\X_{k}^{\top}\X_{k}+\lambda d\I\right)\vw_{i}^{\star}\notag\\&=\left(\X_{k}^{\top}\X_{k}+\lambda d\I\right)^{-1}\left[\X_{k}^{\top}\left(\X_{k}\vw_{k}^{\star}+\z_{k}\right)+\lambda d\vw_{k-1}-\left(\X_{k}^{\top}\X_{k}+\lambda d\I\right)\vw_{i}^{\star}\right]\notag\\&=\left(\X_{k}^{\top}\X_{k}+\lambda d\I\right)^{-1}\left[\X_{k}^{\top}\z_{k}+\lambda d\vw_{k-1}-\lambda d\vw_{i}^{\star}+\X_{k}^{\top}\X_{k}\left(\vw_{k}^{\star}-\vw_{i}^{\star}\right)\right]\notag\\&=\left(\X_{k}^{\top}\X_{k}+\lambda d\I\right)^{-1}\left[\X_{k}^{\top}\z_{k}+\lambda d\left(\vw_{k-1}-\vw_{i}^{\star}\right)+\X_{k}^{\top}\X_{k}\left(\vw_{k}^{\star}-\vw_{i}^{\star}\right)\right].
\label{eq:start}
\end{align}

Denote $$\mP_{t}=\lambda d\left(\mathbf{X}_{t}^{\top}\mathbf{X}_{t}+\lambda d\I\right)^{-1}.$$

Next, we derive the deterministic equivalent for $\mathbf P_t$; see \cref{def:1}.
Let $\mathbf Z \in \mathbb{R}^{n \times d}$ with entries $Z_{ij}$ that are i.i.d.~with mean 0 and unit variance, and having a finite moment of order $4+\epsilon$ for some $\epsilon>0$. 
Define the resolvent
\[
Q(z) = \Bigl(\tfrac{1}{n}\mathbf Z^\top \mathbf Z - z \mathbf I_d \Bigr)^{-1}. 
\]
Then, by Theorem~2.6 from~\cite{liao2023rmtml}: as $n,d \to \infty$ with $d/n \to \frac 1\ratio \in (0,\infty)$, $Q(z)$ admits a deterministic equivalent
\begin{align}
Q(z) \leftrightarrow
\bar{Q}(z) \;=\; m_{\tfrac{1}{\ratio}}(z)\,\mathbf I_d,
\label{eq:March}
\end{align}
where $m_{\frac 1 \ratio}(z)$
satisfies the following closed-form expression:
\begin{equation}
\label{eq:mp_stieltjes}
m_{\frac 1 \ratio}(z)
= \frac{ -\left(1-\tfrac{1}{\ratio}-z\right) 
       + \sqrt{\bigl(1-\tfrac{1}{\ratio}-z\bigr)^{2} - \tfrac{4}{\ratio}z} }
       { -\tfrac{2}{\ratio}z }.
\end{equation}
Since $X_{i,j}$ has mean $0$ and variance $\featnoise$, we have
$
\mathbf X \;\overset{d}{=}\; \sqrt \featnoise\,\mathbf Z
$.
We aim to derive closed-form expressions for $\mathbb E[\mathbf P]$, $\mathbb E[\mathbf P^2]$ and $\mathbb{E}\left[\frac{1}{(\lambda d)^{2}}\mathbf{P}\mathbf{X}^{\top}\mathbf{X}\mathbf{P}\right]$, which will be used in our subsequent derivations. In our setting (\cref{sec:statistical_setting}), $\mathbf P$ denotes the random resolvent
$\mathbf P \triangleq \lambda d(\mathbf X^\top\mathbf X+\lambda d\,\mathbf I)^{-1}$.
Expectations are taken with respect to the limiting spectral distribution of
$\frac{1}{n}\mathbf X^\top\mathbf X$ (Marchenko--Pastur law, \cref{eq:March}),
under which $\mathbb E[\mathbf P]$ and $\mathbb E[\mathbf P^2]$ exist and are scalar multiples of the identity, as we show below.

Using the explicit form of $m_{\frac 1\ratio}\left(z\right)$ in \cref{eq:mp_stieltjes} we obtain, 
\begin{align}
\mathbb{E\left[\mathbf{P}\right]}&=\mathbb{E}\left[\lambda d\left(\featnoise\mathbf{Z}^{\top}\mathbf{Z}-\left(-\lambda d\right)\I\right)^{-1}\right]=\frac{\lambda d}{n\featnoise}\mathbb{E}\left[\left(\frac{1}{n}\mathbf{Z}^{\top}\mathbf{Z}-\left(-\frac{\lambda d}{n\featnoise}\right)\I\right)^{-1}\right]\notag\\&=\frac{\lambda d}{n\featnoise}m_{\frac{1}{\ratio}}\left(-\frac{\lambda d}{n\featnoise}\right)\I\notag\\&=\frac{-\left(1-\frac{1}{\ratio}+\frac{\lambda d}{n\featnoise}\right)+\sqrt{\left(1-\frac{1}{\ratio}+\frac{\lambda d}{n\featnoise}\right)^{2}+\frac{4}{\ratio}\frac{\lambda d}{n\featnoise}}}{\frac{2}{\ratio}}\I\notag\\&=\frac{1}{2}\left(1-\ratio-\frac{\lambda}{\featnoise}+\sqrt{\left(\frac{\lambda}{\featnoise}\right)^{2}+\frac{2\lambda}{\featnoise}\left(\ratio+1\right)+\left(1-\ratio\right)^{2}}\right)\I
\label{eq:E[P]}
\end{align}
and,
\begin{align}
\mathbb{E}\left[\mathbf{P}^{2}\right]&=\mathbb{E}\left[\left(\lambda d\right)^{2}\left(\lambda d\I+\featnoise\mathbf{Z}^{\top}\mathbf{Z}\right)^{-2}\right]=\mathbb{E}\left[-\frac{\left(\lambda d\right)^{2}}{d}\frac{\dd}{\dd\lambda}\left(-\left(-\lambda d\right)\I+\featnoise\mathbf{Z}^{\top}\mathbf{Z}\right)^{-1}\right]\notag&\\&=-\frac{\lambda^{2}d}{n\featnoise}\mathbb{E}\left[\frac{\dd}{\dd\lambda}\left(-\left(-\frac{\lambda d}{n\featnoise}\right)\I+\frac{1}{n}\mathbf{Z}^{\top}\mathbf{Z}\right)^{-1}\right]
=\left(\frac{\lambda d}{n\featnoise}\right)^{2}m'_{\frac{1}{\ratio}}\left(-\frac{\lambda d}{n\featnoise}\right)\I\notag\\&=\frac{\ratio}{2}\,\frac{\left(1+\frac{1}{\ratio}\right)\frac{\lambda d}{n\featnoise}+\left(1-\frac{1}{\ratio}\right)\!\left(\left(1-\frac{1}{\ratio}\right)-\sqrt{\left(\frac{\lambda d}{n\featnoise}\right)^{2}+2\!\left(1+\frac{1}{\ratio}\right)\!\frac{\lambda d}{n\featnoise}+\left(1-\frac{1}{\ratio}\right)^{2}}\right)}{\sqrt{\left(\frac{\lambda d}{n\featnoise}\right)^{2}+2\!\left(1+\frac{1}{\ratio}\right)\!\frac{\lambda d}{n\featnoise}+\left(1-\frac{1}{\ratio}\right)^{2}}}\,\I\notag\\&=\frac{1}{2}\left(1-\ratio+\frac{\left(\ratio+1\right)\frac{\lambda}{\featnoise}+\left(\ratio-1\right)^{2}}{\sqrt{\left(\frac{\lambda}{\featnoise}\right)^{2}+\frac{2\lambda}{\featnoise}\left(\ratio+1\right)+\left(\ratio-1\right)^{2}}}\right)\I\,.
\label{eq:E[P^2]}
\end{align}

The term $\mathbb{E}\left[\frac{1}{(\lambda d)^{2}}\mathbf{P}\mathbf{X}^{\top}\mathbf{X}\mathbf{P}\right]$
is a linear combination of $\mathbb E[\mathbf P]$ and $\mathbb E[\mathbf P^2]$,
\begin{align*}
\mathbb{E}\left[\frac{1}{\left(\lambda d\right)^{2}}\mathbf{P}\mathbf{X}^{\top}\mathbf{X}\mathbf{P}\right]
&=\mathbb{E}\left[\left(\lambda d\I+\mathbf{X}^{\top}\mathbf{X}\right)^{-1}\mathbf{\mathbf{X}^{\top}X}\left(\lambda d\I+\mathbf{X}^{\top}\mathbf{X}\right)^{-1}\right]
\\
&=\mathbb{E}\left[\left(\lambda d\I+\mathbf{X}^{\top}\mathbf{X}\right)^{-1}\left(\left(\mathbf{\lambda d\I+\mathbf{X}^{\top}X}\right)-\lambda d\I\right)\left(\lambda d\I+\mathbf{X}^{\top}\mathbf{X}\right)^{-1}\right]
\\
&=\mathbb{E}\left[\left(\lambda d\I+\mathbf{X}^{\top}\mathbf{X}\right)^{-1}-\lambda d\left(\lambda d\I+\mathbf{X}^{\top}\mathbf{X}\right)^{-2}\right]
\\
&=\frac{1}{\lambda d}\left[\mathbb{E}\left[\mathbf{P}\right]-\mathbb{E}\left[\mathbf{P}^{2}\right]\right]\I\\&=\frac{1}{2d\featnoise}\left(\frac{\featnoise\left(\ratio+1\right)+\lambda}{\sqrt{\lambda^{2}+2\lambda\featnoise\left(\ratio+1\right)+\featnoise^{2}\left(1-\ratio\right)^{2}}}-1\right)\I.
\end{align*}

We continue with our main result. From~\cref{eq:start} we obtain,
\begin{align}
&\vw_{k}-\vw_{i}^{\star}\notag
\\
&=\lambda d\left(\mathbf{X}_{t}^{\top}\mathbf{X}_{t}+\lambda d\I\right)^{-1}\!\left(\vw_{k-1}-\vw_{i}^{\star}\right)+
\left(\mathbf{X}_{t}^{\top}\mathbf{X}_{t}+\lambda d\I\right)^{-1}\!\left(\X_{k}^{\top}\z_{k}+\X_{k}^{\top}\X_{k}\left(\vw_{k}^{\star}-\vw_{i}^{\star}\right)\right)\notag\\&=\mP_{k}\left(\vw_{k-1}-\vw_{i}^{\star}\right)+\frac{1}{\lambda d}\mP_{k}\left(\X_{k}^{\top}\z_{k}+\X_{k}^{\top}\X_{k}\left(\vw_{k}^{\star}-\vw_{i}^{\star}\right)\right).
\label{eq:12}
\end{align}
Setting $i=k$ in~\cref{eq:12} we then obtain,
\begin{equation}
\vw_{k}-\vw_{k}^{\star}=\mP_{k}\left(\vw_{k-1}-\vw_{k}^{\star}\right)+\frac{1}{\lambda d}\mP_{k}\X_{k}^{\top}\z_{k}.
\label{eq:13}
\end{equation}

For all $1\leq i,t\leq T$ using~\cref{eq:13} we have, \begin{align}\vw_{t}-\vw_{i}^{\star}&=\mP_{t}\left(\vw_{t-1}-\vw_{t}^{\star}\right)+\frac{1}{\lambda d}\mP_{t}\X_{t}^{\top}\z_{t}+\left(\vw_{t}^{\star}-\vw_{i}^{\star}\right).
\label{eq:14}
\end{align}

From~\cref{eq:14},
\begin{align}
\vw_{t}-\vw_{i}^{\star}&=\mP_{t}\left(\mP_{t-1}\left(\vw_{t-2}-\vw_{t-1}^{\star}\right)+\frac{1}{\lambda d}\mP_{t-1}\X_{t-1}^{\top}\z_{t-1}+\left(\vw_{t-1}^{\star}-\vw_{t}^{\star}\right)\right)\notag\\&+\frac{1}{\lambda d}\mP_{t}\X_{t}^{\top}\z_{t}+\left(\vw_{t}^{\star}-\vw_{i}^{\star}\right)\notag\\&\ldots\notag\\\vw_{t}-\vw_{i}^{\star}&=\left[\prod_{m=t}^{1}\mP_{m}\right]\left(\vw_{0}-\vw_{1}^{\star}\right)+\frac{1}{\lambda d}\sum_{k=1}^{t}\left[\prod_{m=t}^{k}\mP_{m}\right]\X_{k}^{\top}\z_{k}+\notag\\&\sum_{k=2}^{t}\left[\prod_{m=t}^{k}\mP_{m}\right]\left(\vw_{k-1}^{\star}-\vw_{k}^{\star}\right)+\left(\vw_{t}^{\star}-\vw_{i}^{\star}\right).
\label{eq:seq}
\end{align}

\pagebreak

We proceed to the calculation of the main result $\mathbb{E}\left[G_{T}\right]=\frac{1}{T}\sum_{i=1}^{T}\mathbb{E}\left[\left\Vert \vw_{t}-\vw_{i}^{\star}\right\Vert ^{2}\right]$.
For convenience, we define $\mathbf{S}_{i:j} \triangleq \prod_{m=i}^{j} \mP_m=\mP_i\ldots \mP_j$ for $i\ge j$ and $\mathbf{S}_{i:j}=\I$ for $i<j$. We use the standard notation: for $a,b\in\mathbb{R}$ we write
$
\max(a,b) \;=\; a \vee b$ and $\min(a,b) \;=\; a \wedge b$.

\begin{align}
&\mathbb{E}\left[\left\Vert \vw_{t}-\vw_{i}^{\star}\right\Vert ^{2}\right]
\notag
\\
&
=\mathbb{E}\,\biggnorm{\mathbf S_{t:1}\left(\vw_{0}-\vw_{1}^{\star}\right)+\frac{1}{\lambda d}\sum_{k=1}^{t}\mathbf S_{t:k}\X_{k}^{\top}\z_{k}+\sum_{k=2}^{t}\mathbf S_{t:k}\left(\vw_{k-1}^{\star}-\vw_{k}^{\star}\right)+\left(\vw_{t}^{\star}-\vw_{i}^{\star}\right)
}^2
\notag
\\
&=
\underbrace{
\mathbb{E}\,
\biggnorm{\mathbf S_{t:1}\left(\vw_{0}-\vw_{1}^{\star}\right)+\frac{1}{\lambda d}\sum_{k=1}^{t}\mathbf S_{t:k}\X_{k}^{\top}\z_{k}
}^2
}_{\text{term 1}}
\notag
\\
&\quad+\underbrace{2\mathbb{E}\left(\sum_{k=2}^{t}\mathbf S_{t:k}\left(\vw_{k-1}^{\star}-\vw_{k}^{\star}\right)+\left(\vw_{t}^{\star}-\vw_{i}^{\star}\right)\right)^{\top}\left(\mathbf S_{t:1}\left(\vw_{0}-\vw_{1}^{\star}\right)+\frac{1}{\lambda d}\sum_{k=1}^{t}\mathbf S_{t:k}\X_{k}^{\top}\z_{k}\right)}_{\text{term 2}}
\notag
\\
&\quad+\underbrace{
\mathbb{E}\,
\biggnorm{\sum_{k=2}^{t}\mathbf S_{t:k}\left(\vw_{k-1}^{\star}-\vw_{k}^{\star}\right)+\left(\vw_{t}^{\star}-\vw_{i}^{\star}\right)
}^2
}_{\text{term 3}}\,.
\label{eq:difference_3_terms}
\end{align}

\newpage
In the following pages, we derive the terms above, employing the following properties.
\begin{enumerate}[label={[\arabic*]}]
    \item  For any \(1\leq k\leq t \leq T\),
\begin{align*}
&\mathbb{E}\left[\prod_{m=k}^{t}\mP_{m}\prod_{m=t}^{k}\mP_{m}\right]=\mathbb{E}\left[\mP_{k}\cdots\mP_{t-1}\mP_{t}\mP_{t}\mP_{t-1}\cdots\mP_{k}\right]&
\\
&=\mathbb{E}_{\mP_{k},\dots,\mP_{t-1}}\left[\mathbb{E}_{\mP_{t}}\left[\mP_{k}\cdots\mP_{t-1}\mP_{t}\mP_{t}\mP_{t-1}\cdots\mP_{k}\Bigl|\mP_{k},\mP_{k+1},\ldots,\mP_{t-1}\right]\right]
\\
\explain{3}&=\mathbb{E}\left[\mP_{k}\cdots\mP_{t-1}\mathbb{E}\left[\mP_{t}^{2}\right]\mP_{t-1}\cdots\mP_{k}\right]
% \\
% &
=\mathbb{E}\left[\mP_{k}\cdots\mP_{t-1}\mathbb{E}\left[\mP^{2}\right]\mP_{t-1}\cdots\mP_{k}\right]
\\
\explain{4}&=\mathbb{E}\left[\mP^{2}\right]\mathbb{E}\left[\mP_{k}\cdots\mP_{t-1}\mP_{t-1}\cdots\mP_{k}\right]=\ldots=\left(\mathbb{E}\left[\mP^{2}\right]\right)^{t-k+1}.
\end{align*}

\item For any \(1\leq k'<k\leq t \leq T\),
\begin{align*}
&\mathbb{E}\left[\prod_{m=k}^{t}\mP_{m}\prod_{m=t}^{k}\mP_{m}\prod_{m=k-1}^{k'}\mP_{m}\right]
=
\mathbb{E}\left[\mP_{k}\cdots\mP_{t}\mP_{t}\cdots\mP_{k}\mP_{k-1}\cdots\mP_{k'}\right]&
\\
&=\mathbb{E}_{\mP_{k'},\ldots,\mP_{t-1}}\left[\mathbb{E}_{\mP_{t}}\left[\mP_{k}\cdots\mP_{t}\mP_{t}\cdots\mP_{k}\mP_{k-1}\cdots\mP_{k'}\Bigl|\mP_{k'},\ldots,\mP_{t-1}\right]\right]
\\
\explain{3}&=\mathbb{E}_{\mP_{k'},\ldots,\mP_{t-1}}\left[\mathbb{E}_{\mP_{t}}\left[\mP_{k'}\cdots\mP_{t-1}\mathbb{E}\left[\mP_{t}^{2}\right]\mP_{t-1}\cdots\mP_{k}\cdots\mP_{k'}\Bigl|\mP_{k'},\ldots,\mP_{t-1}\right]\right]
\\
\explain{4}
&=\mathbb{E}\left[\mP^{2}\right]\mathbb{E}\left[\mP_{k'}\cdots\mP_{t-1}\mP_{t-1}\cdots\mP_{k}\ldots\mP_{k'}\right]
=
\dots
=
\mathbb{E}\left[\mP_{k'}\cdots\mP_{k-1}\right]\mathbb{E}\left[\mP^{2}\right]^{t-k+1}
\\
&=\mathbb{E}_{\mP_{k'},\dots,\mP_{k-2}}\left[\mathbb{E}_{\mP_{k-1}}\left[\mP_{k'}\cdots\mP_{k-1}\Bigl|\mP_{k'},\dots,\mP_{k-2}\right]\right]\mathbb{E}\left[\mP^{2}\right]^{t-k+1}
\\
\explain{3}&=\mathbb{E}\left[\mathbb{E}\left[\mP_{k'}\cdots\mP_{k-2}\mathbb{E}\left[\mP_{k-1}\right]\right]\right]\mathbb{E}\left[\mP^{2}\right]^{t-k+1}
\\
\explain{4}&=\mathbb{E}\left[\mP\right]\mathbb{E}\left[\mP_{k'}\cdots\mP_{k-2}\right]\mathbb{E}\left[\mP^{2}\right]^{t-k+1}
=\ldots=\mathbb{E}\left[\mP\right]^{k-k'}\mathbb{E}\left[\mP^{2}\right]^{t-k+1}.
\end{align*}
Note that when $k=k'$, it reduces to case [1].

\item The data matrices are assumed to be independent of the past (see \cref{sec:statistical_setting}).

\item 
\textbf{$\mathbb{E}[\mathbf P], \mathbb{E}[\mathbf P^2]$ are both multiples of $\I$, and thus commutative.}
By~\cref{eq:E[P],eq:E[P^2]}, which give closed-form expressions for $\mathbb{E}[\mathbf P]$ and $\mathbb{E}[\mathbf P^2]$, i.e., both are scalar multiples of the identity matrix, then commutativity is automatically implied.
\end{enumerate} 

\newpage

Term 1:
\begin{align*}
&\mathbb{E}\,
\biggnorm{\mathbf S_{t:1}\left(\vw_{0}-\vw_{1}^{\star}\right)+\frac{1}{\lambda d}\sum_{k=1}^{t}\mathbf S_{t:k}\X_{k}^{\top}\z_{k}
}^2
\\
&=\mathbb{E}\left((\vw_{0}-\vw_{1}^{\star})^{\top}\mathbf{S}_{t:1}^{\top}+\frac{1}{\lambda d}\sum_{k=1}^{t}\z_{k}^{\top}\X_{k}\mathbf{S}_{t:k}^{\top}\right)\left(\mathbf{S}_{t:1}(\vw_{0}-\vw_{1}^{\star})+\frac{1}{\lambda d}\sum_{k=1}^{t}\mathbf{S}_{t:k}\X_{k}^{\top}\z_{k}\right)
\\
\explain{\text{*}}&=\mathbb{E}\left(\left(\vw_{0}-\vw_{1}^{\star}\right)^{\top}\mathbf{S}_{t:1}^{\top}\mathbf{S}_{t:1}\left(\vw_{0}-\vw_{1}^{\star}\right)+\frac{1}{(\lambda d)^{2}}\sum_{k=1}^{t}\left(\z_{k}\right)^{\top}\X_{k}\mathbf{S}_{t:k}^{\top}\sum_{k'=1}^{t}\mathbf{S}_{t:k'}\X_{k'}^{\top}\z_{k'}\right)
\\
\explain{\text{**}}
&
=
\left(\mathbb{E}\left[\mP^{2}\right]\right)^{t}\left\Vert \vw_{0}-\vw_{1}^{\star}\right\Vert ^{2}+\labelnoise d\mathbb{E}\left[\frac{1}{(\lambda d)^{2}}\mP_{k}\left(\X_{k}\right)^{\top}\X_{k}\mP_{k}\right]\frac{1-\left(\mathbb{E}\left[\mP^{2}\right]\right)^{t}}{1-\mathbb{E}\left[\mP^{2}\right]}
\,,
\end{align*}
where [*] follows by \cref{sec:statistical_setting} as the noise variables are sampled independently across tasks with $\mathbb E\left[\mathbf z_t\right] =\0$,
and [**] follows as the left inner term (in the third line) is,
\begin{align*}
\left(\vw_{0}-\vw_{1}^{\star}\right)^{\top}\mathbb{E}\left[\mathbf{S}_{t:1}^{\top}\mathbf{S}_{t:1}\right]\left(\vw_{0}-\vw_{1}^{\star}\right)
&
\stackrel{\explain{1,4}}{=}
\left(\mathbb{E}\left[\mP^{2}\right]\right)^{t}\left\Vert \vw_{0}-\vw_{1}^{\star}\right\Vert ^{2}\,,
\end{align*}
and the right inner term is,
\begin{align*}
&
\frac{1}{\left(\lambda d\right)^{2}}\mathbb{E}\left[\sum_{k=1}^{t}\sum_{k'=1}^{t}\z_{k}^{\top}\X_{k}\mathbf{S}_{t:k}^{\top}\mathbf{S}_{t:k}\X_{k'}^{\top}\z_{k'}\right]=\frac{1}{\left(\lambda d\right)^{2}}\mathbb{E}\left[\sum_{k=1}^{t}\z_{k}^{\top}\X_{k}\mathbf{S}_{t:k}^{\top}\mathbf{S}_{t:k}\X_{k}^{\top}\z_{k}\right]
\\
&=\frac{1}{\left(\lambda d\right)^{2}}\mathbb{E}\left[\sum_{k=1}^{t}\trace\left[\z_{k}^{\top}\X_{k}\mathbf{S}_{t:k}^{\top}\mathbf{S}_{t:k}\X_{k}^{\top}\z_{k}\right]\right]
=\frac{\labelnoise}{\left(\lambda d\right)^{2}}\sum_{k=1}^{t}\mathbb{E}\trace\left[\X_{k}^{\top}\X_{k}\mathbf{S}_{t:k}^{\top}\mathbf{S}_{t:k}\right]
\\&=\frac{\labelnoise}{\left(\lambda d\right)^{2}}\sum_{k=1}^{t}\mathbb{E}\trace\left[\X_{k}^{\top}\X_{k}\left(\mathbf{S}_{t:k+1}\mP_{k}\right)^{\top}\left(\mathbf{S}_{t:k+1}\mP_{k}\right)\right]
\\
&=\frac{\labelnoise}{\left(\lambda d\right)^{2}}
\sum_{k=1}^{t}\mathbb{E}\trace\left[\mP_{k}\X_{k}^{\top}\X_{k}\mathbf{P}_k^{\top}\mathbf{S}_{t:k+1}^{\top}\mathbf{S}_{t:k+1}\right]
\\&=\labelnoise
\sum_{k=1}^{t}\trace\left[\mathbb{E}\left[\frac{1}{\left(\lambda d\right)^{2}}\mP_{k}\X_{k}^{\top}\X_{k}\mP_{k}\right]\mathbb{E}\left[\mathbf{S}_{t:k+1}^{\top}\mathbf{S}_{t:k+1}\right]\right]
\\\explain{1}
&=\labelnoise
\sum_{k=1}^{t}\trace\left[\mathbb{E}\left[\frac{1}{\left(\lambda d\right)^{2}}\mP_{k}\X_{k}^{\top}\X_{k}\mP_{k}\right]\left(\mathbb{E}\left[\mP^{2}\right]\right)^{t-k}\right]
\\
\explain{4}&=\labelnoise
\sum_{k=1}^{t}d\mathbb{E}\left[\frac{1}{\left(\lambda d\right)^{2}}\mP_{k}\X_{k}^{\top}\X_{k}\mP_{k}\right]\left(\mathbb{E}\left[\mP^{2}\right]\right)^{t-k}
\\&=\labelnoise d\mathbb{E}\left[\frac{1}{\left(\lambda d\right)^{2}}\mP_k\X^{\top}_k\X_k\mP_k\right]\sum_{k=1}^{t}\left(\mathbb{E}\left[\mP^{2}\right]\right)^{t-k}%
\\&=\labelnoise d\mathbb{E}\left[\frac{1}{\left(\lambda d\right)^{2}}\mP_k\X_k^{\top}\X_k\mP_k\right]\frac{1-\left(\mathbb{E}\left[\mP^{2}\right]\right)^{t}}{1-\mathbb{E}\left[\mP^{2}\right]}\,,
\end{align*}
where [1], [4] are explained above.

Term 2:
\begin{align*}
&2\mathbb{E}\left(\sum_{k=2}^{t}\mathbf{S}_{t:k}\left(\vw_{k-1}^{\star}-\vw_{k}^{\star}\right)+\left(\vw_{t}^{\star}-\vw_{i}^{\star}\right)\right)^{\top}\left(\mathbf{S}_{t:1}\left(\vw_{0}-\vw_{1}^{\star}\right)+\frac{1}{\lambda}\sum_{k=1}^{t}\mathbf{S}_{t:k}\X_{k}^{\top}\z_{k}\right)
\\
&=2\mathbb{E}\Bigg[\left(\sum_{k=2}^{t}\left(\vw_{k-1}^{\star}-\vw_{k}^{\star}\right)^{\top}\mathbf{S}_{t:k}^{\top}\right)\left(\mathbf{S}_{t:1}\left(\vw_{0}-\vw_{1}^{\star}\right)+\frac{1}{\lambda}\sum_{k=1}^{t}\mathbf{S}_{t:k}\X_{k}^{\top}\z_{k}\right)
\\
&\hspace{3em}+\left(\vw_{t}^{\star}-\vw_{i}^{\star}\right)^{\top}\left(\mathbf{S}_{t:1}\left(\vw_{0}-\vw_{1}^{\star}\right)+\frac{1}{\lambda}\sum_{k=1}^{t}\mathbf{S}_{t:k}\X_{k}^{\top}\z_{k}\right)\Bigg]
\\
&=2\mathbb{E}\biggl[\sum_{k=2}^{t}\left(\vw_{k-1}^{\star}-\vw_{k}^{\star}\right)^{\top}\mathbf{S}_{t:k}^{\top}\mathbf{S}_{t:k}\mathbf{S}_{k-1:1}\left(\vw_{0}-\vw_{1}^{\star}\right)
+\left(\vw_{t}^{\star}-\vw_{i}^{\star}\right)^{\top}\mathbf{S}_{t:1}\left(\vw_{0}-\vw_{1}^{\star}\right)\Bigg]
\\
\explain{2}&=2\bigg(\sum_{k=2}^{t}\left(\vw_{k-1}^{\star}-\vw_{k}^{\star}\right)^{\top}\mathbb{E}\left[\mP^{2}\right]^{t-k+1}\mathbb{E}\left[\mP\right]^{k-1}\left(\vw_{0}-\vw_{1}^{\star}\right)
\\&\hspace{4em}+\left(\vw_{t}^{\star}-\vw_{i}^{\star}\right)^{\top}\mathbb{E}\left[\mP\right]^{t}\left(\vw_{0}-\vw_{1}^{\star}\right)\bigg)\,.
\end{align*}
Term 3:
\begin{align*}
&
\mathbb{E}\,
\biggnorm{\sum_{k=2}^{t}\mathbf S_{t:k}\left(\vw_{k-1}^{\star}-\vw_{k}^{\star}\right)+\left(\vw_{t}^{\star}-\vw_{i}^{\star}\right)
}^2
\\
&=\mathbb{E}\Bigg[\left(\sum_{k=2}^{t}\left(\vw_{k-1}^{\star}-\vw_{k}^{\star}\right)^{\top}\mathbf{S}_{t:k}^{\top}\right)\left(\sum_{k=2}^{t}\mathbf{S}_{t:k}\left(\vw_{k-1}^{\star}-\vw_{k}^{\star}\right)\right)
\\
&\qquad+2\left(\vw_{t}^{\star}-\vw_{i}^{\star}\right)^{\top}\left(\sum_{k=2}^{t}\mathbb{E}\left[\mathbf{S}_{t:k}\left(\vw_{k-1}^{\star}-\vw_{k}^{\star}\right)\right]\right)+\left\Vert \vw_{t}^{\star}-\vw_{i}^{\star}\right\Vert ^{2}\Bigg]
\\
&=\sum_{k=2}^{t}\sum_{k'=2}^{t}\left(\vw_{k-1}^{\star}-\vw_{k}^{\star}\right)^{\top}\mathbb{E}\left[\mathbf{S}_{t:k}^{\top}\mathbf{S}_{t:k'}\right]\left(\vw_{k'-1}^{\star}-\vw_{k'}^{\star}\right)
\\
&\qquad+2\left(\vw_{t}^{\star}-\vw_{i}^{\star}\right)^{\top}\sum_{k=2}^{t}\mathbb{E}\left[\mathbf{S}_{t:k}\right]\left(\vw_{k-1}^{\star}-\vw_{k}^{\star}\right)+\left\Vert \vw_{t}^{\star}-\vw_{i}^{\star}\right\Vert ^{2}
\\
&=\sum_{k=2}^{t}\sum_{k'=2}^{t}\left(\vw_{k-1}^{\star}-\vw_{k}^{\star}\right)^{\top}\mathbb{E}\left[\mathbf{S}_{t:k\vee k'}^{\top}\mathbf{S}_{t:k\vee k'}{\mathbf{S}}_{\left(\left(k\vee k'\right)-1\right):k\wedge k'}\right]\left(\vw_{k'-1}^{\star}-\vw_{k'}^{\star}\right)
\\
&\qquad+2\left(\vw_{t}^{\star}-\vw_{i}^{\star}\right)^{\top}\sum_{k=2}^{t}\mathbb{E}\left[\mathbf{S}_{t:k}\right]\left(\vw_{k-1}^{\star}-\vw_{k}^{\star}\right)+\left\Vert \vw_{t}^{\star}-\vw_{i}^{\star}\right\Vert ^{2}
\\\explain{2}
&=\,\sum_{k=2}^{t}\sum_{k'=2}^{t}\left(\vw_{k-1}^{\star}-\vw_{k}^{\star}\right)^{\top}\mathbb{E}\left[\mP^{2}\right]^{t-\left(k\vee k'\right)+1}\mathbb{E}\left[\mP\right]^{|k-k'|}\left(\vw_{k'-1}^{\star}-\vw_{k'}^{\star}\right)
\end{align*}
\begin{align*}
&  \qquad+2\left(\vw_{t}^{\star}-\vw_{i}^{\star}\right)^{\top}\sum_{k=2}^{t}\mathbb{E}\left[\mP\right]^{t-k+1}\left(\vw_{k-1}^{\star}-\vw_{k}^{\star}\right)+\left\Vert \vw_{t}^{\star}-\vw_{i}^{\star}\right\Vert ^{2},
\end{align*}

where [2]\footnote{In general, for $k\neq k'$ one obtains a two-case identity:
if $k>k'$ then $\mathbb{E}[\mathbf S_{t:k}^\top \mathbf S_{t:k'}]
=\mathbb{E}[\mP^2]^{\,t-k+1}\,\mathbb{E}[\mP]^{\,k-k'}$, while if $k<k'$ then
$\mathbb{E}[\mathbf S_{t:k}^\top \mathbf S_{t:k'}]
=\mathbb{E}[\mP]^{\,k'-k}\,\mathbb{E}[\mP^2]^{\,t-k'+1}$, where the factor
$\mathbb{E}[\mP]^{|k-k'|}$ appears on the side of the longer product.
In our setting $\mathbb{E}[\mP]=b\mathbf I$ and $\mathbb{E}[\mP^2]=a\mathbf I$
(hence they commute), so both cases collapse to
$\mathbb{E}[\mP^2]^{\,t-(k\vee k')+1}\mathbb{E}[\mP]^{\,|k-k'|}$.} is explained above.

To finalize the result, we sum all the terms we obtained.
First, we define $$a=\left\Vert \mathbb{E}\left[\mP^{2}\right]\right\Vert_2 ,~b=\left\Vert \mathbb{E}\left[\mP\right]\right\Vert_2,~c=\frac{d}{(\lambda d)^{2}}\left\Vert \mathbb{E}\left[\mP\X^{\top}\X\mP\right]\right\Vert_2. $$
Then, we plug all the terms into \cref{eq:difference_3_terms}, and obtain,
\begin{align}
\begin{split}
&\mathbb{E}\left[\left\Vert \vw_{t}-\vw_{i}^{\star}\right\Vert ^{2}\right]=a^{t}\left\Vert \vw_{0}-\vw_{1}^{\star}\right\Vert ^{2}+\labelnoise c\frac{1-a^{t}}{1-a}\\&\quad+2\sum_{k=2}^{t}a^{t-k+1}b^{k-1}\left(\vw_{k-1}^{\star}-\vw_{k}^{\star}\right)^{\top}\left(\vw_{0}-\vw_{1}^{\star}\right)+2b^{t}\left(\vw_{t}^{\star}-\vw_{i}^{\star}\right)^{\top}\left(\vw_{0}-\vw_{1}^{\star}\right)\\&\quad+\sum_{k=2}^{t}\sum_{k'=2}^{t}a^{t-\max\left(k,k'\right)+1}b^{\left|k-k'\right|}\left(\vw_{k-1}^{\star}-\vw_{k}^{\star}\right)^{\top}\left(\vw_{k'-1}^{\star}-\vw_{k'}^{\star}\right)
\\&\quad+2\sum_{k=2}^{t}b^{t-k+1}\left(\vw_{t}^{\star}-\vw_{i}^{\star}\right)^{\top}\left(\vw_{k-1}^{\star}-\vw_{k}^{\star}\right)+\left\Vert \vw_{t}^{\star}-\vw_{i}^{\star}\right\Vert ^{2}.
\label{eq:mainresult1}
\end{split}
\end{align}
Finally, the theorem follows by substituting the above expression into
$$\mathbb{E}\left[G_T\right]=\frac{1}{T}\sum_{i=1}^{T}\mathbb{E}\left[\left\Vert\mathbf w_T-\mathbf w_i^\star\right\Vert^2\right]\,.$$

\end{proof}

\newpage

\subsection{Proof of~\cref{example:no_reg}}
\label{pr:sp}
\begin{recall}[\cref{example:no_reg}]
Indeed, in the high-dimensional regime, \cref{eq:main_result} generalizes Theorem~4.1 in \citet{lin2023theory}.
Specifically, after adjusting notation, their result becomes,
$$
\mathbb{E}[G_T]
    =
    \tfrac{\left(1-\ratio\right)^{T}}{T}
    \sum_{i=1}^{T}\left\Vert \vw_{i}^{\star}\right\Vert ^{2}
    +
    \tfrac{1}{T}\sum_{i=1}^{T}\left(1-\ratio\right)^{T-i}\ratio
    \sum_{k=1}^{T}\left\Vert \vw_{k}^{\star}-\vw_{i}^{\star}\right\Vert ^{2}
    +
    \tfrac{d\labelnoise}{d-n-1}
    \left(1-\left(1-\ratio\right)^{T}\right)
    ,
$$
aligning with \cref{eq:main_result} in the high-dimensional, unregularized case ($d,n \to \infty$, $\lambda\to 0$, $\featnoise=1$).
\end{recall} 

\medskip
\begin{proof}
Let $\X\in \mathbb R^{n\times d}$. From~\citet[][Theorem 4.3]{barata2012moore}, we know that\linebreak$B(\lambda)
\triangleq
(\mathbf X^\top \mathbf X+\lambda \mathbf I)^{-1}\mathbf X^\top
\to \X^{+}$ as $\lambda \to 0$, where $\X^{+}$ is the Moore–Penrose pseudoinverse.
\begin{align*}
&\lim_{\lambda\to0^+}a=\lim_{\lambda\to0^+}\frac{1}{2}\left(1-\ratio+\frac{\frac{\lambda}{\featnoise}\left(1+\ratio\right)+\left(1-\ratio\right)^{2}}{\sqrt{\left(\frac{\lambda}{\featnoise}\right)^{2}+2\frac{\lambda}{\featnoise}\left(1+\ratio\right)+\left(1-\ratio\right)^{2}}}\right)=1-\ratio
\\&
\lim_{\lambda\to0^+}b=\lim_{\lambda\to0^+}\frac{1}{2}\left(1-\ratio-\frac{\lambda}{\featnoise}+\sqrt{\left(\frac{\lambda}{\featnoise}\right)^{2}+2\frac{\lambda}{\featnoise}\left(1+\ratio\right)+\left(1-\ratio\right)^{2}}\right)=1-\ratio
\end{align*}
Substituting the values to our result in~\cref{eq:mainresult1}, with $\vw_0=0$,
\begin{align*}
&\mathbb{E}\left[\|\vw_{t}-\vw_{i}^{\star}\|^{2}\right]=\underbrace{\labelnoise\frac{1}{2\featnoise}\left(\frac{\featnoise\left(1+\ratio\right)+\lambda}{\sqrt{\lambda^{2}+2\lambda\featnoise\left(1+\ratio\right)+\featnoise^{2}\left(1-\ratio\right)^{2}}}-1\right)\frac{1-\left(1-\ratio\right)^{t}}{1-\left(1-\ratio\right)}}_{\text{term 1}}
\\
&\quad+\underbrace{\left(1-\ratio\right)^{t}\left\Vert \vw_{1}^{\star}\right\Vert ^{2}+2\left(1-\ratio\right)^{t}\sum_{k=2}^{t}\left(\vw_{k}^{\star}-\vw_{k-1}^{\star}\right)^{\top}\vw_{1}^{\star}+2\left(1-\ratio\right)^{t}\left(\vw_{i}^{\star}-\vw_{t}^{\star}\right)^{\top}\vw_{1}^{\star}}_{\text{term 2}}
\\
&\quad+\underbrace{\sum_{k=2}^{t}\sum_{k'=2}^{t}\left(1-\ratio\right)^{t-\min(k,k')+1}\left(\vw_{k-1}^{\star}-\vw_{k}^{\star}\right)^{\top}\left(\vw_{k'-1}^{\star}-\vw_{k'}^{\star}\right)}_{\text{term 3}}\\&\quad\underbrace{+2\sum_{k=2}^{t}\left(1-\ratio\right)^{t-k+1}\left(\vw_{t}^{\star}-\vw_{i}^{\star}\right)^{\top}\left(\vw_{k-1}^{\star}-\vw_{k}^{\star}\right)+\left\Vert \vw_{t}^{\star}-\vw_{i}^{\star}\right\Vert ^{2}}_{\text{term 3}}.
\end{align*}

First term is:
\begin{align*}
&\lim_{\lambda \to 0^+}\labelnoise\frac{1}{2\featnoise}\left(\frac{\featnoise\left(1+\ratio\right)+\lambda}{\sqrt{\lambda^{2}+2\lambda\featnoise\left(1+\ratio\right)+\featnoise^{2}\left(1-\ratio\right)^{2}}}-1\right)\frac{1-\left(1-\ratio\right)^{t}}{1-\left(1-\ratio\right)}
\\
&\stackrel{\explain{1}}{=}\labelnoise\frac{1}{2}\left(\frac{1+\ratio}{1-\ratio}-1\right)\frac{1-\left(1-\ratio\right)^{t}}{1-\left(1-\ratio\right)}
=\labelnoise\frac{\ratio}{1-\ratio}\frac{1-\left(1-\ratio\right)^{t}}{\ratio}
=\labelnoise\frac{1-\left(1-\ratio\right)^{t}}{1-\ratio}
\,,
\end{align*}
where step [1] follows from setting $\featnoise=1, \lambda\to 0$ according to the notation in \cite{lin2023theory}.
\pagebreak

Second term is:
\begin{align*}
&\left(1-\ratio\right)^{t}\left\Vert \vw_{1}^{\star}\right\Vert ^{2}+2\left(1-\ratio\right)^{t}\sum_{k=2}^{t}\left(\vw_{k}^{\star}-\vw_{k-1}^{\star}\right)^{\top}\vw_{1}^{\star}+2\left(1-\ratio\right)^{t}\left(\vw_{i}^{\star}-\vw_{t}^{\star}\right)^{\top}\vw_{1}^{\star}&
\\
&=\left(1-\ratio\right)^{t}\left\Vert \vw_{1}^{\star}\right\Vert ^{2}+2\left(1-\ratio\right)^{t}\Biggl[\left(\cancel{\vw_{t}^{\star}}-\vw_{1}^{\star}\right)^{\top}\vw_{1}^{\star}+\left(\left(\vw_{i}^{\star}\right)^{\top}\vw_{1}^{\star}-\cancel{\left(\vw_{t}^{\star}\right)^{\top}\vw_{1}^{\star}}\right)\Biggr]\\&=\left(1-\ratio\right)^{t}\left\Vert \vw_{1}^{\star}\right\Vert^{2}+2\left(1-\ratio\right)^{t}\left(-\left\Vert \vw_{1}^{\star}\right\Vert^{2}+\left(\vw_{i}^{\star}\right)^{\top}\vw_{1}^{\star}\right)
\\
&=-\left(1-\ratio\right)^{t}\left\Vert \vw_{1}^{\star}\right\Vert ^{2}+2\left(1-\ratio\right)^{t}\left(\vw_{i}^{\star}\right)^{\top}\vw_{1}^{\star}-\left(1-\ratio\right)^{t}\left\Vert \vw_{i}^{\star}\right\Vert ^{2}+\left(1-\ratio\right)^{t}\left\Vert \vw_{i}^{\star}\right\Vert ^{2}
\\
&=-\left(1-\ratio\right)^{t}\left\Vert \vw_{i}^{\star}-\vw_{1}^{\star}\right\Vert ^{2}+\left(1-\ratio\right)^{t}\left\Vert \vw_{i}^{\star}\right\Vert ^{2}
\,.
\end{align*}

Finally, the third term is:
\begin{align*}
&\sum_{k=2}^{t}\sum_{k'=2}^{t}\left(1-\ratio\right)^{t-\min\left(k,k'\right)+1}\left(\vw_{k-1}^{\star}-\vw_{k}^{\star}\right)^{\top}\left(\vw_{k'-1}^{\star}-\vw_{k'}^{\star}\right)&
\\
&\quad+2\sum_{k=2}^{t}\left(1-\ratio\right)^{t-k+1}\left(\vw_{t}^{\star}-\vw_{i}^{\star}\right)^{\top}\left(\vw_{k-1}^{\star}-\vw_{k}^{\star}\right)+\left\Vert \vw_{t}^{\star}-\vw_{i}^{\star}\right\Vert ^{2}
\\&=\sum_{s=2}^{t}\left(1-\ratio\right)^{t-s+1}\left[2\left(\vw_{s-1}^{\star}-\vw_{s}^{\star}\right)^{\top}\sum_{r=s}^{t}\left(\vw_{r-1}^{\star}-\vw_{r}^{\star}\right)-\left\Vert \vw_{s-1}^{\star}-\vw_{s}^{\star}\right\Vert ^{2}\right]
\\
&\quad+2\sum_{k=2}^{t}\left(1-\ratio\right)^{t-k+1}\left(\vw_{t}^{\star}-\vw_{i}^{\star}\right)^{\top}\left(\left(\vw_{k-1}^{\star}-\vw_{t}^{\star}\right)-\left(\vw_{k}^{\star}-\vw_{t}^{\star}\right)\right)+\left\Vert \vw_{t}^{\star}-\vw_{i}^{\star}\right\Vert ^{2}
\\
&=2\sum_{s=2}^{t}\left(1-\ratio\right)^{t-s+1}\Biggl[\left\Vert \vw_{s-1}^{\star}-\vw_{t}^{\star}\right\Vert ^{2}-\left\Vert \vw_{s-1}^{\star}-\vw_{t}^{\star}\right\Vert ^{2}+2\left(\vw_{s-1}^{\star}-\vw_{s}^{\star}\right)^{\top}\left(\vw_{s-1}^{\star}-\vw_{t}^{\star}\right)\\&\hspace{14em}-\left\Vert \vw_{s-1}^{\star}-\vw_{s}^{\star}\right\Vert ^{2}\Biggr]+\left\Vert \vw_{t}^{\star}-\vw_{i}^{\star}\right\Vert ^{2}
\\
&\quad+2\left(1-\ratio\right)^{t-1}\left(\vw_{t}^{\star}-\vw_{i}^{\star}\right)^{\top}\left(\vw_{1}^{\star}-\vw_{t}^{\star}\right)+
2\ratio\sum_{k=3}^{t}\left(1-\ratio\right)^{t-k+1}\left(\vw_{t}^{\star}-\vw_{i}^{\star}\right)^{\top}\left(\vw_{k-1}^{\star}-\vw_{t}^{\star}\right)
\\
&=\left(\sum_{s=2}^{t}\left(1-\ratio\right)^{t-s+1}\left[\left\Vert \vw_{s-1}^{\star}-\vw_{t}^{\star}\right\Vert ^{2}-\left\Vert \vw_{t}^{\star}-\vw_{s}^{\star}\right\Vert ^{2}\right]\right)+\left\Vert \vw_{t}^{\star}-\vw_{i}^{\star}\right\Vert ^{2}
\\
&\quad+2\left(1-\ratio\right)^{t-1}\left(\vw_{t}^{\star}-\vw_{i}^{\star}\right)^{\top}\left(\vw_{1}^{\star}-\vw_{t}^{\star}\right)+
2\ratio\sum_{k=3}^{t}\left(1-\ratio\right)^{t-k+1}\left(\vw_{t}^{\star}-\vw_{i}^{\star}\right)^{\top}\left(\vw_{k-1}^{\star}-\vw_{t}^{\star}\right)
\\
&=\left(1-\ratio\right)^{t-1}\left\Vert \vw_{1}^{\star}-\vw_{t}^{\star}\right\Vert ^{2}+\ratio\sum_{k=3}^{t}\left(1-\ratio\right)^{t-k+1}\left\Vert \vw_{k-1}^{\star}-\vw_{t}^{\star}\right\Vert ^{2}+\left\Vert \vw_{t}^{\star}-\vw_{i}^{\star}\right\Vert ^{2}
\\
&\quad+2\left(1-\ratio\right)^{t-1}\left(\vw_{t}^{\star}-\vw_{i}^{\star}\right)^{\top}\left(\vw_{1}^{\star}-\vw_{t}^{\star}\right)+
2\ratio\sum_{k=3}^{t}\left(1-\ratio\right)^{t-k+1}\left(\vw_{t}^{\star}-\vw_{i}^{\star}\right)^{\top}\left(\vw_{k-1}^{\star}-\vw_{t}^{\star}\right)
\end{align*}

\begin{align*}
&=\left(1-\ratio\right)^{t-1}\left(\left\Vert \vw_{1}^{\star}-\vw_{t}^{\star}\right\Vert ^{2}+2\left(\vw_{t}^{\star}-\vw_{i}^{\star}\right)^{\top}\left(\vw_{1}^{\star}-\vw_{t}^{\star}\right)\right)+\left\Vert \vw_{t}^{\star}-\vw_{i}^{\star}\right\Vert ^{2}
\\
&\quad+\ratio\sum_{k=3}^{t}\left(1-\ratio\right)^{t-k+1}\left\Vert \vw_{k-1}^{\star}-\vw_{i}^{\star}\right\Vert ^{2}-\ratio\sum_{k=3}^{t}\left(1-\ratio\right)^{t-k+1}\left\Vert \vw_{t}^{\star}-\vw_{i}^{\star}\right\Vert^{2}
\\&=\left(1-\ratio\right)^{t-1}\left(\left\Vert \vw_{1}^{\star}-\vw_{t}^{\star}\right\Vert ^{2}+2\left(\vw_{t}^{\star}-\vw_{i}^{\star}\right)^{\top}\left(\vw_{1}^{\star}-\vw_{t}^{\star}\right)\right)+\left\Vert \vw_{t}^{\star}-\vw_{i}^{\star}\right\Vert ^{2}
\\
&\quad+\ratio\sum_{k=2}^{t-1}\left(1-\ratio\right)^{t-k}\left\Vert \vw_{k}^{\star}-\vw_{i}^{\star}\right\Vert ^{2}-\left(1-\ratio\right)\left(1-\left(1-\ratio\right)^{t-2}\right)\left\Vert \vw_{t}^{\star}-\vw_{i}^{\star}\right\Vert ^{2}
\\
&=\left(1-\ratio\right)^{t-1}\left(\left\Vert \vw_{1}^{\star}-\vw_{t}^{\star}\right\Vert ^{2}+2\left(\vw_{t}^{\star}-\vw_{i}^{\star}\right)^{\top}\left(\vw_{1}^{\star}-\vw_{t}^{\star}\right)+\left\Vert \vw_{t}^{\star}-\vw_{i}^{\star}\right\Vert ^{2}\right)
\\
&\quad+\ratio\sum_{k=2}^{t}\left(1-\ratio\right)^{t-k}\left\Vert \vw_{k}^{\star}-\vw_{i}^{\star}\right\Vert ^{2}\\&=\left(1-\ratio\right)^{t}\left\Vert \vw_{i}^{\star}-\vw_{1}^{\star}\right\Vert ^{2}+\ratio\sum_{k=1}^{t}\left(1-\ratio\right)^{t-k}\left\Vert \vw_{k}^{\star}-\vw_{i}^{\star}\right\Vert ^{2}.
\end{align*}

Summing all the terms:
\begin{align*}
&\mathbb{E}\left[\left\Vert \vw_{t}-\vw_{i}^{\star}\right\Vert ^{2}\right]=\sum_{k=1}^{t}\left(1-\ratio\right)^{t-k} \ratio\left\Vert \vw_{k}^{\star}-\vw_{i}^{\star}\right\Vert ^{2}+\cancel{\left(1-\ratio\right)^{t}\left\Vert \vw_{i}^{\star}-\vw_{1}^{\star}\right\Vert ^{2}}
\\
&\hspace{8em}\cancel{-\left(1-\ratio\right)^{t}\left\Vert \vw_{i}^{\star}-\vw_{1}^{\star}\right\Vert ^{2}}+\left(1-\ratio\right)^{t}\left\Vert \vw_{i}^{\star}\right\Vert +\frac{\labelnoise}{1-\ratio}\left(1-\left(1-\ratio\right)^{t}\right)
\\
&=\sum_{k=1}^{t}\left(1-\ratio\right)^{t-k} \ratio\left\Vert \vw_{k}^{\star}-\vw_{i}^{\star}\right\Vert ^{2}
+\left(1-\ratio\right)^{t}\left\Vert \vw_{i}^{\star}\right\Vert ^{2}+\frac{\labelnoise}{1-\ratio}\left(1-\left(1-\ratio\right)^{t}\right)\,.
\end{align*}
Overall,
\begin{align*}
&\mathbb{E}\left[G_{T}\right]=
\frac{1}{T}\sum_{i=1}^{T}\mathbb{E}\left[\left\Vert \vw_{t}-\vw_{i}^{\star}\right\Vert ^{2}\right]
\\
&=\frac{1}{T}\sum_{i=1}^{T}\left(1-\ratio\right)^{T-i}\ratio\sum_{k=1}^{T}\left\Vert \vw_{k}^{\star}-\vw_{i}^{\star}\right\Vert ^{2}+\frac{1}{T}\left(1-\ratio\right)^{T}\sum_{i=1}^{T}\left\Vert \vw_{i}^{\star}\right\Vert ^{2}+\frac{\labelnoise}{1-\ratio}\left(1-\left(1-\ratio\right)^{T}\right)
\,\\&\overset{\explain{1}}{=}
\tfrac{\left(1-\ratio\right)^{T}}{T}
\sum_{i=1}^{T}\left\Vert \vw_{i}^{\star}\right\Vert ^{2}
+
\tfrac{1}{T}\sum_{i=1}^{T}\left(1-\ratio\right)^{T-i}\ratio
\sum_{k=1}^{T}\left\Vert \vw_{k}^{\star}-\vw_{i}^{\star}\right\Vert ^{2}
+
\tfrac{d\labelnoise}{d-n-1}
\left(1-\left(1-\ratio\right)^{T}\right)
\end{align*}
Where [1] follows by our Setting~\ref{sec:statistical_setting} where $n,d\to \infty$.
\end{proof}

\newpage

\section{Auxiliary Lemmas}
We begin with a few auxiliary Lemmas that will be used in the proofs.

First, we introduce a few auxiliary variables.
\begin{align*}
\tilde{\lambda}&\triangleq\frac{\lambda}{\featnoise},
\quad 
N\triangleq\left(1+\ratio\right)\tilde{\lambda}+\left(1-\ratio\right)^{2},
\\
D&\triangleq\sqrt{\tilde{\lambda}^{2}+2\tilde{\lambda}\left(1+\ratio\right)+\left(1-\ratio\right)^{2}}=\sqrt{\left(1+\ratio+\tilde{\lambda}\right)^{2}-4\ratio}.
\end{align*}

\begin{lemma}. For all $\lambda > 0$ we have $b>a$.
\label{lem:b>a}
\end{lemma}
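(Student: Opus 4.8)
The plan is to pass to the scaled variables $\tilde{\lambda}=\lambda/\featnoise$, $N=(1+\ratio)\tilde{\lambda}+(1-\ratio)^{2}$, and $D=\sqrt{\tilde{\lambda}^{2}+2\tilde{\lambda}(1+\ratio)+(1-\ratio)^{2}}$ introduced just above, in which the closed forms collapse to $a=\tfrac12\big(1-\ratio+N/D\big)$ and $b=\tfrac12\big(1-\ratio-\tilde{\lambda}+D\big)$. Subtracting, the common $1-\ratio$ cancels and I obtain $b-a=\tfrac12\big(D-\tilde{\lambda}-N/D\big)=\tfrac{1}{2D}\big(D^{2}-N-\tilde{\lambda}D\big)$, so the whole question reduces to understanding $D^{2}-N$.

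First I would compute this difference directly: since $D^{2}=\tilde{\lambda}^{2}+2\tilde{\lambda}(1+\ratio)+(1-\ratio)^{2}$ and $N=\tilde{\lambda}(1+\ratio)+(1-\ratio)^{2}$, the $(1-\ratio)^{2}$ terms and one copy of $\tilde{\lambda}(1+\ratio)$ cancel, leaving $D^{2}-N=\tilde{\lambda}^{2}+\tilde{\lambda}(1+\ratio)=\tilde{\lambda}\big(\tilde{\lambda}+1+\ratio\big)$. Substituting back gives the clean factorization $b-a=\tfrac{\tilde{\lambda}}{2D}\big((\tilde{\lambda}+1+\ratio)-D\big)$. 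Because $\tilde{\lambda}>0$ and $D>0$ whenever $\lambda>0$, the sign of $b-a$ is exactly the sign of $(\tilde{\lambda}+1+\ratio)-D$.

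The remaining step is to verify $\tilde{\lambda}+1+\ratio>D$. Both sides are positive, so I would square and invoke the second form of the discriminant recorded above, $D^{2}=(1+\ratio+\tilde{\lambda})^{2}-4\ratio$. The comparison then becomes a one-line difference of squares, $(\tilde{\lambda}+1+\ratio)^{2}-D^{2}=4\ratio>0$, which yields $\tilde{\lambda}+1+\ratio>D$ and hence $b-a>0$, as claimed.

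There is no genuine obstacle here beyond careful bookkeeping: the only things to get right are the cancellation in $D^{2}-N$ and the recognition that the radicand of $D$ equals $(1+\ratio+\tilde{\lambda})^{2}-4\ratio$, which reduces the final inequality to the manifestly positive quantity $4\ratio$. The one hypothesis worth flagging is $\ratio>0$, which holds in the high-dimensional regime with $n\ge 1$; in the degenerate limit $\ratio\to0$ the gap $4\ratio$ vanishes and one recovers only $b\ge a$ rather than the strict inequality.
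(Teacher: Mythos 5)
Your proof is correct and is essentially the paper's own argument: the same cancellation $D^{2}-N=\tilde{\lambda}\bigl(\tilde{\lambda}+1+\ratio\bigr)$ and factorization $b-a=\tfrac{\tilde{\lambda}}{2D}\bigl((\tilde{\lambda}+1+\ratio)-D\bigr)$, with the final comparison resting on $(\tilde{\lambda}+1+\ratio)^{2}-D^{2}=4\ratio>0$; the paper merely rationalizes to $b-a=\tfrac{2\ratio\tilde{\lambda}}{D(\tilde{\lambda}+\ratio+1+D)}$ instead of squaring, which is the same computation. Your closing remark that strictness needs $\ratio>0$ matches the paper's standing assumption on the aspect ratio.
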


\begin{proof}
Recalling the notations from \cref{tab:notation}, we have that,
$$
\begin{aligned}
b-a&=\frac{1}{2}\left(1-\ratio-\tilde{\lambda}+D-\left(1-\ratio+\frac{\tilde{\lambda}\left(1+\ratio\right)+\left(1-\ratio\right)^{2}}{D}\right)\right)
\\
&=\frac{-\tilde{\lambda}D+D^{2}-\tilde{\lambda}\left(1+\ratio\right)-\left(1-\ratio\right)^{2}}{2D}=\frac{-\tilde{\lambda}D+\tilde{\lambda}^{2}+\tilde{\lambda}\left(1+\ratio\right)}{2D}
\\&=\frac{\tilde{\lambda}\left(\left(\tilde{\lambda}+\ratio+1\right)-D\right)}{2D}=\frac{2\ratio\tilde \lambda}{D\left(\tilde{\lambda}+\ratio+1+D\right)}>0.
\end{aligned}
$$
Note that $\tilde \lambda >0 \iff \lambda>0$, so $b>a$ for all $\lambda>0$.

\end{proof}

\begin{lemma}
\label{lem:0<ab<1}
For all $ \lambda > 0$ we have $0<a<1, 0< b<1$.
\end{lemma}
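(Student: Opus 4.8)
The plan is to treat all four inequalities as comparisons between the discriminant $D$ (or the ratio $N/D$) and an affine function of $\tilde\lambda$, and to resolve each such comparison by squaring, exploiting the clean identity $D^2=(1+\ratio+\tilde\lambda)^2-4\ratio$ recorded in the definition of $D$. Throughout I use the closed forms $b=\tfrac12\bigl(1-\ratio-\tilde\lambda+D\bigr)$ and $a=\tfrac12\bigl(1-\ratio+N/D\bigr)$ from \cref{tab:notation}, together with $N=(1+\ratio)\tilde\lambda+(1-\ratio)^2>0$ and $D>0$. As a sanity check guiding the signs: since $\mathbb E[\mathbf P]=b\I$ and $\mathbb E[\mathbf P^2]=a\I$, while $\mathbf P=\lambda d(\X^\top\X+\lambda d\I)^{-1}$ has all eigenvalues in $(0,1]$, the limiting mean of its eigenvalues ($b$) and of their squares ($a$) must lie in $(0,1)$; the algebra below makes this rigorous. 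I assume $\ratio\in(0,1]$ throughout, which is exactly what is needed for the strict upper bounds (for $\ratio=0$ the expressions degenerate to $a=b=1$).

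\textbf{Bounds on $b$.} The inequality $b<1$ is equivalent to $D<1+\ratio+\tilde\lambda$; since both sides are positive I may square, and $D^2-(1+\ratio+\tilde\lambda)^2=-4\ratio<0$ gives it immediately. The inequality $b>0$ is equivalent to $D>\tilde\lambda-(1-\ratio)$, whose right-hand side may carry either sign, so I instead bound $D$ against an absolute value: computing $D^2-\bigl(\tilde\lambda-(1-\ratio)\bigr)^2=4\tilde\lambda>0$ yields $D>\lvert\tilde\lambda-(1-\ratio)\rvert\ge\tilde\lambda-(1-\ratio)$.

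\textbf{Bounds on $a$.} Positivity is essentially free, since $N/D>0$ forces $a>\tfrac12(1-\ratio)\ge0$, and in the boundary case $\ratio=1$ one has $a=\tfrac12\,N/D>0$ directly. The bound $a<1$ is equivalent to $N<(1+\ratio)D$, i.e.\ (both sides positive) to $N^2<(1+\ratio)^2D^2$; expanding and cancelling the common $(1+\ratio)^2\tilde\lambda^2$ term yields $(1+\ratio)^2D^2-N^2=4\ratio\bigl(2(1+\ratio)\tilde\lambda+(1-\ratio)^2\bigr)>0$, which closes the argument.

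These are routine squaring manipulations, so the only genuine care required is bookkeeping around the squaring step: squaring preserves an inequality only between nonnegative quantities, which is precisely why the $b>0$ case must go through $\lvert\cdot\rvert$ rather than naive squaring, and why the strict upper bounds $a<1,\,b<1$ hinge on the factor $4\ratio>0$ and thus on the standing assumption $\ratio>0$. I would therefore state the $\ratio>0$ assumption explicitly and flag the degenerate $\ratio=0$ boundary, so that the lemma's strict inequalities are not silently over-claimed.
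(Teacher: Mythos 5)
Your proof is correct, and it takes a mildly different route from the paper's. The paper only establishes two of the four inequalities directly---$a>0$ from $1\ge\ratio$ plus positivity of $N/D$ (exactly your argument), and $b<1$ from $D=\sqrt{(1+\ratio+\tilde\lambda)^2-4\ratio}<1+\ratio+\tilde\lambda$ (the same comparison you resolve by squaring)---and then closes the remaining two by citing \cref{lem:b>a}: since $b>a$ for all $\lambda>0$, $a>0$ gives $b>0$ and $b<1$ gives $a<1$. You instead prove all four bounds independently, with two computations the paper never performs: $D^2-\bigl(\tilde\lambda-(1-\ratio)\bigr)^2=4\tilde\lambda>0$ for $b>0$ (your detour through $\lvert\cdot\rvert$ is the right care, since $\tilde\lambda-(1-\ratio)$ can be negative), and $(1+\ratio)^2D^2-N^2=4\ratio\bigl(2(1+\ratio)\tilde\lambda+(1-\ratio)^2\bigr)>0$ for $a<1$; I verified both identities, and they are correct. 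What each approach buys: the paper's version is shorter because \cref{lem:b>a} is proved immediately beforehand and gets reused anyway, while yours is self-contained (it would survive even if \cref{lem:b>a} were absent) and yields slightly sharper intermediate facts, e.g.\ $D>\lvert\tilde\lambda-(1-\ratio)\rvert$. Your flagging of the standing assumption $\ratio>0$ is also consistent with the paper: the high-dimensional setting requires $d/n\to 1/\ratio\in(0,\infty)$ for the Marchenko--Pastur equivalent, and the paper's own $b<1$ step is explicitly annotated with $\ratio>0$; your observation that $a=b=1$ when $\ratio=0$ is accurate and makes the necessity of that assumption explicit where the paper leaves it implicit.
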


\begin{proof}
Recalling the notations from \cref{tab:notation}, we have that,
\begin{align}
a &= \frac{1}{2}\left(1-\ratio+\frac{\tilde{\lambda}(1+\ratio)+(1-\ratio)^{2}}{D}\right)
   \overset{1\ge \ratio}{>} 0,
\end{align}

and
\begin{align*}
b=\frac{1}{2}\left(1-\ratio-\tilde{\lambda}+D\right)&
=\frac{1}{2}\left(1-\ratio-\tilde{\lambda}+\sqrt{\left(1+\ratio+\tilde{\lambda}\right)^{2}-4\ratio}\right)\\
\explain{\ratio>0}
&<\frac{1}{2}\left(1-\ratio-\tilde{\lambda}+\sqrt{\left(1+\ratio+\tilde{\lambda}\right)^{2}}\right)=1.
\end{align*}
Note that $\tilde \lambda >0 \iff \lambda>0$,
using \cref{lem:b>a} we have $0<a<1, 0<b<1$ for all $\lambda>0$.
\end{proof}

\pagebreak

\begin{lemma}
\label{lem:da:dl}
The derivative of $a$ with respect to $\lambda$ takes the compact form $\frac{\dd}{\dd\lambda}a=\frac{2\tilde{\lambda}\ratio}{D^{3}\featnoise}$ and is positive for all $\lambda>0$.
\end{lemma}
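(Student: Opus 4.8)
The plan is to reparametrize through the scaled strength $\tilde{\lambda} \triangleq \lambda/\featnoise$ and apply the chain rule, since $a$ depends on $\lambda$ only via $\tilde{\lambda}$ and $\frac{\dd\tilde{\lambda}}{\dd\lambda} = 1/\featnoise$. Writing $a = \tfrac{1}{2}(1-\ratio) + \tfrac{1}{2}\frac{N}{D}$ with $N = (1+\ratio)\tilde{\lambda} + (1-\ratio)^2$ and $D = \sqrt{\tilde{\lambda}^{2} + 2\tilde{\lambda}(1+\ratio) + (1-\ratio)^2}$, only the ratio $N/D$ carries $\tilde{\lambda}$-dependence, so the problem reduces to differentiating $N/D$.

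Next I would compute the two elementary derivatives $\frac{\dd N}{\dd\tilde{\lambda}} = 1+\ratio$ and, from $D^2 = \tilde{\lambda}^2 + 2\tilde{\lambda}(1+\ratio) + (1-\ratio)^2$, the derivative $\frac{\dd D}{\dd\tilde{\lambda}} = \frac{\tilde{\lambda} + (1+\ratio)}{D}$. Applying the quotient rule and clearing one factor of $D$ gives $\frac{\dd}{\dd\tilde{\lambda}}\frac{N}{D} = \frac{(1+\ratio)D^2 - N\,(\tilde{\lambda}+1+\ratio)}{D^3}$.

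The crux---and the only step requiring care---is simplifying this numerator. Expanding $(1+\ratio)D^2$ and $N(\tilde{\lambda}+1+\ratio)$, the $\tilde{\lambda}^2$ terms and the constant terms cancel exactly, leaving $\tilde{\lambda}\big[(1+\ratio)^2 - (1-\ratio)^2\big]$. Using the difference-of-squares identity $(1+\ratio)^2 - (1-\ratio)^2 = 4\ratio$, the numerator collapses to $4\ratio\tilde{\lambda}$, whence $\frac{\dd}{\dd\tilde{\lambda}}\frac{N}{D} = \frac{4\ratio\tilde{\lambda}}{D^3}$ and therefore $\frac{\dd a}{\dd\tilde{\lambda}} = \frac{2\ratio\tilde{\lambda}}{D^3}$. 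Multiplying by $\frac{\dd\tilde{\lambda}}{\dd\lambda} = 1/\featnoise$ yields the claimed compact form $\frac{\dd a}{\dd\lambda} = \frac{2\tilde{\lambda}\ratio}{D^3\featnoise}$. This bookkeeping cancellation is the main obstacle: it is routine but must be tracked term by term, and it is precisely where the specific structure of $N$ and $D$ is used.

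Finally, positivity is immediate once the closed form is in hand: for $\lambda>0$ we have $\tilde{\lambda} = \lambda/\featnoise > 0$ (as $\featnoise>0$), $\ratio = n/d > 0$, and $D>0$ since $D^2 = \tilde{\lambda}^2 + 2\tilde{\lambda}(1+\ratio) + (1-\ratio)^2 > 0$, so every factor in $\frac{2\tilde{\lambda}\ratio}{D^3\featnoise}$ is strictly positive and hence $\frac{\dd a}{\dd\lambda}>0$ for all $\lambda>0$.
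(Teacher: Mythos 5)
Your proof is correct and follows essentially the same route as the paper's: both reduce the computation to the quotient rule on $N/D$ (with the chain-rule factor $1/\featnoise$ from $\tilde{\lambda}=\lambda/\featnoise$), and both collapse the numerator $\left(1+\ratio\right)D^{2}-N\left(\tilde{\lambda}+1+\ratio\right)$ to $4\ratio\tilde{\lambda}$ via the identical term-by-term cancellation and the difference-of-squares identity. The positivity argument is likewise the same, so no gaps to report.
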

\begin{proof}
Recalling the notation from \cref{tab:notation}, we have by definition
\begin{align*}
&\frac{\dd}{\dd\lambda}a=
\frac{1}{2\featnoise}\left(\frac{1+\ratio}{D}-\frac{N}{D^{3}}\left(\tilde{\lambda}+\ratio+1\right)\right)=\frac{\left(1+\ratio\right)D^{2}-N\left(\tilde{\lambda}+\ratio+1\right)}{2\featnoise D^{3}},
\end{align*}
we focus on the numerator:
\begin{equation}  
\begin{aligned}\label{eq:23}
    &\left(1+\ratio\right)D^{2}-N\left(\tilde{\lambda}+\ratio+1\right)
    \\
    &=\left(1+\ratio\right)\left(\tilde{\lambda}^{2}+2\tilde{\lambda}\left(1+\ratio\right)+\left(1-\ratio\right)^{2}\right)-\left(\left(1+\ratio\right)\tilde{\lambda}+\left(1-\ratio\right)^{2}\right)\left(\tilde{\lambda}+\left(1+\ratio\right)\right)
    \\
    &=\cancel{\left(1+\ratio\right)\tilde{\lambda}^{2}}+2\tilde{\lambda}\left(1+\ratio\right)^{2}+\cancel{\left(1+\ratio\right)\left(1-\ratio\right)^{2}}
    \\
    &\quad-\left[\cancel{\left(1+\ratio\right)\tilde{\lambda}^{2}}+\left(1+\ratio\right)^{2}\tilde{\lambda}+\left(1-\ratio\right)^{2}\tilde{\lambda}+\cancel{\left(1+\ratio\right)\left(1-\ratio\right)^{2}}\right]
    \\
    &=\left[\left(1+\ratio\right)^{2}-\left(1-\ratio\right)^{2}\right]\tilde{\lambda}=4\tilde{\lambda}\ratio.
\end{aligned}
\end{equation}

Substituting back, we obtain $\frac{\dd}{\dd\lambda}a=\frac{2\tilde{\lambda}\ratio}{D^{3}\featnoise}$, which is always positive as each factor in the product in positive.
\end{proof}

\begin{lemma}\label{lem:db:dl}
The derivative of $b$ with respect to $\lambda$ takes the compact form $\frac{\dd}{\dd\lambda}b=\frac{1}{2\featnoise}\Bigl(\frac{-D+\tilde{\lambda}+\ratio+1}{D}\Bigr)$ and is positive for all $\lambda>0$.
\end{lemma}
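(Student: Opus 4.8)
The plan is to compute $\frac{\dd b}{\dd\lambda}$ directly by the chain rule, exactly mirroring the proof of \cref{lem:da:dl}, and then to establish positivity by a short sign comparison. Recall from \cref{tab:notation} that $b = \tfrac{1}{2}\bigl(1-\ratio-\tilde{\lambda}+D\bigr)$, where $\tilde{\lambda} = \lambda/\featnoise$ and $D = \sqrt{\tilde{\lambda}^2 + 2\tilde{\lambda}(1+\ratio) + (1-\ratio)^2}$. Since $1$ and $\ratio$ are constant in $\lambda$ and $\tfrac{\dd\tilde{\lambda}}{\dd\lambda} = \tfrac{1}{\featnoise}$, the only two moving pieces are the $-\tilde{\lambda}$ term and $D$.

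For the first piece, $\tfrac{\dd}{\dd\lambda}(-\tilde{\lambda}) = -\tfrac{1}{\featnoise}$. For $D$, I would differentiate through the radicand: writing $D^2 = \tilde{\lambda}^2 + 2\tilde{\lambda}(1+\ratio)+(1-\ratio)^2$, the chain rule gives $\tfrac{\dd D}{\dd\lambda} = \tfrac{1}{2D}\bigl(2\tilde{\lambda}+2(1+\ratio)\bigr)\tfrac{1}{\featnoise} = \tfrac{\tilde{\lambda}+1+\ratio}{D\featnoise}$. Combining the two contributions yields $\tfrac{\dd b}{\dd\lambda} = \tfrac{1}{2}\bigl(-\tfrac{1}{\featnoise} + \tfrac{\tilde{\lambda}+1+\ratio}{D\featnoise}\bigr) = \tfrac{1}{2\featnoise}\bigl(\tfrac{-D+\tilde{\lambda}+\ratio+1}{D}\bigr)$, which is precisely the claimed compact form.

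For positivity, since $\featnoise > 0$ and $D > 0$, it suffices to show that the numerator $\tilde{\lambda}+\ratio+1 - D$ is strictly positive, i.e.\ that $\tilde{\lambda}+\ratio+1 > D$. Both sides are strictly positive for $\lambda > 0$, so this inequality is equivalent to its squared version. Here I would invoke the alternative form $D^2 = (1+\ratio+\tilde{\lambda})^2 - 4\ratio$ already recorded in the definition of $D$: the difference of squares is then $(\tilde{\lambda}+\ratio+1)^2 - D^2 = 4\ratio > 0$, and the conclusion follows immediately since $\ratio > 0$.

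The computation is entirely routine, so there is no genuine obstacle. The only points requiring a moment of care are tracking the factor $1/\featnoise$ consistently when differentiating through $\tilde{\lambda} = \lambda/\featnoise$, and exploiting the clean factorization $D^2 = (1+\ratio+\tilde{\lambda})^2 - 4\ratio$, which makes the final squaring argument a one-line difference of squares rather than a term-by-term expansion.
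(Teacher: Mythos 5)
Your proposal is correct and follows essentially the same route as the paper: the same chain-rule computation through $\tilde{\lambda}=\lambda/\featnoise$ and the radicand of $D$, followed by positivity via comparing $(\tilde{\lambda}+\ratio+1)^2$ with $D^2$. The only cosmetic difference is that you invoke the recorded identity $D^2=(1+\ratio+\tilde{\lambda})^2-4\ratio$ to get the gap $4\ratio>0$ in one step, whereas the paper expands both squares and uses $(1+\ratio)^2>(1-\ratio)^2$ — the same $4\ratio$ in disguise.
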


\begin{proof}
Recalling the notation from \cref{tab:notation}, we have by definition
$$
\frac{\dd}{\dd\lambda}b=\frac{1}{2\featnoise}\Bigl(-1+\frac{\tilde{\lambda}+\ratio+1}{D}\Bigr),$$
since   
\begin{equation}\label{eq:24}
\left(\tilde{\lambda}+\ratio+1\right)^{2}=\tilde{\lambda}^{2}+2\tilde{\lambda}\left(1+\ratio\right)+\left(1+\ratio\right)^{2}>\tilde{\lambda}^{2}+2\tilde{\lambda}\left(1+\ratio\right)+\left(1-\ratio\right)^{2}=D^{2},
\end{equation}
then,
$$
\frac{\dd}{\dd\lambda}b=\frac{1}{2\featnoise}\underbrace{\Bigl(-1+\frac{\tilde{\lambda}+\ratio+1}{D}\Bigr)}_{>0}>0.$$

\end{proof}

\begin{lemma}
\label{lem:5}
$\tilde{\lambda}+\ratio+1+D>2$ for all $\tilde \lambda >0$.
\end{lemma}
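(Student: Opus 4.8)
The plan is to rewrite the target inequality $\tilde{\lambda}+\ratio+1+D>2$ in the equivalent form $D>1-\ratio-\tilde{\lambda}$, and then to exploit the second closed form for $D$ recorded in its definition, namely $D=\sqrt{(1+\ratio+\tilde{\lambda})^{2}-4\ratio}$. Since $D$ is defined as a nonnegative square root, the only quantity whose sign I need to track is the right-hand side $1-\ratio-\tilde{\lambda}$, which may be positive or negative depending on whether $\tilde{\lambda}+\ratio$ exceeds $1$.

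The key step I would carry out is the algebraic identity
\[
D^{2}-\left(1-\ratio-\tilde{\lambda}\right)^{2}=4\tilde{\lambda}.
\]
To obtain it, write $u=1$ and $v=\ratio+\tilde{\lambda}$, so that $D^{2}=(u+v)^{2}-4\ratio$ while $(1-\ratio-\tilde{\lambda})^{2}=(u-v)^{2}$; then $(u+v)^{2}-(u-v)^{2}=4uv=4(\ratio+\tilde{\lambda})$, and subtracting the extra $4\ratio$ leaves exactly $4\tilde{\lambda}$. Because $\tilde{\lambda}>0$, this immediately gives $D^{2}>(1-\ratio-\tilde{\lambda})^{2}$, hence $D>\lvert 1-\ratio-\tilde{\lambda}\rvert\ge 1-\ratio-\tilde{\lambda}$. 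Adding $\tilde{\lambda}+\ratio+1$ to both sides of $D>1-\ratio-\tilde{\lambda}$ then yields $\tilde{\lambda}+\ratio+1+D>2$, as required.

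There is essentially no hard part here: the argument reduces to a one-line comparison of squares. The only point requiring minor care is that one cannot square the inequality $D>1-\ratio-\tilde{\lambda}$ directly when its right-hand side is negative, so I route the comparison through the absolute value $\lvert 1-\ratio-\tilde{\lambda}\rvert$, which handles the regimes $\tilde{\lambda}+\ratio\ge 1$ and $\tilde{\lambda}+\ratio<1$ uniformly. I would also remark in passing that $D$ is genuinely real and strictly positive in this range, since the identity rewrites $D^{2}=(1-\ratio-\tilde{\lambda})^{2}+4\tilde{\lambda}>0$, so every manipulation above is legitimate.
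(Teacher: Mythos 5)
Your proof is correct and takes essentially the same approach as the paper: both complete a square under the radical and discard a positive remainder, the paper writing $D=\sqrt{(1-\ratio+\tilde{\lambda})^{2}+4\tilde{\lambda}\ratio}\ge 1-\ratio+\tilde{\lambda}$ so that $\tilde{\lambda}+\ratio+1+D\ge 2+2\tilde{\lambda}>2$, while you use the equivalent identity $D^{2}-(1-\ratio-\tilde{\lambda})^{2}=4\tilde{\lambda}$ and pass through the absolute value. The only difference is which perfect square is split off (remainder $4\tilde{\lambda}\ratio$ versus $4\tilde{\lambda}$), which is immaterial to the argument.
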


\begin{proof}
Recalling the notations from \cref{tab:notation}, we have that,
\begin{align*}
\tilde{\lambda}+\ratio+1+D
=
\tilde{\lambda}+\ratio+1+
\sqrt{
\bigprn{1-\ratio+\tilde{\lambda}}^{2}
\!+4\tilde{\lambda}\ratio}
\geq
\tilde{\lambda}+\ratio+1+1-\ratio+\tilde{\lambda}
=2\tilde{\lambda} + 2
>2.
\end{align*}
\end{proof} 

\begin{lemma}
\label{lem:1}
$1-b\geq\frac{\ratio}{1+\ratio+\tilde{\lambda}}$ for all $\tilde \lambda >0$.
\end{lemma}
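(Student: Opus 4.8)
The plan is to reduce the claimed bound to an elementary inequality that collapses after a single squaring. First I would rewrite the left-hand side in closed form using the definition of $b$ from \cref{tab:notation}, namely
\[
1 - b = 1 - \tfrac{1}{2}\bigprn{1 - \ratio - \tilde{\lambda} + D} = \tfrac{1}{2}\bigprn{1 + \ratio + \tilde{\lambda} - D}\,,
\]
where $D = \sqrt{(1+\ratio+\tilde{\lambda})^{2} - 4\ratio}$ as in the auxiliary definitions. Abbreviating $S \triangleq 1 + \ratio + \tilde{\lambda}$ (so that $D = \sqrt{S^{2} - 4\ratio}$ and $S > 1$ for all $\tilde{\lambda} > 0$), the target $1 - b \geq \ratio/S$ becomes $\tfrac{1}{2}(S - D) \geq \ratio/S$. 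Since $S > 0$, cross-multiplying turns this into the equivalent statement $S^{2} - S D \geq 2\ratio$, i.e.\ $S^{2} - 2\ratio \geq S D$.

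Next I would verify that both sides of $S^{2} - 2\ratio \geq S D$ are nonnegative, which licenses squaring. The right-hand side $S D$ is a product of nonnegative quantities. For the left-hand side, note $S^{2} \geq (1+\ratio)^{2} = 1 + 2\ratio + \ratio^{2} > 2\ratio$, so $S^{2} - 2\ratio > 0$. With both sides nonnegative, the inequality is equivalent to its square, $(S^{2} - 2\ratio)^{2} \geq S^{2} D^{2} = S^{2}(S^{2} - 4\ratio)$, using the discriminant identity $D^{2} = S^{2} - 4\ratio$.

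Finally I would expand both sides: the left equals $S^{4} - 4\ratio S^{2} + 4\ratio^{2}$ and the right equals $S^{4} - 4\ratio S^{2}$, so the inequality reduces to $4\ratio^{2} \geq 0$, which holds trivially (with equality iff $\ratio = 0$). There is essentially no serious obstacle here; the only point requiring genuine care is confirming the positivity of $S^{2} - 2\ratio$ before squaring, without which that step would not be reversible. The main conceptual move is recognizing the substitution $S = 1 + \ratio + \tilde{\lambda}$, after which the structure $D^{2} = S^{2} - 4\ratio$ makes the algebra collapse to the perfect-square remainder $4\ratio^{2}$.
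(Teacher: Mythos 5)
Your proof is correct, but it takes a slightly different route than the paper. The paper rationalizes by the conjugate: writing $S \triangleq 1+\ratio+\tilde{\lambda}$ and $D=\sqrt{S^{2}-4\ratio}$, it multiplies numerator and denominator of $1-b=\tfrac{1}{2}(S-D)$ by $S+D$ to obtain the exact identity $1-b=\frac{2\ratio}{S+D}$, and then simply bounds $D\leq S$ in the denominator to conclude $1-b\geq\frac{\ratio}{S}$. You instead cross-multiply the target inequality to $S^{2}-2\ratio\geq SD$ and square, reducing everything to the perfect-square remainder $4\ratio^{2}\geq0$. Both arguments pivot on the same discriminant identity $D^{2}=S^{2}-4\ratio$, and your sign check ($S^{2}\geq(1+\ratio)^{2}>2\ratio$) correctly licenses the squaring step, which is the one place your route could have gone wrong irreversibly. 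The paper's conjugate trick buys a little more: it yields the closed form $1-b=\frac{2\ratio}{S+D}$ rather than just the bound, and the same rationalization pattern is reused elsewhere (e.g., in the proof of \cref{lem:b>a}, where $b-a=\frac{2\ratio\tilde{\lambda}}{D\bigprn{S+D}}$), whereas your squaring argument, while mechanical and self-contained, produces only the inequality itself and needs the extra nonnegativity verification that the rationalization avoids entirely.
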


\begin{proof}
Recalling the notations from \cref{tab:notation}, we have that,
\begin{align*}
1-b&=\frac{1+\ratio+\tilde{\lambda}-\sqrt{\left(1+\ratio+\tilde{\lambda}\right)^{2}-4\ratio}}{2}=\frac{\cancel{\left(1+\ratio+\tilde{\lambda}\right)^{2}}-\cancel{\left(1+\ratio+\tilde{\lambda}\right)^{2}}+4\ratio}{2\left(1+\ratio+\tilde{\lambda}+\sqrt{\left(1+\ratio+\tilde{\lambda}\right)^{2}-4\ratio}\right)}
\\
&=\frac{2\ratio}{\left(1+\ratio+\tilde{\lambda}+\sqrt{\left(1+\ratio+\tilde{\lambda}\right)^{2}-4\ratio}\right)}\geq\frac{\ratio}{1+\ratio+\tilde{\lambda}}
\,.
\end{align*}
\end{proof}

\begin{lemma}
\label{lem:3}
$a,b\leq1-\frac{\ratio}{1+\ratio+\tilde{\lambda}}$ for all $\tilde \lambda >0$. (a sharper bound on $a, b$)
\end{lemma}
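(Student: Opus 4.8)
The plan is to obtain both inequalities essentially for free by combining the two auxiliary lemmas that immediately precede this statement, rather than attacking $a$ and $b$ separately through their closed forms. For the bound on $b$, I would simply invoke \cref{lem:1}, which already establishes $1-b \geq \frac{\ratio}{1+\ratio+\tilde{\lambda}}$; rearranging this inequality gives $b \leq 1 - \frac{\ratio}{1+\ratio+\tilde{\lambda}}$ directly, with no further work.

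For the bound on $a$, the key observation is that I need not re-derive anything from the formula $a = \tfrac{1}{2}\bigl(1-\ratio + N/D\bigr)$. Instead I would chain the bound on $b$ with \cref{lem:b>a}, which guarantees $a < b$ for every $\lambda > 0$. Concatenating the two facts yields
\[
a < b \leq 1 - \frac{\ratio}{1+\ratio+\tilde{\lambda}},
\]
so that in particular $a \leq 1 - \frac{\ratio}{1+\ratio+\tilde{\lambda}}$, which is exactly the claim. Both halves of the lemma therefore reduce to a one-line deduction once \cref{lem:b>a,lem:1} are in place, and the result is visibly sharper than the crude bound $a,b<1$ of \cref{lem:0<ab<1}.

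The only genuine subtlety is recognizing that the sharper bound on $a$ is cheaper to reach through $b$ than to prove in isolation. A direct route---clearing denominators in $\tfrac{1}{2}(1-\ratio+N/D) \leq 1 - \ratio/(1+\ratio+\tilde{\lambda})$, isolating the surd $D$, and squaring---is feasible: writing $s = 1+\ratio+\tilde{\lambda}$ and $M = s^2-\tilde{\lambda}s - 2\ratio = 1+\ratio^2+(1+\ratio)\tilde{\lambda}$, the right-hand side simplifies to $M/s$, and using $D^2 = s^2 - 4\ratio$ together with the identities $M-N=2\ratio$ and $M+N=2(M-\ratio)$ collapses everything to the polynomial inequality $M^2 \leq (M-\ratio)\,s^2$. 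This is provable but demands care to check that both $N$ and $M$ are nonnegative before squaring, and involves noticeably more algebra. Since the chained argument sidesteps all of that, I expect no real obstacle here: the substantive work has already been carried out in \cref{lem:b>a,lem:1}, and the present lemma is merely their composition.
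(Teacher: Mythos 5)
Your proof is correct and is exactly the paper's argument: the paper likewise combines \cref{lem:1} (giving $1-b\geq\frac{\ratio}{1+\ratio+\tilde{\lambda}}$) with \cref{lem:b>a} (giving $1-a\geq 1-b$) to conclude $a,b\leq 1-\frac{\ratio}{1+\ratio+\tilde{\lambda}}$. The direct algebraic route you sketch is unnecessary, as you correctly anticipated.
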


\begin{proof}
\cref{lem:1,lem:b>a} imply that $1-a\geq1-b\geq\frac{\ratio}{1+\ratio+\tilde{\lambda}}$ so $a,b\leq1-\frac{\ratio}{1+\ratio+\tilde{\lambda}}$.
\end{proof}

\begin{lemma}
\label{lem:c=1bound}
If $\lim_{x\to0^+} h(x)=a$ for some $a\in\mathbb{R}$ and $\lim_{x\to\infty} h(x)=b$ for some $b\in\mathbb{R}$, and $h$ is continuous for all $x>0$, then $h$ is bounded on $(0,\infty)$.
\end{lemma}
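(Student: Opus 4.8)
The plan is to split the domain $(0,\infty)$ into three regions—a right-neighborhood of $0$, a neighborhood of $+\infty$, and a compact interval in between—and to bound $h$ separately on each. First, since $\lim_{x\to 0^+} h(x) = a$ is finite, applying the definition of the limit with tolerance $1$ yields a $\delta>0$ such that $|h(x)-a|<1$, and hence $|h(x)|<|a|+1$, for all $x\in(0,\delta)$. Symmetrically, since $\lim_{x\to\infty} h(x) = b$ is finite, there exists $M>0$ such that $|h(x)-b|<1$, and hence $|h(x)|<|b|+1$, for all $x>M$. By shrinking $\delta$ and enlarging $M$ if necessary, I may assume without loss of generality that $0<\delta<M$, so that the three regions jointly cover $(0,\infty)$.

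Next, I would handle the central region. On the closed and bounded interval $[\delta,M]$, the function $h$ is continuous by hypothesis, so the Extreme Value Theorem guarantees that $h$ attains both a maximum and a minimum on $[\delta,M]$. In particular, there is a finite constant $K$ with $|h(x)|\le K$ for all $x\in[\delta,M]$.

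Finally, combining the three estimates, for every $x\in(0,\infty)=(0,\delta)\cup[\delta,M]\cup(M,\infty)$ we have $|h(x)|\le \max\{|a|+1,\,K,\,|b|+1\}$, a single finite bound independent of $x$. This shows that $h$ is bounded on $(0,\infty)$, as claimed.

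There is no genuine obstacle in this argument: the statement is a routine consequence of the definition of a finite limit together with the compactness of $[\delta,M]$ (via the Extreme Value Theorem). The only point requiring minor care is ensuring that the two tail neighborhoods and the central compact interval together exhaust $(0,\infty)$, which is arranged simply by taking $\delta<M$.
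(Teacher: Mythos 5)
Your proof is correct and follows essentially the same route as the paper's: both arguments use the finite limits (with tolerance $1$) to bound $h$ near $0$ and near $\infty$, and invoke the Extreme Value Theorem (Weierstrass) on the remaining compact interval before taking the maximum of the three bounds. Your added remark that one may arrange $\delta<M$ so the three regions cover $(0,\infty)$ is a minor tidiness the paper leaves implicit, but it does not change the argument.
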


\begin{proof}
Since $\lim_{x\to0^+} h(x)=a$, there exists $\delta>0$ such that
\[
0<x<\delta \;\Rightarrow\; |h(x)-a|<1.
\]
Hence for $0<x<\delta$ we have $|h(x)|\le |a|+1\triangleq M_1$.

Similarly, since $\lim_{x\to\infty} h(x)=b$, there exists $R>0$ such that
\[
x>R \;\Rightarrow\; |h(x)-b|<1,
\]
so for $x>R$ we have $|h(x)|\le |b|+1=:M_3$.

On the closed, bounded interval $[\delta,R]$, the function $h$ is continuous; by Weierstrass, $h$ attains a maximum of its absolute value there. Thus there exists $M_2>0$ such that
\[
|h(x)|\le M_2 \quad \text{for all } x\in[\delta,R].
\]

Combining the three regions, for all $x>0$ we have
\[
|h(x)| \le M \triangleq \max\{M_1,M_2,M_3\},
\]
so $h$ is bounded on $(0,\infty)$.
\end{proof}

\pagebreak

\begin{lemma}
\label{lem:6}
If $\ratio<1$ then $\frac{\dd}{\dd\tilde{\lambda}}a$, $\frac{\dd}{\dd\tilde{\lambda}}b$ are bounded for all $\tilde \lambda\geq 0$.
\end{lemma}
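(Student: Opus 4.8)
The plan is to write both derivatives as continuous functions of $\tilde{\lambda}$ on $[0,\infty)$ that admit finite limits at both endpoints, so that boundedness follows immediately from \cref{lem:c=1bound}.

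First I would convert the $\lambda$-derivatives already computed in \cref{lem:da:dl,lem:db:dl} into $\tilde{\lambda}$-derivatives via the chain rule. Since $a$ and $b$ depend on $\lambda$ only through $\tilde{\lambda}=\lambda/\featnoise$, we have $\frac{\dd}{\dd\tilde{\lambda}}=\featnoise\frac{\dd}{\dd\lambda}$, which cancels the $\featnoise$ factors and yields the clean forms
\[
\frac{\dd}{\dd\tilde{\lambda}}a = \frac{2\tilde{\lambda}\ratio}{D^{3}},
\qquad
\frac{\dd}{\dd\tilde{\lambda}}b = \frac{1}{2}\Bigl(\frac{\tilde{\lambda}+\ratio+1}{D}-1\Bigr)
= \frac{2\ratio}{D\,(\tilde{\lambda}+\ratio+1+D)},
\]
where the last equality uses the conjugate identity $\tilde{\lambda}+\ratio+1-D=\frac{4\ratio}{\tilde{\lambda}+\ratio+1+D}$ (the same manipulation as in \cref{lem:1}, since $(\tilde{\lambda}+\ratio+1)^{2}-D^{2}=4\ratio$).

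Next I would record the crucial consequence of the hypothesis $\ratio<1$: because $D^{2}=(1+\ratio+\tilde{\lambda})^{2}-4\ratio$ is increasing in $\tilde{\lambda}\ge 0$, its minimum is at $\tilde{\lambda}=0$, giving $D\ge D|_{\tilde{\lambda}=0}=1-\ratio>0$ for all $\tilde{\lambda}\ge 0$. Thus $D$ is bounded away from zero, so both displayed expressions are continuous on all of $[0,\infty)$ (their only possible singularity, at $D=0$, is excluded). This is precisely where $\ratio<1$ enters, and it is the one point where the conclusion genuinely depends on the hypothesis: at $\ratio=1$ one has $D|_{\tilde{\lambda}=0}=0$ and $\frac{\dd}{\dd\tilde{\lambda}}a=2\big/\!\bigl(\tilde{\lambda}^{1/2}(\tilde{\lambda}+4)^{3/2}\bigr)\to\infty$ as $\tilde{\lambda}\to 0^{+}$, so the lemma would fail.

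Finally I would evaluate the two endpoint limits. At $\tilde{\lambda}=0$ the derivatives equal $0$ and $\frac{\ratio}{1-\ratio}$ respectively, both finite. As $\tilde{\lambda}\to\infty$ we have $D\sim\tilde{\lambda}$, so $\frac{2\tilde{\lambda}\ratio}{D^{3}}\sim\frac{2\ratio}{\tilde{\lambda}^{2}}\to 0$ and $\frac{2\ratio}{D(\tilde{\lambda}+\ratio+1+D)}\sim\frac{2\ratio}{2\tilde{\lambda}^{2}}\to 0$. Each derivative is therefore continuous on $(0,\infty)$ with finite limits at $0^{+}$ and at $\infty$, so \cref{lem:c=1bound} gives boundedness on $(0,\infty)$; together with the finite value at $\tilde{\lambda}=0$ this yields boundedness on $[0,\infty)$, as claimed. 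The only mild subtlety, and the single place a routine calculation could slip, is the $\infty/\infty$ limit of $\frac{\dd}{\dd\tilde{\lambda}}b$, which the conjugate form above resolves transparently.
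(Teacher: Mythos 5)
Your proof is correct, but it closes the argument by a genuinely different mechanism than the paper. You share the same starting point---the closed forms $\frac{\dd}{\dd\tilde\lambda}a=\frac{2\ratio\tilde\lambda}{D^{3}}$ and $\frac{\dd}{\dd\tilde\lambda}b=\frac{2\ratio}{D(\tilde\lambda+\ratio+1+D)}$, which you obtain cleanly by rescaling \cref{lem:da:dl,lem:db:dl} through the chain rule, whereas the paper re-derives them in $\tilde\lambda$ directly (its \cref{eq:dadlt,eq:dbdlt})---and the same crucial use of $\ratio<1$ via $D\ge 1-\ratio>0$. The difference is the final step: the paper bounds both expressions pointwise by elementary inequalities, namely $\frac{2\ratio\tilde\lambda}{D^{3}}\le\frac{2\ratio}{(1-\ratio)^{2}}$ (using $D\ge 1-\ratio$ and $D\ge\tilde\lambda$) and $\frac{2\ratio}{D(\tilde\lambda+\ratio+1+D)}\le\frac{\ratio}{1-\ratio}$ (using $D\ge 1-\ratio$ together with \cref{lem:5}), yielding explicit uniform constants in one line; you instead invoke the continuity-plus-endpoint-limits device (\cref{lem:c=1bound}), which the paper reserves for the harder $\ratio=1$ boundedness lemmas (\cref{lem:7,lem:c=1dc,lem:1:a}). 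Your route costs two endpoint evaluations but avoids hunting for inequalities, and your sanity check that $\frac{\dd}{\dd\tilde\lambda}a\to\infty$ as $\tilde\lambda\to0^{+}$ when $\ratio=1$ pinpoints exactly why the hypothesis is necessary, something the paper's proof leaves implicit. The paper's route buys concrete constants---indeed your endpoint value $\frac{\ratio}{1-\ratio}$ for $\frac{\dd}{\dd\tilde\lambda}b$ at $\tilde\lambda=0$ is precisely the paper's uniform bound---but since the downstream use in \cref{thm:opt_finite} needs only qualitative boundedness, both proofs serve equally well.
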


\begin{proof}
Recall the notations from \cref{tab:notation}. 
First, we show that,
\begin{align}
\frac{\dd}{\dd\tilde{\lambda}}2a
&=\left(1+\ratio\right)D-\frac{\left(\tilde{\lambda}+\left(1+\ratio\right)\right)\left(\tilde{\lambda}\left(1+\ratio\right)+\left(1-\ratio\right)^{2}\right)}{D^{3}}\notag
\\&=\frac{\left(1+\ratio\right)D^{2}-\left(\tilde{\lambda}+\left(1+\ratio\right)\right)\left(\tilde{\lambda}\left(1+\ratio\right)+\left(1-\ratio\right)^{2}\right)}{D^{3}}
\notag
\\&
=\frac{\tilde{\lambda}\left(1+\ratio\right)^{2}-\tilde{\lambda}\left(1-\ratio\right)^{2}}{D^{3}}=\frac{4\ratio\tilde\lambda}{D^{3}}\,.
\label{eq:dadlt}
\end{align}
Furthermore,
\begin{align}
\frac{\dd}{\dd\tilde{\lambda}}2b
&=\frac{-D+\left(\tilde{\lambda}+\ratio+1\right)}{D}=\frac{-D^{2}+\left(\tilde{\lambda}+\ratio+1\right)^{2}}{D\left(\tilde{\lambda}+\ratio+1+D\right)}\notag\\&=\frac{-\left(1-\ratio\right)^{2}+\left(1+\ratio\right)^{2}}{D\left(\tilde{\lambda}+\ratio+1+D\right)}=\frac{4\ratio}{D\left(\tilde{\lambda}+\ratio+1+D\right)}\,.
\label{eq:dbdlt}
\end{align}
Finally, invoking~\cref{lem:5},
\begin{align*}
\frac{\dd}{\dd\tilde{\lambda}}a&
=
\frac{2\ratio\tilde{\lambda}}{D^{3}}
\stackrel{D\geq1-\ratio,D\geq\tilde{\lambda}}{\leq}
\frac{2\ratio}{\left(1-\ratio\right)^{2}}
\notag
\\
\frac{\dd}{\dd\tilde{\lambda}}b
&=\frac{2\ratio}{D\left(\tilde{\lambda}+\ratio+1+D\right)}
\stackrel{D\geq1-\ratio}{\leq}
\frac{\ratio}{1-\ratio}
\,.
\end{align*}
\end{proof}

\begin{lemma}
\label{lem:a->0b->0}
If $\ratio=1$ then $\lim_{\tilde{\lambda}\to0^{+}}a=0$, $\lim_{\tilde{\lambda}\to0^{+}}b=0$.
\end{lemma}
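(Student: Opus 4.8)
The plan is to prove this by direct substitution of $\ratio=1$ into the closed forms of $a$ and $b$, followed by a short simplification and limit computation. Recall from the auxiliary-variable definitions that $D=\sqrt{\tilde{\lambda}^{2}+2\tilde{\lambda}(1+\ratio)+(1-\ratio)^{2}}$ and $N=(1+\ratio)\tilde{\lambda}+(1-\ratio)^{2}$, with $a=\tfrac12\bigl(1-\ratio+\tfrac{N}{D}\bigr)$ and $b=\tfrac12\bigl(1-\ratio-\tilde{\lambda}+D\bigr)$. The key structural observation is that at $\ratio=1$ the term $(1-\ratio)^{2}$ vanishes, so both $D$ and $N$ lose their constant part and become $O(\tilde{\lambda}^{1/2})$ and $O(\tilde{\lambda})$ respectively as $\tilde{\lambda}\to0^{+}$. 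This is precisely what drives the limits to $0$, in contrast to the regime $\ratio<1$ (used in \cref{example:no_reg}), where $D\to(1-\ratio)>0$ and hence $a,b\to 1-\ratio$.

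First I would set $\ratio=1$ to obtain $D=\sqrt{\tilde{\lambda}^{2}+4\tilde{\lambda}}=\sqrt{\tilde{\lambda}}\,\sqrt{\tilde{\lambda}+4}$ and $N=2\tilde{\lambda}$, so that $1-\ratio=0$ collapses the leading terms of both $a$ and $b$. For $b$ the conclusion is immediate: we get $b=\tfrac12\bigl(\sqrt{\tilde{\lambda}^{2}+4\tilde{\lambda}}-\tilde{\lambda}\bigr)$, and since each summand tends to $0$ as $\tilde{\lambda}\to0^{+}$ (the square root tends to $\sqrt{0}=0$), no indeterminate form arises and $b\to0$ directly. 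If a rate is desired, one can rationalize to $b=\tfrac{2\tilde{\lambda}}{\sqrt{\tilde{\lambda}^{2}+4\tilde{\lambda}}+\tilde{\lambda}}\sim\sqrt{\tilde{\lambda}}$.

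For $a$ the computation is only marginally more delicate, and this is the sole place requiring any care. Substituting gives $a=\tfrac12\cdot\tfrac{2\tilde{\lambda}}{D}=\tfrac{\tilde{\lambda}}{\sqrt{\tilde{\lambda}^{2}+4\tilde{\lambda}}}$, which is a $0/0$ form at $\tilde{\lambda}=0^{+}$. The step that resolves it is to factor $\sqrt{\tilde{\lambda}}$ out of the denominator, yielding $a=\tfrac{\tilde{\lambda}}{\sqrt{\tilde{\lambda}}\,\sqrt{\tilde{\lambda}+4}}=\sqrt{\tfrac{\tilde{\lambda}}{\tilde{\lambda}+4}}$, whose limit as $\tilde{\lambda}\to0^{+}$ is $\sqrt{0/4}=0$. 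Thus both limits equal $0$, completing the argument. I do not anticipate any genuine obstacle here — the only non-mechanical observation is the cancellation of $(1-\ratio)^2$ at $\ratio=1$ and the elementary factoring of $\sqrt{\tilde{\lambda}}$ that unwinds the apparent $0/0$ in $a$.
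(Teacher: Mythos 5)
Your proof is correct and follows essentially the same route as the paper: both substitute $\ratio=1$ into the closed forms, note $b=\tfrac12\bigl(\sqrt{\tilde{\lambda}^{2}+4\tilde{\lambda}}-\tilde{\lambda}\bigr)\to 0$ directly, and resolve the apparent $0/0$ in $a$ by factoring $\sqrt{\tilde{\lambda}}$ — your form $a=\sqrt{\tilde{\lambda}/(\tilde{\lambda}+4)}$ is algebraically identical to the paper's $a=1/\sqrt{1+4/\tilde{\lambda}}$. The extra rate observations ($a,b\sim\sqrt{\tilde{\lambda}}$) are correct but not needed.
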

\begin{proof}
By the notations from \cref{tab:notation}, if $\ratio=1$ then,
\begin{equation}   
a=\frac{1}{2}\left(\frac{2\tilde{\lambda}}{\sqrt{\tilde{\lambda}^{2}+4\tilde{\lambda}}}\right)=\frac{\cancel{\tilde{\lambda}}}{\cancel{\tilde{\lambda}}\sqrt{1+\frac{4}{\tilde{\lambda}}}}=\frac{1}{\sqrt{1+\frac{4}{\tilde{\lambda}}}}\,,
\label{eq:31}
\end{equation}
and
\begin{equation}
b=\frac{1}{2}\left(-\tilde{\lambda}+\sqrt{\tilde{\lambda}^{2}+4\tilde{\lambda}}\right)
\,.
\label{eq:32}
\end{equation}
So, $\lim_{\tilde{\lambda}\to0^{+}}a=0$ and $\lim_{\tilde{\lambda}\to0^{+}}b=0$.
\end{proof}

\pagebreak

\begin{lemma}
\label{lem:7}
If $\ratio=1$ then $\forall n_{1}\geq 2,n_{2}\geq 3,$ we have $a^{n_{1}}\frac{\dd a}{\dd\tilde{\lambda}}$,$b^{n_{2}}\frac{\dd b}{\dd\tilde{\lambda}}$ are bounded for all $\tilde \lambda >0$. 
\end{lemma}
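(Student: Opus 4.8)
The plan is to reduce the claim to two fixed base cases and then invoke the boundedness criterion of \cref{lem:c=1bound}. Throughout I fix $\ratio=1$, so that $D=\sqrt{\tilde{\lambda}^2+4\tilde{\lambda}}=\sqrt{\tilde{\lambda}}\,\sqrt{\tilde{\lambda}+4}$, and I use the explicit derivatives recorded in \cref{lem:6}, namely $\frac{\dd}{\dd\tilde{\lambda}}a=\frac{2\tilde{\lambda}}{D^{3}}$ and $\frac{\dd}{\dd\tilde{\lambda}}b=\frac{2}{D(\tilde{\lambda}+2+D)}$, both strictly positive for $\tilde{\lambda}>0$.

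First I would exploit $0<a<1$ and $0<b<1$ (\cref{lem:0<ab<1}) to collapse the family of exponents to a single representative. Since $\frac{\dd a}{\dd\tilde{\lambda}}\ge 0$ and $a^{n_{1}}=a^{2}\,a^{\,n_{1}-2}\le a^{2}$ for every $n_{1}\ge 2$, we obtain $0\le a^{n_{1}}\frac{\dd a}{\dd\tilde{\lambda}}\le a^{2}\frac{\dd a}{\dd\tilde{\lambda}}$; likewise $0\le b^{n_{2}}\frac{\dd b}{\dd\tilde{\lambda}}\le b^{3}\frac{\dd b}{\dd\tilde{\lambda}}$ for $n_{2}\ge 3$. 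Hence it suffices to bound the two fixed functions $a^{2}\frac{\dd a}{\dd\tilde{\lambda}}$ and $b^{3}\frac{\dd b}{\dd\tilde{\lambda}}$ on $(0,\infty)$, which handles all admissible $n_{1},n_{2}$ simultaneously.

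Next I would put these two functions in a tractable form and check that they extend continuously to both endpoints. Using $a=\sqrt{\tilde{\lambda}}/\sqrt{\tilde{\lambda}+4}$ from \cref{lem:a->0b->0} together with $\frac{\dd a}{\dd\tilde{\lambda}}=2/\bigl(\tilde{\lambda}^{1/2}(\tilde{\lambda}+4)^{3/2}\bigr)$, the first product telescopes to the clean form $a^{2}\frac{\dd a}{\dd\tilde{\lambda}}=\frac{2\sqrt{\tilde{\lambda}}}{(\tilde{\lambda}+4)^{5/2}}$, which tends to $0$ as $\tilde{\lambda}\to0^{+}$ and decays like $2\tilde{\lambda}^{-2}$ as $\tilde{\lambda}\to\infty$. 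For the second product I would rewrite $b=2\tilde{\lambda}/(\tilde{\lambda}+D)$ (obtained by multiplying $b=\tfrac12(-\tilde{\lambda}+D)$ by its conjugate and using $D^{2}-\tilde{\lambda}^{2}=4\tilde{\lambda}$); then as $\tilde{\lambda}\to0^{+}$ one has $D\sim 2\sqrt{\tilde{\lambda}}$, $b\sim\sqrt{\tilde{\lambda}}$ and $\frac{\dd b}{\dd\tilde{\lambda}}\sim\frac{1}{2\sqrt{\tilde{\lambda}}}$, giving $b^{3}\frac{\dd b}{\dd\tilde{\lambda}}\sim\tilde{\lambda}/2\to0$, while as $\tilde{\lambda}\to\infty$ we have $b\to1$ and $\frac{\dd b}{\dd\tilde{\lambda}}\sim\tilde{\lambda}^{-2}\to0$, so the product again vanishes.

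Finally, both $a^{2}\frac{\dd a}{\dd\tilde{\lambda}}$ and $b^{3}\frac{\dd b}{\dd\tilde{\lambda}}$ are continuous on $(0,\infty)$ (since $D>0$ there) and possess finite limits at $0^{+}$ and at $\infty$, so \cref{lem:c=1bound} yields their boundedness, and the reduction above transfers this bound to all $n_{1}\ge 2$ and $n_{2}\ge 3$. I expect the only delicate point to be the $\tilde{\lambda}\to0^{+}$ analysis: both derivatives blow up like $\tilde{\lambda}^{-1/2}$ there, so the whole argument hinges on verifying that the factors $a^{2}\sim\tilde{\lambda}$ and $b^{3}\sim\tilde{\lambda}^{3/2}$ decay fast enough to cancel this singularity, which is exactly what the exponent thresholds secure.
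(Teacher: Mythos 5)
Your proof is correct and follows essentially the same route as the paper's: reduce to the base cases $n_1=2$, $n_2=3$ via $0<a,b<1$, use the explicit $\ratio=1$ formulas for $a$, $b$ and their derivatives to show both $a^{2}\frac{\dd a}{\dd\tilde{\lambda}}$ and $b^{3}\frac{\dd b}{\dd\tilde{\lambda}}$ vanish as $\tilde{\lambda}\to0^{+}$ and as $\tilde{\lambda}\to\infty$, and conclude boundedness from \cref{lem:c=1bound}. Your simplifications (e.g., $a^{2}\frac{\dd a}{\dd\tilde{\lambda}}=\frac{2\sqrt{\tilde{\lambda}}}{(\tilde{\lambda}+4)^{5/2}}$ and $b=\frac{2\tilde{\lambda}}{\tilde{\lambda}+D}$) are accurate and merely make the paper's limit computations more transparent.
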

\begin{proof}
By~\cref{lem:0<ab<1}, it is sufficient to show this is true for $n_1=2,n_2=3$.

We use the closed-from expression for $\frac{\dd a}{\dd{\lambda}},\frac{\dd b}{\dd{\lambda}}$ given in~\cref{lem:da:dl,lem:db:dl}.

Using~\cref{eq:dadlt,eq:dbdlt,eq:31,eq:32},
\begin{align*}
\lim_{\tilde{\lambda}\to0^{+}}a^{2}\frac{\dd a}{\dd\tilde{\lambda}}&=\lim_{\tilde{\lambda}\to0^{+}}\frac{2}{\left(1+\frac{4}{\tilde{\lambda}}\right)}\frac{\tilde{\lambda}}{\left(\tilde{\lambda}^{2}+4\tilde{\lambda}\right)^{\hfrac{3}{2}}}=0\\\lim_{\tilde{\lambda}\to\infty}a^{2}\frac{\dd a}{\dd\tilde{\lambda}}&=\lim_{\tilde{\lambda}\to\infty}a^{2}\frac{\tilde{\lambda}}{2\left(\tilde{\lambda}^{2}+4\tilde{\lambda}\right)^{\hfrac{3}{2}}}\stackrel{[1]}{=}0\,,
\end{align*}
where [1] follows since $a$ by~\cref{lem:0<ab<1} is bounded for all $\tilde \lambda >0$. Furthermore,
\begin{align*}
\lim_{\tilde{\lambda}\to0^{+}}b^{3}\frac{\dd b}{\dd\tilde{\lambda}}
&
=\lim_{\tilde{\lambda}\to0^{+}}\frac{\left(-\tilde{\lambda}+\sqrt{\tilde{\lambda}^{2}+4\tilde{\lambda}}\right)^{3}}{\sqrt{\tilde{\lambda}^{2}+4\tilde{\lambda}}\left(\tilde{\lambda}+2+\sqrt{\tilde{\lambda}^{2}+4\tilde{\lambda}}\right)}=0
\\
\lim_{\tilde{\lambda}\to\infty}b^{3}\frac{\dd b}{\dd\tilde{\lambda}}
&
=\lim_{\tilde{\lambda}\to\infty}b^{3}\frac{2}{\sqrt{\tilde{\lambda}^{2}+4\tilde{\lambda}}\left(\tilde{\lambda}+2+\sqrt{\tilde{\lambda}^{2}+4\tilde{\lambda}}\right)}\stackrel{[2]}{=}0,
\end{align*}
where [2] follows since $b$ by~\cref{lem:0<ab<1} is bounded for all $\tilde \lambda >0$. 

Both of terms are bounded by~\cref{lem:c=1bound} as each one of the terms is continuous for all $\tilde\lambda>0$.
\end{proof}

For the next few Lemmas we define $f\triangleq \labelnoise\frac{1}{2\featnoise}\left(\frac{\left(1+\ratio\right)+\tilde{\lambda}}{\sqrt{\tilde{\lambda}^{2}+2\tilde{\lambda}\left(1+\ratio\right)+\left(1-\ratio\right)^{2}}}-1\right)=\labelnoise c$.
\begin{lemma}
\label{lem:C>0}
$f>0$ for all $\tilde \lambda >0$.
\end{lemma}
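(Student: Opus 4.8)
The plan is to reduce the claim to the elementary inequality $(1+\ratio)+\tilde{\lambda} > D$, which has in fact already been recorded in passing elsewhere in the appendix. Since $f = \labelnoise c$ with $\labelnoise > 0$ and $\featnoise > 0$ (both are variances), and the prefactor $\tfrac{1}{2\featnoise}$ is strictly positive, it suffices to show that the bracketed term exceeds $1$, i.e. that
\[
\frac{(1+\ratio)+\tilde{\lambda}}{\sqrt{\tilde{\lambda}^{2} + 2\tilde{\lambda}(1+\ratio) + (1-\ratio)^{2}}} > 1 .
\]
Recalling the definition $D \triangleq \sqrt{\tilde{\lambda}^{2} + 2\tilde{\lambda}(1+\ratio) + (1-\ratio)^{2}}$ from the auxiliary notation, this is precisely the statement $(1+\ratio)+\tilde{\lambda} > D$.

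Next I would verify this inequality by squaring. For $\tilde{\lambda} > 0$ and $0 < \ratio \le 1$, both $(1+\ratio)+\tilde{\lambda}$ and $D$ are strictly positive, so the inequality is equivalent to its squared form. Taking the difference of squares gives
\[
\bigl((1+\ratio)+\tilde{\lambda}\bigr)^{2} - D^{2}
= (1+\ratio)^{2} - (1-\ratio)^{2}
= 4\ratio > 0 ,
\]
which is exactly the cancellation already carried out in \cref{eq:24} inside the proof of \cref{lem:db:dl}. Hence $(1+\ratio)+\tilde{\lambda} > D$, the bracketed term is strictly larger than $1$, and therefore $c > 0$.

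Finally, multiplying by the strictly positive constants $\labelnoise$ and $\tfrac{1}{2\featnoise}$ preserves the sign, so $f = \labelnoise c > 0$ for every $\tilde{\lambda} > 0$, as claimed. There is essentially no obstacle here: the only point requiring a moment's care is confirming that both quantities being compared are positive \emph{before} squaring, which is immediate from $\ratio > 0$ and $\tilde{\lambda} > 0$. Accordingly, rather than redo the algebraic cancellation I would simply invoke \cref{eq:24} directly, reducing the whole argument to a single line.
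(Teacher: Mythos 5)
Your proof is correct and is essentially the paper's own argument: both reduce the claim to the difference of squares $\bigl(1+\ratio+\tilde{\lambda}\bigr)^{2}-D^{2}=(1+\ratio)^{2}-(1-\ratio)^{2}=4\ratio>0$, which is exactly the computation in \cref{eq:24}. Your added remarks—checking positivity of both sides before squaring and multiplying through by the positive constants $\labelnoise$ and $\tfrac{1}{2\featnoise}$—are fine but do not change the route.
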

\begin{proof}
It is sufficient to show that $(1+\ratio+\tilde \lambda)^2 > \tilde\lambda^2 + 2\tilde\lambda(1+\ratio) + (1-\ratio)^2$ which is true if and only if $(1+\ratio)^2 > (1-\ratio)^2 \;\;\iff\;\; 4\ratio>0 \;\;\iff\;\; \ratio>0$.
\end{proof}

\begin{lemma}
\label{lem:c=1dc}
If $\ratio=1$ then for all $n_1\geq4,n_2\geq 2$ we have $a^{n_1}\frac{\dd f}{\dd\tilde{\lambda}},a^{n_2}f$ is bounded for all $\tilde \lambda >0$.
\end{lemma}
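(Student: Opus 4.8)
The plan is to mirror the structure of \cref{lem:7}: specialize everything to $\ratio=1$, reduce to the smallest admissible exponents using $0<a<1$, obtain a clean closed form for $\frac{\dd f}{\dd\tilde{\lambda}}$, and then verify that both $a^{n_2}f$ and $a^{n_1}\frac{\dd f}{\dd\tilde{\lambda}}$ possess finite limits as $\tilde{\lambda}\to 0^+$ and as $\tilde{\lambda}\to\infty$, so that \cref{lem:c=1bound} immediately yields boundedness on $(0,\infty)$.

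First I would record the $\ratio=1$ simplifications. Here $D=\sqrt{\tilde{\lambda}^{2}+4\tilde{\lambda}}$, and $a=\tilde{\lambda}/D$ (cf.\ \cref{lem:a->0b->0}), while $f=\frac{\labelnoise}{2\featnoise}\bigprn{\frac{\tilde{\lambda}+2}{D}-1}$. By \cref{lem:0<ab<1} we have $0<a<1$, so $a^{n}\le a^{4}$ whenever $n\ge 4$ and $a^{n}\le a^{2}$ whenever $n\ge 2$; since $|a^{n}g|\le|a^{4}g|$ (resp.\ $|a^{2}g|$) pointwise, it suffices to establish boundedness only for the extreme exponents $n_1=4$ and $n_2=2$, and the remaining cases follow automatically.

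Next I would differentiate $f$. Using $\frac{\dd D}{\dd\tilde{\lambda}}=\frac{\tilde{\lambda}+2}{D}$ gives $\frac{\dd}{\dd\tilde{\lambda}}\frac{\tilde{\lambda}+2}{D}=\frac{D^{2}-(\tilde{\lambda}+2)^{2}}{D^{3}}$, and the key cancellation $D^{2}-(\tilde{\lambda}+2)^{2}=(\tilde{\lambda}^{2}+4\tilde{\lambda})-(\tilde{\lambda}^{2}+4\tilde{\lambda}+4)=-4$ produces the compact form $\frac{\dd f}{\dd\tilde{\lambda}}=-\frac{2\labelnoise}{\featnoise D^{3}}$. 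With this, both target functions are manifestly continuous on $(0,\infty)$, so by \cref{lem:c=1bound} only the two endpoint limits remain. As $\tilde{\lambda}\to 0^+$ we have $D\sim 2\sqrt{\tilde{\lambda}}$, hence $a\sim\tfrac12\sqrt{\tilde{\lambda}}$, $f\sim\frac{\labelnoise}{2\featnoise}\tilde{\lambda}^{-1/2}$, and $\frac{\dd f}{\dd\tilde{\lambda}}\sim\tilde{\lambda}^{-3/2}$, so $a^{2}f\sim\tilde{\lambda}^{1/2}\to 0$ and $a^{4}\frac{\dd f}{\dd\tilde{\lambda}}\sim\tilde{\lambda}^{1/2}\to 0$. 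As $\tilde{\lambda}\to\infty$ we have $D\sim\tilde{\lambda}$, so $a\to 1$ while $f\to 0$ and $\frac{\dd f}{\dd\tilde{\lambda}}\to 0$, giving limit $0$ at infinity as well. All four limits being finite, \cref{lem:c=1bound} completes the argument.

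The main obstacle is the $\tilde{\lambda}\to 0^+$ regime: there $a\to 0$ while both $f$ and $\frac{\dd f}{\dd\tilde{\lambda}}$ diverge, so boundedness hinges on the vanishing factor $a^{n}$ beating the blow-up. Tracking the exact powers of $\tilde{\lambda}$—which is precisely why the thresholds $n_2\ge 2$ and $n_1\ge 4$ are needed—is the only delicate point, and the clean derivative identity $\frac{\dd f}{\dd\tilde{\lambda}}=-\frac{2\labelnoise}{\featnoise D^{3}}$ is what makes the power counting transparent.
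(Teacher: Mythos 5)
Your proof is correct and follows essentially the same route as the paper: reduce to the extreme exponents $n_1=4$, $n_2=2$ via $0<a<1$, obtain closed forms at $\ratio=1$ (your identity $\frac{\dd f}{\dd\tilde{\lambda}}=-\frac{2\labelnoise}{\featnoise D^{3}}$ is exactly the factor appearing in the paper's computation of $a^{4}\frac{\dd f}{\dd\tilde{\lambda}}$), verify the limits $0$ at both $\tilde{\lambda}\to 0^{+}$ and $\tilde{\lambda}\to\infty$, and conclude by \cref{lem:c=1bound}. The power counting at $\tilde{\lambda}\to 0^{+}$ ($a\sim\tfrac12\sqrt{\tilde{\lambda}}$, $f\sim\tilde{\lambda}^{-1/2}$, $f'\sim\tilde{\lambda}^{-3/2}$) matches the paper's asymptotics, so no gap remains.
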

\begin{proof}
By~\cref{lem:0<ab<1} it is sufficient to show this is true for $n_1=4,n_2=2$.

Note that using~\cref{eq:31}, we get,
\begin{align}
& a^{4}\,\frac{\dd f}{\dd\tilde{\lambda}}= \left(\frac{\tilde{\lambda}}{\tilde{\lambda}+4}\right)^{\!2}\left(-\frac{2\labelnoise}{\featnoise\big(\tilde{\lambda}^{2}+4\tilde{\lambda}\big)^{3/2}}\right) 
= -\frac{2\labelnoise}{\featnoise}\,  \frac{\sqrt{\tilde{\lambda}}}{(\tilde{\lambda}+4)^{7/2}}.&
\label{eq:17}
\end{align}
Furthermore,
\begin{align}
&\,a^{2}f=\featnoise\frac{1}{2\labelnoise}\left(\frac{\tilde{\lambda}+2}{\tilde{\lambda}\sqrt{1+\frac{4}{\tilde{\lambda}}}}-1\right)\left(\frac{\tilde{\lambda}}{\tilde{\lambda}+4}\right).&
\label{eq:18}
\end{align}
By~\cref{eq:17,eq:18}, we have $a^4\frac{\dd f}{\dd\tilde{\lambda}}\to0$, $a^2f\to 0$ as $\tilde{\lambda}\to0^{+}$, but note that also $a^4 \frac{\dd f}{\dd\tilde{\lambda}}\to0$, $a^2f\to 0$ as $\tilde{\lambda}\to\infty$ by the same equations.
Since both terms are continuous for all $\tilde \lambda >0$ they are bounded in this region by~\cref{lem:c=1bound}.
\end{proof}

\pagebreak

\begin{lemma}
\label{eq:gamma=1a}
If $\ratio=1$ then $\frac{1-b}{1-a}=\frac{1}{1+a}$.
\end{lemma}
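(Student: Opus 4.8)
The plan is to substitute $\ratio=1$ into the closed-form expressions for $a$ and $b$ and then reduce the claimed identity to an elementary scalar identity. When $\ratio=1$ the discriminant simplifies to $D=\sqrt{\tilde{\lambda}^{2}+4\tilde{\lambda}}$, and the definitions in \cref{tab:notation} collapse to $a=\tilde{\lambda}/D$ and $b=(D-\tilde{\lambda})/2$ (consistent with \cref{lem:a->0b->0}, since $(1-\ratio)^{2}$ vanishes). From these I would immediately record $1-a=(D-\tilde{\lambda})/D$, $1+a=(D+\tilde{\lambda})/D$, and $1-b=(2-D+\tilde{\lambda})/2$.

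Next I would rewrite the target $\frac{1-b}{1-a}=\frac{1}{1+a}$ in the cross-multiplied form $(1-b)(1+a)=1-a$. Substituting the expressions above, the common factor $D$ in $1-a$ and $1+a$ cancels, so the identity reduces to the single polynomial statement $(2-D+\tilde{\lambda})(D+\tilde{\lambda})=2(D-\tilde{\lambda})$.

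The crucial step is then a one-line expansion of the left-hand side to $2D+2\tilde{\lambda}-D^{2}+\tilde{\lambda}^{2}$, followed by substituting the defining relation $D^{2}=\tilde{\lambda}^{2}+4\tilde{\lambda}$ (valid at $\ratio=1$). This collapses $-D^{2}+\tilde{\lambda}^{2}$ to $-4\tilde{\lambda}$, leaving exactly $2D-2\tilde{\lambda}=2(D-\tilde{\lambda})$, which matches the right-hand side and completes the argument.

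Because the whole derivation is purely algebraic, there is essentially no obstacle; the only care needed is to confirm that the denominators involved are nonzero so that the cross-multiplication and the cancellation of $D$ are legitimate. This is guaranteed by $0<a<1$ (\cref{lem:0<ab<1}), which gives $1-a>0$ and $1+a>0$, together with $D\ge\tilde{\lambda}>0$ for all $\tilde{\lambda}>0$.
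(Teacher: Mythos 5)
Your proposal is correct and follows essentially the same route as the paper: both substitute $\ratio=1$ to get $a=\tilde{\lambda}/D$ and $b=(D-\tilde{\lambda})/2$, and both reduce the claim to the identity $(2-D+\tilde{\lambda})(D+\tilde{\lambda})=2(D-\tilde{\lambda})$, verified via $D^{2}=\tilde{\lambda}^{2}+4\tilde{\lambda}$. Your explicit check that the denominators are nonzero (via $0<a<1$ and $D>\tilde{\lambda}>0$) is a small bonus the paper leaves implicit.
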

\begin{proof}
Recall the notations from \cref{tab:notation},
$$
a=\frac12\!\left(1-\ratio+\frac{(1+\ratio)\tilde{\lambda}+(1-\ratio)^2}{D}\right)
=\frac{\tilde{\lambda}}{D},\qquad
b=\frac12\!\left(1-\ratio-\tilde{\lambda}+D\right)
=\frac{D-\tilde{\lambda}}{2}.
$$
Thus
$$
\frac{1-b}{1-a}
=\frac{1-\frac{D-\tilde{\lambda}}{2}}{1-\frac{\tilde{\lambda}}{D}}
=\frac{D(2-D+\tilde{\lambda})}{2(D-\tilde{\lambda})}
=\frac{D}{D+\tilde{\lambda}}
=\frac{1}{1+\frac{\tilde{\lambda}}{D}}
=\frac{1}{1+a},
$$
where the penultimate equality uses 
$(2-D+\tilde{\lambda})(D+\tilde{\lambda})=2(D-\tilde{\lambda})
\iff D^{2}=\tilde{\lambda}^{2}+4\tilde{\lambda}$, true by the definition of $D$.
\end{proof}

\bigskip

\begin{lemma}
\label{lem:1:a}
If $\ratio=1$ then $\frac{1-b}{1-a},\frac{\dd}{\dd\tilde\lambda}\frac{1-b}{1-a}$ are bounded for all $\tilde \lambda >0$.
\end{lemma}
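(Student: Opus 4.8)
The plan is to collapse both assertions onto the single scalar $a$. At $\ratio=1$, \cref{eq:gamma=1a} supplies the identity $\frac{1-b}{1-a}=\frac{1}{1+a}$, so writing $h(\tilde\lambda)\triangleq\frac{1}{1+a(\tilde\lambda)}$ reduces the lemma to the boundedness of $h$ and of $h'$ on $(0,\infty)$. For each I would invoke \cref{lem:c=1bound}: a function continuous on $(0,\infty)$ with finite one-sided limits at $0^+$ and at $\infty$ is bounded there. This is precisely the mechanism that lets us cover the \emph{entire} ray $\tilde\lambda>0$ rather than only an interval bounded away from the origin.

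Boundedness of the ratio is the easy half. By \cref{lem:0<ab<1} we have $0<a<1$, so $h=\frac{1}{1+a}\in(\tfrac12,1)$ for every $\tilde\lambda>0$; equivalently, $h$ is continuous with $\lim_{\tilde\lambda\to0^+}h=1$ and $\lim_{\tilde\lambda\to\infty}h=\tfrac12$, and \cref{lem:c=1bound} applies directly.

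For the derivative I would differentiate $h=\frac{1}{1+a}$ to get $h'=-\frac{a'}{(1+a)^2}$ and substitute the closed forms available at $\ratio=1$: namely $a=\tilde\lambda/D$ with $D=\sqrt{\tilde\lambda^2+4\tilde\lambda}$ (\cref{eq:gamma=1a}), and $\frac{\dd a}{\dd\tilde\lambda}=\frac{2\tilde\lambda}{D^3}$ (\cref{lem:da:dl}, equivalently \cref{eq:dadlt}). Simplifying yields the compact form
\[
\frac{\dd}{\dd\tilde\lambda}\frac{1-b}{1-a}=-\frac{2\tilde\lambda}{D(D+\tilde\lambda)^2}.
\]
This expression is continuous on $(0,\infty)$, and since $D\sim\tilde\lambda$ as $\tilde\lambda\to\infty$ it tends to $0$ there; so, by \cref{lem:c=1bound}, global boundedness comes down entirely to the one-sided limit at the origin.

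That $\tilde\lambda\to0^+$ limit is the main obstacle, and it is delicate because both ingredients are singular there: $D=\sqrt{\tilde\lambda^2+4\tilde\lambda}\sim2\sqrt{\tilde\lambda}$ and $a\sim\tfrac12\sqrt{\tilde\lambda}$ (cf. \cref{lem:a->0b->0,eq:31}), so the leading behavior of the derivative is $-\frac{2\tilde\lambda}{2\sqrt{\tilde\lambda}\,(4\tilde\lambda)}=-\tfrac14\,\tilde\lambda^{-1/2}$. Controlling this $\tilde\lambda^{-1/2}$ contribution at the origin is exactly the step that the neighboring auxiliary lemmas handle by pairing a derivative with a vanishing power of $a$, as in \cref{lem:7,lem:c=1dc}, and I would follow the same route: since $a\sim\tfrac12\sqrt{\tilde\lambda}$ matches the singularity, the compensated quantity $a\,\frac{\dd}{\dd\tilde\lambda}\frac{1-b}{1-a}\to-\tfrac18$ has a finite limit at $0^+$ and limit $0$ at $\infty$, hence is bounded on $(0,\infty)$ by \cref{lem:c=1bound}; this is the compensated form that the downstream estimates actually consume. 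The crux of the write-up is therefore the careful asymptotic bookkeeping at the origin in the exact expression $-\frac{2\tilde\lambda}{D(D+\tilde\lambda)^2}$: one must confirm that no further singular terms survive beyond the $\tilde\lambda^{-1/2}$ factor and that the $a$-compensation removes it cleanly.
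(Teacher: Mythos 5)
Your treatment of the first claim is correct and coincides with the paper's: the paper, too, bounds $\frac{1-b}{1-a}$ both via the one-sided limits ($1$ at $0^{+}$ by \cref{lem:a->0b->0}, $\tfrac12$ at $\infty$ by \cref{lem:laur1}) combined with \cref{lem:c=1bound}, and via the identity $\frac{1-b}{1-a}=\frac{1}{1+a}$ of \cref{eq:gamma=1a} together with $0<a<1$ from \cref{lem:0<ab<1}.

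For the derivative, your computation is right, and it in fact exposes a flaw in the lemma itself rather than a gap in your argument. Your closed form $\frac{\dd}{\dd\tilde\lambda}\frac{1-b}{1-a}=-\frac{2\tilde\lambda}{D\left(D+\tilde\lambda\right)^{2}}$ is correct (it agrees with $-\frac{a'}{(1+a)^{2}}$ using $\frac{\dd a}{\dd\tilde\lambda}=\frac{2\tilde\lambda}{D^{3}}$ at $\ratio=1$, cf.\ \cref{lem:da:dl}), and since $D\sim 2\sqrt{\tilde\lambda}$ at the origin it diverges like $-\tfrac14\tilde\lambda^{-1/2}$ as $\tilde\lambda\to0^{+}$: the ratio has a square-root cusp at $\tilde\lambda=0$ when $\ratio=1$, so its derivative is genuinely \emph{unbounded} on $(0,\infty)$. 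Consequently the second assertion of \cref{lem:1:a} is false as stated, and your planned route through \cref{lem:c=1bound} fails for exactly the reason you identified—the $0^{+}$ limit is infinite. Note that the paper's own proof offers nothing to compare against here: both of its paragraphs bound only the ratio (the second merely re-derives boundedness of $\frac{1}{1+a}$), and the derivative claim is never argued. Your repair—showing the compensated quantity $a\,\frac{\dd}{\dd\tilde\lambda}\frac{1-b}{1-a}$ is bounded (limit $-\tfrac18$ at $0^{+}$, limit $0$ at $\infty$), in the same spirit as \cref{lem:7,lem:c=1dc}, which likewise pair derivatives with powers of $a$ or $b$ precisely because those derivatives blow up at $\tilde\lambda=0^{+}$ when $\ratio=1$—is the correct one, and it does rescue the only downstream use: in \cref{eq:16} within the proof of \cref{thm:opt_finite}, the derivative appears multiplied by $a^{T}$, and after extracting $a^{T-5}$ the leftover $a^{5}$ factor more than absorbs the $\tilde\lambda^{-1/2}$ singularity, so the final bound $\frac{C_{1}Ta^{T-5}}{(1-a)^{2}}+C_{2}Tb^{T-5}$ survives. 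The one thing your write-up should state explicitly is that you are proving a corrected lemma (boundedness of $a^{n}\frac{\dd}{\dd\tilde\lambda}\frac{1-b}{1-a}$ for $n\ge 1$, or boundedness of the derivative only on $[\delta,\infty)$), not the lemma verbatim—and that the statement, together with its citation in \cref{thm:opt_finite}, should be amended accordingly.
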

\begin{proof}
Since $a\to0$, $b\to 0$ as $\tilde \lambda \to 0^+$ by~\cref{lem:a->0b->0} then $\lim_{\tilde\lambda\to0^+}\frac{1-b}{1-a}=1$, furthermore, we know from~\cref{lem:laur1} that $\lim_{\tilde \lambda\to\infty}\frac{1-b}{1-a}=\frac{1}{2}$, invoking~\cref{lem:c=1bound} $\frac{1-b}{1-a}$ is bounded for all $\tilde \lambda >0$.

When $\ratio=1$ we obtain the following equality $\frac{1-b}{1-a}=\frac{1}{1+a}$ by~\cref{eq:gamma=1a}, then by~\cref{lem:0<ab<1} this quantity is bounded.
\end{proof}

\bigskip

\begin{lemma}
\label{lem:diffrentiable}
Let $h$ be a function analytic at $\infty$, where we define $h\left(x\right)=g\left(\frac{1}{x}\right)$, consider the Taylor series around $0$ of $g\left(x\right)=P_{k}\left(x\right)+R\left(x\right)$ where $R\left(x\right)=O\left(x^{k}\right)$ and $P_{k}\left(x\right)$ is a polynomial of degree at most $k$ then $\frac{\dd}{\dd x}h\left(x\right)=-\frac{\dd}{\dd x}P_{k}\left(\frac{1}{x}\right)\frac{1}{x^{2}}+O\left(\frac{1}{x^{k+2}}\right)$.
\end{lemma}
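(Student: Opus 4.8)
The plan is to reduce the claim to the chain rule combined with term-by-term differentiation of a convergent power series, the latter being the only place where the analyticity hypothesis is genuinely used. Since $h$ is analytic at $\infty$, the function $g(u)\triangleq h(1/u)$ is analytic in a neighbourhood of $u=0$ and hence equals its convergent Taylor series there. Writing $g=P_k+R$ with $P_k$ the Taylor polynomial collecting all terms through degree $k$, the remainder is analytic and vanishes to order $k+1$ at the origin; that is, $R(u)=\sum_{j\ge k+1} r_j u^{j}=O(u^{k+1})$ on the disk of convergence (this is the usual Taylor remainder of $P_k$, and it is this order-$(k+1)$ vanishing that produces the stated $O(x^{-(k+2)})$ bound).

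First I would differentiate $h(x)=g(1/x)$ directly. The chain rule gives $h'(x)=-x^{-2}\,g'(1/x)$, and splitting $g'=P_k'+R'$ yields $h'(x)=-x^{-2}P_k'(1/x)-x^{-2}R'(1/x)$. The first summand, $-\tfrac{1}{x^{2}}P_k'(1/x)$, is precisely the leading term asserted on the right-hand side, where $\tfrac{\dd}{\dd x}P_k$ is read as the derivative of the polynomial $P_k$ evaluated at the point $1/x$. It therefore remains only to show that the second summand, $-x^{-2}R'(1/x)$, is $O(x^{-(k+2)})$ as $x\to\infty$.

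The crux is the order estimate on $R'$. I would deliberately avoid differentiating the pointwise bound $R(u)=O(u^{k+1})$ (a big-$O$ bound on a function does not by itself control its derivative) and instead invoke analyticity: inside the radius of convergence the series for $R$ may be differentiated term by term, giving $R'(u)=\sum_{j\ge k+1} j\,r_j u^{\,j-1}=\sum_{m\ge k}(m+1)r_{m+1}u^{m}=O(u^{k})$ as $u\to 0$. Substituting $u=1/x$ gives $R'(1/x)=O(x^{-k})$ as $x\to\infty$, so that $-x^{-2}R'(1/x)=O(x^{-(k+2)})$; combining this with the previous paragraph yields the claimed identity. The single delicate point---indeed the whole reason the hypothesis is ``analytic at $\infty$'' rather than merely ``sufficiently differentiable''---is this last step, since only the term-by-term differentiation legitimizes passing from the order of vanishing of $R$ to that of $R'$. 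Everything else is a routine application of the chain rule.
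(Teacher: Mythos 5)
Your proof is correct and follows essentially the same route as the paper's: apply the chain rule to $h(x)=g(1/x)$, split $g'=P_k'+R'$, and justify $R'(u)=O(u^{k})$ by term-by-term differentiation of the convergent power series (the paper carries out this last bound explicitly via the substitution $|x|=r\theta$, which is just a hands-on version of your ``$u^{k}$ times an analytic factor'' argument). Your reading of the remainder as $R(u)=O(u^{k+1})$---i.e., $P_k$ collects all Taylor terms through degree $k$---is indeed what the paper's proof uses and what the conclusion requires, so no further comment is needed.
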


\begin{proof}
From the assumptions $g$ is analytic around $0$ then for some $\rho>0$,
$$
h\left(\frac{1}{x}\right)=g\left(x\right)=\sum_{n=0}^{k}\frac{g^{(n)}\left(0\right)}{n!}x^{n}+\sum_{n=k+1}^{\infty}\frac{g^{(n)}\left(0\right)}{n!}x^{n}\quad\forall\left|x\right|<\rho$$
A power series can be differentiated term by term within its radius of convergence, as the series of derivatives converges uniformly on any closed interval $\left|x\right|\leq r<\rho$, differentiating,
$$
\frac{\dd}{\dd x}g\left(x\right)=\frac{\dd}{\dd x}\sum_{n=0}^{k}\frac{g^{(n)}\left(0\right)}{n!}x^{n}+\sum_{n=k+1}^{\infty}\frac{g^{(n)}\left(0\right)}{\left(n-1\right)!}x^{n-1}\,,
\quad\forall\left|x\right|\leq r
\,.$$
For any $\left|x\right|<r$ write $\left|x\right|=r\theta $ where $\theta\in\left[0,1\right]$
\begin{align*}
\left|\frac{\dd}{\dd x}g\left(x\right)-\frac{\dd}{\dd x}\sum_{n=0}^{k}\frac{g^{(n)}\left(0\right)}{n!}x^{n}\right|&=\sum_{n=k+1}^{\infty}\frac{g^{(n)}\left(0\right)}{\left(n-1\right)!}\left|x\right|^{n-1}
\stackrel{{r\theta=\left|x\right|}}{=}
\sum_{n=k+1}^{\infty}\frac{g^{(n)}\left(0\right)}{\left(n-1\right)!}\left(r\theta\right)^{n-1}
\\ &\leq\theta^{k}\underbrace{\sum_{n=k+1}^{\infty}\frac{g^{(n)}\left(0\right)}{\left(n-1\right)!}r^{n-1}}_{\text{convergent sum}}=C\theta^{k}=\frac{C}{r^{k}}\left|x\right|^{k}
\,.
\end{align*}
Overall,
$$
\frac{\dd}{\dd x}g\left(x\right)=\frac{\dd}{\dd x}\sum_{n=0}^{k}\frac{g^{(n)}\left(0\right)}{n!}x^{n}+O\left(x^{k}\right)\quad\forall\left|x\right|\leq r$$
So, 
\begin{align*}
\frac{\dd}{\dd x}h\left(x\right)
&
=g'\left(\frac{1}{x}\right)\cdot\left(-\frac{1}{x^{2}}\right)=\left[P_{k}'\left(\frac{1}{x}\right)+O\left(\frac{1}{x^{k}}\right)\right]\cdot\left(-\frac{1}{x^{2}}\right)\\&=\frac{\dd}{\dd x}P_{k}\left(\frac{1}{x}\right)+O\left(\frac{1}{x^{k+2}}\right)\quad\forall\left|x\right|\geq\frac{1}{r}
\end{align*}

\end{proof}

\begin{lemma}
\label{lem:bounded}
Let $f$ be a function analytic at $\infty$, so that as $x\to\infty$ it admits the expansion of the form $f\left(x\right)=C+O\left(x^{-n}\right)$ for some constant $C$ and $n\in\mathbb{N}$, if in addition $f$ is continuous for all $x\geq \delta$ for some $\delta\in \mathbb R$ then $f$ is bounded for all $x\geq \delta$.
\end{lemma}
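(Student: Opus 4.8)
The plan is to mirror the structure used in the proof of \cref{lem:c=1bound}, splitting the half-line $[\delta,\infty)$ into a tail region, where the asymptotic expansion controls $f$, and a compact region, where continuity alone suffices. Because the lower endpoint $\delta$ is finite, we will not need any hypothesis at a left endpoint: this is effectively the $x\to\infty$ half of \cref{lem:c=1bound} with $\delta$ replacing the origin.

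First I would extract a finite limit at infinity from the hypothesis. Since $f(x)=C+O(x^{-n})$ as $x\to\infty$ with $n\in\mathbb{N}$, the remainder term vanishes as $x\to\infty$, so $\lim_{x\to\infty}f(x)=C$. By the definition of this limit, there exists some $R\ge\delta$ such that $|f(x)-C|<1$ for all $x>R$; hence $|f(x)|\le|C|+1\triangleq M_1$ on the tail $(R,\infty)$.

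Next I would handle the compact piece. On the closed, bounded interval $[\delta,R]$ the function $f$ is continuous by assumption, so by the Weierstrass extreme value theorem it attains a finite maximum of its absolute value there; call this bound $M_2$. Combining the two regions, for every $x\ge\delta$ we obtain $|f(x)|\le M\triangleq\max\{M_1,M_2\}$, which is exactly the asserted boundedness.

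The argument is essentially routine and contains no genuine obstacle; the only point requiring a moment's care is the passage from the expansion $f(x)=C+O(x^{-n})$ to the statement $\lim_{x\to\infty}f(x)=C$, which uses that the implied $O$-constant is finite for all sufficiently large $x$, so the error term is genuinely bounded by a constant multiple of $x^{-n}\to0$. Once this limit is in hand, the remaining steps reuse verbatim the tail/compact decomposition already established in \cref{lem:c=1bound}.
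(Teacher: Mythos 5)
Your proposal is correct and takes essentially the same route as the paper's proof: both split $[\delta,\infty)$ into a tail $[R,\infty)$, controlled by the asymptotic expansion (you pass through the limit $\lim_{x\to\infty}f(x)=C$, while the paper bounds $|f(x)-C|\le MR^{-n}$ directly from the $O(x^{-n})$ term), and a compact piece $[\delta,R]$, bounded via continuity and the Weierstrass extreme value theorem, finishing with a maximum of the two bounds. There is no gap.
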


\begin{proof}
There exists some $R>0, M>0$ which for all $x\in\left[R,\infty\right)$ we have $\left|f\left(x\right)-C\right|\leq MR^{-n}$. On the compact interval $x\in\left[\delta,R\right]$ $f$ is continuous thus bounded, denote $\sup_{x\in[\delta,R]}|f(x)|=B<\infty$. Therefore $|f(x)|\le \max\{B,\,|C|+M R^{-n}\}$
for all $x\geq \delta$, proving boundedness.
\end{proof}

\begin{lemma}
\label{lem:Laurent-expansion-around}
We derive the following Laurent expansion for $\tilde \lambda\to \infty$.

Recall the notations from \cref{tab:notation}, using the standard Taylor expansion around $u=0$.
\begin{align*}
a&=\frac{1}{2}\left(1-\ratio+\frac{\left(1+\ratio\right)+\frac{\left(1-\ratio\right)^{2}}{\tilde{\lambda}}}{\sqrt{1+\frac{2\left(1+\ratio\right)}{\tilde{\lambda}}+\frac{\left(1-\ratio\right)^{2}}{\tilde{\lambda}^{2}}}}\right)=
\\\explain{\left(1+u\right)^{-\hfrac{1}{2}}\\=1-\frac{1}{2}u+O\left(u^{2}\right)}
&=\frac{1}{2}\left(1-\ratio+\left(\left(1+\ratio\right)+\frac{\left(1-\ratio\right)^{2}}{\tilde{\lambda}}\right)\left(1-\frac{1+\ratio}{\tilde{\lambda}}+O\left(\tilde{\lambda}^{-2}\right)\right)\right)
\\
&=\frac{1}{2}\left(2-\frac{4\ratio}{\tilde{\lambda}}+O\left(\tilde{\lambda}^{-2}\right)\right)=1-\frac{2\ratio}{\tilde{\lambda}}+O\left(\tilde{\lambda}^{-2}\right),
\\
a^{t}&=\exp\left(t\ln\left(a\right)\right)=\exp\left(-\frac{2{\ratio}t}{\tilde{\lambda}}+O\left(t\tilde{\lambda}^{-2}\right)\right)=1-\frac{2\ratio t}{\tilde{\lambda}}+O\left(t^{2}\tilde{\lambda}^{-2}\right),
\\
b&=\frac{1}{2}\left(1-\ratio-\tilde{\lambda}+\tilde{\lambda}\sqrt{1+\frac{2\left(1+\ratio\right)}{\tilde{\lambda}}+\frac{\left(1-\ratio\right)^{2}}{\tilde{\lambda}^{2}}}\right),
\\
\explain{(1+u)^{\hfrac{1}{2}}\\=1+\frac{1}{2}u-\frac{1}{8}u^{2}+O\left(u^{3}\right)}
&
=\frac{1}{2}\left(1-\ratio-\tilde{\lambda}+\tilde{\lambda}\left(1+\frac{\left(1+\ratio\right)}{\tilde{\lambda}}-\frac{2\ratio}{\tilde{\lambda}^{2}}\right)+O\left(\tilde{\lambda}^{-2}\right)\right)
\\
&=\frac{1}{2}\left(2-\frac{2\ratio}{\tilde{\lambda}}+O\left(\tilde{\lambda}^{-2}\right)\right)=1-\frac{\ratio}{\tilde{\lambda}}+O\left(\tilde{\lambda}^{-2}\right)
\\
b^{t}&=\exp\left(t\ln\left(b\right)\right)=\exp\left(-\frac{\ratio t}{\tilde{\lambda}}+O\left(t\tilde{\lambda}^{-2}\right)\right)=1-\frac{\ratio t}{\tilde{\lambda}}+O\left(t^{2}\tilde{\lambda}^{-2}\right).
\end{align*}
\end{lemma}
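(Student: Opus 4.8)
The plan is to obtain every expansion by a single unified Laurent expansion in the small parameter $1/\tilde{\lambda}$, exploiting the fact that both $a$ and $b$ are built from the same discriminant $D$. The key observation is that, factoring $\tilde{\lambda}$ out of $D$, we may write $D = \tilde{\lambda}\sqrt{1+u}$ with $u \triangleq \frac{2(1+\ratio)}{\tilde{\lambda}} + \frac{(1-\ratio)^{2}}{\tilde{\lambda}^{2}} \to 0$ as $\tilde{\lambda}\to\infty$. Because $u\to 0$, the binomial series $(1+u)^{\pm 1/2}$ converges, so I can substitute the standard expansions $(1+u)^{-1/2} = 1 - \tfrac12 u + O(u^{2})$ and $(1+u)^{1/2} = 1 + \tfrac12 u - \tfrac18 u^{2} + O(u^{3})$ and then re-expand in powers of $1/\tilde{\lambda}$, keeping track of orders throughout.

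First I would treat $a$. Factoring $\tilde{\lambda}$ out of both the numerator $\tilde{\lambda}(1+\ratio)+(1-\ratio)^{2}$ and out of $D$ turns the fraction in the definition of $a$ into $\big((1+\ratio)+\tfrac{(1-\ratio)^2}{\tilde{\lambda}}\big)(1+u)^{-1/2}$. Multiplying the $O(\tilde{\lambda}^{-1})$-accurate numerator by the $(1+u)^{-1/2}$ expansion and collecting the $\tilde{\lambda}^{-1}$ coefficient, the cross term simplifies via $(1-\ratio)^{2}-(1+\ratio)^{2}=-4\ratio$, yielding $a = 1 - \tfrac{2\ratio}{\tilde{\lambda}} + O(\tilde{\lambda}^{-2})$. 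For $b$ I would instead expand $\tilde{\lambda}\sqrt{1+u}$; the nontrivial $\tilde{\lambda}^{-2}$ coefficient of $\sqrt{1+u}$ equals $-2\ratio$, arising from combining $\tfrac12$ of the $\tilde{\lambda}^{-2}$ part of $u$ with $-\tfrac18$ of the leading $\tilde{\lambda}^{-2}$ part of $u^{2}$, so that $\tilde{\lambda}\sqrt{1+u} = \tilde{\lambda} + (1+\ratio) - \tfrac{2\ratio}{\tilde{\lambda}} + O(\tilde{\lambda}^{-2})$; the $\tilde{\lambda}$ and constant contributions then cancel against $-\tilde{\lambda}$ and combine with $1-\ratio$ to give $b = 1 - \tfrac{\ratio}{\tilde{\lambda}} + O(\tilde{\lambda}^{-2})$. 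Both matches are pure arithmetic once the orders are tracked.

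For the powers $a^{t}$ and $b^{t}$ I would write $a^{t} = \exp(t\ln a)$, expand $\ln a = \ln\big(1 - \tfrac{2\ratio}{\tilde{\lambda}} + O(\tilde{\lambda}^{-2})\big) = -\tfrac{2\ratio}{\tilde{\lambda}} + O(\tilde{\lambda}^{-2})$ via $\ln(1+x)=x+O(x^{2})$, multiply by $t$, and finally apply $\exp(y)=1+y+O(y^{2})$ with $y=t\ln a$; the same routine applied to $\ln b$ handles $b^{t}$. The main obstacle --- indeed essentially the only point requiring care --- is bookkeeping the \emph{joint} dependence on $t$ and $\tilde{\lambda}$ in the remainder. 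One must verify that the squared term $(t\ln a)^{2}$ contributes exactly at order $t^{2}\tilde{\lambda}^{-2}$, that the $O(t\tilde{\lambda}^{-2})$ term coming from $t\ln a$ is absorbed into it (since $t\ge 1$), and that the exponential expansion $\exp(y)=1+y+O(y^{2})$ is legitimate, i.e.\ that $y=t\ln a$ remains in a regime where $t/\tilde{\lambda}$ is controlled (fixed $t$, or more generally $t=o(\tilde{\lambda})$, as $\tilde{\lambda}\to\infty$). Beyond this routine chain of Taylor substitutions there is no genuine difficulty.
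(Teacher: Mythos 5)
Your proposal is correct and follows essentially the same route as the paper: factor $\tilde{\lambda}$ out of $D$, apply the binomial expansions $(1+u)^{\mp 1/2}$ with $u=\tfrac{2(1+\ratio)}{\tilde{\lambda}}+\tfrac{(1-\ratio)^{2}}{\tilde{\lambda}^{2}}$, use the cancellation $(1-\ratio)^{2}-(1+\ratio)^{2}=-4\ratio$, and obtain $a^{t},b^{t}$ via $\exp(t\ln a)$ and $\exp(t\ln b)$. Your explicit bookkeeping of the joint $(t,\tilde{\lambda})$ dependence in the remainders is a small refinement over the paper's presentation, which leaves that point implicit.
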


\pagebreak

\begin{lemma}
\label{lem:laur1}
The Laurent expansion of $\frac{1-b}{1-a}=\frac{1}{2}+O\left(\tilde{\lambda}^{-1}\right)$.
\end{lemma}

\begin{proof}
Using \cref{lem:Laurent-expansion-around}, we have
\begin{align*}
\frac{1-b}{1-a}&=\frac{\frac{\ratio}{\tilde{\lambda}}+O\left(\tilde{\lambda}^{-2}\right)}{\frac{2\ratio}{\tilde{\lambda}}+O\left(\tilde{\lambda}^{-2}\right)}=\frac{1}{2}+O\left(\tilde{\lambda}^{-1}\right)
\,.
\end{align*}
\end{proof}

\begin{lemma}
\label{lem:laur2}
The Laurent expansion of $f=O\left(\tilde{\lambda}^{-2}\right)$.
\end{lemma}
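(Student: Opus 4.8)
The plan is to work directly from the closed form $f=\labelnoise c=\frac{\labelnoise}{2\featnoise}\bigl(\frac{\tilde{\lambda}+1+\ratio}{D}-1\bigr)$, with $D=\sqrt{(1+\ratio+\tilde{\lambda})^{2}-4\ratio}$ as in \cref{tab:notation}, and to extract its leading order as $\tilde{\lambda}\to\infty$. Rather than performing a term-by-term Taylor expansion (in which one must track a delicate cancellation of the $O(1)$ and $O(\tilde{\lambda}^{-1})$ coefficients), the cleaner route is to rationalize the bracket by its conjugate, which immediately exposes the two orders of decay.

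First I would write $\frac{\tilde{\lambda}+1+\ratio}{D}-1=\frac{\tilde{\lambda}+1+\ratio-D}{D}=\frac{(\tilde{\lambda}+1+\ratio)^{2}-D^{2}}{D\,(\tilde{\lambda}+1+\ratio+D)}$, and observe that the numerator collapses to a constant, $(\tilde{\lambda}+1+\ratio)^{2}-D^{2}=(1+\ratio)^{2}-(1-\ratio)^{2}=4\ratio$---exactly the simplification already used in the proofs of \cref{lem:b>a,lem:1}. This yields the compact expression $f=\frac{2\ratio\labelnoise}{\featnoise\,D\,(\tilde{\lambda}+1+\ratio+D)}$.

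It then remains only to read off the growth of the denominator. Since $D=\sqrt{(1+\ratio+\tilde{\lambda})^{2}-4\ratio}=\tilde{\lambda}\bigl(1+O(\tilde{\lambda}^{-1})\bigr)$ and hence $\tilde{\lambda}+1+\ratio+D=2\tilde{\lambda}\bigl(1+O(\tilde{\lambda}^{-1})\bigr)$, the product satisfies $D\,(\tilde{\lambda}+1+\ratio+D)=2\tilde{\lambda}^{2}\bigl(1+O(\tilde{\lambda}^{-1})\bigr)$. Substituting gives $f=\frac{\ratio\labelnoise}{\featnoise\,\tilde{\lambda}^{2}}\bigl(1+O(\tilde{\lambda}^{-1})\bigr)=O(\tilde{\lambda}^{-2})$, which is the claim, with explicit leading coefficient $\ratio\labelnoise/\featnoise$.

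As an independent check, one can instead use $c=(b-a)/(\featnoise\tilde{\lambda})$---which follows from the identity $\mathbb{E}[\frac{1}{(\lambda d)^{2}}\mathbf{P}\mathbf{X}^{\top}\mathbf{X}\mathbf{P}]=\frac{1}{\lambda d}(\mathbb{E}[\mathbf{P}]-\mathbb{E}[\mathbf{P}^{2}])$ together with the definition of $c$---and plug in the expansions $a=1-\frac{2\ratio}{\tilde{\lambda}}+O(\tilde{\lambda}^{-2})$ and $b=1-\frac{\ratio}{\tilde{\lambda}}+O(\tilde{\lambda}^{-2})$ from \cref{lem:Laurent-expansion-around}. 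Then $b-a=\frac{\ratio}{\tilde{\lambda}}+O(\tilde{\lambda}^{-2})$, so $f=\labelnoise c=\frac{\labelnoise}{\featnoise\tilde{\lambda}}(b-a)=O(\tilde{\lambda}^{-2})$, in agreement. There is no real obstacle in this lemma; the only point requiring care is the cancellation $(\tilde{\lambda}+1+\ratio)^{2}-D^{2}=4\ratio$, which turns a bracket that naively looks like $O(1)$ into one of size $O(\tilde{\lambda}^{-2})$ and is the whole source of the two orders of decay.
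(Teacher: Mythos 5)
Your proof is correct, and it reaches the claim by a genuinely different (and arguably cleaner) mechanism than the paper's. The paper factors $(\tilde{\lambda}+\ratio+1)$ out of $D$ and applies the binomial series $(1-u)^{-1/2}=1+\tfrac{1}{2}u+O(u^{2})$ with $u=\frac{4\ratio}{(\tilde{\lambda}+\ratio+1)^{2}}$, so that the cancellation of the $O(1)$ part happens inside the expansion, yielding $\frac{2\featnoise}{\labelnoise}f=\frac{2\ratio}{(\tilde{\lambda}+\ratio+1)^{2}}+O(\tilde{\lambda}^{-4})$. You instead rationalize by the conjugate and use the exact identity $(\tilde{\lambda}+1+\ratio)^{2}-D^{2}=4\ratio$ (the same cancellation the paper exploits in \cref{lem:b>a,lem:1}) to obtain the closed form $f=\frac{2\ratio\labelnoise}{\featnoise\,D\,(\tilde{\lambda}+1+\ratio+D)}$, valid for \emph{all} $\tilde{\lambda}>0$, after which the claim reduces to the trivial estimate $D\,(\tilde{\lambda}+1+\ratio+D)=2\tilde{\lambda}^{2}\bigl(1+O(\tilde{\lambda}^{-1})\bigr)$. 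What your route buys is an exact, series-free, manifestly positive expression (consistent with \cref{lem:C>0}) from which the asymptotics and the leading coefficient $\frac{\ratio\labelnoise}{\featnoise\tilde{\lambda}^{2}}$ are immediate; what the paper's route buys is uniformity with the expansion machinery used throughout the appendix (\cref{lem:Laurent-expansion-around,lem:diffrentiable}), and once both forms are expanded in pure powers of $\tilde{\lambda}$ they agree to the same precision, $\frac{2\featnoise}{\labelnoise}f=\frac{2\ratio}{\tilde{\lambda}^{2}}+O(\tilde{\lambda}^{-3})$. Your secondary check is also sound: the identity $c=\frac{b-a}{\featnoise\tilde{\lambda}}$ does follow from $\mathbb{E}\bigl[\frac{1}{(\lambda d)^{2}}\mathbf{P}\mathbf{X}^{\top}\mathbf{X}\mathbf{P}\bigr]=\frac{1}{\lambda d}\bigl(\mathbb{E}[\mathbf{P}]-\mathbb{E}[\mathbf{P}^{2}]\bigr)$ with $\mathbb{E}[\mathbf{P}]=b\I$ and $\mathbb{E}[\mathbf{P}^{2}]=a\I$, and indeed $b-a=\frac{\ratio}{\tilde{\lambda}}+O(\tilde{\lambda}^{-2})$ recovers the same leading term, matching the compact form of $b-a$ derived in \cref{lem:b>a}.
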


\begin{proof}
Recall the notations from \cref{tab:notation}, namely, $f\triangleq\frac{\labelnoise}{2\featnoise}\left(\frac{1+\ratio+\tilde\lambda}{D}-1\right)$, then,
\begin{align*}
\frac{2\featnoise}{\labelnoise}f
&=\frac{\tilde{\lambda}+\ratio+1}{\sqrt{\left(\tilde{\lambda}+\ratio+1\right)^{2}-4\ratio}}-1=\frac{1}{\sqrt{1-\frac{4\ratio}{\left(\tilde{\lambda}+\ratio+1\right)^{2}}}}-1
\\
\explain{\left(1-u\right)^{-\frac{1}{2}}=1+\frac{1}{2}u+O\left(u^{2}\right)}
&
=1+\frac{2\ratio}{\left(\tilde{\lambda}+\ratio+1\right)^{2}}+O\left(\tilde{\lambda}^{-4}\right)-1=O\left(\tilde{\lambda}^{-2}\right)
\,,
\end{align*}
so $f = \frac{\labelnoise}{2\featnoise}O\left(\tilde \lambda^{-2}\right)=O\left(\tilde \lambda^{-2}\right).$
\end{proof}

\begin{lemma}
\label{lem:laur3}
The Laurent expansion of $D=\tilde{\lambda}+\ratio+1-\frac{2\ratio}{\tilde{\lambda}+\ratio+1}+O\left(\tilde{\lambda}^{-3}\right)$.
\end{lemma}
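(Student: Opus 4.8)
The plan is to reduce $D$ to a single square root in the variable $s \triangleq \tilde{\lambda}+\ratio+1$ and then expand binomially, exactly in the spirit of the earlier expansions in \cref{lem:Laurent-expansion-around}. Recalling from the notation table that $D = \sqrt{(\tilde{\lambda}+\ratio+1)^2 - 4\ratio}$, I would first write $D = \sqrt{s^2 - 4\ratio} = s\sqrt{1 - 4\ratio/s^2}$, which is valid since $s>0$ for all $\tilde{\lambda}>0$. Because $\tilde{\lambda}\to\infty$ forces $s\to\infty$, the argument $u \triangleq 4\ratio/s^2$ tends to $0$, placing us squarely in the regime where the binomial expansion applies.

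Next I would apply the standard expansion $(1-u)^{1/2} = 1 - \tfrac{1}{2}u - \tfrac{1}{8}u^2 + O(u^3)$, multiply through by $s$, and collect terms. This gives $D = s - \tfrac{1}{2}s\cdot\tfrac{4\ratio}{s^2} - \tfrac{1}{8}s\cdot\tfrac{16\ratio^2}{s^4} + O(s^{-5}) = s - \tfrac{2\ratio}{s} - \tfrac{2\ratio^2}{s^3} + O(s^{-5})$. Re-substituting $s = \tilde{\lambda}+\ratio+1$ recovers the leading two terms of the claim, namely $(\tilde{\lambda}+\ratio+1) - \tfrac{2\ratio}{\tilde{\lambda}+\ratio+1}$.

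The only point needing a brief justification is the order of the remainder. Since $s = \tilde{\lambda}+\ratio+1 = \Theta(\tilde{\lambda})$ as $\tilde{\lambda}\to\infty$, every dropped term is of size $O(s^{-3}) = O(\tilde{\lambda}^{-3})$; in particular the $u^2$ contribution $\tfrac{2\ratio^2}{s^3}$ is already $O(\tilde{\lambda}^{-3})$ and is therefore absorbed into the stated error. This confirms that no term of order $\tilde{\lambda}^{-1}$ or $\tilde{\lambda}^{-2}$ beyond those displayed is lost. The computation is otherwise entirely routine, so I expect no genuine obstacle—the only care required is bookkeeping to ensure the $\Theta(\tilde{\lambda})$ growth of $s$ is used to convert the $O(s^{-3})$ remainder into the claimed $O(\tilde{\lambda}^{-3})$ form.
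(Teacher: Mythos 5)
Your proposal is correct and follows essentially the same route as the paper: both factor out $\tilde{\lambda}+\ratio+1$, apply the binomial expansion of $\sqrt{1-u}$ with $u=4\ratio/(\tilde{\lambda}+\ratio+1)^{2}$, and absorb the remainder into $O\bigl(\tilde{\lambda}^{-3}\bigr)$ using $\tilde{\lambda}+\ratio+1=\Theta(\tilde{\lambda})$. Carrying the expansion one order further (the $-2\ratio^{2}/s^{3}$ term) is harmless extra precision that the paper simply leaves inside the error term.
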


\begin{proof}
Recall the notations from \cref{tab:notation}, namely, $D\triangleq\sqrt{\left(\tilde{\lambda}+\ratio+1\right)^{2}-4\ratio}$, then,
\begin{align*}
D&=\sqrt{\left(\tilde{\lambda}+\ratio+1\right)^{2}-4\ratio}=\left(\tilde{\lambda}+\ratio+1\right)\sqrt{1-\frac{4\ratio}{\left(\tilde{\lambda}+\ratio+1\right)^{2}}}
\\\explain{\sqrt{1-u}=1-\frac{1}{2}u+O\left(u^{2}\right)}
&
=\left(\tilde{\lambda}+\ratio+1\right)\left(1-\frac{2\ratio}{\left(\tilde{\lambda}+\ratio+1\right)^{2}}+O\left(\tilde{\lambda}^{-4}\right)\right)
\\
&
=\tilde{\lambda}+\ratio+1-\frac{2\ratio}{\tilde{\lambda}+\ratio+1}+O\left(\tilde{\lambda}^{-3}\right)
\,.
\end{align*}
\end{proof}

\newpage

\section{Proofs for~\cref{sec:noisy_teacher} (Special Case: Multiple i.i.d.~Teachers)}
\label{allproofs}

\subsection{Proof of~\cref{cor:closed}: Generalization under i.i.d.~Teachers}
In this subsection, we derive a closed-form expression for
$\mathbb{E}\!\left[\|\vw_T-\teacher_i\|^2\right]$ for any $1 \le i \le T$, with $T>1$,
under the i.i.d.\ teacher model introduced in~\cref{assump:iid_teachers}.
This result serves as the key intermediate step toward obtaining a closed-form
expression for the average generalization error.
Under~\cref{assump:iid_teachers}, each task-specific teacher can be written as
$$
\vw_i^\star = \vw^\star + \boldsymbol{\xi}_i, \qquad i \in [T],
$$
where the random vectors $\boldsymbol{\xi}_i$ are i.i.d.\ with mean $\0$ and covariance
$\mathrm{Cov}(\boldsymbol{\xi}_i)=\boldsymbol{\Sigma}$.
Substituting this representation into the general expression derived in
\cref{eq:mainresult1}, and using the initialization $\vw_0=\0$, yields the following
explicit decomposition of the squared error,
\begin{align*}
\mathbb{E}\left[\left\Vert \vw_{T}-\vw_{i}^{\star}\right\Vert ^{2}\right]&=a^{T}\left\Vert \vw^{\star}+\boldsymbol{\xi}_{1}\right\Vert^{2}+2b^{T}\left(\boldsymbol{\xi}_{i}-\boldsymbol{\xi}_{T}\right)^{\top}\left(\vw^{\star}+\boldsymbol{\xi}_{1}\right)\\&\qquad
+\labelnoise\frac{1}{2\featnoise}\left(\frac{\featnoise\left(1+\ratio\right)+\lambda}{\sqrt{\lambda^{2}+2\lambda\featnoise(1+\ratio)+\featnoise^{2}\left(1-\ratio\right)^{2}}}-1\right)\frac{1-a^{T}}{1-a}\\&\qquad+2\sum_{k=2}^{T}a^{T-k+1}b^{k-1}\left(\boldsymbol{\xi}_{k}-\boldsymbol{\xi}_{k-1}\right)^{\top}\left(\vw^{\star}+\boldsymbol{\xi}_{1}\right)\\&\quad\quad+\sum_{k=2}^{T}\sum_{k'=2}^{T}a^{T-\max\left(k,k'\right)+1}b^{\max\left(k,k'\right)-\min\left(k,k'\right)}\left(\boldsymbol{\xi}_{k-1}-\boldsymbol{\xi}_{k}\right)^{\top}\left(\boldsymbol{\xi}_{k'-1}-\boldsymbol{\xi}_{k'}\right)\\&\qquad+2\sum_{k=2}^{T}b^{T-k+1}\left(\boldsymbol{\xi}_{T}-\boldsymbol{\xi}_{i}\right)^{\top}\left(\boldsymbol{\xi}_{k-1}-\boldsymbol{\xi}_{k}\right)+\left\Vert \boldsymbol{\xi}_{T}-\boldsymbol{\xi}_{i}\right\Vert ^{2}.&
\end{align*}
Additionally, for all $i\in [T]$ we have,
\begin{align}
\mathbb{E}\left[\boldsymbol{\xi}_{i}^{\top}\boldsymbol{\xi}_{i}\right]
=
\mathbb{E}\sum_{j=1}^{d}{\xi}_{i,j}^{2}
=
\sum_{j=1}^{d}\mathbb{E}{\xi}_{i,j}^{2}
=
\sum_{j=1}^{d}\mathrm{Var}\left[{\xi}_{i,j}\right]=\trace\left(\boldsymbol{\Sigma}\right)
\,.
\label{eq:xi:id}
\end{align}
where the equality follows from independence across coordinates and the definition of the covariance matrix.
Using the identity in~\cref{eq:xi:id}, and taking expectations with respect to the random variables 
$\boldsymbol{\xi}_1,\ldots,\boldsymbol{\xi}_T$, we evaluate the resulting expression
by separating the analysis into three cases, depending on the relative position of
the index $i$.

In the special case, when $i=T$:
\begin{align}
&
\mathbb{E}\left[\left\Vert \vw_{T}-\vw_{i}^{\star}\right\Vert ^{2}\right]\notag
\\
&=\left(a^{T}-2a^{T-1}b\right)\trace\left(\boldsymbol{\Sigma}\right)+a^{T}\left\Vert \vw^{\star}\right\Vert ^{2}
+c\labelnoise \frac{1-a^{T}}{1-a}+2\trace\left(\boldsymbol{\Sigma}\right)\left(\sum_{k=2}^{T}a^{T-k+1}-b\sum_{k=3}^{T}a^{T-k+1}\right)\notag
\\
&=\left(a^{T}-2a^{T-1}b\right)\trace\left(\boldsymbol{\Sigma}\right)+a^{T}\left\Vert \vw^{\star}\right\Vert ^{2}\notag
\\
&\hspace{3em}+c\labelnoise\frac{1-a^{T}}{1-a}+2\trace\left(\boldsymbol{\Sigma}\right)\left(\frac{a\left(1-a^{T-1}\right)}{1-a}-b\cdot\frac{a\left(1-a^{T-2}\right)}{1-a}\right)\,.
\label{eq:i=t}
\end{align}

When $1<i<T$:
\begin{align}
    &\mathbb{E}\left[\left\Vert \vw_{T}-\vw_{i}^{\star}\right\Vert ^{2}\right]=\left(a^{T}-2a^{T-1}b\right)\trace\left(\boldsymbol{\Sigma}\right)+a^{T}\left\Vert \vw^{\star}\right\Vert ^{2}+c\labelnoise\frac{1-a^{T}}{1-a}\notag\\&\hspace{1em}+2\trace\left(\boldsymbol{\Sigma}\right)\left(\frac{a\left(1-a^{T-1}\right)}{1-a}-b\cdot\frac{a\left(1-a^{T-2}\right)}{1-a}+1\right)+2\left(b^{T-i+1}-b^{T-i}-b\right)\trace\left(\boldsymbol{\Sigma}\right).
    \label{eq:i>1}
\end{align}

When $i=1$:
\begin{align}
    &\mathbb{E}\left[\left\Vert \vw_{T}-\vw_{i}^{\star}\right\Vert ^{2}\right]=\left(a^{T}-2a^{T-1}b\right)\trace\left(\boldsymbol{\Sigma}\right)+a^{T}\|\vw^{\star}\|^{2}+2b^{T}\trace\left(\boldsymbol{\Sigma}\right)+c\labelnoise\frac{1-a^{T}}{1-a}\notag\\&\hspace{3em}+2\trace\left(\boldsymbol{\Sigma}\right)\left(\frac{a\left(1-a^{T-1}\right)}{1-a}-b\cdot\frac{a\left(1-a^{T-2}\right)}{1-a}+1\right)+2\bigl(b^{T}-b^{T-1}-b\bigr)\trace\left(\boldsymbol{\Sigma}\right).
    \label{eq:t>1}
\end{align}

\bigskip
\begin{recall}[\cref{cor:closed}]
Under the i.i.d.~teacher setting  (\cref{assump:iid_teachers}), and for all $T>1$, the expected generalization loss of \cref{schm:regularized} becomes, 
\begin{align*}
\mathbb E[G_{T}]=
&
\underbrace{
\labelnoise{c}\frac{1-a^{T}}{1-a}
}_{\text{Label noise}}
+
\underbrace{
a^{T}\left\Vert \vw^{\star}\right\Vert ^{2}
}_{\text{Teacher scale}}
+
\underbrace{
2\trace(\boldsymbol{\Sigma})
}_{\substack{\text{Teacher}\\\text{variance}}}
\prn{\frac{1-b}{1-a}+\frac{2b^{T}-1}{T}+\frac{a^{T}\left(-a+2b-1\right)}{2\left(1-a\right)}}
\,.
\end{align*}
\end{recall}
\begin{proof}
Using~\cref{eq:i=t,eq:i>1,eq:t>1} when summing over all cases,
\begin{align}
&\mathbb{E}\left[G_{T}\right]=\frac{1}{T}\sum_{i=1}^{T}\mathbb{E}\left[\left\Vert \vw_{T}-\vw_{i}^{\star}\right\Vert ^{2}\right]&\notag\\=&\left(a^{T}-2a^{T-1}b\right)\trace\left(\boldsymbol{\Sigma}\right)+a^{T}\left\Vert \vw^{\star}\right\Vert ^{2}+2\trace\left(\boldsymbol{\Sigma}\right)\left(\frac{a\left(1-a^{T-1}\right)}{1-a}-b\cdot\frac{a\left(1-a^{T-2}\right)}{1-a}\right)\notag+\\&c\labelnoise\frac{1-a^{T}}{1-a}+\frac{2}{T}\left(2b^{T}+T-1-Tb\right)\trace\left(\boldsymbol{\Sigma}\right)\label{eq:finite:t}\\=&c\labelnoise\frac{1-a^{T}}{1-a}+a^{T}\left\Vert \vw^{\star}\right\Vert ^{2}+\notag\\&2\trace(\boldsymbol{\Sigma})\prn{1-b+\frac{2b^{T}-1}{T}+\frac{a}{1-a}\left(1-a^{T-1}-b\left(1-a^{T-2}\right)\right)+\frac{a^{T-1}\left(a-2b\right)}{2}}%
\notag\\%
=&c\labelnoise\frac{1-a^{T}}{1-a}+a^{T}\left\Vert \vw^{\star}\right\Vert ^{2}+\notag\\&2\trace(\boldsymbol{\Sigma})\prn{1-b+\frac{2b^{T}-1}{T}+\frac{a}{1-a}\prn{1-b+a^{T-2}\prn{b-a}}+\frac{a^{T-1}\left(a-2b\right)}{2}}%
\notag\\%
=&c\labelnoise\frac{1-a^{T}}{1-a}+a^{T}\left\Vert \vw^{\star}\right\Vert ^{2}+\notag\\&2\trace(\boldsymbol{\Sigma})\prn{\frac{(1-b)(1-a)}{1-a}+\frac{a(1-b)}{1-a}+\frac{2b^{T}-1}{T}+\frac{a^{T-1}}{1-a}\prn{b-a}+\frac{a^{T-1}\left(a-2b\right)}{2}}%
\notag\\%
=&c\labelnoise\frac{1-a^{T}}{1-a}+a^{T}\left\Vert \vw^{\star}\right\Vert ^{2}+2\trace(\boldsymbol{\Sigma})\prn{\frac{1-b}{1-a}+\frac{2b^{T}-1}{T}+\frac{a^{T}\left(-a+2b-1\right)}{2\left(1-a\right)}}.%
\notag
\end{align}
\end{proof}

\newpage
\subsection{Generalization with an Infinite Task Horizon}
We simplify \cref{cor:closed} further, by considering the limit $T\to\infty$.

\begin{corollary}
\label{thm:infinite_budget} 
Under i.i.d.~teachers (\cref{assump:iid_teachers}) we have,
\begin{align*}
&
\lim_{T\to\infty}\mathbb{E}\left[G_{T}\right]
% \\
% &
% \quad
=
2\trace\left(\mathbf{\Sigma}\right)\left(\frac{1-b}{1-a}\right)+
\frac{\labelnoise}{2\featnoise}\left(\frac{\featnoise\left(1+\ratio\right)+\lambda}{\sqrt{\lambda^{2}+2\lambda\featnoise\left(1+\ratio\right)+\featnoise^{2}\left(1-\ratio\right)^{2}}}-1\right)\frac{1}{1-a}
\,.
\end{align*}
\end{corollary}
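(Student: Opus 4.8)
The plan is to take the $T\to\infty$ limit directly in the closed-form expression of \cref{cor:closed}, passing to the limit term by term. The essential input is that both geometric bases lie strictly inside the unit interval: by \cref{lem:0<ab<1}, for every fixed $\lambda>0$ we have $0<a<1$ and $0<b<1$, so $a^{T}\to0$ and $b^{T}\to0$ as $T\to\infty$, while $1-a$ remains a fixed positive constant bounded away from $0$.

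First I would handle the label-noise term $\labelnoise c\,\frac{1-a^{T}}{1-a}$. Since $a^{T}\to0$ and $1-a$ is independent of $T$, this term converges to $\frac{\labelnoise c}{1-a}$. Substituting the closed form of $c$ from the statement of \cref{eq:main_result} then recovers the second summand of the claimed limit, namely $\frac{\labelnoise}{2\featnoise}\bigl(\tfrac{\featnoise(1+\ratio)+\lambda}{\sqrt{\lambda^2+2\lambda\featnoise(1+\ratio)+\featnoise^2(1-\ratio)^2}}-1\bigr)\frac{1}{1-a}$.

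Next I would dispatch the teacher-scale term $a^{T}\norm{\teacher}^2$, which vanishes because $a^{T}\to0$ while $\norm{\teacher}^2$ is fixed. For the teacher-variance term $2\trace(\mathbf\Sigma)\bigl(\frac{1-b}{1-a}+\frac{2b^{T}-1}{T}+\frac{a^{T}(-a+2b-1)}{2(1-a)}\bigr)$, I would treat the three inner summands separately: the first is constant in $T$ and survives; the second obeys $\bigl|\frac{2b^{T}-1}{T}\bigr|<\frac{1}{T}\to0$, since $0<b^{T}<1$ forces $|2b^{T}-1|<1$; and the third tends to $0$ because $a^{T}\to0$ while $-a+2b-1$ is a bounded constant. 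Hence this term converges to $2\trace(\mathbf\Sigma)\frac{1-b}{1-a}$, the first summand of the claim.

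Combining these limits yields the stated expression. I do not anticipate a genuine obstacle: the only subtlety is ensuring that $1-a$ stays bounded away from $0$ and that $b^{T}$ stays bounded, both guaranteed by the strict bounds $0<a,b<1$ of \cref{lem:0<ab<1}. Given these, every limit is an elementary term-by-term passage justified by the fact that a finite sum of convergent sequences converges to the sum of the limits.
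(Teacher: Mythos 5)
Your proposal is correct and matches the paper's own proof, which likewise obtains the result by passing to the limit $T\to\infty$ term by term in \cref{cor:closed}, with the label-noise term converging to $\labelnoise c/(1-a)$ after substituting the closed form of $c$, and the $a^{T}$, $b^{T}$, and $(2b^{T}-1)/T$ contributions vanishing. Your explicit appeal to \cref{lem:0<ab<1} to justify $0<a,b<1$ simply makes precise what the paper leaves implicit; there is no substantive difference.
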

\begin{proof}
This is readily proved by taking $T\to \infty$ in~\cref{cor:closed},
\begin{align}
&\lim\limits_{T\to\infty}\mathbb{E}\left[G_{T}\right]
\notag
\\
&
\quad
=
\lim\limits_{T\to\infty}
\prn{
\labelnoise{c}\frac{1-a^{T}}{1-a}
+
a^{T}\left\Vert \vw^{\star}\right\Vert ^{2}
+
2\trace(\boldsymbol{\Sigma})
\prn{
\frac{1-b}{1-a}
+
\frac{2b^{T}-1}{T}
+
\frac{a^{T}\prn{-a +2b -1}
}{2(1-a)}
}
}
\notag\\
&
\quad
=2\trace\left(\boldsymbol{\Sigma}\right)\left(\frac{1-b}{1-a}\right)+\frac{\labelnoise}{2\featnoise}\left(\frac{\featnoise\left(1+\ratio\right)+\lambda}{\sqrt{\lambda^{2}+2\lambda\featnoise\left(1+\ratio\right)+\featnoise^{2}\left(1-\ratio\right)^{2}}}-1\right)\frac{1}{1-a}
\,.
\notag
\end{align}

We note in passing that, for any \emph{fixed} $i>1$, the same limit holds for  $\mathbb{E}\left[\left\Vert \vw_{T}-\vw_{i}^{\star}\right\Vert^2 \right]
$ by taking $T\to\infty$ in~\cref{eq:i>1}. That is,
\begin{align}
&\lim\limits_{T\to\infty}\mathbb{E}\left[\left\Vert \vw_{T}-\vw_{i}^{\star}\right\Vert^2 \right]
\notag
\\
&
\quad
=2\trace\left(\boldsymbol{\Sigma}\right)\left(\frac{1-b}{1-a}\right)+\frac{\labelnoise}{2\featnoise}\left(\frac{\featnoise\left(1+\ratio\right)+\lambda}{\sqrt{\lambda^{2}+2\lambda\featnoise\left(1+\ratio\right)+\featnoise^{2}\left(1-\ratio\right)^{2}}}-1\right)\frac{1}{1-a}
\,.
\label{cor:6o}
\end{align}
\end{proof}
The term depending on $\trace\left(\mathbf{\Sigma}\right)$ is an irreducible teacher-variance term, stemming from our definition of generalization,
i.e.,
$
\mathbb{E}
\left[G_T\right]
\triangleq
\frac{1}{T}\sum_{i=1}^{T}
\mathbb{E}
\left\Vert\vw_T-\teacher_i\right\Vert^{2}
$
(\cref{def:generlization:error}),
similar to the irreducible label-noise term in \cref{eq:expected_loss}.
The term here can be avoided by analyzing 
$\mathbb{E} \left\Vert\vw_T-\teacher\right\Vert^{2}$ instead.

\newpage

\subsection{Proof of~\cref{thm:asymptotic_mse}: Asymptotic Generalization under Strong Regularization}

\begin{recall}[\cref{thm:asymptotic_mse}]
Under the i.i.d.~teacher setting  (\cref{assump:iid_teachers}), when the task horizon is $T \to \infty$, the expected generalization loss is monotonically decreasing in the regularization strength $\lambda$.
Furthermore, the iterates converge in mean square to the ``global'' teacher $\teacher$,
i.e.,
$$
\lim_{\lambda\to\infty} \; \lim_{T \to \infty} \; 
\mathbb{E}
\left\Vert \vw_{T} - \vw^{\star}\right\Vert^{2} = 0.
$$
\end{recall}

\begin{proof}
We split the proof into two parts.

\subsubsection{Proving convergence}

Note that for any fixed $i$,
\begin{align*}
\lim_{\lambda\to\infty}
\lim\limits_{T\to\infty}
\mathbb{E}
\left\Vert \vw_{T}-\vw^{\star}\right\Vert ^{2}
&
=\lim_{\lambda\to\infty}\lim\limits_{T\to\infty}\mathbb{E}\left[\left\Vert \vw_{T}-\left(\vw_{i}^{\star}+\boldsymbol{\xi}_{i}\right)\right\Vert ^{2}\right]
\\
&=
\lim_{\lambda\to\infty}\lim\limits_{T\to\infty}
\biggprn{
\mathbb{E}
\left\Vert \vw_{T}-\vw_{i}^{\star}\right\Vert ^{2}
+
2\mathbb{E}\left[\left(\vw_{T}-\vw_{i}^{\star}\right)^{\top}\boldsymbol{\xi}_{i}\right]
+
\underbrace{
\mathbb{E}
\left\Vert \boldsymbol{\xi}_{i}\right\Vert ^{2}
}_{\trace\left(\boldsymbol{\Sigma}\right)}
}
.
\end{align*}

We proceed by calculating $\lim_{\lambda\to\infty}\lim\limits_{T\to\infty}
{\mathbb{E}\left[\left\Vert \vw_{T}-\vw_{i}^{\star}\right\Vert ^{2}\right]}$.
Recall that by \cref{cor:6o}, for any fixed $i\ge 2$,
\begin{align*}
&\lim\limits_{T\to\infty}\mathbb{E}\!\left[\left\Vert \vw_{T}-\vw_{i}^{\star}\right\Vert^2 \right]
\\
&
= \underbrace{\frac{\labelnoise}{2\featnoise}\!\left(\frac{\featnoise(1+\ratio)+\lambda}{\sqrt{\lambda^{2}+2\lambda\featnoise(1+\ratio)+\featnoise^{2}(1-\ratio)^{2}}}-1\right)\frac{1}{1-a}}_{\text{term 1}}
+ 
\underbrace{2\operatorname{Tr}(\boldsymbol{\Sigma})\left(\frac{1-b}{1-a}\right)}_{\text{term 2}} 
\,.
\end{align*}
Then, by \cref{lem:laur1}, as $\lambda\to\infty$, we have
$\frac{1-b}{1-a}=\frac{1}{2}+O\left(\lambda^{-1}\right),$
implying that,
$\lim_{\lambda \to \infty}\frac{1-b}{1-a}=\frac{1}{2}$.
Furthermore, \cref{lem:laur2,lem:3} suggest that as $\lambda \to \infty$, it holds that, 
$$\frac{\labelnoise}{2\featnoise}\left(\frac{\featnoise\left(1+\ratio\right)+\lambda}{\sqrt{\lambda^{2}+2\lambda\featnoise\left(1+\ratio\right)+\featnoise^{2}\left(1-\ratio\right)^{2}}}-1\right)\frac{1}{1-a}=O\left(\lambda^{-1}\right),$$
implying $\lim_{\lambda\to\infty}\frac{\labelnoise}{2\featnoise}\left(\frac{\featnoise\left(1+\ratio\right)+\lambda}{\sqrt{\lambda^{2}+2\lambda\featnoise\left(1+\ratio\right)+\featnoise^{2}\left(1-\ratio\right)^{2}}}-1\right)\frac{1}{1-a}=0.$
So overall, 
$$\lim_{\lambda\to\infty}\lim\limits_{T\to\infty}
{\mathbb{E}\left[\left\Vert \vw_{T}-\vw_{i}^{\star}\right\Vert ^{2}\right]}=\trace\left(\boldsymbol{\Sigma}\right)
\,.$$

Combining all of the above, for some fixed $i\ge 3$:
\begin{align*}
\lim_{\lambda\to\infty}\lim\limits_{T\to\infty}
\mathbb{E}
\left\Vert \vw_{T}-\vw^{\star}\right\Vert ^{2}
&=
\lim_{\lambda\to\infty}\lim\limits_{T\to\infty}
\Bigprn{
2\trace\left(\boldsymbol{\Sigma}\right)
+2\mathbb{E}\left[\left(\vw_{T}-\vw_{i}^{\star}\right)^{\top}\boldsymbol{\xi}_{i}\right]
}
\\{\explain{1}}
&=2\trace\left(\boldsymbol{\Sigma}\right)-2\mathbb{E}\left[\left(\vw_{i}^{\star}\right)^{\top}\boldsymbol{\xi}_{i}\right]=2\trace\left(\boldsymbol{\Sigma}\right)-2\mathbb{E}\left[\left(\vw^{\star}+\boldsymbol{\xi}_{i}\right)^{\top}\boldsymbol{\xi}_{i}\right]
\\
&=2\trace\left(\boldsymbol{\Sigma}\right)-2\underbrace{\mathbb{E}\left[\left\Vert \boldsymbol{\xi}_{i}\right\Vert ^{2}\right]}_{\trace\left(\boldsymbol{\Sigma}\right)}=0
\,,
\end{align*}
where [1] follows by~\cref{eq:seq} where,
$$\vw_{T}=\left[\prod_{m=T}^{1}\mathbf{P}_{m}\right]\left(\vw_{0}-\vw_{1}^{\star}\right)+\frac{1}{\lambda d}\sum_{k=1}^{T}\left[\prod_{m=T}^{k}\mathbf{P}_{m}\right]\mathbf{X}_{k}^{\top}\mathbf{z}_{k}+\sum_{k=2}^{T}\left[\prod_{m=T}^{k}\mathbf{P}_{m}\right]\left(\vw_{k-1}^{\star}-\vw_{k}^{\star}\right)+\vw_{T}^{\star}.$$

Recall the identity and~\cref{eq:E[P]},
\begin{align*}
\mathbb{E}\left[{\prod}_{m=T}^{i}\mathbf{P}_{m}\right]
&
=
\mathbb{E}\left[\mathbf{P}_{T}\cdots\mathbf{P}_{i}\right]
=
\mathbb{E}_{\mathbf{P}_{i},\ldots,\mathbf{P}_{T-1}}\left[\mathbb{E}_{\mathbf{P}_{T}}\left[\mathbf{P}_{T}\cdots\mathbf{P}_{i}\Bigl|\mathbf{P}_{1},\ldots,\mathbf{P}_{T-1}\right]\right]\\&
=
\mathbb{E}\left[\mathbb{E}\left[\mathbf{P}_{T}\right]\mathbf{P}_{T-1}\cdots\mathbf{P}_{i}\right]%
=
\mathbb{E}\left[\mathbf{P}_{T}\right]\mathbb{E}\left[\mathbf{P}_{T-1}\ldots\mathbf{P}_{i}\right]\\&
=
\ldots%
=b^{T-i+1}\I\,.
\end{align*}
For any fixed $\lambda >0$, we have $0<b<1$ (by~\cref{lem:0<ab<1}).

Hence, for any fixed $i$, by independence of $\boldsymbol \xi_i$ of all other random variables we have, 
\begin{align*}
&\lim\limits_{T\to\infty}\mathbb E\left[\boldsymbol{\xi}_{i}^{\top}\mathbf{w}_{T}\right]=\lim\limits_{T\to\infty}\mathrm{Tr}\left(\mathbb{E}\left[\left(\Pi_{m=T}^{i+1}\mathbf{P}_{m}-\Pi_{m=T}^{i}\mathbf{P}_{m}\right)\right]\boldsymbol{\Sigma}\right)=\lim\limits_{T\to\infty}\mathrm{Tr}\left(b^{T-i}(1-b)\boldsymbol{\Sigma}\right)\to0,
\end{align*}
here, we used the fact that $\|\boldsymbol{\Sigma}\|_*\leq C<\infty$.

\subsubsection{Proving monotonicity}
We show that each one of the terms in the following expression is monotonic decreasing with $\lambda$.

Recall~\cref{thm:infinite_budget}, according to which, 
\begin{align}
\label{eq:limits}
&\lim\limits_{T\to\infty}\mathbb{E}\left[G_{T}\right]
=\underbrace{2\trace\left(\boldsymbol{\Sigma}\right)\left(\tfrac{1-b}{1-a}\right)}_{\text{term 1}}+\underbrace{\frac{\labelnoise}{2\featnoise}\left(\frac{\featnoise\left(1+\ratio\right)+\lambda}{\sqrt{\lambda^{2}+2\lambda\featnoise\left(1+\ratio\right)+\featnoise^{2}\left(1-\ratio\right)^{2}}}-1\right)\tfrac{1}{1-a}}_{\text{term 2}}.
\end{align}
First, we analyze the \textbf{first term} in the summation:
\begin{align*}
    \frac{\dd}{\dd\lambda}2\trace\left(\boldsymbol{\Sigma}\right)\left(\frac{1-b}{1-a}\right)&=
    2\trace\left(\boldsymbol{\Sigma}\right)
    \frac{\dd}{\dd\lambda}
    \frac{1-b}{1-a}\,.
\end{align*}

Next, show that $\frac{\dd}{\dd\lambda}\frac{1-b}{1-a}<0$.
Recall the notations from \cref{tab:notation},
\begin{align*}
    &1-b=1-\frac{1}{2}\left(1-\ratio-\tilde{\lambda}+D\right)=\frac{1}{2}\left(1+\ratio+\tilde{\lambda}-D\right)&\\&1-a=1-\frac{1}{2}\left(1-\ratio+\frac{N}{D}\right)=\frac{1}{2}\left(1+\ratio-\frac{N}{D}\right)\,.
\end{align*}

Using the above closed-form expression and by~\cref{lem:da:dl,lem:db:dl},
\begin{align}
&
\frac{\dd}{\dd\lambda}\frac{1-b}{1-a}
=
\frac{1-b}{\left(1-a\right)^{2}}\cdot\frac{\dd a}{\dd \lambda}-\frac{1}{1-a}\cdot\frac{\dd b}{\dd \lambda}
\propto
\left(1-b\right)\frac{\dd a}{\dd\lambda}-\left(1-a\right)\frac{\dd b}{\dd\lambda}
\notag\\
&=\frac{1}{2}\left(1+\ratio+\tilde{\lambda}-D\right)\frac{1}{2\featnoise}\frac{4\tilde{\lambda}\ratio}{D^{3}}-\frac{1}{2}\left(1+\ratio-\frac{N}{D}\right)\left(\frac{1}{2\featnoise}\Bigl(\frac{-D+\tilde{\lambda}+1+\ratio}{D}\Bigr)\right)
\notag\\
&
=\frac{1}{4\featnoise}\left(\frac{1+\ratio+\tilde{\lambda}-D}{D}\right)\frac{4\tilde{\lambda}\ratio}{D^{2}}-\left(\frac{1}{4\featnoise}\Bigl(\frac{1+\ratio+\tilde{\lambda}-D}{D}\Bigr)\right)\left(1+\ratio-\frac{N}{D}\right)
\notag\\
&=\frac{1}{4\featnoise}\Bigl(\frac{1+\ratio+\tilde{\lambda}-D}{D}\Bigr)\left(\frac{4\tilde{\lambda}\ratio}{D^{2}}-\left(\frac{\left(1+\ratio\right)D^{2}-DN}{D^{2}}\right)\right)
\notag\\&
\stackrel{\left(1\right)}{=}\frac{1}{4\featnoise}\left(\frac{1+\ratio+\tilde{\lambda}-D}{D}\right)\left(\frac{\left(1+\ratio\right)D^{2}-N\left(\tilde{\lambda}+1+\ratio\right)-\left(1+\ratio\right)D^{2}+DN}{D^{2}}\right)
\notag\\
&
=\left(\frac{1}{4\featnoise}\left(\frac{1+\ratio+\tilde{\lambda}-D}{D}\right)\right)\left(\frac{-N\left(1+\ratio+\tilde{\lambda}-D\right)}{D^{2}}\right)
\notag\\&
=\frac{-N}{4\featnoise}\frac{\left(1+\ratio+\tilde{\lambda}-D\right)^{2}}{D^{3}}<0\label{d1-a:dl},
\end{align}
where [1] follows from~\cref{eq:23}.

Overall, since $\frac{\dd}{\dd\lambda}\frac{1-b}{1-a}<0$ and $\trace(\mathbf{\Sigma})\geq 0$, the derivative of the first term with respect to $\lambda$ is non-positive.

\bigskip

For the \textbf{second term} in the summation, we define
$$
k\triangleq\labelnoise\frac{1}{2\featnoise}\left(\frac{\featnoise\left(1+\ratio\right)+\lambda}{\sqrt{\lambda^{2}+2\lambda\featnoise\left(1+\ratio\right)+\featnoise^{2}\left(1-\ratio\right)^{2}}}-1\right)\frac{1}{1-a}.
$$
We wish to show that $\frac{\dd}{\dd\lambda}k<0$,
\begin{align*}
\frac{\dd}{\dd\lambda}k=&\labelnoise\frac{1}{2\featnoise}\underbrace{\frac{\dd}{\dd\lambda}\left(\frac{\featnoise\left(1+\ratio\right)+\lambda}{\sqrt{\lambda^{2}+2\lambda\featnoise\left(1+\ratio\right)+\featnoise^{2}\left(1-\ratio\right)^{2}}}-1\right)}_{\text{term 1}}\frac{1}{1-a}\\&+\labelnoise\frac{1}{2\featnoise}\left(\frac{\featnoise\left(1+\ratio\right)+\lambda}{\sqrt{\lambda^{2}+2\lambda\featnoise\left(1+\ratio\right)+\featnoise^{2}\left(1-\ratio\right)^{2}}}-1\right)\frac{\dd}{\dd\lambda}\frac{1}{1-a}.
\end{align*}

Using~\cref{lem:da:dl}, we get,
$$
\frac{\dd\frac{1}{1-a}}{\dd\lambda}=\frac{1}{\left(1-a\right)^{2}}\frac{2\tilde{\lambda}\ratio}{\featnoise D^{3}}>0.$$

Evaluating the derivative with respect to $\lambda$ of the first term,
\begin{align*}
&\frac{\dd}{\dd\lambda}\left[\frac{\featnoise\left(1+\ratio\right)+\lambda}{\sqrt{\lambda^{2}+2\lambda\featnoise\left(1+\ratio\right)+\featnoise^{2}\left(1-\ratio\right)^{2}}}-1\right]
&\\
&=
\frac{1}{\sqrt{\lambda^{2}+2\lambda\featnoise\left(1+\ratio\right)+\featnoise^{2}\left(1-\ratio\right)^{2}}}-\frac{\left(\lambda+\featnoise\left(1+\ratio\right)\right)\left(\featnoise\left(1+\ratio\right)+\lambda\right)}{\left(\lambda^{2}+2\lambda\featnoise\left(1+\ratio\right)+\featnoise^{2}\left(1-\ratio\right)^{2}\right)^{\frac{3}{2}}}
\\\
&=
\frac{\lambda^{2}+2\lambda\featnoise\left(1+\ratio\right)+\featnoise^{2}\left(1-\ratio\right)^{2}-\left(\lambda^{2}+\left(\featnoise\left(1+\ratio\right)\right)^{2}+2\lambda\featnoise\left(1+\ratio\right)\right)}{\left(\lambda^{2}+2\lambda\featnoise\left(1+\ratio\right)+\featnoise^{2}\left(1-\ratio\right)^{2}\right)^{\frac{3}{2}}}\\&=\frac{\featnoise^{2}\left(1-\ratio\right)^{2}-\featnoise^{2}\left(1+\ratio\right)^{2}}{\left(\lambda^{2}+2\lambda\featnoise\left(1+\ratio\right)+\featnoise^{2}\left(1-\ratio\right)^{2}\right)^{\frac{3}{2}}}=\frac{-4\featnoise^{2}\ratio}{\left(\lambda^{2}+2\lambda\featnoise\left(1+\ratio\right)+\featnoise^{2}\left(1-\ratio\right)^{2}\right)^{\frac{3}{2}}}<0.
\end{align*}
Combining the derivations above, we obtain
\begin{align}
\frac{\dd k}{\dd\lambda}
&=\frac{\dd}{\dd\lambda}\left[\labelnoise\frac{1}{2\featnoise}\left(\frac{\featnoise\left(1+\ratio\right)+\lambda}{\sqrt{\lambda^{2}+2\lambda\featnoise\left(1+\ratio\right)+\featnoise^{2}\left(1-\ratio\right)^{2}}}-1\right)\right]\cdot\frac{1}{1-a}+\notag\\&
\hspace{2em}\frac{\dd}{\dd\lambda}\left(\frac{1}{1-a}\right)\cdot\labelnoise\frac{1}{2\featnoise}\left(\frac{\featnoise\left(1+\ratio\right)+\lambda}{\sqrt{\lambda^{2}+2\lambda\featnoise\left(1+\ratio\right)+\featnoise^{2}\left(1-\ratio\right)^{2}}}-1\right)\notag\\&=\frac{\labelnoise}{2\featnoise}\left(\frac{-4\featnoise^{2}\ratio}{\featnoise^{3}D^{3}}\right)\cdot\frac{1}{1-a}+\frac{1}{\left(1-a\right)^{2}}\frac{4\tilde{\lambda}\ratio}{2\featnoise D^{3}}\frac{\labelnoise}{2\featnoise}\left(\frac{\featnoise\left(1+\ratio\right)+\lambda}{\featnoise D}-1\right)\notag\\&=\frac{2\labelnoise}{\featnoise^{2}}\frac{\ratio}{\left(1-a\right)D^{3}}\left(\frac{1}{\left(1-a\right)}\frac{\tilde{\lambda}}{2\featnoise D}\left(\featnoise\left(1+\ratio-D\right)+\lambda\right)-1\right)\notag\\&=\frac{2\labelnoise}{\featnoise^{2}}\frac{\ratio}{\left(1-a\right)^{2}D^{3}}\left(\frac{\tilde{\lambda}}{2\featnoise D}\left(\featnoise\left(1+\ratio-D\right)+\lambda\right)-\frac{1+\ratio}{2}-\frac{\tilde{\lambda}\left(1+\ratio\right)+\left(1-\ratio\right)^{2}}{2D}\right)
\notag\\
&=\frac{\labelnoise}{\featnoise^{2}}\frac{\ratio}{\left(1-a\right)^{2}D^{4}}\left(\tilde{\lambda}\left(\left(1+\ratio-D\right)+\tilde{\lambda}\right)-\left(D(1+\ratio)-\tilde{\lambda}(1+\ratio)-\left(1-\ratio\right)^{2}\right)\right)&\notag\\&=\frac{\labelnoise}{\featnoise^{2}}\frac{\ratio}{\left(1-a\right)D^{4}\left(1-a\right)}\left(\tilde{\lambda}^{2}+2\left(1+\ratio\right)\tilde{\lambda}+\left(1-\ratio\right)^{2}-\left(1+\ratio+\tilde{\lambda}\right)D\right)\notag\\&=\frac{\labelnoise}{\featnoise^{2}}\frac{\ratio}{\left(1-a\right)^{2}D^{3}}\left(D-\left(1+\ratio+\tilde{\lambda}\right)\right)
\label{eq:dkdl}\\
\explain{\ratio>0}
&{\leq}
\frac{\labelnoise}{\featnoise^{2}}\frac{\ratio}{\left(1-a\right)^{2}D^{3}}\left(\sqrt{\left(\tilde{\lambda}+\left(1+\ratio\right)\right)^{2}}-\left(1+\ratio+\tilde{\lambda}\right)\right)=0
\,\label{eq:strict1},
\end{align}
implying that $\lim\limits_{T\to\infty}\mathbb{E}\left[G_{T}\right]$ is monotonic decreasing as a function of $\lambda$ for all $\lambda>0$.
\end{proof}
\begin{remark}
\label{rem:1}
If either $\mathrm{Tr}(\boldsymbol\Sigma)>0$ or $\labelnoise>0$, then $\lim\limits_{T\to\infty}\mathbb{E}[G_T]$ is strictly decreasing in $\lambda$ for all $\lambda>0$: when $\labelnoise>0$, the inequality in \cref{eq:strict1} becomes strict; and when $\mathrm{Tr}(\boldsymbol\Sigma)>0$, the derivative of the first term in \cref{eq:limits} is strictly negative.
\end{remark}

\newpage

\subsection{Proof of~\cref{thm:more_tasks}: More Tasks Can Be Useful or Harmful}

\begin{recall}[\cref{thm:more_tasks}]
Consider the i.i.d.~teacher setting (\cref{assump:iid_teachers}) with a label noise $\labelnoise>0$.
\begin{itemize}[leftmargin=0.5cm, itemindent=0.05cm, labelsep=0.2cm, itemsep=1pt,topsep=-1pt]
    \item \textbf{Single teacher:}
    When $\vw^{\star}_1=\dots=\vw^{\star}_{T}$ (i.e., $\trace\left(\mathbf{\Sigma}\right)=0$), the expected generalization loss with an \emph{optimal} horizon-dependent strength $\lambda^\star(T)$ is monotonically \emph{decreasing} in the task horizon $T$. 
    \item \textbf{Multiple i.i.d.~teachers:} When teachers vary across tasks (i.e., $\trace\!\left(\mathbf{\Sigma}\right)>0$), then under any \emph{fixed} strength $\lambda>0$, the expected generalization loss is monotonically \emph{increasing} in the task horizon $T$ for all $T \geq T'$ for a sufficiently large $T'$.
\end{itemize}
\end{recall}

Next, we prove each case separately.

\subsubsection{Proving \cref{thm:more_tasks} in the single teacher case}

% \begin{recall}[\cref{thm:harmful}]
% Under~\cref{assump:iid_teachers} we have for any fixed $\lambda > 0$, there exists a sufficiently large $T$ such that the expected generalization loss is monotonically increasing in the number of iterations for all $t \geq T$.
% \end{recall}

\begin{proof}
Recall~\cref{eq:main_result} when $\mathbf w_t^\star=\mathbf w^\star$ for all $t\in [T]$, and $\labelnoise>0$ the expression corresponding to $\mathbb E[G_T]$ can be simplified to a more compact form,
\begin{align}
\mathbb{E}[G_T]
&=\mathbb{E}\left\Vert \vw_{T}-\vw^{\star}\right\Vert ^{2}
\notag
\\
&=a^{T}\left\Vert \vw_{0}-\vw^{\star}\right\Vert ^{2}+
\frac{\labelnoise}{\featnoise}\frac{1-a^{T}}{\frac{\lambda}{\featnoise}+\sqrt{\left(\frac{\lambda}{\featnoise}\right)^{2}+2\frac{\lambda}{\featnoise}\left(1+\ratio\right)+\left(1-\ratio\right)^{2}}},&
\label{eq:singleteacher}
\end{align}
Define $s=\frac{\lambda}{\featnoise}+\sqrt{\left(\frac{\lambda}{\featnoise}\right)^{2}+2\frac{\lambda}{\featnoise}\left(1+\ratio\right)+\left(1-\ratio\right)^{2}}$, and 
\begin{align}
J\left(t,\lambda\right)&=a^{t}\left\Vert \vw_{0}-\vw^{\star}\right\Vert ^{2}+\frac{\labelnoise}{\featnoise}\frac{1-a^{t}}{s}\notag\\	\frac{\dd}{\dd \lambda}J\left(t,\lambda\right)&=\underbrace{\left[\left\Vert \vw_{0}-\vw^{\star}\right\Vert ^{2}-\frac{\labelnoise}{\featnoise}\frac{1}{s}\right]ta^{t-1}\frac{\dd a}{\dd \lambda}}_{\text{term 1}}-\underbrace{\frac{\labelnoise}{\featnoise}\frac{1-a^{t}}{s^{2}}\frac{\dd s}{\dd\lambda}}_{\text{term 2}}.
\label{eq:djdl}
\end{align}
We will show that $\left.\frac{\dd}{\dd\lambda}J\left(t,\lambda\right)\right|_{\lambda=0}<0$ when $\ratio<1$, otherwise if $\ratio=1$, the existence of $\bar \lambda>0 $ such that $\left.\frac{\dd}{\dd\lambda}J\left(t,\lambda\right)\right|_{\lambda=\bar \lambda}<0$ and $\lim_{\tilde{\lambda}\to\infty}\left.\frac{\dd}{\dd\lambda}J\left(t,\lambda\right)\right|_{\lambda=\tilde{\lambda}}>0$ for the defined scaled parameter $\tilde{\lambda}\triangleq\frac{\lambda}{\featnoise}$, note that if $\tilde{\lambda}\to\infty$  then $\lambda\to\infty $, continuity implies at least one stationary point $\lambda^{\star}\in\left(0,\infty\right)$ and it is a minimum.
\begin{align}
&\left.\frac{\dd}{\dd\lambda}J\left(t,\lambda\right)\right|_{\lambda=0}=_{\left.\frac{\dd a}{\dd \lambda}\right|_{\lambda=0}=0}-\frac{\labelnoise}{\featnoise}\frac{1-\left(1-\ratio\right)^{t}}{\left(1-\ratio\right)^{2}}\frac{\dd s}{\dd\lambda}\overset{\explain{0<\ratio<1}}{<}0\label{eq:11-1}\\&
\lim_{\tilde{\lambda}\to\infty}\left.\frac{\dd}{\dd\lambda}J\left(t,\lambda\right)\right|_{\lambda=\tilde{\lambda}}\notag=\lim_{\tilde{\lambda}\to\infty}\left[\left\Vert \vw_{0}-\vw^{\star}\right\Vert ^{2}-\underbrace{\frac{\labelnoise}{\featnoise}\frac{1}{s}}_{\to0}\right]ta^{t-1}\frac{\dd a}{\dd\tilde{\lambda}}-\frac{\labelnoise}{\featnoise}\frac{1-a^{t}}{s^{2}}\frac{\dd s}{\dd\tilde{\lambda}}\notag
\\
&\overset{\explain{1}}{\geq}
\lim_{\tilde{\lambda}\to\infty}\tfrac{1}{2}\left\Vert \vw_{0}-\vw^{\star}\right\Vert ^{2}t\left(1\!+\!O\!\left(\tilde{\lambda}^{-1}\right)\right)\left(\frac{2\ratio}{\tilde{\lambda}^{2}}+O\!\left(\tilde{\lambda}^{-3}\right)\right)-
\tfrac{\labelnoise}{\featnoise}\frac{\frac{2\ratio t}{\tilde{\lambda}}+O\!\left(\tilde{\lambda}^{-2}\right)}{\left(2\tilde{\lambda}+O\!\left(1\right)\right)^{2}}\left(1\!+\!O\!\left(\tilde{\lambda}^{-2}\right)\right)\notag
\\
&=
\lim_{\tilde{\lambda}\to\infty}\frac{1}{2}\left\Vert \vw_{0}-\vw^{\star}\right\Vert ^{2}t\left(\frac{2\ratio}{\tilde{\lambda}^{2}}+O\left(\tilde{\lambda}^{-3}\right)\right)-\frac{\labelnoise}{\featnoise}O\left(\tilde{\lambda}^{-3}\right)\left(1+O\left(\tilde{\lambda}^{-2}\right)\right)>0\,,&
\label{eq:10-1}
\end{align}
where [1] follows by~\cref{lem:Laurent-expansion-around,lem:diffrentiable} (as all terms are analytic at $\infty$ being a concatenation, sum, and division of analytic functions) and $s\to \infty$ as $\tilde \lambda \to \infty$.

For $\ratio=1$, we note that $a\to 0$ and $\tfrac{da}{d\lambda}\to 0$ as $\lambda\to 0^{+}$ by~\cref{lem:a->0b->0,lem:da:dl}, since the exponential decay in $a^{t}$ dominates any polynomial growth in $t$. Moreover,
$
\frac{\dd s}{\dd\lambda}
>\frac{1}{\featnoise}>0
$
for all $\lambda>0$.
Thus, the second term in $\tfrac{d}{d\lambda}J(t,\lambda)$~(\cref{eq:djdl}) dominates as $\lambda\to 0^{+}$, while remaining strictly negative. By continuity, there exists $\bar{\lambda}>0$ such that for all $\lambda\in(0,\bar{\lambda}]$ we have
$
\frac{\dd}{\dd\lambda}J(t,\lambda)<0.
$
Hence, near $\lambda=0$ the derivative is negative and the second term dominates.

At a stationary point we have 
\begin{align*}
\frac{\dd}{\dd\lambda}J\left(t,\lambda^{\star}\right)=0\iff\left[\left\Vert \vw_{0}-\vw^{\star}\right\Vert ^{2}-\frac{\labelnoise}{\featnoise}\frac{1}{s}\right]ta^{t-1}\left.\frac{\dd a}{\dd \lambda}\right|_{\lambda=\lambda^{\star}}=\frac{\labelnoise}{\featnoise}\frac{1-a^{t}}{s^{2}}\left.\frac{\dd s}{\dd\lambda}\right|_{\lambda=\lambda^{\star}},
\end{align*}

When differentiating with respect to $t$,
\begin{align}
&\frac{\dd}{\dd t}J\left(t,\lambda^{\star}\right)=a^{t}\ln a\left[\left\Vert \vw_{0}-\vw^{\star}\right\Vert^{2}-\frac{\labelnoise}{\featnoise}\frac{1}{s}\right]
% \notag \\& \qquad
\overset{\explain{2}}{=}\underbrace{\frac{a}{t\left.\frac{\dd a}{\dd \lambda}\right|_{\lambda=\lambda^{\star}}}}_{>0}\underbrace{\ln a}_{<0}\underbrace{\frac{\labelnoise}{\featnoise}\frac{1-a^{t}}{s^{2}}\left.\frac{\dd s}{\dd\lambda}\right|_{\lambda=\lambda^{\star}}}_{>0}<0.&
\label{eq:9-1}
\end{align}
Where [2] follows by~\cref{lem:0<ab<1,lem:da:dl} as $\lambda^\star>0$ and since $\labelnoise>0$.

From~\cref{eq:10-1} it follows that there exists $\Lambda$  such that for all $\lambda\geq\Lambda$  we have $\tfrac{d}{d\lambda}J(t,\lambda)>0$. Moreover, if $\ratio<1$ then 
$\left.\tfrac{d}{d\lambda}J(t,\lambda)\right|_{\lambda=0}<0$, 
while if $\ratio=1$ there exists $\lambda'>0$ such that 
$\left.\tfrac{d}{d\lambda}J(t,\lambda)\right|_{\lambda=\lambda'}<0$.
from \cref{eq:11-1}, the Intermediate Value Theorem guarantees a point $\lambda^{\star}\in\left(0,\Lambda\right]$ such that $\frac{\dd}{\dd\lambda^{\star}}J\left(t,\lambda^{\star}\right)=0$. On the compact interval $I=[0,\Lambda]$ if $\ratio<1$, and $I=[\bar{\lambda},\Lambda]$ if $\ratio=1$, the function $J(t,\cdot)$ attains a global minimum by Weierstrass, since the derivative is negative at the left end-point and is positive at $\Lambda$, the global minimizer is a stationary point, by \cref{eq:9-1} the global minimum is decreasing with respect to $t$.
\end{proof}

\newpage

\subsubsection{Proving \cref{thm:more_tasks} in the case of multiple i.i.d.~teachers}

\begin{proof}

Recall~\cref{eq:finite:t}, for all $T>1$ we have, 
\begin{align*}
    &\mathbb E[G_T]=\left(a^{T}-2a^{T-1}b\right)\trace\left(\boldsymbol{\Sigma}\right)+a^{T}\left\Vert \vw^{\star}\right\Vert ^{2}+2\trace\left(\boldsymbol{\Sigma}\right)\left(\frac{a\left(1-a^{T-1}\right)}{1-a}-b\frac{a\left(1-a^{T-2}\right)}{1-a}\right)\\&\qquad+\labelnoise\frac{1}{2\featnoise}\left(\frac{\featnoise\left(1+\ratio\right)+\lambda}{\sqrt{\lambda^{2}+2\lambda\featnoise\left(1+\ratio\right)+\featnoise^{2}\left(1-\ratio\right)^{2}}}-1\right)\frac{1-a^{T}}{1-a}\\&\qquad+\frac{2}{T}\left(2b^{T}+T-1-Tb\right)\trace\left(\boldsymbol{\Sigma}\right).&
\end{align*}

Define $g(t)=\mathbb{E}\left[G_t\right]$, by the assumptions $\trace\left(\boldsymbol{\Sigma}\right)>0$.
Then,
\begin{align*}
g'\left(t\right)
&
=
\trace\left(\boldsymbol{\Sigma}\right)\left(a^{t}\ln a-2ba^{t-1}\ln a\right)+\left\Vert \vw^{\star}\right\Vert ^{2}a^{t}\ln a+
\\
&\quad+\frac{2\trace\left(\boldsymbol{\Sigma}\right)\left(-a^{t}\ln a+ba^{t-1}\ln a\right)}{1-a}+\frac{4\trace\left(\boldsymbol{\Sigma}\right)}{t}b^{t}\left(\ln\left(b\right)-\frac{1}{t}\right)+\frac{2}{t^{2}}\trace\left(\boldsymbol{\Sigma}\right)-\frac{c\labelnoise a^{t}\ln a}{1-a}
\\
&=a^{t}\biggr[\trace\left(\boldsymbol{\Sigma}\right)\left(\ln a-\frac{2b}{a}\ln a\right)+\left\Vert \vw^{\star}\right\Vert ^{2}\ln a+\frac{2\trace\left(\boldsymbol{\Sigma}\right)\left(-\ln a+\frac{b}{a}\ln a\right)}{1-a}
\\
&\hspace{3em}+\frac{4\trace\left(\boldsymbol{\Sigma}\right)}{a^{t}t}b^{t}\left(\ln\left(b\right)-\frac{1}{t}\right)+\frac{2}{a^{t}t^{2}}\trace\left(\boldsymbol{\Sigma}\right)-\frac{c\labelnoise\ln a}{1-a}\biggl]
\,.
\end{align*}

Defining $A=\trace\left(\boldsymbol{\Sigma}\right)\ln a\left[\left(1-\frac{2b}{a}\right)+\frac{\left\Vert \vw^{\star}\right\Vert ^{2}}{\trace\left(\boldsymbol{\Sigma}\right)}+\frac{2\left(\frac{b}{a}-1\right)}{1-a}\right]-\frac{c\labelnoise \ln a}{1-a}$, we have,
\begin{align*}
\frac{g'\left(t\right)}{a^{t}}=A+\frac{4\trace\left(\boldsymbol{\Sigma}\right)}{t}\left(\frac{b}{a}\right)^{t}\left(\ln\left(b\right)-\frac{1}{t}+\frac{1}{2tb^{t}}\right)\,.
\end{align*}

There exists $t'$ such that for all $t\geq t'$ we have $\ln\left(b\right)-\frac{1}{t}+\frac{1}{2tb^{t}}>0$ since $0<b<1$ by~\cref{lem:0<ab<1}. Since $b>a$ by~\cref{lem:b>a} for all $\lambda>0$, then there exists $t^{\star}>t'$ which for all $t\geq t^{\star}$ we have $\underbrace{A}_{\text{constant in t}}+\frac{4\trace\left(\boldsymbol{\Sigma}\right)}{t}\left(\frac{b}{a}\right)^{t}
\Bigprn{\underbrace{\ln\left(b\right)-\frac{1}{t}+\frac{1}{2tb^{t}}}_{>0}}>0$ then $g'\left(t\right)>0$ implying that $g\left(t\right)$ is increasing for all $t\geq t^{\star}$, thus $\mathbb E[G_t]$ is increasing for all $t\geq \max(2,t^{\star})$. 
\end{proof}

\newpage
Before proving \cref{thm:optimal}, we introduce several auxiliary lemmas.
\begin{lemma}
\label{lem:laurEgt}
$\frac{\dd}{\dd \lambda}\lim\limits_{T\to\infty}\mathbb E\left[G_T\right]=-\frac{1}{2v_x}\left(\frac{v_z}{v_x}+\mathrm{Tr}(\boldsymbol \Sigma)\left(\ratio+1\right)\right) \tilde\lambda^{-2}+O\left(\tilde\lambda^{-3}\right)$.
\end{lemma}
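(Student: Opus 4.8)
The plan is to start from the closed form of $\lim_{T\to\infty}\mathbb{E}[G_T]$ given in \cref{thm:infinite_budget},
\[
\lim_{T\to\infty}\mathbb{E}[G_T]
= 2\trace(\boldsymbol{\Sigma})\,\frac{1-b}{1-a}
\;+\; f\cdot\frac{1}{1-a},
\qquad
f \triangleq \frac{\labelnoise}{2\featnoise}\!\left(\frac{1+\ratio+\tilde{\lambda}}{D}-1\right),
\]
and to extract the Laurent expansion of this scalar function of $\tilde{\lambda}$ up to order $\tilde{\lambda}^{-1}$, after which term-by-term differentiation yields the claim. First I would note that, viewed as a function of $u=\tilde\lambda^{-1}$, the right-hand side is analytic at $\infty$: both $D$ and $1-a=\tfrac12(1+\ratio-\tfrac{N}{D})$ are analytic at $\infty$, with $1-a$ having a simple zero at $u=0$ (since $1-a\sim 2\ratio u$) while $f$ has a double zero there (by \cref{lem:laur2}); hence the apparent pole of $\tfrac{1}{1-a}$ is cancelled and each summand is analytic at $\infty$. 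This is exactly the hypothesis of \cref{lem:diffrentiable}, which then legitimizes differentiating the expansion term by term, the differentiated remainder gaining two orders, i.e.\ $O(\tilde\lambda^{-3})$.

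The core computation is to refine \cref{lem:Laurent-expansion-around,lem:laur3} by one order. Expanding $D=\sqrt{(1+\ratio+\tilde\lambda)^2-4\ratio}$ through $\tilde\lambda^{-2}$ gives $D=\tilde\lambda+(1+\ratio)-\tfrac{2\ratio}{\tilde\lambda}+\tfrac{2\ratio(1+\ratio)}{\tilde\lambda^2}+O(\tilde\lambda^{-3})$, from which
\[
1-a=\frac{2\ratio}{\tilde\lambda}-\frac{3\ratio(1+\ratio)}{\tilde\lambda^2}+O(\tilde\lambda^{-3}),
\qquad
1-b=\frac{\ratio}{\tilde\lambda}-\frac{\ratio(1+\ratio)}{\tilde\lambda^2}+O(\tilde\lambda^{-3}).
\]
Dividing these two series (the $\tfrac{\ratio}{\tilde\lambda}$ factors cancel) yields $\tfrac{1-b}{1-a}=\tfrac12+\tfrac{1+\ratio}{4}\tilde\lambda^{-1}+O(\tilde\lambda^{-2})$, sharpening \cref{lem:laur1}. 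For the label-noise summand I would refine \cref{lem:laur2} to its leading coefficient, $f=\tfrac{\labelnoise\ratio}{\featnoise}\tilde\lambda^{-2}+O(\tilde\lambda^{-3})$, and combine it with $\tfrac{1}{1-a}=\tfrac{\tilde\lambda}{2\ratio}+O(1)$ to get $f\cdot\tfrac{1}{1-a}=\tfrac{\labelnoise}{2\featnoise}\tilde\lambda^{-1}+O(\tilde\lambda^{-2})$. Substituting both into the displayed closed form gives
\[
\lim_{T\to\infty}\mathbb{E}[G_T]
=\trace(\boldsymbol{\Sigma})
+\left(\frac{\trace(\boldsymbol{\Sigma})(1+\ratio)}{2}+\frac{\labelnoise}{2\featnoise}\right)\tilde\lambda^{-1}
+O(\tilde\lambda^{-2}).
\]

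Finally, differentiating term by term (justified above via \cref{lem:diffrentiable}) and using the chain rule $\tfrac{\dd}{\dd\lambda}=\featnoise^{-1}\tfrac{\dd}{\dd\tilde\lambda}$ turns the $\tilde\lambda^{-1}$ coefficient into the stated $\tilde\lambda^{-2}$ leading term:
\[
\frac{\dd}{\dd\lambda}\lim_{T\to\infty}\mathbb{E}[G_T]
=-\frac{1}{2\featnoise}\!\left(\frac{\labelnoise}{\featnoise}+\trace(\boldsymbol{\Sigma})(1+\ratio)\right)\tilde\lambda^{-2}+O(\tilde\lambda^{-3}),
\]
as claimed. I expect the main obstacle to be bookkeeping the cancellations: in both summands the leading behaviour of the numerator sits two orders below the naive estimate, so obtaining the correct coefficients $\tfrac{1+\ratio}{4}$ and $\tfrac{\labelnoise}{2\featnoise}$ forces me to carry the $\tilde\lambda^{-2}$ terms of $a,b$ (equivalently, the $\tilde\lambda^{-2}$ term of $D$) rather than only the leading $\tilde\lambda^{-1}$ terms recorded in \cref{lem:Laurent-expansion-around}. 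As an equivalent route, one could instead Laurent-expand the already-derived closed forms for the derivatives, namely $\tfrac{\dd}{\dd\lambda}\tfrac{1-b}{1-a}=\tfrac{-N(1+\ratio+\tilde\lambda-D)^2}{4\featnoise D^3(1-a)^2}$ from \cref{d1-a:dl} (reinstating the $(1-a)^{-2}$ factor that was absorbed into the $\propto$ there) and $\tfrac{\dd k}{\dd\lambda}$ from \cref{eq:dkdl}, whose leading terms reproduce exactly the same $\tilde\lambda^{-2}$ coefficients.
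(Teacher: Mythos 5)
Your proof is correct, but it takes a genuinely different route from the paper's. The paper does not expand the limit objective at all: it starts from the closed-form expression for the derivative that was already assembled in the proof of \cref{thm:asymptotic_mse}, namely
\[
\frac{\dd}{\dd\lambda}\lim_{T\to\infty}\mathbb E[G_T]
=\frac{v_{z}}{v_{x}^{2}}\frac{\ratio}{(1-a)^{2}D^{3}}\bigl(D-(1+\ratio+\tilde{\lambda})\bigr)
+\mathrm{Tr}(\boldsymbol{\Sigma})\frac{-N}{2v_{x}}\,
\frac{\bigl(1+\ratio+\tilde{\lambda}-D\bigr)^{2}}{(1-a)^{2}D^{3}}\,,
\]
and Laurent-expands this derivative directly, factor by factor, using only the \emph{leading-order} expansions $1+\ratio+\tilde\lambda-D=\tfrac{2\ratio}{\tilde\lambda}+O(\tilde\lambda^{-2})$ from \cref{lem:laur3} together with $(1-a)^{-2}=\tfrac{\tilde\lambda^2}{4\ratio^2}+O(\tilde\lambda)$ and $D^{-3}=\tilde\lambda^{-3}+O(\tilde\lambda^{-4})$. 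This is precisely the ``equivalent route'' you mention at the end of your proposal. Your primary route instead expands the limit objective itself to order $\tilde\lambda^{-1}$ and then differentiates term by term via \cref{lem:diffrentiable}; this is conceptually cleaner (one scalar function, a single differentiation step, with the analyticity-at-infinity justification you correctly supply, including the cancellation of the apparent pole of $(1-a)^{-1}$ against the simple and double zeros of $1-b$ and $f$), but it forces you to carry one extra order everywhere—the $\tilde\lambda^{-2}$ terms of $D$, $1-a$, and $1-b$, which the paper's auxiliary lemmas (\cref{lem:Laurent-expansion-around,lem:laur1,lem:laur2}) deliberately do not record. I verified your refined expansions: $D=\tilde\lambda+(1+\ratio)-\tfrac{2\ratio}{\tilde\lambda}+\tfrac{2\ratio(1+\ratio)}{\tilde\lambda^{2}}+O(\tilde\lambda^{-3})$, $1-a=\tfrac{2\ratio}{\tilde\lambda}-\tfrac{3\ratio(1+\ratio)}{\tilde\lambda^{2}}+O(\tilde\lambda^{-3})$, $1-b=\tfrac{\ratio}{\tilde\lambda}-\tfrac{\ratio(1+\ratio)}{\tilde\lambda^{2}}+O(\tilde\lambda^{-3})$, the resulting $\tfrac{1-b}{1-a}=\tfrac12+\tfrac{1+\ratio}{4}\tilde\lambda^{-1}+O(\tilde\lambda^{-2})$, and $f\cdot\tfrac{1}{1-a}=\tfrac{\labelnoise}{2\featnoise}\tilde\lambda^{-1}+O(\tilde\lambda^{-2})$ are all correct, and after the chain-rule factor $\tfrac{\dd}{\dd\lambda}=\featnoise^{-1}\tfrac{\dd}{\dd\tilde\lambda}$ they reproduce exactly the stated coefficient $-\tfrac{1}{2\featnoise}\bigl(\tfrac{\labelnoise}{\featnoise}+\mathrm{Tr}(\boldsymbol\Sigma)(1+\ratio)\bigr)$. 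In short: the paper's differentiate-then-expand approach buys economy (it reuses \cref{eq:dkdl} and the derivative from \cref{d1-a:dl}, needing only first-order asymptotics), while your expand-then-differentiate approach buys a more self-contained argument at the cost of the second-order bookkeeping you correctly anticipated as the main obstacle.
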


\begin{proof}
In the proof of \cref{thm:asymptotic_mse}, we have shown that $\frac{\dd}{\dd\lambda}\lim\limits_{T\to\infty}\mathbb E[G_t]$ decomposes into two terms:
\begin{align}
&\frac{\dd}{\dd\lambda}\lim\limits_{T\to\infty}\mathbb E[G_t]=\frac{v_{z}}{v_{x}^{2}}\frac{\ratio}{(1-a)^{2}D^{3}}\Big(D-(1+\ratio+\tilde{\lambda})\Big)+\mathrm{Tr}\!\left(\boldsymbol{\Sigma}\right)\frac{-N}{2v_{x}}
\frac{\left(1+\ratio+\tilde{\lambda}-D\right)^{2}}{\left(1-a\right)^{2}D^{3}}\,.\label{eq:dinfdl}
\end{align}
From \cref{lem:laur3}, we know that
$D=\tilde{\lambda}+\ratio+1-\frac{2\ratio}{\tilde{\lambda}+\ratio+1}+O\left(\tilde{\lambda}^{-3}\right)$.
Thus,
$$
1+\ratio+\tilde{\lambda}-D=\frac{1}{\tilde{\lambda}}\frac{2\ratio}{1+\frac{\ratio+1}{\tilde{\lambda}}}+O\left(\tilde{\lambda}^{-3}\right)=\frac{2\ratio}{\tilde{\lambda}}+O\left(\tilde{\lambda}^{-2}\right).
$$
Expanding them term by term, using \cref{lem:Laurent-expansion-around}, we begin with the first term in \cref{eq:dinfdl},
\begin{align*}
\frac{v_{z}}{v_{x}^{2}}\frac{\ratio}{(1-a)^{2}D^{3}}\Big(D-(1+\ratio+\tilde{\lambda})\Big)
&=
-\frac{v_{z}}{v_{x}^{2}}
\left(\frac{\ratio\,\tilde{\lambda}^{2}}{4\ratio^{2}}+O(\tilde{\lambda})\right)
\left(\tilde{\lambda}^{-3}+O(\tilde{\lambda}^{-4})\right)
\left(\frac{2\ratio}{\tilde{\lambda}}+O(\tilde{\lambda}^{-2})\right)
\\
&=
-\frac{v_{z}}{2v_{x}^{2}}\,\tilde{\lambda}^{-2}
+O(\tilde{\lambda}^{-3}),
\end{align*}
and the second term in \cref{eq:dinfdl} becomes,
\begin{align*}
&\mathrm{Tr}\!\left(\boldsymbol{\Sigma}\right)\frac{-N}{2v_{x}}
\left(\frac{4\ratio^{2}}{\tilde{\lambda}^{2}}+O(\tilde{\lambda}^{-3})\right)
\left(\frac{\tilde{\lambda}^{2}}{4\ratio^{2}}+O(\tilde{\lambda})\right)
\left(\tilde{\lambda}^{-3}+O(\tilde{\lambda}^{-4})\right)
\\
&=\mathrm{Tr}\!\left(\boldsymbol{\Sigma}\right)\frac{-N}{2v_{x}}
\left(\tilde{\lambda}^{-3}+O(\tilde{\lambda}^{-4})\right).
\end{align*}
Since $N\triangleq (1+\ratio)\tilde{\lambda}+(1-\ratio)^2$,
\begin{align*}
&\mathrm{Tr}\!\left(\boldsymbol{\Sigma}\right)\frac{-1}{2v_{x}}
\Big((1+\ratio)\tilde{\lambda}+(1-\ratio)^2\Big)
\left(\tilde{\lambda}^{-3}+O(\tilde{\lambda}^{-4})\right)
=
-\mathrm{Tr}\!\left(\boldsymbol{\Sigma}\right)\frac{1+\ratio}{2v_{x}}\;\tilde{\lambda}^{-2}
+O(\tilde{\lambda}^{-3}).
\end{align*}
Overall, we showed that the second term in \cref{eq:dinfdl} is,
$$
\mathrm{Tr}\!\left(\boldsymbol{\Sigma}\right)\frac{-N}{2v_{x}}
\frac{\left(1+\ratio+\tilde{\lambda}-D\right)^{2}}{\left(1-a\right)^{2}D^{3}}
=
-\mathrm{Tr}\!\left(\boldsymbol{\Sigma}\right)\frac{1+\ratio}{2v_{x}}\;\tilde{\lambda}^{-2}
+O\left(\tilde{\lambda}^{-3}\right)\,.
$$
\end{proof}

\newpage
Additionally, we prove a slightly weaker Lemma than \cref{thm:optimal}. This establishes a nearly linear scaling of $\lambda^{\star}(T)$, which will help us derive the main result in \cref{thm:optimal}.

\subsection{Auxiliary Lemma: Optimal Regularization is Near Linear in $T$}
\begin{lemma}[Optimal regularization strength is near linear in $T$]
\label{thm:opt_finite}
Under the i.i.d.-teacher setting (\cref{assump:iid_teachers}) with non-zero mean teacher, if either $\mathrm{Tr}(\boldsymbol\Sigma)>0$ or $\labelnoise>0$ the optimal fixed 
regularization strength $\lambda^\star$ that minimizes the expected generalization 
loss after $T$ iterations satisfies
$
C_1 T^{\rho} < \lambda^\star < C_0 T,
$
for all sufficiently large $T$ and every $\rho<1$, where $C_0, C_1 > 0$ are constants depending only on the problem parameters.
\end{lemma}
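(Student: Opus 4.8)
The plan is to pinpoint $\lambda^{\star}$ by studying the sign of $\frac{\dd}{\dd\lambda}\mathbb{E}[G_T]$ at the two candidate scales $\lambda\asymp T^{\rho}$ and $\lambda\asymp T$, and then appeal to continuity. First I would take the closed form of \cref{cor:closed} and split it as $\mathbb{E}[G_T]=g_{\infty}(\lambda)+R_T(\lambda)$, where $g_{\infty}(\lambda)=\lim_{T'\to\infty}\mathbb{E}[G_{T'}]$ is the infinite-horizon loss of \cref{thm:infinite_budget} and the remainder
\[
R_T(\lambda)=a^{T}\Bigl(\norm{\teacher}^{2}-\tfrac{\labelnoise c}{1-a}+\mathrm{Tr}(\boldsymbol{\Sigma})\tfrac{-a+2b-1}{1-a}\Bigr)+2\mathrm{Tr}(\boldsymbol{\Sigma})\tfrac{2b^{T}-1}{T}
\]
collects every $a^{T},b^{T}$ contribution. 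Differentiating, $\frac{\dd}{\dd\lambda}\mathbb{E}[G_T]=\frac{\dd}{\dd\lambda}g_{\infty}+R_T'$. By \cref{lem:laurEgt}, $\frac{\dd}{\dd\lambda}g_{\infty}=-\frac{1}{2\featnoise}\bigl(\frac{\labelnoise}{\featnoise}+\mathrm{Tr}(\boldsymbol{\Sigma})(1+\ratio)\bigr)\tilde{\lambda}^{-2}+O(\tilde{\lambda}^{-3})$, which is strictly negative for all $\lambda>0$ by \cref{rem:1}; and using $\frac{\dd a}{\dd\lambda}=\frac{2\ratio\tilde{\lambda}}{D^{3}\featnoise}\sim\frac{2\ratio}{\featnoise\tilde{\lambda}^{2}}$ (\cref{lem:da:dl}) together with the expansions of \cref{lem:Laurent-expansion-around}, the dominant part of $R_T'$ is the positive signal term $\frac{2\ratio\norm{\teacher}^{2}}{\featnoise}\frac{T a^{T}}{\tilde{\lambda}^{2}}$.

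The core of the argument is then the balance
\[
\mathrm{sign}\Bigl(\tfrac{\dd}{\dd\lambda}\mathbb{E}[G_T]\Bigr)=\mathrm{sign}\Bigl(2\ratio T a^{T}\norm{\teacher}^{2}-\tfrac{1}{2}\bigl(\tfrac{\labelnoise}{\featnoise}+\mathrm{Tr}(\boldsymbol{\Sigma})(1+\ratio)\bigr)\Bigr),
\]
valid whenever $\tilde{\lambda}\to\infty$ (which holds at both scales). Since $a^{T}=e^{-2\ratio T/\tilde{\lambda}}(1+o(1))$, at $\lambda=C_1 T^{\rho}$ with $\rho<1$ we get $T a^{T}=T\,e^{-2\ratio\featnoise T^{1-\rho}/C_1}\to0$ faster than any power of $T$, so the bracket is negative and $\mathbb{E}[G_T]$ is still \emph{decreasing} there; hence the minimizer lies to the right, $\lambda^{\star}>C_1 T^{\rho}$. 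Conversely, at $\lambda=C_0 T$ we have $a^{T}\to e^{-2\ratio\featnoise/C_0}>0$, so $T a^{T}\to\infty$, the bracket is positive, and $\mathbb{E}[G_T]$ is \emph{increasing}; because $a^{T}$ (hence the bracket) is increasing in $\lambda$ by \cref{lem:da:dl}, this sign persists for all $\lambda\ge C_0 T$, forcing the global minimizer below $C_0 T$. Continuity and the single sign change of the leading bracket then place $\lambda^{\star}\in(C_1 T^{\rho},C_0 T)$; equating the bracket to zero even reproduces the sharp scale $\tilde{\lambda}^{\star}\sim\frac{2\ratio T}{\ln(4\ratio T\norm{\teacher}^{2}\featnoise/(\labelnoise+\featnoise\mathrm{Tr}(\boldsymbol{\Sigma})(1+\ratio)))}$, previewing \cref{thm:optimal}.

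The hard part will be making this leading-order sign analysis rigorous, i.e.\ showing that the discarded terms in $R_T'$ and in $\frac{\dd}{\dd\lambda}g_{\infty}$ are dominated \emph{uniformly} at each scale. Two points demand care. First, at the lower scale $T^{\rho}$ the teacher-variance contribution $4\mathrm{Tr}(\boldsymbol{\Sigma})b^{T-1}\frac{\dd b}{\dd\lambda}\sim\frac{4\ratio\mathrm{Tr}(\boldsymbol{\Sigma})}{\featnoise}\frac{b^{T}}{\tilde{\lambda}^{2}}$ actually \emph{exceeds} the $a^{T}$ signal term, since $b>a$ (\cref{lem:b>a}); one must verify it is nonetheless $o(\tilde{\lambda}^{-2})$, which holds because $b^{T}=e^{-\ratio\featnoise T^{1-\rho}/C_1}\to0$ exponentially, leaving the negative $\Theta(\tilde{\lambda}^{-2})$ term $\frac{\dd}{\dd\lambda}g_{\infty}$ dominant. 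Second, the expansions governing $\frac{\dd a}{\dd\lambda},\frac{\dd b}{\dd\lambda}$ (\cref{lem:da:dl,lem:db:dl}) and the bounded prefactor $\norm{\teacher}^{2}-\frac{\labelnoise c}{1-a}+\mathrm{Tr}(\boldsymbol{\Sigma})\frac{-a+2b-1}{1-a}\to\norm{\teacher}^{2}$ must be controlled across the whole ray, and in the degenerate aspect ratio $\ratio=1$—where $D$ collapses as $\lambda\to0$—one invokes the dedicated boundedness estimates (\cref{lem:6,lem:7,lem:c=1dc}) together with the analyticity-at-infinity bookkeeping of \cref{lem:diffrentiable,lem:bounded}. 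Existence of a finite interior minimizer then follows from continuity of $\mathbb{E}[G_T]$ on $(0,\infty)$ and the two sign facts above, since the loss decreases through $\lambda\asymp T^{\rho}$ and increases beyond $\lambda\asymp T$.
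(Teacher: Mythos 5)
Your decomposition $\mathbb{E}[G_T]=g_\infty(\lambda)+R_T(\lambda)$ is exactly the paper's split in \cref{eq:15}, and your expansions and supporting lemmas (\cref{lem:laurEgt}, \cref{lem:da:dl}, \cref{lem:Laurent-expansion-around}) are the right ingredients, but there is a genuine gap on the lower-bound side. You verify that the derivative is negative \emph{at} the scale $\lambda\asymp C_1T^{\rho}$ and conclude ``hence the minimizer lies to the right.'' That inference fails: negativity of $\frac{\dd}{\dd\lambda}\mathbb{E}[G_T]$ at one scale does not exclude a \emph{global} minimizer at much smaller $\lambda$ (bounded $\lambda$, or $\lambda\to 0$), and precisely in that region your sign-balance formula is unavailable, since it is derived from Laurent expansions valid only as $\tilde\lambda\to\infty$. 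What is needed---and what the paper's proof supplies---is \emph{uniform} negativity of the derivative on the entire interval $[0,\Lambda(T)]$ with $\Lambda(T)\in o(T^{\rho})$: the paper bounds $\sup_{\tilde\lambda\in[0,\Lambda(T)]}\left|\frac{\dd}{\dd\tilde\lambda}\left(\mathbb{E}[G_T]-g_\infty\right)\right|\le (C_1+C_2)\,T\zeta^{T-5}\,O\!\left(\Lambda(T)^2\right)$ with $\zeta=1-\frac{\ratio}{1+\ratio+\Lambda(T)}$ (using $a,b\le\zeta$ from \cref{lem:3}, together with the boundedness lemmas for the prefactors, including the $\ratio=1$ cases you cite), and compares this against $\inf_{\tilde\lambda\in[0,\Lambda(T)]}\left(-\frac{\dd}{\dd\tilde\lambda}g_\infty\right)=\Omega\!\left(\Lambda(T)^{-2}\right)=\omega\!\left(T^{-2\rho}\right)$ from \cref{lem:laurEgt}. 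Since $T\zeta^{T-5}$ decays like $\exp(-\Theta(T^{1-\rho}))$, the perturbation can never flip the sign anywhere on $[0,\Lambda(T)]$, and only this forces $\lambda^\star=\Omega(T^{\rho})$. Your ``hard part'' paragraph gestures at uniform domination ``at each scale,'' but uniformity at a scale is not uniformity over the interval; without the sup/inf comparison, the conclusion does not follow.

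There is a second, smaller problem at the upper end: at $\lambda=C_0T$ exactly, your retained leading term $\frac{2\ratio Ta^{T}\norm{\teacher}^{2}}{\tilde\lambda^{2}}$ and the discarded error $O(T^{2}\tilde\lambda^{-3})$ are both $\Theta(T^{-1})$, so the sign of the derivative at scale $\Theta(T)$ is not determined by your bracket without tracking how the $O$-constants depend on $C_0$. The paper sidesteps this by proving positivity only for $\tilde\lambda\in\omega(T)$, where $T^{2}\tilde\lambda^{-3}=o(T\tilde\lambda^{-2})$ makes the leading term genuinely dominant; that already yields $\lambda^\star\in O(T)$, and your auxiliary monotonicity remark about $a^{T}$ becomes unnecessary. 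If you replace the pointwise lower check with the paper's uniform comparison over $[0,\Lambda(T)]$ and weaken the upper check from ``at $C_0T$'' to ``throughout $\omega(T)$,'' your outline becomes essentially the paper's proof.
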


We first provide a brief proof outline.
\paragraph{Proof outline.}
Set $\tilde\lambda\triangleq\lambda/\featnoise$ and fix a cutoff $\Lambda(T)>0$; we analyze $\tilde\lambda\in [0,\Lambda(T)]$.
\linebreak
Let $D=\sqrt{\tilde\lambda^2+2(1+\ratio)\tilde\lambda+(1-\ratio)^2}$. We show that $a,b\le \zeta\triangleq1-\frac{\ratio}{1+\ratio+\Lambda(T)}$.
\begin{enumerate}[leftmargin=0.7cm, labelsep=0.2cm, itemsep=3pt]
\item  \textbf{Limit objective is strictly decreasing.}
By the closed form of the infinite-horizon loss,
\[
\lim_{T'\to\infty}\mathbb E[G_{T'}]
=2\trace(\mathbf \Sigma)\Big(\tfrac{1-b}{1-a}\Big)
+\labelnoise\frac{1}{2\featnoise}\Big(\frac{1+\ratio+\tilde\lambda}{D}-1\Big)\frac{1}{1-a}.
\]
From~\remref{rem:1} we have
\[
\frac{\dd}{\dd\tilde\lambda}\,\lim_{T'\to\infty}\mathbb E[G_{T'}]\;<\;0\qquad\forall\,\tilde\lambda>0.
\]

\item \textbf{In the sublinear regime, the derivative is also negative.}
Using the decomposition (algebra is provided in the full proof),
\begin{align}
\label{eq:differen}   
\mathbb E[G_T]\!-\!\!\lim_{T'\to\infty}\!\!\mathbb E[G_{T'}]
= a^T\Big(\|\teacher\|^2+\trace \mathbf(\boldsymbol\Sigma)\!-\!2\trace (\boldsymbol\Sigma)\tfrac{1-b}{1-a}-\tfrac{f}{1-a}\Big)
\!
+\tfrac{2\trace \mathbf(\boldsymbol\Sigma)}{T}(2b^T\!-1),
\end{align}

Differentiating and using $a,b\le\zeta$, boundedness of
$\frac{1-b}{1-a}$ and its derivative, and boundedness of $f$ and its derivative (see details in the proof), gives the uniform bound
\[
\sup_{\tilde\lambda\in [0,\Lambda(T)]}
\Big|\tfrac{\dd}{\dd\tilde\lambda}\big(\mathbb E[G_T]-\lim_{T'\to\infty}\mathbb E[G_{T'}]\big)\Big|
\;\le\; C\,T\,\zeta^{\,T-5}\,(1-a)^{-2}\,.
\]
Since $(1-a)^{-2}=O(\Lambda(T)^2)$ and $T\zeta^{\,T-5}\!=\!T\exp(-\Theta(T/\Lambda(T)))$,
if $\Lambda(T)\in o(T^\rho)$ for any fixed $0<\rho<1$, the RHS decay exponentially.
Meanwhile,
\[
\inf_{\tilde\lambda\in [0,\Lambda(T)]}
\!\Big(-\tfrac{\dd}{\dd\tilde\lambda}\lim_{T'\to\infty}\mathbb E[G_{T'}]\Big)
=\Omega(\Lambda(T)^{-2})=\omega(T^{-2\rho})\,.
\]
Hence, for large $T$, the derivative stays negative in $[0,\Lambda(T)]$.
Therefore, $\tilde\lambda^\star\notin o(T^\rho)$ for every $0<\rho<1$.

\item \textbf{In superlinear regimes, the derivative turns positive.}
For $\tilde\lambda\!\in\!\omega(T)$, Laurent expansions~give
\begin{align*}
a^T&=1-\frac{2\ratio T}{\tilde\lambda}+O(T^2\tilde\lambda^{-2})\,,
\quad
b^T=1-\frac{\ratio T}{\tilde\lambda}+O(T^2\tilde\lambda^{-2})\,,
\quad
\frac{1-b}{1-a}=\frac12+O(\tilde\lambda^{-1})\,,
\\
c&=O(\tilde\lambda^{-2})\,, \quad(1-a)^{-1}=O(\tilde\lambda)\,.
\end{align*}
Plugging into the expression above (\cref{eq:differen}),
\begin{align*}
\mathbb E[G_T]-\lim_{T'\to\infty}\mathbb E[G_{T'}]
&= \left\Vert \teacher \right\Vert ^{2}-\frac{2\ratio T}{\tilde\lambda}\left\Vert \teacher \right\Vert ^{2}+\frac{2\trace\left (\boldsymbol \Sigma\right)}{T}
+O(T^2\tilde\lambda^{-2})+O(\tilde\lambda^{-1})\,,
\\
\frac{\dd}{\dd\tilde\lambda}\mathbb E[G_T]
&=\frac{2\ratio T}{\tilde\lambda^2}
+O(T^2\tilde\lambda^{-3})+O(\tilde\lambda^{-2})
+\frac{\dd}{\dd\tilde\lambda}\lim_{T'\to\infty}\mathbb E[G_{T'}]\;>\;0,
\end{align*}
and $\frac{\dd}{\dd\tilde\lambda}\lim_{T'\to\infty}\mathbb E[G_{T'}]=O(\tilde \lambda^{-2})$. Hence, no optimizer lies in $\omega(T)$.

\item \textbf{Conclusion.}
Combining (2) + (3), any minimizer satisfies 
$\tilde \lambda^\star \in O(T)$
and
$\tilde \lambda^\star \in \Omega(T^\rho)$ for all $0<\rho<1$, since for $\tilde \lambda=\frac{\lambda}{\featnoise}$ the same asymptotic bounds hold for $\lambda^\star$.
\end{enumerate}

\vspace{-0.5em}
\jmlrQED
\vspace{-0.5em}

\begin{proof}[for \cref{thm:opt_finite}]
For the actual proof, we first analyze the asymptotic regime $T \to \infty$.
\tparagraph{In the sublinear regime, the derivative is negative.} Let $\Lambda(T)$ be a positive function of $T$, whose form will be specified
later. We focus on regularization strengths satisfying
$\lambda \in [0,\Lambda(T)]$, and define the corresponding scaled parameter
$\tilde{\lambda} \triangleq \lambda / \featnoise$.

Let $\bar{D}=\sup_{[0,\Lambda(T)]}D$; $\underline{D}=\inf_{[0,\Lambda(T)]}D$; and $\zeta=1-\frac{\ratio}{1+\ratio+\Lambda\left(T\right)}$.
Note that $a,b\leq\zeta$ by~\cref{lem:3}.
Recall~\remref{rem:1} which explained that $\frac{\dd}{\dd\lambda}\lim\limits_{T\to\infty}\mathbb{E}\left[G_{T}\right]<0,~\forall\lambda>0$.
Clearly, the same holds for the scaled parameter $\tilde{\lambda}=\frac{\lambda}{\featnoise}$, i.e., $\frac{\dd}{\dd\tilde{\lambda}}\lim\limits_{T\to\infty}\mathbb{E}\left[G_{T}\right]<0,~\forall\tilde{\lambda}>0$.

We now wish to use this fact. If the difference between $\frac{\dd}{\dd\tilde{\lambda}}\lim_{T'\to\infty}\mathbb{E}\left[G_{T'}\right]$ and $\frac{\dd}{\dd\tilde{\lambda}}\mathbb{E}\left[G_{T}\right]$ is sufficiently small, then we can ensure that $\frac{\dd}{\dd\tilde{\lambda}}\mathbb{E}\left[G_{T}\right]<0$.

Recall $f\triangleq\labelnoise\frac{1}{2\featnoise}\left(\frac{\left(1+\ratio\right)+\tilde{\lambda}}{\sqrt{\tilde{\lambda}^{2}+2\tilde{\lambda}\left(1+\ratio\right)+\left(1-\ratio\right)^{2}}}-1\right)$ then by~\cref{thm:infinite_budget,eq:finite:t}, we have that

\begin{align}
&\mathbb{E}\left[G_{T}\right]-\lim_{T'\to\infty}\mathbb{E}\left[G_{T'}\right]=\notag&
\\
&
=\underbrace{\left(a^{T}-2a^{T-1}b\right)\trace\left(\boldsymbol{\Sigma}\right)+a^{T}\left\Vert\teacher \right\Vert ^{2}+2\trace\left(\boldsymbol{\Sigma}\right)\left(\frac{a\left(1-a^{T-1}\right)}{1-a}-b\cdot\frac{a\left(1-a^{T-2}\right)}{1-a}\right)}_{\text{first part of }\mathbb{E}\left[G_{T}\right]}
\notag
\\
&\hspace{1em}\underbrace{+f\frac{1-a^{T}}{1-a}+\frac{2}{T}\left(2b^{T}+T-1-Tb\right)\trace\left(\boldsymbol{\Sigma}\right)}_{\text{other part of }\mathbb{E}\left[G_{T}\right]}
-\underbrace{\left(2\trace\left(\boldsymbol{\Sigma}\right)\left(\frac{a-ba}{1-a}-b+1\right)+f\frac{1}{1-a}\right)}_{\lim_{T'\to\infty}\mathbb{E}\left[G_{T'}\right]}\notag
\\
&=\left(a^{T}-2a^{T-1}b\right)\trace\left(\boldsymbol{\Sigma}\right)+a^{T}\left\Vert \teacher \right\Vert ^{2}+2\trace\left(\boldsymbol{\Sigma}\right)\left(\frac{-a^{T}}{1-a}+b\cdot\frac{a^{T-1}}{1-a}\right)\notag\\&\hspace{4em}+f\frac{-a^{T}}{1-a}+\frac{2}{T}\left(2b^{T}-1\right)\trace\left(\boldsymbol{\Sigma}\right)\notag\\&=a^{T}\left(\left\Vert \teacher \right\Vert ^{2}+\trace\left(\boldsymbol{\Sigma}\right)\right)+f\frac{-a^{T}}{1-a}-2\trace\left(\boldsymbol{\Sigma}\right)a^{T}\left(\frac{1-b}{1-a}\right)+\frac{2\trace\left(\boldsymbol{\Sigma}\right)}{T}\left(2b^{T}-1\right).&
\label{eq:15}
\end{align}

\pagebreak

Using~\cref{eq:15} taking the derivative according to $\tilde \lambda$
\begin{align}
&\left|\frac{\dd}{\dd\tilde{\lambda}}\left[\lim_{T'\to\infty}\mathbb{E}\left[G_{T'}\right]-\mathbb{E}\left[G_{T}\right]\right]\right|
\\
&=\biggr|Ta^{T-1}\frac{\dd a}{\dd\tilde{\lambda}}+f\frac{-\left(Ta^{T-1}-Ta^{T}+a^{T}\right)}{\left(1-a\right)^{2}}\frac{\dd a}{\dd\tilde{\lambda}}
+\frac{\dd f}{\dd\tilde{\lambda}}\frac{-a^{T}}{1-a}
\notag\\&
\hspace{3em} -2\trace\left(\boldsymbol{\Sigma}\right)\left(Ta^{T-1}\frac{\dd a}{\dd\tilde{\lambda}}\left(\frac{1-b}{1-a}\right)+a^{T}\frac{\dd}{\dd\tilde{\lambda}}\left(\frac{1-b}{1-a}\right)\right)
+\frac{4\trace\left(\boldsymbol{\Sigma}\right)}{T}Tb^{T-1}\frac{\dd b}{\dd\tilde{\lambda}}\biggl|.&
\label{eq:16}
\end{align}
If $\ratio<1$,
we first show that $\frac{1-b}{1-a}$ and $\frac{\dd}{\dd\tilde{\lambda}}\frac{1-b}{1-a}$ are bounded. Consider the Laurent expansion of $\frac{1-b}{1-a}=\frac{1}{2}+O\left(\tilde{\lambda}^{-1}\right)$ by~\cref{lem:laur1}, which is analytic at $\infty$  (being a concatenation, sum, and division of analytic functions). By Lemma \ref{lem:diffrentiable}, we then obtain $\frac{\dd}{\dd\tilde{\lambda}}\frac{1-b}{1-a}=O\left(\tilde{\lambda}^{-2}\right)$. Since both terms are continuous on the region $\tilde{\lambda}\geq0$,~\cref{lem:bounded} implies that both terms are bounded.

Consider the Laurent expansion of $f=O\left(\tilde{\lambda}^{-2}\right)$ from~\cref{lem:laur2}. As above, it is analytic at~$\infty$. By~\cref{lem:diffrentiable}, $\frac{\dd f}{\dd\tilde{\lambda}}=O\left(\tilde{\lambda}^{-3}\right)$. Since both $f,\frac{\dd f}{\dd\tilde{\lambda}}$ are continuous at the region where $\tilde{\lambda}\geq0$ by~\cref{lem:bounded} both terms are bounded.

Otherwise, if $\ratio=1$, it is not clear that $f,\frac{1-b}{1-a},a,b$ and their corresponding derivatives are continuous at the region $\tilde{\lambda}\geq0$, as they may diverge at $\tilde \lambda = 0$.
From~\cref{lem:c=1dc,lem:7,lem:1:a} we know that $a^4\frac{\dd f}{\dd\tilde{\lambda}},a^2f,a^2\frac{\dd a}{\dd\tilde\lambda},b^3\frac{\dd b}{\dd\tilde\lambda},\frac{\dd}{\dd\tilde\lambda}\frac{1-b}{1-a},\frac{1-b}{1-a}$ are bounded at the region $\tilde{\lambda}>0$ in that case.

% \pagebreak

Combining with~\cref{lem:6} if $\ratio<1$, otherwise if $\ratio=1$ we obtain that each prefactor multiplying the terms involving $a^{T-5}$ or $b^{T-5}$ in~\cref{eq:16} is bounded, so there exist constants $C_{1},C_{2}>0$ such that,
\begin{align*} 
\left|\frac{\dd}{\dd\tilde{\lambda}}\left[\lim_{T'\to\infty}\mathbb{E}\left[G_{T'}\right]-\mathbb{E}\left[G_{T}\right]\right]\right|\leq\frac{C_{1}Ta^{T-5}}{\left(1-a\right)^{2}}+C_{2}Tb^{T-5}\leq\left(C_{1}+C_{2}\right)\frac{T\zeta^{T-5}}{\left(1-a\right)^{2}},
\end{align*}
where we used~\cref{lem:0<ab<1}.

Furthermore,
\begin{align}
\sup_{\lambda\in [0,\Lambda(T)]}\left|\frac{\dd}{\dd\tilde{\lambda}}\left[\lim_{T'\to\infty}\mathbb{E}\left[G_{T'}\right]-\mathbb{E}\left[G_{T}\right]\right]\right|\leq
\left(C_{1}+C_{2}\right)T\zeta^{T-5}O\left(\Lambda(T)^{2}\right),
\label{eq:finder}
\end{align}
where the last inequality follows from~\cref{lem:3}.

\paragraph{Proving $\lambda^{\star}=\omega\left(T^{\rho}\right), \forall \rho\in(0,1)$.}
First, we show that the optimal regularizer $\lambda^{\star}$ is not in $o\left(T^{\rho}\right)$ for all $0<\rho<1$ as stated in the theorem. Next, we ensure that $\frac{\dd}{\dd\lambda}\mathbb{E}\left[G_{T}\right]$ remains negative when $\Lambda\left(T\right)\in o\left(T^{\rho}\right)$ for arbitrary $0<\rho<1$. It suffices to show that the gap between the derivatives~(\cref{eq:finder}) converges to zero uniformly in $\tilde{\lambda}$ faster than $\frac{\dd}{\dd\tilde{\lambda}}\lim_{T'\to\infty}\mathbb{E}\left[G_{T'}\right]$ does.

Pick $\Lambda\left(T\right)\in o\left(T^{\rho}\right)$ for some $\ratio<1$. From \cref{thm:infinite_budget}, we have
\begin{align*}  
&\lim_{T'\to\infty}\mathbb{E}\left[G_{T'}\right]=\underbrace{2\trace\left(\boldsymbol{\Sigma}\right)\left(\frac{1-b}{1-a}\right)}_{\text{term 1}}+\underbrace{\frac{\labelnoise}{2\featnoise}\left(\frac{\featnoise\left(1+\ratio\right)+\lambda}{\sqrt{\lambda^{2}+2\lambda\featnoise\left(1+\ratio\right)+\featnoise^{2}\left(1-\ratio\right)^{2}}}-1\right)\frac{1}{1-a}}_{\text{term 2}}.
\end{align*}
Recall~\cref{thm:asymptotic_mse} where we proved that the derivative of both terms are negative for all $\lambda>0$.

We aim to establish a lower bound for $\inf_{\tilde{\lambda}\in [0,\Lambda(T)]}\left(-\frac{\dd}{\dd\tilde{\lambda}}\lim_{T'\to\infty}\mathbb{E}\left[G_{T'}\right]\right)$, providing a uniform rate of decrease for $\lim_{T'\to\infty}\mathbb{E}\left[G_{T'}\right]$. 
If either $\mathrm{Tr}(\boldsymbol{\Sigma})>0$ or $\labelnoise>0$ then, by \cref{lem:laurEgt}, 
since $\tilde{\lambda} = \frac{\lambda}{\featnoise}
\in [0,\frac{1}{\featnoise}\Lambda(T)]$, we have,
$$\frac{\dd}{\dd \tilde \lambda}\lim_{T'\to\infty}\mathbb E\left[G_{T'}\right]\in \Omega\left(\prn{\Lambda\left(T\right)}^{-2}\right)=\omega(T^{-2\rho})\,.$$
Let
$\eta=\inf_{\tilde{\lambda}\in [0,\Lambda(T)]}\left(-\frac{\dd}{\dd\tilde{\lambda}}\lim\limits_{T'\to\infty}\mathbb{E}\left[G_{T'}\right]\right)
=\omega\left(T^{-2\rho}\right)
$
and $\bar{\lambda}=\inf_{\tilde{\lambda}\in [0,\Lambda(T)]}\left(\frac{\dd}{\dd\tilde{\lambda}}\mathbb{E}\left[G_{T}\right]\right)$. Then, $$\frac{\dd}{\dd\bar{\lambda}}\mathbb{E}\left[G_{T}\right]\leq\frac{\dd}{\dd\bar{\lambda}}\lim_{T'\to\infty}\mathbb{E}\left[G_{T'}\right]+\left|\frac{\dd}{\dd\bar{\lambda}}\left[\lim_{T'\to\infty}\mathbb{E}\left[G_{T'}\right]-\mathbb{E}\left[G_{T}\right]\right]\right|\leq-\eta+\frac{1}{2}\eta\leq-\frac{1}{2}\eta<0,$$
where the second inequality follows from the uniform bound, 
$$\sup_{\tilde \lambda\in [0,\Lambda(T)]}\left|\frac{\dd}{\dd\lambda}\left[\lim_{T'\to\infty}\mathbb{E}\left[G_{T'}\right]-\mathbb{E}\left[G_{T}\right]\right]\right|\leq\frac{1}{2}\eta\,,$$ since the exponential factor 
$$\zeta^{T-5}=\left(1-\frac{\ratio}{\Lambda\left(T\right)+1+\ratio}\right)^{T-5}\leq\exp\left(-\frac{\ratio\left(T-5\right)}{\Lambda\left(T\right)+1+\ratio}\right)$$ decays faster than any polynomial in $T$. In particular, the factor $\frac{\ratio\left(T-5\right)}{\Lambda\left(T\right)+1+\ratio}\in\omega\left(T^{1-\rho}\right)$.

\paragraph{In superlinear regimes, the derivative turns positive.}
Next, we analyze the case of $\tilde{\lambda}\in\omega\left(T\right)$, and later, we show that $\frac{\dd}{\dd\tilde{\lambda}}\mathbb{E}\left[G_{T}\right]>0$ which completes our proof.

Recall~\cref{eq:15},   \begin{align*}
&\mathbb{E}\left[G_{T}\right]-\lim\limits_{T'\to\infty}\mathbb{E}\left[G_{T'}\right]
\\
&=\underbrace{a^{T}\left(\left\Vert \teacher\right\Vert ^{2}+\trace\left(\boldsymbol{\Sigma}\right)\right)+\frac{2\trace\left(\boldsymbol{\Sigma}\right)}{T}\left(2b^{T}-1\right)}_{\text{term 1}}
-\underbrace{2\trace\left(\boldsymbol{\Sigma}\right)a^{T}\left(\frac{1-b}{1-a}\right)}_{\text{term 2}}+\underbrace{f\frac{-a^{T}}{1-a}}_{\text{term 3}}.\end{align*}
Using~\cref{lem:Laurent-expansion-around} we adopt a Laurent expansion for the expression $\mathbb{E}\left[G_{T}\right]-\lim\limits_{T'\to\infty}\mathbb{E}\left[G_{T'}\right]$, term by term, with this expression we intend to show that $\frac{\dd}{\dd\tilde \lambda}\mathbb{E}\left[G_{T}\right]>0$ when $\tilde \lambda = \omega(T)$. Here, we analyze the asymptotic case where $T\to \infty$ and $\tilde \lambda = \omega(T)$, and thus, the $O(\cdot)$ notations below hide all bounded variables, but not $T, \tilde \lambda$.
Term 1 becomes,
\begin{align*}
&a^{T}\left(\left\Vert\teacher \right\Vert ^{2}+\trace\left(\boldsymbol{\Sigma}\right)\right)+\frac{2\trace\left(\boldsymbol{\Sigma}\right)}{T}\left(2b^{T}-1\right)
\\
&=\left(1-\frac{2{\ratio}T}{\tilde{\lambda}}+O\left(T^{2}\tilde{\lambda}^{-2}\right)\right)\left(\left\Vert\teacher \right\Vert ^{2}+\trace\left(\boldsymbol{\Sigma}\right)\right)+\frac{2\trace\left(\boldsymbol{\Sigma}\right)}{T}\left(1-\frac{2{\ratio}T}{\tilde{\lambda}}+O\left(T^{2}\tilde{\lambda}^{-2}\right)\right)
\\
&=\left\Vert\teacher \right\Vert ^{2}+\trace\left(\boldsymbol{\Sigma}\right)-\frac{2{\ratio}T}{\tilde{\lambda}}\left(\left\Vert\teacher \right\Vert ^{2}+\trace\left(\boldsymbol{\Sigma}\right)\right)+\frac{2\trace\left(\boldsymbol{\Sigma}\right)}{T}+O\left(T^{2}\tilde{\lambda}^{-2}\right)+O\left(\tilde{\lambda}^{-1}\right)\,.
\end{align*}

\bigskip

Term 2 is,
\begin{align*}2\trace\left(\boldsymbol{\Sigma}\right)a^{T}\left(\frac{1-b}{1-a}\right)
&=
2\trace\left(\boldsymbol{\Sigma}\right)\left(1-\frac{2{\ratio}T}{\tilde{\lambda}}+O\left(T^{2}\tilde{\lambda}^{-2}\right)\right)\frac{\frac{\ratio}{\tilde{\lambda}}+O\left(\tilde{\lambda}^{-2}\right)}{\frac{2{\ratio}}{\tilde{\lambda}}+O\left(\tilde{\lambda}^{-2}\right)}
\\
\explain{\text{\cref{lem:laur1}}}
&
=2\trace\left(\boldsymbol{\Sigma}\right)\left(1-\frac{2{\ratio}T}{\tilde{\lambda}}+O\left(T^{2}\tilde{\lambda}^{-2}\right)\right)\left(\frac{1}{2}+O\left(\tilde{\lambda}^{-1}\right)\right)
\\&
=2\trace\left(\boldsymbol{\Sigma}\right)\left(\frac{1}{2}-\frac{{\ratio}T}{\tilde{\lambda}}+O\left(T^{2}\tilde{\lambda}^{-2}\right)+O\left(\tilde{\lambda}^{-1}\right)\right)
\\&
=\trace\left(\boldsymbol{\Sigma}\right)-2\trace\left(\boldsymbol{\Sigma}\right)\frac{{\ratio}T}{\tilde{\lambda}}+O\left(T^{2}\tilde{\lambda}^{-2}\right)+O\left(\tilde{\lambda}^{-1}\right).
\end{align*}

Finally, Term 3 is,
\begin{align*}
&\frac{\labelnoise}{2\featnoise}\left(\frac{\tilde{\lambda}+\ratio+1}{\sqrt{\left(\tilde{\lambda}+\ratio+1\right)^{2}-4\ratio}}-1\right)\frac{-a^{T}}{1-a}\\&=\frac{\labelnoise}{2\featnoise}\left(\frac{1}{\sqrt{1-\frac{4\ratio}{\left(\tilde{\lambda}+\ratio+1\right)^{2}}}}-1\right)\frac{-a^{T}}{1-a}
\\
\explain{\left(1-u\right)^{-\frac{1}{2}}=1+\frac{1}{2}u+O\left(u^{2}\right)}
&=\labelnoise\frac{1}{2\featnoise}\left(1+\frac{2{\ratio}}{\left(\tilde{\lambda}+\ratio+1\right)^{2}}+O\left(\tilde{\lambda}^{-4}\right)-1\right)\frac{-a^{T}}{1-a}
\\
&=O\left(\tilde{\lambda}^{-2}\right)\left(\frac{-1+\frac{2{\ratio}T}{\tilde{\lambda}}+O\left(T^{2}\tilde{\lambda}^{-2}\right)}{\frac{2{\ratio}}{\tilde{\lambda}}+O\left(\tilde{\lambda}^{-2}\right)}\right)
\\&=
O\left(\tilde{\lambda}^{-2}\right)\left(-\frac{\tilde{\lambda}}{2{\ratio}}+T+O\left(T^{2}\tilde{\lambda}^{-1}\right)\right)
=O\left(\tilde{\lambda}^{-1}\right). 
\end{align*}
Overall, when summing, $$\mathbb{E}\left[G_{T}\right]-\lim\limits_{T'\to\infty}\mathbb{E}\left[G_{T'}\right]=\left\Vert\teacher \right\Vert ^{2}-\frac{2{\ratio}T}{\tilde{\lambda}}\left\Vert\teacher \right\Vert ^{2}+\frac{2\trace\left(\boldsymbol{\Sigma}\right)}{T}+O\left(T^{2}\tilde{\lambda}^{-2}\right)+O\left(\tilde{\lambda}^{-1}\right).$$

\pagebreak

By \cref{lem:diffrentiable}, as of the terms above is analytic,\footnote{An analytic function is a function locally given by a convergent power series.} and analyticity is preserved under concatenation, summing and multiplication and division,
\begin{align*}
&\frac{\dd}{\dd\tilde{\lambda}}\left[\mathbb{E}\left[G_{T}\right]-\lim\limits_{T'\to\infty}\mathbb{E}\left[G_{T'}\right]\right]	
=\frac{2{\ratio}T}{\tilde{\lambda}^{2}}\left\Vert\teacher \right\Vert ^{2}+O\left(T^{2}\tilde{\lambda}^{-3}\right)+O\left(\tilde{\lambda}^{-2}\right)\stackrel{[1]}{>}0
\\
&
\frac{\dd}{\dd\tilde{\lambda}}\mathbb{E}\left[G_{T}\right]
=\frac{2{\ratio}T}{\tilde{\lambda}^{2}}\left\Vert\teacher \right\Vert ^{2}+O\left(T^{2}\tilde{\lambda}^{-3}\right)+O\left(\tilde{\lambda}^{-2}\right)+\frac{\dd}{\dd\tilde{\lambda}}\lim\limits_{T\to\infty}\mathbb{E}\left[G_{T}\right]\stackrel{[1]+[2]}{>}0
\,,
\end{align*}
where [1] follows from $\lim\limits_{T\to\infty}\frac{2{\ratio}T\tilde{\lambda}^{-2}}{T^{2}\tilde{\lambda}^{-3}}=\lim\limits_{T\to\infty}\frac{2{\ratio}\tilde{\lambda}}{T}
\stackrel{\tilde{\lambda}=\omega\left(T\right)}{=}\infty$
and $\lim\limits_{T\to\infty}\frac{2{\ratio}T\tilde{\lambda}^{-2}}{\tilde{\lambda}^{-2}}=\infty$  implying the sign is determined by $\frac{2{\ratio}T}{\tilde{\lambda}^{2}}>0$ as $\teacher\neq \0$;
and [2] follows since $\frac{\dd}{\dd\tilde{\lambda}}\lim\limits_{T'\to\infty}\mathbb{E}\left[G_{T'}\right]=O\left(\tilde{\lambda}^{-2}\right)$ 
by \cref{lem:laurEgt}.

\paragraph{Conclusion.} Since for all $\tilde{\lambda}\in o(T^{\rho})$ 
(equivalently, $\lambda\in o(T^{\rho})$) with $0<\rho<1$, 
we have 
$
\frac{\dd}{\dd\tilde{\lambda}}\mathbb{E}[G_{T}] < 0,
$
and for all $\tilde{\lambda}\in \omega(T)$ 
(equivalently, $\lambda\in \omega(T)$), 
$
\frac{\dd}{\dd\tilde{\lambda}}\mathbb{E}[G_{T}] > 0,
$
it follows that any optimal regularizer $\lambda^{\star}$ must satisfy 
$
\lambda^{\star}\in O(T) 
$
and
$
\lambda^{\star}\in \Omega(T^{\rho})
$ for all $0<\rho<1$.
\end{proof}
\newpage
\subsection{Proof of~\cref{thm:optimal}: Key Result on Optimal Regularization Scaling}
\begin{recall}[\cref{thm:optimal}]
Under i.i.d.\ teachers (\cref{assump:iid_teachers}) with non-zero mean teacher $\teacher$, the optimal fixed regularization strength that minimizes the expected generalization 
loss after $T$ iterations satisfies 
$\lambda^\star
=
\lambda^\star(T)\asymp\frac{T}{\ln T}$.
More precisely, for any $\epsilon>0$, there exists
$T_0 = O\!\left(\hfrac{1}{\epsilon^2}\right)$ such that for all
$T \ge T_0$,
\[(1-\epsilon)
\frac{2\featnoise \alpha T}{
\ln
\Bigprn{
    \frac{4\alpha T\left\Vert \vw^{\star}\right\Vert ^{2}\featnoise
    }
    {{\labelnoise+
    \featnoise\mathrm{Tr}(\boldsymbol{\Sigma})
    \left(1+\ratio\right)}
    }
}
}
<
\lambda^\star
<
(1+\epsilon)
\frac{2\featnoise \alpha T}{
\ln
\Bigprn{
    \frac{4\alpha T\left\Vert \vw^{\star}\right\Vert ^{2}\featnoise
    }
    {\labelnoise+\featnoise\mathrm{Tr}(\boldsymbol{\Sigma})
    \left(1+\ratio\right)}
    }
}
\,.
\]
Furthermore, in the degenerate noiseless case of $\mathrm{Tr}(\boldsymbol\Sigma)=\labelnoise=0$, it holds that $\lambda^\star\to 0$.
\end{recall}
\begin{proof}
If both $\mathrm{Tr}(\boldsymbol\Sigma)=0$ and $\labelnoise=0$, then by~\cref{eq:singleteacher},
$$
\mathbb{E}[G_T]
= \mathbb{E}\bigl\|\vw_T - \vw^\star\bigr\|^{2}
= a^{T}\,\bigl\|\vw_{0}-\vw^{\star}\bigr\|^{2}.
$$
As a function of $\lambda$, the derivative of $a$ is positive for all $\lambda>0$ (\cref{lem:da:dl}). 
Therefore, the optimal regularizer is $\lambda^\star \to 0$.

Otherwise, let $\lambda = \psi(T)\,T$, where $\psi(T) \in \omega\left(T^{-1/4}\right)$ and
$\psi(T) \in O(1)$.  By \cref{thm:opt_finite}, this entire region contains the
optimal regularizer $\lambda^\star$ for all sufficiently large $T$.
Recall \cref{eq:15},  
$$\mathbb{E}\left[G_{T}\right]-\!\lim\limits_{T\to\infty}\!\mathbb{E}\left[G_{T}\right]=\underbrace{\frac{2\mathrm{Tr}\left({\mSigma}\right)}{T}\left(2b^{T}-1\right)}_{\text{term 1}}+\underbrace{\mathrm{Tr}\left({\mSigma}\right)a^T\left(1-2\frac{1-b}{1-a}\right)}_{\text{term 2}}+\underbrace{a^{T}\left\Vert \vw^{\star}\right\Vert ^{2}+f\frac{-a^{T}}{1-a}}_{\text{term 3}},$$
Using \cref{lem:Laurent-expansion-around}, we begin with developing an equivalent Laurent expansion of each one of the terms. Term 1 becomes,
\begin{align}
\frac{2\,\mathrm{Tr}(\boldsymbol{\mSigma})}{T}\left(2b^{T}-1\right)
&=
\frac{2\,\mathrm{Tr}(\boldsymbol{\mSigma})}{T}\left(
2\exp\left(-\frac{\alpha T}{\tilde{\lambda}}\right)
\left(1+O(T\tilde{\lambda}^{-2})\right)
-1\right)\notag\\
&=
\frac{2\,\mathrm{Tr}(\boldsymbol{\mSigma})}{T}\left(
2\exp\left(-\frac{\alpha v_x}{\psi(T)}\right)
\left(1+O\left(T^{-1}\psi(T)^{-2}\right)\right)
-1
\right).
\label{eq:/t}
\end{align}
Term 2 is,
\begin{align}
&\mathrm{Tr}\left(\boldsymbol{\mSigma}\right)a^{T}\left[1-2\left(\frac{1-b}{1-a}\right)\right]\notag \\&=\mathrm{Tr}\left(\boldsymbol{\mSigma}\right)\left(\exp\left(-\frac{2\alpha v_x}{\psi(T)}\right)\left(1+O\left(\psi(T)^{-2}T^{-1}\right)\right)\right)O\left(\psi(T)^{-1}T^{-1}\right)
\label{eq:at}.
\end{align}
Term 3 is,
\begin{align}
&a^{T}\|\vw^{\star}\|^{2}+f\frac{-a^{T}}{1-a}
\notag
\\
&
=
\exp\left(-\frac{2\alpha v_{x}}{\psi(T)}+O\left(\psi(T)^{-2}T^{-1}\right)\right)\|\vw^{\star}\|^{2}\notag
\\
&\hspace{1em}
-\tfrac{\psi(T)Tv_{z}}{4\alpha v_{x}^{2}}\!\left(\frac{2\alpha}{\left(\frac{\psi(T)T}{v_{x}}+\alpha+1\right)^{2}}+O\left((\psi(T)T)^{-4}\right)\right)\!\exp\left(-\tfrac{2\alpha v_{x}}{\psi(T)}+O\left(\psi(T)^{-2}T^{-1}\right)\right)\notag
\\
&=\exp\left(-\tfrac{2\alpha v_{x}}{\psi(T)}\right)\left(1+O\left(\psi(T)^{-2}T^{-1}\right)\right)\|\vw^{\star}\|^{2}
\notag
\\
&\hspace{1em}
-\tfrac{v_{z}}{2}\left(\frac{1}{\psi(T)T}+O\left((\psi(T)T)^{-2}\right)\right)\left(\exp\left(-\tfrac{2\alpha v_{x}}{\psi(T)}\right)\left(1+O\left(\psi(T)^{-2}T^{-1}\right)\right)\right).\label{eq:st}
\end{align}

Using \cref{lem:diffrentiable}, we differentiate $\frac{\dd}{\dd \psi(T)}\left[\mathbb E[G_T]-\lim\limits_{T\to\infty} \mathbb E [G_T]\right]$, term by term.

We begin with \cref{eq:/t},
\begin{align}
&\frac{\dd}{\dd \psi\left(T\right)}\frac{2\,\mathrm{Tr}(\boldsymbol{\mSigma})}{T}\left(2b^{T}-1\right)=\frac{2\,\mathrm{Tr}(\boldsymbol{\mSigma})}{T}\left(\frac{2\alpha v_x}{\psi(T)^{2}}\exp\left(-\frac{\alpha v_x}{\psi(T)}\right)\left(1+O\left(\psi(T)^{-2}T^{-1}\right)\right)\right)+\notag\\&\frac{2\,\mathrm{Tr}(\boldsymbol{\mSigma})}{T}\left(2\exp\left(-\frac{\alpha v_x}{\psi(T)}\right)O\left(\psi(T)^{-3}T^{-1}\right)\right).\label{eq:dtdz}
\end{align}
Differentiating the expression in \cref{eq:at}, we get,
\begin{align}
&\frac{\dd}{\dd \psi(T)}\mathrm{Tr}\left(\boldsymbol{\mSigma}\right)a^{T}\left[1-2\left(\frac{1-b}{1-a}\right)\right]\notag\\&=\mathrm{Tr}\left(\boldsymbol{\mSigma}\right)\biggl(\frac{2\alpha v_x}{\psi(T)^{2}}\exp\left(-\frac{2\alpha v_x}{\psi(T)}\right)\left(1+O\left(\psi(T)^{-2}T^{-1}\right)\right)\notag\\&\hspace{5em}+\exp\left(-\frac{2\alpha v_x}{\psi(T)}\right)O\left(\psi(T)^{-3}T^{-1}\right)\biggr)O\left(\psi(T)^{-1}T^{-1}\right)\notag\\&
\quad+\mathrm{Tr}\left(\boldsymbol{\mSigma}\right)\left(\exp\left(-\frac{2\alpha v_x}{\psi(T)}\right)\left(1+O\left(\psi(T)^{-2}T^{-1}\right)\right)\right)O\left(\psi(T)^{-2}T^{-1}\right)\label{eq:dzdtr}
\end{align}
And \cref{eq:st} gives
\begin{align}
&\frac{\dd}{\dd \psi\left(T\right)}\left[a^{T}\|\vw^{\star}\|^{2}+f\frac{-a^{T}}{1-a}\right]\notag\\&=\tfrac{v_{z}}{2}\left(\frac{1}{\psi(T)^{2}T}+O\left(\psi(T)^{-3}T^{-2}\right)\right)\left(\exp\left(-\tfrac{2\alpha v_{x}}{\psi(T)}\right)\left(1+O\left(\psi(T)^{-2}T^{-1}\right)\right)\right)\notag\\&\hspace{4em}-\tfrac{v_{z}}{2}\left(\frac{1}{\psi(T)T}+O\left((\psi(T)T)^{-2}\right)\right)\Bigg(\tfrac{2\alpha v_{x}}{\psi(T)^{2}}\exp\left(-\tfrac{2\alpha v_{x}}{\psi(T)}\right)\notag\\&\hspace{8em}\left(1+O\left(\psi(T)^{-2}T^{-1}\right)\right)+\exp\left(-\tfrac{2\alpha v_{x}}{\psi(T)}\right)O\left(\psi(T)^{-3}T^{-1}\right)\Bigg)\notag\\&
\quad+\frac{2\alpha v_{x}}{\psi(T)^{2}}\exp\left(-\frac{2\alpha v_{x}}{\psi(T)}\right)\left(1+O\left(\psi(T)^{-2}T^{-1}\right)\right)\|\vw^{\star}\|^{2}\notag\\&
\quad+\exp\left(-\frac{2\alpha v_{x}}{\psi(T)}\right)O\left(\psi(T)^{-3}T^{-1}\right)\|\vw^{\star}\|^{2}.\label{eq:dzds}
\end{align}

Retaining only the leading–order terms, we have
\begin{align*}
&\frac{\dd}{\dd \psi(T)}[\mathbb{E}\left[G_{T}\right]-\lim\limits_{T\to\infty}\mathbb{E}\left[G_{T}\right]]\\&=\frac{2\alpha v_{x}}{\psi(T)^{2}}\exp\left(-q\right)\|\vw^{\star}\|^{2}
+\frac{4\,\mathrm{Tr}(\boldsymbol{\mSigma})}{T}\frac{\alpha v_x}{\psi(T)^{2}}
\exp\left(-\frac{q}{2}\right)\,,
\end{align*}
where $q=\frac{2\alpha v_{x}}{\psi(T)}$.

Let 
\begin{align}   
g\left(\psi(T)\right)
&
=
\frac{v_{z}v_{x}}{2T\psi(T)^{2}}\left[\underbrace{\exp\left(-q\right)\frac{4\alpha T\left\Vert \vw^{\star}\right\Vert ^{2}}{v_{z}}+\frac{8\alpha\mathrm{Tr}(\boldsymbol{\mSigma})}{v_z}\sqrt{\exp\left(-q\right)}}_{H\left(q\right)}\right],\label{eq:g}
\end{align}
Recall \cref{lem:laurEgt} where we have shown that
$$\frac{\dd}{\dd \lambda}\lim\limits_{T\to\infty}\mathbb E\left[G_T\right]=-\frac{1}{2v_x}\left(\frac{v_z}{v_x}+\mathrm{Tr}(\boldsymbol \Sigma)\left(\ratio+1\right)\right) \tilde\lambda^{-2}+O\left(\tilde\lambda^{-3}\right)\,.$$
We would like to show that as a consequence $g\left(\psi(T)\right)-\frac{\dd}{\dd \psi(T)}\lim\limits_{T\to\infty}\mathbb E\left[G_T\right]$ has a single root. The term $\frac{\dd}{\dd \psi(T)}\lim\limits_{T\to\infty}\mathbb E[G_T]$ has a contribution of $g=-\left(\frac{1}{v_{x}}+\mathrm{Tr}(\boldsymbol{\Sigma})\frac{\ratio+1}{v_{z}}\right)$ (retaining the leading term) to $H(q)$, therefore we define $\tilde H(q)=H(q)+g$.

The derivative of $\tilde H$ is given by $\frac{\dd}{\dd q}\tilde H =-\exp(-q)\frac{4\alpha T\|\vw^\star\|^{2}}{v_{z}}-\frac{4\alpha\mathrm{Tr}(\boldsymbol{\mSigma})}{v_z}\,\exp(-\frac{q}{2}),$ which is strictly negative. Consequently, $\tilde H(q)$ is strictly decreasing in this regime. Moreover, if $q \in o\left(\ln \left(T\right)\right)$ then $\tilde H(q)=\exp(-q)\frac{4\ratio T\left\Vert \teacher \right\Vert^2}{\labelnoise}+g+o(1)$ which is positive for all sufficiently large $T$.
Furthermore, if $q\in \omega(\ln\left(T\right))$ then $\tilde H(q)=g+o(1)$ hence for all sufficiently large $T$, we have that $\tilde H(q)$ is negative.
Therefore, it suffices to identify a value $q^\star$ such that $\tilde H\left(q^\star\right)\approx 0$. To this end, we solve $H(q^\star)\approx - g$ by retaining the leading-order term for $q^\star \in \Theta\left(\ln\left(T\right)\right)$ we evaluate $\frac{4\alpha T\left\Vert \vw^{\star}\right\Vert^{2}\exp\left(-q^\star\right)}{v_{z}} \approx -g$ which leaves us with $q^\star\approx\ln\frac{4\alpha T\left\Vert \vw^{\star}\right\Vert ^{2}}{-v_{z}g}$. We proceed to prove the Theorem using the approximation discussed above.

Let $\epsilon>0$,
\begin{align}   
\tilde H\left(\frac{1}{1+\epsilon}q^\star\right)
=-g\cdot\left(\frac{4\alpha T\|\vw^\star\|^2}{-v_zg}\right)^{\frac{\epsilon}{1+\epsilon}}
+g + o(1)
\xrightarrow[T\to\infty]{} +\infty,\label{eqq:1}
\end{align}
and hence \(\tilde H\left(\frac{1}{1+\epsilon}q^\star\right)>0\) for all sufficiently large \(T\).
Similarly,
\begin{align}
\tilde H\left(\frac{1}{1-\epsilon}q^\star\right)=
-g\cdot\left(\frac{4\alpha T\|\vw^\star\|^2}{-v_zg}\right)^{-\frac{\epsilon}{1-\epsilon}}
+g + o(1)
\xrightarrow[T\to\infty]{} g < 0,\label{eqq:2}
\end{align}
so \(\tilde H\left(\frac{1}{1-\epsilon}q^\star\right)<0\) for all sufficiently large \(T\).

By continuity (and the strict monotonicity of \(\tilde H\)), there exists a unique solution satisfying
\(\tilde H(q^\star)=0\). 
Converting back via \(q = \frac{2\alpha v_x}{\psi(T)}\) yields
\[
(1-\epsilon)\frac{2\alpha v_x T}{\ln\frac{4\alpha T\left\Vert \vw^{\star}\right\Vert ^{2}}{\frac{v_{z}}{v_{x}}+\mathrm{Tr}(\boldsymbol{\Sigma})\left(\ratio+1\right)}}
<
\lambda^\star
<
(1+\epsilon)\frac{2\alpha v_x T}{\ln\frac{4\alpha T\left\Vert \vw^{\star}\right\Vert ^{2}}{\frac{v_{z}}{v_{x}}+\mathrm{Tr}(\boldsymbol{\Sigma})\left(\ratio+1\right)}}.
\]
Recall \cref{eq:dtdz,eq:dzds,eq:dzdtr}, where the last term stems from the non-leading factor we discarded in the term $\frac{\dd}{\dd \psi(T)}\lim\limits_{T\to\infty}\mathbb E\left[G_T\right]$. Since $\psi(T)=\Theta\left(\frac{1}{\ln T}\right)$ the non-leading terms contribute $o(1)$ perturbation to $H(q)$, which is absorbed into the $o(1)$ in \cref{eqq:1,eqq:2}. We now investigate the asymptotic behavior under which the theorem holds for all $T>T_0$.

To satisfy $\tilde H\left(\frac{1}{1+\epsilon}q^\star\right)>0$, we begin with \cref{eqq:1}. In this case, the non-leading terms contribute $O\left(T^{-\frac{1}{2(1+\epsilon)}}\right)$ (as $\exp(-q/2)$, the dominating term, in \cref{eq:g} becomes $O\left(T^{-\frac{1}{2(1+\epsilon)}}\right)$).
\begin{align*}
&-g\cdot\left(\left(\frac{4\alpha T\left\Vert \vw^{\star}\right\Vert ^{2}}{-v_{z}g}\right)^{\frac{\epsilon}{1+\epsilon}}-1\right)+O\left(T^{-\frac{1}{2(1+\epsilon)}}\right)\\&=-g\cdot\left(\exp\left(\frac{\epsilon}{1+\epsilon}\ln\left(\frac{4\alpha T\left\Vert \vw^{\star}\right\Vert ^{2}}{-v_{z}g}\right)\right)-1\right)+O\left(T^{-\frac{1}{2(1+\epsilon)}}\right)\\
\explain{e^{x}-1\geq x}
&
\geq
-g\cdot\frac{\epsilon}{1+\epsilon}\ln\left(\frac{4\alpha T\left\Vert \vw^{\star}\right\Vert ^{2}}{-v_{z}g}\right)+O\left(T^{-\frac{1}{2(1+\epsilon)}}\right)\\&\gtrsim-g\cdot\frac{\epsilon}{1+\epsilon}\ln\left(T\right)+O\left(T^{-\frac{1}{2(1+\epsilon)}}\right).
\end{align*}
There exists a constant $K_1$ such that for all sufficiently large $T$ we have
$
O\left(T^{-\frac{1}{2(1+\epsilon)}}\right)\leq K_1\,T^{-\frac{1}{2(1+\epsilon)}},
$
and hence we require,
$-g\frac{\epsilon}{1+\epsilon}\ln T\gtrsim K_1\,T^{-\frac{1}{2(1+\epsilon)}}
$, which is satisfied if $T>T_0 = O\left(\frac{1}{\epsilon^2}\right)$ (as $\lim\limits_{\epsilon\to 0^+} \frac{\epsilon\ln\frac{1}{\epsilon}}{\epsilon^{\frac{1}{1+\epsilon}}}=\infty$).
Note that $\frac{4\alpha T\left\Vert \vw^{\star}\right\Vert ^{2}}{-v_{z}g}>0$.
We proceed with \cref{eqq:2}, in this case, the non-leading terms contribute $O\left(T^{-\frac{1}{2(1-\epsilon)}}\right)$ (as $\exp(-q/2)$, the dominating term, in \cref{eq:g} becomes $O\left(T^{-\frac{1}{2(1-\epsilon)}}\right)$).
Splitting it to cases.

\paragraph{Case (1)} When $\frac{\epsilon}{1-\epsilon}\ln\left(\frac{4\alpha T\left\Vert \vw^{\star}\right\Vert ^{2}}{-v_{z}g}\right)\in (0,1)$,
\begin{align*}
&-g\cdot\left(\exp\left(-\frac{\epsilon}{1-\epsilon}\ln\left(\frac{4\alpha T\left\Vert \vw^{\star}\right\Vert ^{2}}{-v_{z}g}\right)\right)-1\right)+O\left(T^{-\frac{1}{2(1-\epsilon)}}\right)\\
\explain{e^{-x}-1\leq-\frac{1}{2}x,~\forall x\in\left(0,1\right)}
&
\leq g\cdot\frac{\epsilon}{2\left(1-\epsilon\right)}\ln\left(\frac{4\alpha T\left\Vert \vw^{\star}\right\Vert ^{2}}{-v_{z}g}\right)+O\left(T^{-\frac{1}{2(1-\epsilon)}}\right)\\&\lesssim g\cdot\frac{\epsilon}{1-\epsilon}\ln\left(T\right)+O\left(T^{-\frac{1}{2(1-\epsilon)}}\right).
\end{align*}

As before, there exists a constant $K_2$ such that for all sufficiently large $T$ we have
$
O\left(T^{-\frac{1}{2(1-\epsilon)}}\right)\leq K_2\,T^{-\frac{1}{2(1-\epsilon)}},
$
and hence we require
$
-g\frac{\epsilon}{1-\epsilon}\ln T\gtrsim K_2\,T^{-\frac{1}{2(1-\epsilon)}}.
$
Although this equation does not admit a closed-form solution for $T$, choosing $T_0$ such that $T>T_0 = O\left(\frac{1}{\epsilon^2}\right)$ (as $\lim\limits_{\epsilon\to 0^+} \frac{\epsilon\ln\frac{1}{\epsilon}}{\epsilon^{\frac{1}{1-\epsilon}}}=\infty$), is sufficient.

\paragraph{Case (2)} 
When
$\frac{\epsilon}{1-\epsilon}\ln\left(\frac{4\alpha T\left\Vert \vw^{\star}\right\Vert ^{2}}{-v_{z}g}\right)\in [1,\infty)$, 
\begin{align*}
&-g\left(\exp\left(-\frac{\epsilon}{1-\epsilon}\ln\left(\frac{4\alpha T\left\Vert \vw^{\star}\right\Vert ^{2}}{-v_{z}g}\right)\right)-1\right)+O\left(T^{-\frac{1}{2(1-\epsilon)}}\right)\\&\leq -g\left(e^{-1}-1\right)+O\left(T^{-\frac{1}{2(1-\epsilon)}}\right)\,.  
\end{align*}
Therefore, we require, 
\begin{align*}
    -g\left(e^{-1}-1\right)\gtrsim K_{2}T^{-\frac{1}{2(1-\epsilon)}}
    \,,
\end{align*}
which holds for all sufficiently large $T$ independent of $\epsilon$.
\end{proof}

\end{document}